\newcommand{\relu}{\textup{relu}}
\newcommand{\cD}{\mathcal{D}}
\newcommand{\ind}[1]{\mathbf{1}[#1]}
\def\eqref#1{equation~\ref{#1}}
\def\1{\bm{1}}
\def\eps{{\epsilon}}
\DeclareMathAlphabet{\mathsfit}{\encodingdefault}{\sfdefault}{m}{sl}
\SetMathAlphabet{\mathsfit}{bold}{\encodingdefault}{\sfdefault}{bx}{n}
\newcommand{\pdata}{p_{\rm{data}}}
\newcommand{\E}{\mathbb{E}}
\newcommand{\R}{\mathbb{R}}
\DeclareMathOperator*{\argmax}{arg\,max}
\DeclareMathOperator*{\argmin}{arg\,min}
\def\shownotes{1}  \ifnum\shownotes=1
\newtheorem{theorem}{Theorem}[section]
\newtheorem{proposition}[theorem]{Proposition}
\newtheorem{lemma}[theorem]{Lemma}
\newtheorem{corollary}[theorem]{Corollary}
\newtheorem{claim}[theorem]{Claim}
\newtheorem{example}[theorem]{Example}
\newtheorem{assumption}[theorem]{Assumption}
\numberwithin{equation}{section}
\newcommand{\genparam}{\Theta}
\newcommand{\bgenparam}{\bar{\Theta}}
\newcommand{\param}{\theta}
\newcommand{\bparam}{\bar{\theta}}
\newcommand{\ones}{\vec{1}}
\newcommand{\subj}{\text{subject to }}
\newcommand{\bdir}{\bar{u}}
\newcommand{\dir}{u}
\newcommand{\weight}{w}
\newcommand{\sph}{\mathbb{S}^{d - 1}}
\newcommand{\supp}{\text{supp}}
\newcommand{\hyp}{\mathcal{F}}
\newcommand{\erad}{\hat{\mathfrak{R}}}
\newcommand{\sphd}{\mathbb{S}^d}
\newcommand{\dist}{\rho}
\newcommand{\spaceone}{\mathcal{L}^1}
\newcommand{\spacetwo}{\mathcal{L}^2}
\newcommand{\spaceinf}{\mathcal{L}^{\infty}}
\newcommand{\poly}{\textup{poly}}
\newcommand{\one}{\mathbbm{1}}
\title{Regularization Matters: Generalization and Optimization of Neural Nets v.s. their Induced Kernel}
\author{%
	Colin Wei
    \thanks{Stanford University, email:
	\texttt{colinwei@stanford.edu}} 
	\and
	Jason D. Lee
	\thanks{
	Princeton University, email:
	\texttt{jasonlee@princeton.edu}}
	\and
	Qiang Liu
	\thanks{
	University of Texas at Austin, email:
	\texttt{lqiang@cs.texas.edu}}
	\and
	Tengyu Ma
	\thanks{
	Stanford University, email:
	\texttt{tengyuma@stanford.edu}}}
\begin{document}
	
	\maketitle
	\begin{abstract}

Recent works have shown that on sufficiently over-parametrized neural nets, gradient descent with relatively large initialization optimizes a prediction function in the RKHS of the Neural Tangent Kernel (NTK).
This analysis leads to global convergence results but does not work when there is a standard $\ell_2$ regularizer, which is useful to have in practice. We show that sample efficiency can indeed depend on the presence of the regularizer: we construct a simple distribution in $d$ dimensions which the optimal regularized neural net learns with $O(d)$ samples but the NTK requires $\Omega(d^2)$ samples to learn. To prove this, we establish two analysis tools: i) for multi-layer feedforward ReLU nets, we show that the global minimizer of a weakly-regularized cross-entropy loss is the max normalized margin solution among all neural nets, which generalizes well;
ii) we develop a new technique for proving lower bounds for kernel methods, which relies on showing that the kernel cannot focus on informative features. Motivated by our generalization results, we study whether the regularized global optimum is attainable. We prove that for infinite-width two-layer nets, noisy gradient descent optimizes the regularized neural net loss to a global minimum in polynomial iterations.\end{abstract}

\section{Introduction}

In deep learning, over-parametrization refers to the widely-adopted technique of using more parameters than necessary~\citep{krizhevsky2012imagenet,livni2014computational}. Over-parametrization is crucial for successful optimization, and a large body of work has been devoted towards understanding why. One line of recent works~\citep{daniely2017sgd,li2018learning,du2018gradient,du2018gradientb,allen2018convergence,zou2018stochastic,jacot2018neural,arora2019fine,chizat2018note,yang2019scaling}~offers an explanation that invites analogy with kernel methods, proving that with sufficient over-parameterization and a certain initialization scale and learning rate schedule, gradient descent essentially learns a linear classifier on top of the initial random features. For this same setting,~\citet{daniely2017sgd,du2018gradient,du2018gradientb,jacot2018neural,arora2019fine,arora2019exact} make this connection explicit by establishing that the prediction function found by gradient descent is in the span of the training data in a reproducing kernel Hilbert space (RKHS) induced by the Neural Tangent Kernel (NTK). The generalization error of the resulting network can be analyzed via the Rademacher complexity of the kernel method.

These works provide some of the first algorithmic results for the success of gradient descent in optimizing neural nets; however, the resulting generalization error is only as good as that of fixed kernels~\citep{arora2019fine}. On the other hand, the equivalence of gradient descent and NTK is broken if the loss has an explicit regularizer such as weight decay. 

In this paper, we study the effect of an explicit regularizer on neural net generalization via the lens of margin theory. We first construct a simple distribution on which the two-layer network optimizing explicitly regularized logistic loss will achieve a large margin, and therefore, good generalization. On the other hand, any prediction function in the span of the training data in the RKHS induced by the NTK will overfit to noise and therefore achieve poor margin and bad generalization.

\begin{theorem}[Informal version of Theorem~\ref{thm:comparison}]
	\label{thm:informal_comparison}
	Consider the setting of learning  the distribution $\cD$ defined in Figure~\ref{fig:cDvis} using a two-layer network with relu activations with the goal of achieving small generalization error. Using $o(d^2)$ samples, no function in the span of the training data in the RKHS induced by the NTK can succeed. On the other hand, the global optimizer of the $\ell_2$-regularized logistic loss can learn $\cD$ with $O(d)$ samples.
\end{theorem}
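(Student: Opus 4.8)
The plan is to prove the two directions separately: an $O(d)$-sample upper bound for the explicitly regularized neural net via margin theory (using analysis tool (i)), and an $\Omega(d^2)$-sample lower bound for kernel predictors via a spherical-harmonic argument that formalizes ``the NTK spreads its attention too thin.''

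\textbf{Upper bound (neural net side).} By analysis tool (i), in the weak-regularization limit the global minimizer of the $\ell_2$-regularized logistic loss converges in direction to the maximum normalized-margin two-layer ReLU net on the training sample. So it suffices to (a) exhibit one explicit two-layer ReLU net $f_\star$ whose normalized margin $y f_\star(x)/\|f_\star\|$ is bounded below by a constant $\gamma_\star=\Omega(1)$ for \emph{every} $(x,y)$ in the support of $\cD$ (so the empirical margin is automatically $\ge\gamma_\star$), and (b) invoke the standard margin-based generalization bound for two-layer ReLU nets, whose Rademacher complexity on $n$ i.i.d.\ samples is $\tilde O(\sqrt{1/n})$ times the relevant norm scale of $\cD$. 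The net in (a) is read off directly from the definition of $\cD$ in Figure~\ref{fig:cDvis}: a few neurons pointed along the informative direction(s) with appropriate thresholds, after which verifying $\gamma_\star=\Omega(1)$ is a finite computation. Since the complexity-to-margin ratio for this construction scales like $O(\sqrt d)$, the margin bound gives test error $\le\eps$ once $n = O(d/\eps^2) = O(d)$.

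\textbf{Lower bound (kernel side).} Write any predictor in the span of the training data as $f(x)=\sum_{i=1}^n\alpha_i K(x_i,x)=\langle\beta,\phi(x)\rangle_{\gH}$ with $\beta=\sum_i\alpha_i\phi(x_i)$, so $\beta$ lies in the at-most-$n$-dimensional subspace $V=\mathrm{span}\{\phi(x_i)\}$ of the RKHS $\gH$. Since $\cD$ is (essentially) supported on a sphere and the NTK is rotation-invariant, decompose $\gH$ into spherical-harmonic eigenspaces. By design of $\cD$, separating the two classes with population normalized margin $\gamma$ forces $f$ to correlate at level $\Omega(\gamma)$ with a fixed unit-$\gH$-norm ``informative'' function $\psi$ that lives in the degree-$2$ eigenspace $H_2$, which has dimension $\Theta(d^2)$ and over which $\psi$ is essentially uniformly spread; one also checks that the degree-$0$, degree-$1$, and degree-$\ge 3$ components of $f$ cannot separate $\cD$. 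The crux is then: with high probability over $n=o(d^2)$ training points, \emph{every} $\beta\in V$ satisfies $\langle\beta,\psi\rangle_{\gH} = O(\sqrt n/d)\,\|\beta\|_{\gH}$ — because $\Pi_{H_2}V$ is an $n$-dimensional subspace of the $\Theta(d^2)$-dimensional $H_2$ and $\psi$ behaves like a generic unit vector there, so its projection onto $\Pi_{H_2}V$ has length $O(\sqrt{n/d^2})$. Hence the best normalized margin attainable by a span predictor is $O(\sqrt n/d)=o(1)$, and an anti-concentration/fat-shattering argument on $\cD$ turns $o(1)$ margin into constant generalization error, giving the $\Omega(d^2)$ bound.

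\textbf{Main obstacle.} The hard part is the lower bound, and within it the uniform-over-$\alpha$ control of $\langle\beta,\psi\rangle_{\gH}/\|\beta\|_{\gH}$. This requires (i) pinning down the harmonic expansion and eigenvalues of the NTK precisely enough to locate $H_2$ and compute $\|\psi\|_{\gH}$; (ii) a random-matrix-style concentration statement about the degree-$2$ tensor features of $n$ random points in $d$ dimensions, controlling $\Pi_{H_2}$ restricted to $\mathrm{span}\{\phi(x_i)\}$ — together with ruling out that the other eigenspaces help; and (iii) converting ``small correlation with $\psi$'' into ``small margin'' into ``large test error'' using the specific geometry of $\cD$. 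The upper bound, by contrast, is comparatively routine once tool (i) is available: it reduces to an explicit margin computation plus a textbook Rademacher bound.
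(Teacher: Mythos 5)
Your upper bound is essentially the paper's argument: exhibit the explicit width-4 net $[x^\top e_1]_+ + [-x^\top e_1]_+ - [x^\top e_2]_+ - [-x^\top e_2]_+$, which has $\Omega(1)$ normalized margin on the support of $\cD$, invoke the weak-regularization/max-margin theorem, and apply the Rademacher margin bound of Golowich et al.\ to get test error $\lesssim \sqrt{d/n}$. That part is fine.

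The lower bound has a genuine gap at exactly the step you flag as the "crux," and the gap is not just technical. Your plan is: (a) show every predictor in the span has small correlation with an informative degree-2 function $\psi$, hence (b) small normalized margin, hence (c) constant test error. Step (c) does not follow: margin-based generalization bounds are one-directional upper bounds on error, so "best achievable margin is $o(1)$" is compatible with the predictor classifying \emph{every} point correctly with a tiny positive margin. There is no generic "anti-concentration/fat-shattering" conversion from small margin to $\Omega(1)$ test error, and the paper explicitly notes that "proving that the NTK has a small margin does not suffice because the generalization error bounds which depend on margin may not be tight." The paper's actual mechanism avoids margin entirely: it exploits the sign symmetry of $\cD$ to show that $f((1,0,z);\beta)+f((-1,0,z);\beta)$ and $f((0,1,z);\beta)+f((0,-1,z);\beta)$ are both within $O\bigl(\tfrac{1}{d}\sum_i|\beta_i|\bigr)$ of the same noise-only function $2\tilde f(z;\beta)$; it then proves via a degree-4 polynomial approximation of the kernel, hypercube Fourier analysis, a smallest-singular-value bound on the matrix of degree-2 tensor features $z_i^{\otimes 2}$ (this is where $n\lesssim d^2$ enters, and it is the closest analogue of your random-subspace-projection step), and Bonami's hypercontractivity lemma that $|\tilde f(z;\beta)|\gtrsim \frac{1}{d}\sum_i|\beta_i|$ with constant probability over $z$. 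When that happens, $f^+$ and $f^-$ share a sign, so at least one of the four equally likely points $(\pm1,0,z),(0,\pm1,z)$ — which carry opposite labels — is misclassified. That forced sign error, not a margin bound, is what yields $\Omega(1)$ test error. Secondary issues: $\cD$ is supported on (a shift of) the hypercube, not the sphere, so the spherical-harmonic eigendecomposition you invoke does not directly apply, and the claim that the non-degree-2 components "cannot separate $\cD$" is asserted rather than argued. To repair your proof you would need to replace (b)$\Rightarrow$(c) with an argument that the predictor's output on a test point is dominated by a label-independent, noise-driven term — which is precisely the symmetrization-plus-anti-concentration argument the paper builds.
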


The full result is stated in Section~\ref{sec:kernel_vs_nn}. The intuition is that regularization allows the neural net to obtain a better margin than the fixed NTK kernel and thus achieve better generalization. Our sample complexity lower bound for NTK applies to a broad class of losses including standard 0-1 classification loss and squared $\ell_2$. To the best of our knowledge, the proof techniques for obtaining this bound are novel and of independent interest (see our proof overview in Section~\ref{sec:kernel_vs_nn}). In Section~\ref{sec:experiments}, we confirm empirically that an explicit regularizer can indeed improve the margin and generalization.

\citet{yehudai2019power}~also prove a lower bound on the learnability of neural net kernels. They show that an approximation result that $\Omega(\exp(d))$ random relu features are required to fit a single neuron in $\ell_2$ squared loss, which lower bounds the amount of over-parametrization necessary to approximate a single neuron. In contrast, we prove sample-complexity lower bounds which hold for both classification and $\ell_2$ loss even with \textit{infinite over-parametrization}. 

Motivated by the provably better generalization of regularized neural nets for our constructed instance, in Section~\ref{sec:optimization}~we study their optimization, as the previously cited results only apply when the neural net behaves like a kernel. We show optimization is possible for infinite-width regularized nets.

\begin{figure}
	\begin{minipage}[c]{0.16\textwidth}
	~~~~~~~~~~~~~~~~~\includegraphics[width=\textwidth]{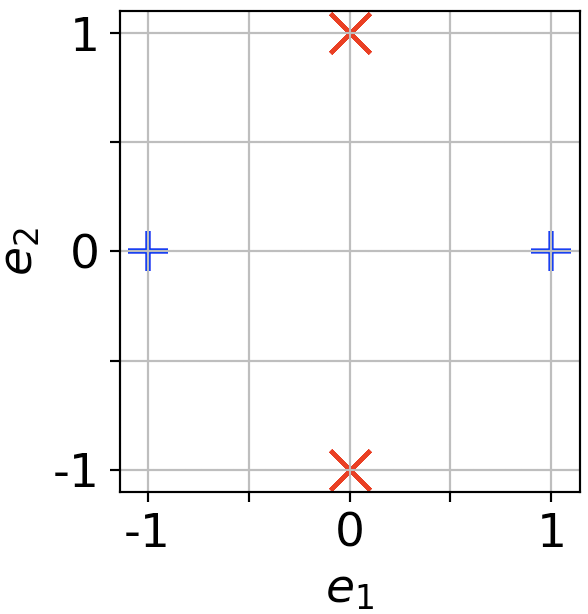}
	\end{minipage}\hfill
	\begin{minipage}[c]{0.6\textwidth}
		\caption{Datapoints from $\cD$ have first two coordinates displayed above, with red and blue denoting labels of -1, +1, respectively. The remaining coordinates are uniform in $\{-1, +1\}^{d - 2}$.}
		\label{fig:cDvis}	
	\end{minipage}
	\vspace{-0.5cm}
\end{figure}

\begin{theorem}[Informal, see Theorem~\ref{thm:polytimeopt-noparam}]
	For infinite-width two layer networks with $\ell_2$-regularized loss, noisy gradient descent finds a global optimizer in a polynomial number of iterations.
\end{theorem}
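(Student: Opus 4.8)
The plan is to pass to the infinite-width (mean-field) description, where a network is a probability distribution $\rho$ over neuron parameters $\theta=(w,a)\in\R^{d+1}$, its prediction is $f_\rho(x)=\E_{\theta\sim\rho}[a\cdot\relu(w^\top x)]$, and the $\ell_2$-regularized logistic objective becomes a functional
$L(\rho)=\widehat{\E}_{(x,y)}\big[\ell(y f_\rho(x))\big]+\reg\,\E_{\theta\sim\rho}\big[\|\theta\|^2\big]$.
The key structural fact is that $L$ is \emph{convex} in $\rho$: the data term is a convex function of the quantity $f_\rho$, which is linear in $\rho$, and the regularizer is linear in $\rho$. Hence $L$ has a unique optimal value and no spurious stationary points in distribution space. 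Noisy gradient descent on a (possibly unboundedly large) collection of neurons is, in this limit, a time-discretization of the Wasserstein gradient flow of the \emph{free energy} $F_\tau(\rho)=L(\rho)+\tau\int\rho\log\rho$, i.e.\ of the McKean--Vlasov equation $\partial_t\rho_t=\nabla\cdot\!\big(\rho_t\nabla\tfrac{\delta L}{\delta\rho_t}\big)+\tau\Delta\rho_t$; the Gaussian noise in "noisy GD" supplies exactly the Laplacian/entropy term, and the infinite-width assumption removes any need for a particle-approximation step.

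First I would analyze the continuous-time dynamics. Along the flow $F_\tau$ is a Lyapunov function with dissipation $\tfrac{d}{dt}F_\tau(\rho_t)=-\E_{\rho_t}\big[\|\nabla(\tfrac{\delta L}{\delta\rho_t}+\tau\log\rho_t)\|^2\big]$, which is a relative Fisher information toward the proximal Gibbs measure $\hat\rho_t\propto\exp(-\tfrac1\tau\tfrac{\delta L}{\delta\rho_t})$. Convexity of $L$ gives the complementary bound $F_\tau(\rho_t)-\min_\rho F_\tau(\rho)\le\tau\,\KL(\rho_t\,\|\,\hat\rho_t)$. Combining these two facts with a log-Sobolev inequality for $\hat\rho_t$—which holds with a quantitatively controlled constant once the neuron parameters are confined to a bounded region (enforced by the regularizer together with a mild projection/weight-clipping step) and $\tfrac{\delta L}{\delta\rho}$ is bounded and Lipschitz there—yields a differential inequality $\tfrac{d}{dt}\big(F_\tau(\rho_t)-\min F_\tau\big)\le -c\,\big(F_\tau(\rho_t)-\min F_\tau\big)$, hence convergence to an $\epsilon$-approximate minimizer of $F_\tau$.

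Then I would (i) take $\tau$ inverse-polynomially small so that $\tau\int\rho\log\rho$ is negligible and a minimizer of $F_\tau$ is an $\epsilon$-approximate minimizer of the original regularized loss $L$; and (ii) discretize the flow in time, showing that a finite-step-size noisy GD update tracks the ideal dynamics with error that is polynomially small per step, via a standard one-step-error accumulation argument using boundedness and Lipschitzness of the drift. Chaining these estimates gives a $\poly(1/\epsilon,d,1/\reg,1/\tau)$ iteration count.

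The main obstacle is the functional-inequality step: controlling the log-Sobolev (or Poincar\'e) constant of $\hat\rho_t$ \emph{uniformly along the trajectory} and \emph{polynomially} in the parameters. A black-box Holley--Stroock perturbation of a Gaussian reference gives a constant degrading like $\exp(\mathrm{osc}(\tfrac{\delta L}{\delta\rho})/\tau)$, which is too weak when $\tau$ is polynomially small. Making the argument go through therefore requires either exploiting the $\relu$ structure and the confinement to keep the oscillation of the potential small relative to $\tau$, or replacing the LSI route with a direct convexity argument: whenever $\rho_t$ is $\epsilon$-suboptimal there is a neuron direction whose first variation lies well below the $\rho_t$-average, so the drift $-\nabla\tfrac{\delta L}{\delta\rho_t}$ transports mass toward it and decreases $L$ by a polynomial amount, with the noise guaranteeing that $\rho_t$ already places enough mass near that direction to seed the improvement. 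Extracting a clean polynomial rate from this "moving-mass" mechanism is the crux of the proof.
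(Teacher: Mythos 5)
There is a genuine gap. Your primary route---interpreting the noise as Gaussian/Langevin noise, so that the dynamics become a McKean--Vlasov equation minimizing the entropy-regularized free energy $F_\tau$, and then invoking a log-Sobolev inequality for the proximal Gibbs measure---is precisely the approach of \citet{mei2018mean} that the paper is trying to improve upon, and it does \emph{not} yield a polynomial rate: as you yourself note, the Holley--Stroock constant degrades like $\exp(\mathrm{osc}(\delta L/\delta\rho)/\tau)$, and no confinement or Lipschitz control of the first variation repairs this when $\tau$ is inverse-polynomial. The paper sidesteps this entirely by using a \emph{different noise model}: the perturbation is $-\sigma\rho_t+\sigma U^d$, i.e.\ a small rate of re-initializing neurons uniformly on the sphere (a birth--death term), not a Laplacian. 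This choice is not cosmetic---it guarantees a deterministic floor of mass $\sigma\beta$ with $\beta=\exp(-2d\log(2c_2/\tau))$ inside any spherical cap of the sublevel set $\{\bar\theta: L'[\rho_t](\bar\theta)\le-\tau/2\}$ (via Lipschitzness of $L'$ on the sphere), which is exactly the ``seed'' your fallback argument needs and which Gaussian noise does not obviously provide with polynomial quantitative control.

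Your fallback sketch (the ``moving-mass'' mechanism driven by convexity of $L$ in $\rho$) is indeed the paper's actual argument, but you stop at the point you correctly identify as the crux. The missing quantitative ingredients are: (i) 2-homogeneity of $\Phi$ and $V$, which gives $\bar\theta^\top\nabla L'[\rho](\theta)=2\|\theta\|_2 L'[\rho](\bar\theta)$, so the second moment of mass supported on the descent cone satisfies $N'(s)\ge(\tau-\sigma)N(s)+\sigma\beta$ and grows \emph{exponentially}; (ii) an a priori bound on the second moment $W_t^2$ of $\rho_t$ (from nonnegativity of $R$ and positivity of $V$ on the sphere), so that this exponential growth over a window of length $l\approx\frac{1}{\tau}\bigl(\log(W_\epsilon^2/\sigma)+d\log(1/\tau)\bigr)$ forces a contradiction unless $L'[\rho_t](\bar\theta)$ itself moves by $\Omega(\tau)$ during the window; and (iii) a lemma tying the total variation of $t\mapsto L'[\rho_t](\bar\theta)$ to the decrease in the objective (via $\|\frac{d}{dt}\int\Phi\,d\rho_t\|$ and Cauchy--Schwarz against the dissipation $\E_{\rho_t}\|v[\rho_t]\|^2$), so that the forced movement of $L'$ converts into a polynomial drop in $L$. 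Without (i)--(iii) the escape time from a suboptimal configuration is not controlled, and the $\exp(-d\log(1/\epsilon)\cdot\poly)$ choice of the noise level $\sigma$ versus the $\poly(d,1/\epsilon)$ total time cannot be balanced. So the proposal points at the right mechanism but does not constitute a proof.
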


This improves upon prior works \citep{mei2018mean, chizat2018global,sirignano2018mean,rotskoff2018neural} which study optimization in the same infinite-width limit but do not provide polynomial convergence rates. (See more discussions in Section~\ref{sec:optimization}.)

To establish Theorem~\ref{thm:informal_comparison}, we rely on tools from margin theory. In Section~\ref{sec:margin}, we prove a number of results of independent interest regarding the margin of a regularized neural net. We show that the global minimum of weakly-regularized logistic loss of any homogeneous network (regardless of depth or width) achieves the max normalized margin among all networks with the same architecture (Theorem~\ref{thm:binary_margin}). By ``weak'' regularizer, we mean that the coefficient of the regularizer in the loss is very small (approaching 0). By combining with a result of \cite{golowich2017size}, we conclude that the minimizer enjoys a width-free generalization bound depending on only the inverse normalized margin (normalized by the norm of the weights) and depth  (Corollary~\ref{cor:margingeneralization}). This explains why optimizing the $\ell_2$-regularized loss typically used in practice can lead to parameters with a large margin and good generalization. We further note that the maximum possible margin is non-decreasing in the width of the architecture, so the generalization bound of Corollary~\ref{cor:margingeneralization} improves as the size of the network grows (see Theorem~\ref{thm:marginnondecreasing}). Thus, even if the dataset is already separable, it could still be useful to further over-parameterize to achieve better generalization.

Finally, we empirically validate several claims made in this paper in Section~\ref{sec:experiments}. First, we confirm on synthetic data that neural networks do generalize better with an explicit regularizer vs. without. Second, we show that for two-layer networks, the test error decreases and margin increases as the hidden layer grows, as predicted by our theory.

\subsection{Additional Related Work}
\citet{zhang2016understanding} and \citet{neyshabur2017exploring} show that neural network generalization defies conventional explanations and requires new ones. \citet{neyshabur2014search} initiate the search for the ``inductive bias'' of neural networks towards solutions with good generalization. Recent papers \citep{hardt2015train,brutzkus2017sgd,chaudhari2016entropy} study inductive bias through training time and sharpness of local minima. \citet{neyshabur2015path} propose a steepest descent algorithm in a geometry invariant to weight rescaling and show this improves generalization. \citet{morcos2018importance} relate generalization to the number of ``directions'' in the neurons. Other papers \citep{gunasekar2017implicit,soudry2018the,nacson2018convergence,gunasekar2018implicit, li2018algorithmic,gunasekar2018characterizing,li2018algorithmic,ji2018risk} study implicit regularization towards a specific solution. \citet{ma2017implicit} show that implicit regularization helps gradient descent avoid overshooting optima. \citet{rosset2004boosting,rosset2004margin} study linear logistic regression with weak regularization and show convergence to the max margin. In Section~\ref{sec:margin}, we adopt their techniques and extend their results.

A line of work initiated by \citet{neyshabur2015norm} has focused on deriving tighter norm-based Rademacher complexity bounds for deep neural networks \citep{bartlett2017spectrally,neyshabur2017pac,golowich2017size} and new compression based generalization properties \citep{arora2018stronger}.~\citet{bartlett2017spectrally} highlight the important role of normalized margin in neural net generalization.~\citet{wei2019data}~prove generalization bounds depending on additional data-dependent properties.~\citet{dziugaite2017computing} compute non-vacuous generalization bounds from PAC-Bayes bounds. \citet{neyshabur2018towards} investigate the Rademacher complexity of two-layer networks and propose a bound that is decreasing with the distance to initialization. \citet{liang2018just} and \citet{belkin2018understand} study the generalization of kernel methods.

For optimization, \citet{soudry2016no} explain why over-parametrization can remove bad local minima. \citet{safran2016quality} show over-parametrization can improve the quality of a random initialization. \citet{haeffele2015global}, \citet{nguyen2017loss}, and \citet{venturi2018neural} show that for sufficiently overparametrized networks, all local minima are global, but do not show how to find these minima via gradient descent. \citet{du2018power} show for two-layer networks with quadratic activations, all second-order stationary points are global minimizers. \citet{arora2018optimization} interpret over-parametrization as a means of acceleration. \citet{mei2018mean,chizat2018global,sirignano2018mean,dou2019training,mei2019mean} analyze a distributional view of over-parametrized networks. \citet{chizat2018global} show that Wasserstein gradient flow converges to global optimizers under structural assumptions. We extend this to a polynomial-time result.

Finally, many papers have shown convergence of gradient descent on neural nets~\citep{allen2018convergence,allen2018learning,li2018learning,du2018gradient,du2018gradientb,arora2019fine,zou2018stochastic,cao2019generalization,jacot2018neural,chizat2018note} using analyses which prove the weights do not move far from initialization. These analyses do not apply to the regularized loss, and our experiments in Section~\ref{sec:additionalexp}~suggest that moving away from the initialization is important for better test performance.

Another line of work takes a Bayesian perspective on neural nets. Under an appropriate choice of prior, they show an equivalence between the random neural net and Gaussian processes in the limit of infinite width or channels~\citep{neal1996priors,williams1997computing,lee2017deep,matthews2018gaussian,garriga2018deep,novak2018bayesian}. This provides another kernel perspective of neural nets.

\citet{yehudai2019power,chizat2018note}~also argue that the kernel perspective of neural nets is not sufficient for understanding the success of deep learning.~\citet{chizat2018note}~argue that the kernel perspective of gradient descent is caused by a large initialization and does not necessarily explain the empirical successes of over-parametrization.~\citet{yehudai2019power}~prove that $\Omega(\exp(d))$ random relu features cannot approximate a single neuron in squared error loss. In comparison, our lower bounds are for the sample complexity rather than width of the NTK prediction function and apply even with infinite over-parametrization for both classification and squared loss. 

\subsection{Notation}
\label{subsec:notation}
Let $\R$ denote the set of real numbers. 
We will use $\| \cdot \|$ to indicate a general norm, with $\|\cdot\|_2$ denoting the $\ell_2$ norm and $\|\cdot\|_F$ the Frobenius norm. We use $\ \bar{}\ $ on top of a symbol to denote a unit vector: when applicable, $\bar{u} \triangleq u/\|u\|$, with the norm $\|\cdot\|$ clear from context. Let $\mathcal{N}(0, \sigma^2)$ denote the normal distribution with mean 0 and variance $\sigma^2$. For vectors $u_1 \in\R^{d_1}$, $u_2 \in\R^{d_2}$, we use the notation $(u_1, u_2) \in \R^{d_1 + d_2}$ to denote their concatenation. We also say a function $f$ is $a$-homogeneous in input $x$ if $f(cx) = c^a f(x)$ for any $c$, and we say $f$ is $a$-positive-homogeneous if there is the additional constraint $c > 0$. We reserve the symbol $X = [x_1,\dots, x_n]$ to denote the collection of datapoints (as a matrix), and $Y = [y_1,\dots, y_n]$ to denote labels. We use $d$ to denote the dimension of our data. We will use the notations $a \lesssim b$, $a \gtrsim b$ to denote less than or greater than up to a universal constant, respectively, and when used in a condition, to denote the existence of such a constant such that the condition is true. Unless stated otherwise, $O(\cdot), \Omega(\cdot)$ denote some universal constant in upper and lower bounds. The notation $\poly$ denotes a universal constant-degree polynomial in the arguments.

	\section{Generalization of Regularized Neural Net vs. NTK Kernel}
\label{sec:kernel_vs_nn}

We will compare neural net solutions found via regularization and methods involving the NTK and construct a data distribution $\mathcal{D}$ in $d$ dimensions which the neural net optimizer of regularized logistic loss learns with sample complexity $O(d)$. The kernel method will require $\Omega(d^2)$ samples to learn. 

We start by describing the distribution $\cD$ of examples $(x,y)$. Here $e_i$ is the i-th standard basis vector and we use $x^\top e_i$ to represent the $i$-coordinate of $x$ (since the subscript is reserved to index training examples). First, for any $ k \ge 3, x^\top e_k \sim \{-1,+1\}$ is a uniform random bit, and for $x^\top e_1, x^\top e_2$ and $y$, choose
\begin{align}
\begin{array}{llll}
y = +1,  & x^\top e_1 = +1, & x^\top e_2 = 0& \textup{ w/ prob. } 1/4 \\
y = +1, & x^\top e_1 = -1, &x^\top e_2 = 0 & \textup{ w/ prob. } 1/4 \\
y = -1, &x^\top e_1 = 0, &x^\top e_2 = +1 & \textup{ w/ prob. } 1/4 \\
y = -1, &x^\top e_1 = 0, &x^\top e_2 = -1 & \textup{ w/ prob. } 1/4 \\
\end{array}\label{eq:comparison_distribution}
\end{align}
The distribution $\cD$ contains all of its signal in the first 2 coordinates, and the remaining $d - 2$ coordinates are noise. We visualize its first 2 coordinates in Figure~\ref{fig:cDvis}.

Next, we formally define the two layer neural net with relu activations and its associated NTK. We parameterize a two-layer network with $m$ units by last layer weights $w_1,\ldots,w_m \in\R$ and weight vectors $u_1, \ldots, u_m \in\R^d$.  We denote by $\Theta$ the collection of parameters and by $\theta_j$ the unit-$j$ parameters $(u_j, w_j)$. The network computes 
$
f^{\textup{NN}}(x;\Theta) \triangleq \sum_{j =1}^m w_j[u_j^\top x]_+
$,
where $[\cdot]_+$ denotes the~\relu~activation. For binary labels $y_1,\ldots,y_n \in\{-1,+1\}$, the $\ell_2$ regularized logistic loss is
\begin{align}
L_{\lambda} (\Theta) \triangleq \frac{1}{n}\sum_{i= 1}^n \log(1 + \exp(-y_i f^{\textup{NN}}(x_i; \Theta))) + \lambda \|\Theta\|_F^2
\label{eq:reg-train-loss}
\end{align}
Let $\Theta_{\lambda} \in \argmin_{\Theta} L_{\lambda}(\Theta)$ be its global optimizer. Define the NTK kernel associated with the architecture (with random weights): 
\begin{align*}
K(x',x)= \E_{w\sim \mathcal{N}(0, r_w^2), u \sim\mathcal{N}(0, r_u^2I)} \left[\left\langle \nabla_{\theta} f^{\textup{NN}}(x; \Theta), \nabla_{\theta} f^{\textup{NN}}(x'; \Theta)\right\rangle\right]
\end{align*} 
where $\nabla_{\theta}f^{\textup{NN}}(x; \Theta)= (w\one(x^\top u \ge 0) x, [x^\top u]_+)$ is the gradient of the network output with respect to a generic hidden unit, and $r_w, r_u$ are relative scaling parameters. Note that the typical NTK is realized specifically with scales $r_w = r_u = 1$, but our bound applies for all choices of $r_w, r_u$. 

For coefficients $\beta$, we can then define the prediction function $f^{\textup{kernel}}(x;\beta)$ in the RKHS induced by $K$ as 
$
f^{\textup{kernel}}(x;\beta) \triangleq \sum_{i = 1}^n \beta_i K(x_i, x) 
$.
For example, such a classifier would be attained by running gradient descent on squared loss for a wide network using the appropriate random initialization (see~\citep{jacot2018neural,du2018gradient,du2018gradientb,arora2019fine}). We now present our comparison theorem below and fill in its proof in Section~\ref{sec:proof_comparison}.
\begin{theorem}\label{thm:comparison}
			Let $\cD$ be the distribution defined in equation~\ref{eq:comparison_distribution}. 
With probability $1 - d^{-5}$ over the random draw of $n \lesssim d^2$ samples $(x_1, y_1), \ldots, (x_n, y_n)$ from $\mathcal{D}$, for all choices of $\beta$, the kernel prediction function $f^{\textup{kernel}}(\cdot;\beta)$ will have at least $\Omega(1)$ error:
	\begin{align*}
	\Pr_{(x, y) \sim \mathcal{D}}[f^{\textup{kernel}}(x;\beta) y \le 0]= \Omega(1)
	\end{align*}
	Meanwhile, for $\lambda \le \poly(n)^{-1}$, the regularized neural net solution $f^{\textup{NN}}(\cdot; \Theta_{\lambda})$ with at least 4 hidden units can have good generalization with $O(d^2)$ samples because we have the following generalization error bound: 
	$$
	\Pr_{(x, y) \sim \mathcal{D}}[f^{\textup{NN}}(x;\Theta_{\lambda}) y \le 0] \lesssim \sqrt{\frac{d}{n}} 
	$$ This implies a $\Omega(d)$ sample-complexity gap between the regularized neural net and kernel prediction function.
\end{theorem}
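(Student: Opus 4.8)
The two halves are proved by completely different means, so I would treat them separately and combine at the end.

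\emph{The neural‑net upper bound.} The point is that $\cD$ is separated with a \emph{dimension‑independent} margin by a tiny net. Take $g(x) = [x^\top e_1]_+ + [-x^\top e_1]_+ - [x^\top e_2]_+ - [-x^\top e_2]_+$, a two‑layer relu net with $4$ hidden units and $\|\Theta_g\|_F^2 = 8$. On the support of $\cD$ exactly one of $|x^\top e_1|,|x^\top e_2|$ equals $1$ and the other $0$, so $g(x) = |x^\top e_1| - |x^\top e_2| = y$ identically; hence $g$ has functional margin $1$ on any sample and normalized margin $1/8 = \Omega(1)$, independent of $d$ and $n$. Since $\lambda \le \poly(n)^{-1}$, Theorem~\ref{thm:binary_margin} gives that $\Theta_\lambda$ attains the max normalized margin over all nets with this architecture, hence $\gamma(\Theta_\lambda) \ge 1/8$ (this uses $\ge 4$ hidden units; cf.\ Theorem~\ref{thm:marginnondecreasing}). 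Every point of $\cD$ has squared norm $1 + (d-2) = d-1$, so plugging $\gamma = \Omega(1)$ and $\max_i\|x_i\| = \sqrt{d-1}$ into the width‑free margin generalization bound of Corollary~\ref{cor:margingeneralization} for two‑layer nets (via \cite{golowich2017size}) yields test error $\lesssim \max_i\|x_i\|/(\gamma\sqrt n) \lesssim \sqrt{d/n}$, with the stated failure probability coming from the Rademacher concentration there.

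\emph{The kernel lower bound: reduction and the two key quantities.} Every point of $\cD$ has norm $\sqrt{d-1}$, so from the arc‑cosine formulas for the two pieces of the NTK we get $K(x',x) = h(\langle x',x\rangle)$ for an explicit analytic $h$, and $f^{\textup{kernel}}(x;\beta) = \sum_i \beta_i h(\langle x_i,x\rangle)$. With probability $1-d^{-5}$ all relevant inner products are $O(\sqrt{d\log d}) = o(d)$, so I would Taylor‑expand $h(t) = c_0 + c_1 t + c_2 t^2 + r(t)$ with $c_0 \asymp d$, $c_1 \asymp 1$, $c_2 \asymp 1/d$, and $r$ negligible on that range. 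Writing $x = a + z$ with signal $a\in\{\pm e_1,\pm e_2\}$ and noise $z\in\{\pm1\}^{d-2}$, expand $f^{\textup{kernel}}$ in the Fourier basis of $z$ while tracking the dependence on the signal bits. Two quantities fall out: the \emph{correlation with the label} $\E_{(x,y)\sim\cD}[y\,f^{\textup{kernel}}(x;\beta)] \asymp \tfrac1d\sum_i y_i\beta_i$, which is at most $O(\tfrac{\sqrt n}{d}\|\beta\|_2)$; and the \emph{fluctuation}, where $\E_{(x,y)\sim\cD}[f^{\textup{kernel}}(x;\beta)^2]$ is at least the squared $L^2(z)$‑norm of its degree‑$2$‑in‑$z$ Fourier part, which (up to the negligible effect of $r$) equals $\tfrac{\alpha_2^2}{2}\,\beta^\top\tilde W\beta$, with $\alpha_2 \asymp 1/d$ the level‑$2$ coefficient of $h$ and $\tilde W_{ij} = \langle z_i,z_j\rangle^2 - (d-2)$. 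The crux is that $\tilde W = \Psi\Psi^\top$ is the Gram matrix of $\psi_i = \mathrm{vec}(z_iz_i^\top - I)\in\R^{(d-2)^2}$, each of exact squared norm $(d-2)(d-3)\asymp d^2$, and these vectors are near‑orthogonal: the off‑diagonal block has operator norm $O(d\sqrt n\,\mathrm{polylog}\,d)$ by matrix concentration, which is $o(d^2)$ exactly when $n \lesssim d^2$. Hence $\lambda_{\min}(\tilde W)\gtrsim d^2$, so $\E[f^{\textup{kernel}}(x;\beta)^2] \gtrsim \|\beta\|_2^2$ while the correlation is only $o(\|\beta\|_2)$.

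\emph{From diluted signal to constant error, and the conclusion.} To convert this into $\Pr_\cD[y f^{\textup{kernel}}(x;\beta)\le 0] = \Omega(1)$ I would use a companion‑pairing argument: for fixed noise $z$, the four points $(\pm e_1 + z,+1)$ and $(\pm e_2 + z,-1)$ of $\cD$ give values $M(z)+D(z)\pm G_1(z)$ and $M(z)-D(z)\pm G_2(z)$, where $\E_z[D(z)]$ is the correlation above, $|D(z)| \le O(\tfrac{\sqrt n}{d}\|\beta\|_2)$ for \emph{every} $z$, and $M(z) = \sum_i\beta_i h(\langle z_i,z\rangle)$ has $\mathrm{std}_z(M) \gtrsim \|\beta\|_2$ (its degree‑$2$ part is the quantity just bounded). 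Being simultaneously correct on all four of these points forces $|M(z)| < D(z) \le O(\tfrac{\sqrt n}{d}\|\beta\|_2)$, a threshold far below $\mathrm{std}_z(M)$; an anti‑concentration estimate for $M$ (reducing to its Fourier levels $1$ and $2$, which carry essentially all of its variance) shows this holds for at most a $1-\Omega(1)$ fraction of $z$, so for an $\Omega(1)$ fraction of $z$ the classifier errs on one of the four points, giving $\Pr_\cD[\mathrm{err}]\ge\Omega(1)$. Finally, the net achieves any constant error with $n = O(d)$ while the kernel has $\Omega(1)$ error for all $n\lesssim d^2$, so the sample‑complexity gap is $\Omega(d)$.

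\textbf{Main obstacle.} The whole difficulty is in the kernel lower bound, and within it the identity $\tilde W = \Psi\Psi^\top$ together with the bound $\|\Psi\Psi^\top - \Theta(d^2)I\|_{\mathrm{op}} = o(d^2)$ for $n\lesssim d^2$: this is where the threshold $d^2$ enters (it is the dimension of the span of the degree‑$2$ features $z_iz_i^\top$), and it is the precise sense in which "the kernel cannot focus on the informative feature $e_1e_1^\top - e_2e_2^\top$". The second delicate point is the final anti‑concentration step, since $M(z)$ is a priori a high‑degree polynomial in $z$; one must exploit that its variance is concentrated on low Fourier levels.
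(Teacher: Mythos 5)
Your proposal is correct and follows essentially the same route as the paper: the identical $4$-unit max-margin construction plus the weak-regularization margin theorem and the Golowich et al.\ bound for the upper bound, and for the lower bound the same three ingredients --- sign-symmetric cancellation of the signal coordinates over the four companion points, a lower bound on the second moment via the $\asymp d^2$-conditioned Gram matrix of the degree-$2$ features $z_iz_i^\top$ (which is exactly where $n \lesssim d^2$ enters), and hypercontractive anti-concentration for the low-degree truncation of the kernel function of the noise. The only cosmetic differences are that you phrase the variance lower bound via Parseval on the level-$2$ Fourier weight, where the paper separately verifies non-negativity of the cross-level terms, and you carry $\|\beta\|_2$ where the paper carries $\sum_i|\beta_i|$.
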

While the above theorem is stated for classification, the same $\mathcal{D}$ can be used to straightforwardly prove a $\Omega(d)$ sample complexity gap for the truncated squared loss $\ell(\hat{y};y) = \min((y - \hat{y})^2,1)$.\footnote{The truncation is required to prove generalization of the regularized neural net using standard tools.} We provide more details in Section~\ref{subsec:regression_compare}.

Our intuition of this gap is that the regularization allows the neural net to find informative features (weight vectors), that are adaptive to the data distribution and easier for the last layers' weights to separate. For example, the neurons $[e_1x]_+$, $[-e_1x]_+$, $[e_2x]_+$, $[-e_2x]_+$ are enough to fit our particular distribution. In comparison, the NTK method is unable to change the feature space and is only searching for the coefficients in the kernel space.

\textit{Proof techniques for the upper bound: } For the upper bound, neural nets with small Euclidean norm will be able to separate $\cD$ with large margin (a two-layer net with width 4 can already achieve a large margin). As we show in Section~\ref{sec:margin}, a solution with a max neural-net margin is attained by the global optimizer of the regularized logistic loss --- in fact, we show this holds for generally homogeneous networks of any depth and width (Theorem~\ref{thm:binary_margin}). Then, by the classical connection between margin and generalization~\citep{koltchinskii2002empirical}, this optimizer will generalize well.

\textit{Proof techniques for the lower bound: } On the other hand, the NTK will have a worse margin when fitting samples from $\cD$ than the regularized neural networks because NTK operates in a fixed kernel space.\footnote{There could be some variations of the NTK space depending on the scales of the initialization of the two layers, but our Theorem~\ref{thm:comparison} shows that these variations also suffer from a worse sample complexity.} However, proving that the NTK has a small margin does not suffice because the generalization error bounds which depend on margin may not be tight. 

We develop a new technique to prove lower bounds for kernel methods, which we believe is of independent interest, as there are few prior works that prove lower bounds for kernel methods. (One that does is~\citep{ng2004feature}, but their results require constructing an artificial kernel and data distribution, whereas our lower bounds are for a fixed kernel.) The main intuition is that because NTK uses infinitely many random features, it is difficult for the NTK to focus on a small number of informative features -- doing so would require a very high RKHS norm. In fact,  we show that with a limited number of examples, any function that in the span of the training examples must heavily use random features rather than informative features. The random features can collectively fit the training data, but will give worse generalization.

	\section{Perturbed Wasserstein Gradient Flow Finds Global Optimizers in Polynomial Time}
\label{sec:optimization}
In the prior section, we argued that a neural net with $\ell_2$ regularization can achieve much better generalization than the NTK. Our result required attaining the global minimum of the regularized loss; however, existing optimization theory only allows for such convergence to a global minimizer with a large initialization and no regularizer. Unfortunately, these are the regimes where the neural net learns a kernel prediction function~\citep{jacot2018neural,du2018gradient,arora2019fine}.

In this section, we show that at least for infinite-width two-layer nets, optimization is not an issue: noisy gradient descent finds global optimizers of the $\ell_2$ regularized loss in polynomial iterations.

Prior work \citep{mei2018mean,chizat2018global} has shown that as the hidden layer size grows to infinity, gradient descent for a finite neural network approaches the Wasserstein gradient flow over distributions of hidden units (defined in \eqref{eq:wassflow}). With the assumption that the gradient flow converges, which is non-trivial since the space of distributions is infinite-dimensional,~\citet{chizat2018global} prove that Wasserstein gradient flow converges to a global optimizer but do not specify a rate. \citet{mei2018mean} add an entropy regularizer to form an objective that is the infinite-neuron limit of stochastic Langevin dynamics. They show global convergence but also do not provide explicit rates. In the worst case, their convergence can be exponential in dimension. In contrast, we provide explicit \textit{polynomial} convergence rates for a slightly different algorithm, perturbed Wasserstein gradient flow. 

Infinite-width neural nets are modeled mathematically as a distribution over weights:
formally, we optimize the following functional over distributions $\dist$ on $\R^{d+1}$:
$
L[\dist] \triangleq R(\int \Phi d\dist ) + \int V d\dist 
$,
where $\Phi : \R^{d + 1} \rightarrow \R^k$, $R : \R^k \rightarrow \R$, and $V : \R^{d + 1} \rightarrow \R$. $R$ and $V$ can be thought of as the loss and regularizer, respectively. In this work, we consider 2-homogeneous $\Phi$ and $V$. We will additionally require that $R$ is convex and nonnegative and $V$ is positive on the unit sphere. Finally, we need standard regularity assumptions on $R, \Phi$, and $V$:
\begin{assumption}[Regularity conditions on $\Phi$, $R$, $V$]
	\label{assu:reg}
	$\Phi$ and $V$ are differentiable as well as upper bounded and Lipschitz on the unit sphere. $R$ is Lipschitz and its Hessian has bounded operator norm.
\end{assumption}
We provide more details on the specific parameters (for boundedness, Lipschitzness, etc.) in Section~\ref{subsec:optsetup}. We note that relu networks satisfy every condition but differentiability of $\Phi$.\footnote{The~\relu~activation is non-differentiable at 0 and hence the gradient flow is not well-defined. \citet{chizat2018global} acknowledge this same difficulty with~\relu.} We can fit a $\ell_2$ regularized neural network under our framework:
\begin{example}[Logistic loss for neural networks]
	\label{ex:logisticloss}
	We interpret $\dist$ as a distribution over the parameters of the network. Let $k \triangleq n$ and $\Phi_i(\param) \triangleq \weight \phi(\dir^{\top}x_i)$ for $\param = (\weight, \dir)$. In this case, $\int \Phi d\dist$ is a distributional neural network that computes an output for each of the $n$ training examples (like a standard neural network, it also computes a weighted sum over hidden units). We can compute the distributional version of the regularized logistic loss in \eqref{eq:reg-train-loss} by setting $V(\param)  \triangleq \lambda \|\param\|_2^2$ and $R(a_1, \ldots, a_n) \triangleq \sum_{i = 1}^n \log(1 + \exp(-y_i a_i))$. 
\end{example}

We will define $L'[\dist] : \R^{d + 1} \rightarrow \R$ with $L'[\dist](\param) \triangleq \langle R'(\int \Phi d\dist), \Phi(\param)\rangle + V(\param)$ and $v[{\dist}](\param) \triangleq -\nabla_\param L'[\dist](\param)$. Informally, $L'[\dist]$ is the gradient of $L$ with respect to $\dist$, and $v$ is the induced velocity field. For the standard Wasserstein gradient flow dynamics, $\dist_t$ evolves according to 
\begin{align}
\label{eq:wassflow}
\frac{d}{dt} \dist_t = - \nabla \cdot(v[\dist_t]\dist_t)
\end{align}
where $\nabla \cdot $ denotes the divergence of a vector field. For neural networks, these dynamics formally define continuous-time gradient descent when the hidden layer has infinite size (see Theorem 2.6 of~\citep{chizat2018global}, for instance). More generally,~\eqref{eq:wassflow} is due to the formula for Wasserstein gradient flow dynamics (see for example~\citep{santambrogio2017euclidean}), which are derived via continuous-time steepest descent with respect to Wasserstein distance over the space of probability distributions on the neurons. We propose the following modified dynamics:
\begin{align}
\label{eq:noisywassersteindynamics}
\frac{d}{dt} \dist_t = -\sigma \dist_t + \sigma U^d - \nabla \cdot(v[\dist_t]\dist_t)
\end{align}
where $U^d$ is the uniform distribution on $\sphd$. In our perturbed dynamics, we add very small uniform noise over $U^d$, which ensures that at all time-steps, there is sufficient mass in a descent direction for the algorithm to decrease the objective. For infinite-size neural networks, one can informally interpret this as re-initializing a very small fraction of the neurons at every step of gradient descent. We prove convergence to a global optimizer in time polynomial in $1/\epsilon, d$, and the regularity parameters.

\begin{theorem} [Theorem~\ref{thm:polytimeopt} with regularity parameters omitted]
	\label{thm:polytimeopt-noparam}
	Suppose that $\Phi$ and $V$ are 2-homogeneous and the regularity conditions of Assumption~\ref{assu:reg} are satisfied. Also assume that from starting distribution $\dist_0$, a solution to the dynamics in \eqref{eq:noisywassersteindynamics} exists. Define $L^{\star} \triangleq \inf_{\dist} L[\dist]$. Let $\epsilon > 0$ be a desired error threshold and choose
$
	\sigma \triangleq \exp(-d \log(1/\epsilon)\poly(k, L[\dist_0] - L^{\star}))$ and 
$	 t_{\epsilon} \triangleq \frac{d^2}{\epsilon^4}\poly(\log(1/\epsilon), k, L[\dist_0] - L^{\star})$, where the regularity parameters for $\Phi$, $V$, and $R$ are hidden in the $\poly(\cdot)$. Then, perturbed Wasserstein gradient flow converges to an $\eps$-approximate global minimum in $t_\eps$ time: $$\min_{0 \le t \le t_{\epsilon}} L[\dist_t] - L^{\star} \le \epsilon$$
\end{theorem}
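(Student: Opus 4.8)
The plan is a Lyapunov argument on $L[\rho_t]$: show it dissipates along \eqref{eq:noisywassersteindynamics} at a rate controlled by two ``non-stationarity'' quantities, then show that whenever $L[\rho_t] - L^\star \ge \epsilon$ at least one of them is bounded below by an explicit (small) polynomial in $\epsilon$ and the regularity parameters; integrating the dissipation inequality over $[0,t_\epsilon]$ and using $L[\rho_{t_\epsilon}] \ge L^\star$ then forces the minimum gap below $\epsilon$.

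First I would derive the energy identity. Since $L'[\rho]$ is the first variation of $L$, the chain rule plus integration by parts for the transport term give
\begin{align*}
\frac{d}{dt} L[\rho_t] \;=\; -\int \|v[\rho_t]\|_2^2 \, d\rho_t \;+\; \sigma \int L'[\rho_t]\, d(U^d - \rho_t),
\end{align*}
where the transport term is nonpositive and the perturbation term has magnitude at most $2\sigma\sup_{\|\theta\|_2=1}|L'[\rho_t](\theta)|$, which is an unconditional constant depending only on the Lipschitz bound on $R$ and the sup-bounds on $\Phi,V$ from Assumption~\ref{assu:reg}. In particular $L[\rho_t] \le L[\rho_0] + O(\sigma t)$, so for $t \le t_\epsilon$ the value stays bounded; since $V$ is $2$-homogeneous and positive on the sphere, $V \gtrsim \|\cdot\|_2^2$, hence $\rho_t$ has uniformly bounded second moment, which in turn bounds $\bar\Phi_t := \int\Phi\,d\rho_t$, $R'(\bar\Phi_t)$, and $\int\|v[\rho_t]\|_2^2 d\rho_t$ along the whole trajectory. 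I would also record the $2$-homogeneity identities: Euler gives $\langle\theta,\nabla_\theta L'[\rho](\theta)\rangle = 2L'[\rho](\theta)$, so (a) $4(L'[\rho](\theta))^2 \le \|\theta\|_2^2\|v[\rho](\theta)\|_2^2$, and (b) the radial component of the velocity at $\theta$ is $-2L'[\rho](\bar\theta)\,\theta$, so mass on directions with $L'<0$ grows in norm and mass on directions with $L'>0$ contracts toward the origin. Finally, $L^\star \le R(0)$, by placing all mass at the origin (using $\Phi(0)=V(0)=0$).

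Next I would bound the optimality gap by these two quantities. Fixing a near-optimal $\rho^\star$, which by the second-moment estimate can be taken with $\int\|\theta\|_2^2 d\rho^\star = O(1)$, convexity and nonnegativity of $R$ give
\begin{align*}
L[\rho_t] - L^\star \;\lesssim\; \int L'[\rho_t]\, d\rho_t - \int L'[\rho_t]\, d\rho^\star \;\lesssim\; \Big|\int L'[\rho_t]\, d\rho_t\Big| + \max\!\big(0,\,-m_t\big), \qquad m_t := \min_{\|\theta\|_2 = 1} L'[\rho_t](\theta),
\end{align*}
using $\int L'[\rho_t]\,d\rho^\star \ge m_t \int\|\theta\|_2^2 d\rho^\star$. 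By (a) and the second-moment bound, $\big|\int L'[\rho_t]\,d\rho_t\big| \le \big(\int(L'[\rho_t])^2 d\rho_t\big)^{1/2} \lesssim \big(\int\|v[\rho_t]\|_2^2 d\rho_t\big)^{1/2}$, so in the regime where this term carries an $\Omega(\epsilon)$ share of the gap, the transport dissipation is $\gtrsim \epsilon^2$ and $\frac{d}{dt}L[\rho_t] \lesssim -\epsilon^2$ already. The remaining case is $m_t \le -a$ with $a = \Theta(\epsilon)$, and this is the main obstacle.

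In that case the perturbation has to do the work. Because $L'[\rho_t]$ is Lipschitz on $\mathbb{S}^d$ with an unconditional constant, the cap $S_t := \{\theta\in\mathbb{S}^d : L'[\rho_t](\theta) \le -a/2\}$ has $U^d$-measure at least $\exp(-O(d\log(1/a)))$; the perturbation continuously re-injects mass $\sigma U^d$, a $U^d(S_t)$-fraction of which lands in $S_t$ where, by (b), it is transported radially outward. I would make the heuristic ``this re-injected mass is amplified and drags $L$ down'' rigorous by comparing the norm of particles launched in $S_t$ against the scalar inequality $\dot r \ge a\,r$ and lower-bounding the resulting decrease of $L$ by (a transient plus) $\sigma\,U^d(S_t)\cdot\poly(a)$ per unit time while $m_t \le -a$ persists. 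The choice $\sigma = \exp(-d\log(1/\epsilon)\poly(\cdot))$ is calibrated so that $\log(1/\sigma)$ dominates $\log(1/U^d(S_t))$ for $a \gtrsim \epsilon^{O(1)}$, which is exactly what keeps $t_\epsilon$ polynomial in $d$ and $1/\epsilon$ rather than exponential. Combining the two regimes: whenever $L[\rho_t] - L^\star \ge \epsilon$ we get $\frac{d}{dt}L[\rho_t] \le -\delta$ for $\delta^{-1} = \frac{d^2}{\epsilon^4}\poly(\log(1/\epsilon),\cdot)$, and integrating against $L[\rho_0] - L^\star$ gives the stated $t_\epsilon$. I expect the delicate points to be (i) turning ``re-injected mass in $S_t$ gets amplified'' into an honest lower bound on the dissipation — the transport also carries mass out of $S_t$ and $L'[\rho_t]$ itself drifts as $\rho_t$ evolves — and (ii) ruling out mass of $\rho_t$ escaping to infinity, which the $V \gtrsim \|\cdot\|_2^2$ bound should prevent but must be checked against the perturbed dynamics and the radial blow-up in (b).
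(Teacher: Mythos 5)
Your plan matches the paper's proof essentially step for step: the same energy identity (Lemma~\ref{lem:Ltimederiv}), the same second-moment control via $V\gtrsim\|\cdot\|_2^2$ (Lemma~\ref{lem:Wbound}), the same convexity-based split into the two cases, the same Euler-identity bound $4(L'[\rho_t](\theta))^2\le\|\theta\|_2^2\|v[\rho_t](\theta)\|_2^2$ for the large-expectation case (Lemma~\ref{lem:posdirdecrease}), and the same Lipschitz cap-volume plus exponential-amplification argument, with $\sigma$ calibrated exactly as you describe, for the descent-direction case. The delicate point (i) that you flag is closed in the paper by a dichotomy (Lemmas~\ref{lem:fprimechange} and~\ref{lem:negdirdecrease}): the drift of $L'[\rho_t](\bar\theta)$ over the amplification window is bounded by the accumulated dissipation, so either the drift stays below $\tau/4$ and the amplified mass would exceed the second-moment bound $W_\epsilon^2$ (a contradiction), or the drift reaches $\tau/4$, which by that same bound already certifies the required decrease in $L$ --- rather than trying to lower-bound the instantaneous dissipation by $\sigma U^d(S_t)\cdot\poly(a)$, which on its own would be exponentially small in $d$.
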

We state and prove a version of Theorem~\ref{thm:polytimeopt-noparam} that includes regularity parameters in Sections~\ref{subsec:optsetup}~and~\ref{subsec:optproof}. The key idea for the proof is as follows: as $R$ is convex, the optimization problem will be convex over the space of distributions $\dist$. This convexity allows us to argue that if $\dist$ is suboptimal, there either exists a descent direction $\bparam \in \sphd$ where $L'[\dist](\bparam) \ll 0$, or the gradient flow dynamics will result in a large decrease in the objective. If such a direction $\bparam$ exists, the uniform noise $\sigma U^d$ along with the 2-homogeneity of $\Phi$ and $V$ will allow the optimization dynamics to increase the mass in this direction exponentially fast, which causes a polynomial decrease in the loss. 

As a technical detail, Theorem \ref{thm:polytimeopt-noparam} requires that a solution to the dynamics exists. We can remove this assumption by analyzing a discrete-time version of \eqref{eq:noisywassersteindynamics}:
$
  \dist_{t + 1} \triangleq \dist_t + \eta(-\sigma \dist_t + \sigma U^d - \nabla \cdot(v[\dist_t] \dist_t) )
$, and additionally assuming $\Phi$ and $V$ have Lipschitz gradients. In this setting, a polynomial time convergence result also holds. We state the result in Section~\ref{subsec:discrete-time}.

An implication of our Theorem~\ref{thm:polytimeopt-noparam} is that for infinite networks, we can optimize the weakly-regularized logistic loss in time polynomial in the problem parameters and $\lambda^{-1}$. In Theorem~\ref{thm:comparison}~we only require $\lambda^{-1} = \poly(n)$; thus, an infinite width neural net can learn the distribution $\cD$ up to error $\tilde{O}(\sqrt{d/n})$ in polynomial time using noisy gradient descent. 

	\section{Weak Regularizer Guarantees Max Margin Solutions}\label{sec:margin}
In this section, we collect a number of results regarding the margin of a regularized neural net. These results provide the tools for proving generalization of the weakly-regularized NN solution in Theorem~\ref{thm:comparison}. The key technique is showing that with small regularizer $\lambda \rightarrow 0$, the global optimizer of regularized logistic loss will obtain a maximum margin. It is well-understood that a large neural net margin implies good generalization performance~\citep{bartlett2017spectrally}.

In fact, our result applies to a function class much broader than two-layer relu nets: in Theorem~\ref{thm:binary_margin}~we show that when we add a weak regularizer to cross-entropy loss with \textit{any} positive-homogeneous prediction function, the normalized margin of the optimum converges to the max margin. For example, Theorem~\ref{thm:binary_margin} applies to feedforward relu networks of arbitrary depth and width. In Theorem~\ref{thm:multi_opterrrequired}, we bound the approximation error in the maximum margin when we only obtain an approximate optimizer of the regularized loss.  In Corollary~\ref{cor:margingeneralization}, we leverage these results and pre-existing Rademacher complexity bounds to conclude that the optimizer of the weakly-regularized logistic loss will have width-free generalization bound scaling with the inverse of the max margin and network depth. Finally, we note that the maximum possible margin can only increase with the width of the network, which suggests that increasing width can improve generalization of the solution (see Theorem~\ref{thm:marginnondecreasing}). 

We work with a family $\mathcal{F}$ of prediction functions $f(\cdot; \genparam) : \R^d \rightarrow \R$ that are $a$-positive-homogeneous in their parameters for some $a > 0$: $f(x; c\genparam) = c^a f(x; \genparam),  \forall c > 0$. We additionally require that $f$ is continuous when viewed as a function in $\genparam$. For some general norm $\|\cdot \|$ and $\lambda > 0$, we study the $\lambda$-regularized logistic loss $L_{\lambda}$, defined as
\begin{align}
\label{eq:binary_logistic_loss}
L_{\lambda}(\genparam) \triangleq \frac{1}{n}\sum_{i = 1}^n \log(1 + \exp(-y_i f(x_i;\genparam))) + \lambda \|\genparam\|^r
\end{align}
for fixed $r > 0$. Let $\genparam_{\lambda} \in \argmin L_{\lambda}(\genparam)$.\footnote{We formally show that $L_{\lambda}$ has a minimizer in Claim \ref{claim:Lhasminimizer} of Section \ref{sec:proof:margin}.} Define the normalized margin $\gamma_{\lambda}$ and max-margin $\gamma^\star$ by $\gamma_{\lambda} \triangleq \min_{i} y_i f(x_i;\bgenparam_{\lambda})$ and $\gamma^\star \triangleq \max_{\|\genparam\| \le 1} \min_{i} y_i f(x_i;\genparam)$. Let $\Theta^\star$ achieve this maximum. 

We show that with sufficiently small regularization level $\lambda$, the normalized margin $\gamma_\lambda$ approaches the maximum margin $\gamma^\star$. Our theorem and proof are inspired by the result of \citet{rosset2004boosting,rosset2004margin}, who analyze the special case when $f$ is a linear function. In contrast, our result can be applied to non-linear $f$ as long as $f$ is homogeneous. 

\begin{theorem}
	\label{thm:binary_margin}
	Assume the training data is separable by a network $f(\cdot; \Theta^\star)\in \mathcal{F}$ with an optimal normalized margin $\gamma^\star > 0$. Then, the normalized margin of the global optimum of the weakly-regularized objective (\eqref{eq:binary_logistic_loss}) converges to $\gamma^\star$ as the regularization goes to zero. Mathematically,
	$$\gamma_{\lambda} \rightarrow \gamma^\star \textup{ as } \lambda \rightarrow 0$$
\end{theorem}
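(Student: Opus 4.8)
\emph{Proof plan.} The plan is to sandwich the normalized margin by proving $\gamma_\lambda \le \gamma^\star$ for every $\lambda$ and $\liminf_{\lambda \to 0} \gamma_\lambda \ge \gamma^\star$. The upper bound is free: $\bar\Theta_\lambda$ has unit norm, hence is feasible for the program defining $\gamma^\star$, so $\gamma_\lambda = \min_i y_i f(x_i; \bar\Theta_\lambda) \le \gamma^\star$. All the work is in the matching lower bound, and the engine is a single \emph{a priori} estimate on the optimal loss value obtained by evaluating $L_\lambda$ at a rescaling of the max-margin direction.

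Concretely, I would let $\Theta^\star$ be the unit-norm maximizer (if $\|\Theta^\star\| < 1$ we rescale it to norm $1$, which only increases the margin by $a$-positive-homogeneity). For any $c > 0$, homogeneity gives $y_i f(x_i; c\Theta^\star) = c^a y_i f(x_i; \Theta^\star) \ge c^a \gamma^\star$, so
\[
L_\lambda(\Theta_\lambda) \;\le\; L_\lambda(c\Theta^\star) \;\le\; \log\!\bigl(1 + e^{-c^a \gamma^\star}\bigr) + \lambda c^r \;\le\; e^{-c^a\gamma^\star} + \lambda c^r .
\]
Choosing $c = c(\lambda) \triangleq \bigl(\gamma^{\star -1}\log(1/\lambda)\bigr)^{1/a}$ balances the two terms and yields $L_\lambda(\Theta_\lambda) \le \eps_\lambda \triangleq \lambda\bigl(1 + (\gamma^{\star -1}\log(1/\lambda))^{r/a}\bigr) = \lambda \cdot \poly\log(1/\lambda) \to 0$. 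In particular, for $\lambda$ small enough that $\eps_\lambda < \log 2$ we get $\Theta_\lambda \ne 0$ (otherwise $L_\lambda(\Theta_\lambda) = \log 2$), so $\bar\Theta_\lambda$ and $\gamma_\lambda$ are well defined; existence of a minimizer $\Theta_\lambda$ is the content of the separate claim (cited in the footnote) that $L_\lambda$ attains its infimum.

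From this one inequality I would then extract both sides of the ratio $\gamma_\lambda = y_{i^\star} f(x_{i^\star}; \Theta_\lambda) / \|\Theta_\lambda\|^a$, where $i^\star$ attains the minimum margin and the identity is again just homogeneity. The regularizer term gives $\lambda \|\Theta_\lambda\|^r \le \eps_\lambda$, hence $\|\Theta_\lambda\|^a \le (\eps_\lambda/\lambda)^{a/r} = \gamma^{\star -1}\log(1/\lambda)\,(1 + o(1))$ since $(\gamma^{\star -1}\log(1/\lambda))^{r/a} \to \infty$. The loss term gives $\tfrac1n \log\!\bigl(1 + e^{-y_{i^\star} f(x_{i^\star};\Theta_\lambda)}\bigr) \le \eps_\lambda$, and because $\eps_\lambda \to 0$ this forces (using $e^t - 1 \le 2t$ on $[0,1]$) $y_{i^\star} f(x_{i^\star};\Theta_\lambda) \ge \log\!\bigl(1/(2n\eps_\lambda)\bigr) = \log(1/\lambda) - O(\log\log(1/\lambda))$. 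Dividing the two estimates,
\[
\gamma_\lambda \;\ge\; \frac{\log(1/\lambda) - O(\log\log(1/\lambda))}{\gamma^{\star -1}\log(1/\lambda)\,(1 + o(1))} \;\longrightarrow\; \gamma^\star \quad\text{as } \lambda \to 0,
\]
which combined with $\gamma_\lambda \le \gamma^\star$ gives $\gamma_\lambda \to \gamma^\star$.

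The only genuinely delicate point — and the step I expect to demand the most care — is this two-sided bookkeeping: a sufficiently tight upper bound on $\|\Theta_\lambda\|^a$ and a sufficiently tight lower bound on the smallest unnormalized margin must both be squeezed out of the \emph{same} scalar bound $L_\lambda(\Theta_\lambda) \le \eps_\lambda$, and their leading terms $\gamma^{\star -1}\log(1/\lambda)$ and $\log(1/\lambda)$ must match so the ratio converges to exactly $\gamma^\star$. The balancing choice $c(\lambda) = (\gamma^{\star -1}\log(1/\lambda))^{1/a}$ is precisely what kills the lower-order slack (the $\log\log$ terms, and the $1+o(1)$ coming from the $a/r$ exponent) after normalization; a cruder choice of $c$ would leave a constant-factor gap. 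Everything else is routine: monotonicity of $\log(1+e^{-t})$, the elementary inequalities $\log(1+x) \le x$ and $e^t - 1 \le 2t$, and continuity/compactness for the existence of $\Theta^\star$ and $\Theta_\lambda$.
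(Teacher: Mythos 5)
Your proof is correct, and it takes a recognizably different route from the one the paper uses for this theorem. The paper proves Theorem~\ref{thm:binary_margin} by reduction to the multi-class statement (Theorem~\ref{thm:multilabelmargin}), whose proof compares $L_\lambda(\Theta_\lambda)$ against $L_\lambda(\|\Theta_\lambda\|\,\Theta^\star)$ — i.e.\ the max-margin direction rescaled to the \emph{same norm} as the optimizer — so that the regularizer terms cancel and one is left comparing $\exp(-\|\Theta_\lambda\|^a\gamma^\star)$ with $\exp(-\|\Theta_\lambda\|^a\gamma_\lambda)$; it then needs a separate lemma (Lemma~\ref{lem:thetalambdagrowsmulti}) establishing $\|\Theta_\lambda\|\to\infty$ before a Taylor expansion and a contradiction argument close the gap. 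You instead evaluate $L_\lambda$ at an explicitly chosen scale $c(\lambda)=(\gamma^{\star-1}\log(1/\lambda))^{1/a}$ that balances loss against regularizer, and squeeze both an upper bound on $\|\Theta_\lambda\|^a$ and a lower bound on the worst unnormalized margin out of the single resulting estimate $L_\lambda(\Theta_\lambda)\le\eps_\lambda$. This buys you a quantitative convergence rate and lets you dispense with the norm-divergence lemma entirely (divergence of $\|\Theta_\lambda\|$ falls out implicitly), at the cost of slightly more delicate bookkeeping of the $\log\log$ error terms. It is worth noting that your technique is essentially the one the paper deploys for its quantitative variant, Theorem~\ref{thm:multi_opterrrequired}, where the scale $B=\bigl(\frac{1}{\gamma^\star}\log\frac{(l-1)(\gamma^\star)^{r/a}}{\lambda}\bigr)^{1/a}$ plays the role of your $c(\lambda)$ and the same two-sided extraction from one scalar bound appears; so your argument in effect derives the exact limit as the $\alpha=1$, $\lambda\to 0$ specialization of that cleaner quantitative machinery. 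All the individual steps check out: the feasibility argument for $\gamma_\lambda\le\gamma^\star$, the rescaling of $\Theta^\star$ to unit norm, the nondegeneracy $\Theta_\lambda\ne 0$ for small $\lambda$, and the elementary inequalities used to invert $\log(1+e^{-m})\le n\eps_\lambda$.
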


An intuitive explanation for our result is as follows: because of the homogeneity, the loss $L(\genparam_{\lambda})$ roughly satisfies the following (for small $\lambda$, and ignoring parameters such as $n$):
\begin{align*}
L_{\lambda}(\genparam_{\lambda}) \approx \exp(-\|\genparam_{\lambda}\|^a \gamma_{\lambda}) + \lambda \|\genparam_{\lambda}\|^{r}\
\end{align*}
Thus, the loss selects parameters with larger margin, while the regularization favors smaller norms. The full proof of the theorem is deferred to Section~\ref{sec:proof:margin}.

Though the result in this section is stated for binary classification, it extends to the multi-class setting with cross-entropy loss. We provide formal definitions and results in Section~\ref{sec:proof:margin}. In  Theorem~\ref{thm:multi_opterrrequired}, we also show that an approximate minimizer of $L_{\lambda}$ can obtain margin that approximates $\gamma^\star$. 

Although we consider an \textit{explicit} regularizer, our result is related to recent works on algorithmic regularization of gradient descent for the \textit{unregularized} objective. Recent works show that gradient descent finds the minimum norm or max-margin solution for problems including logistic regression, linearized neural networks, and matrix factorization~\citep{soudry2018the, gunasekar2018implicit, li2018algorithmic, gunasekar2018characterizing, ji2018risk}. Many of these proofs require a delicate analysis of the algorithm's dynamics, and some are not fully rigorous due to assumptions on the iterates. To the best of our knowledge, it is an open question to prove analogous results for even two-layer relu networks. In contrast, by adding the explicit $\ell_2$ regularizer to our objective, we can prove broader results that apply to multi-layer relu networks. In the following section we leverage our result and existing generalization bounds~\citep{golowich2017size} to help justify how over-parameterization can improve generalization.
\subsection{Generalization of the Max-Margin Neural Net}\label{subsec:neural_net_gen}
We consider depth-$q$ networks with 1-Lipschitz, 1-positive-homogeneous activation $\phi$ for $q \ge 2$. Note that the network function is $q$-positive-homogeneous. Suppose that the collection of parameters $\genparam$ is given by matrices $W_1, \ldots, W_q$. For simplicity we work in the binary class setting, so the $q$-layer network computes a real-valued score
\begin{align}
\label{eq:nneq}
f^{\textup{NN}}(x;\Theta) \triangleq W_q\phi(W_{q - 1} \phi( \cdots \phi(W_1 x) \cdots ))
\end{align}
where we overload notation to let $\phi(\cdot)$ denote the element-wise application of the activation $\phi$. Let $m_i$ denote the size of the $i$-th hidden layer, so $W_1 \in \R^{m_1 \times d}, W_2 \in \R^{m_2 \times m_1}, \cdots, W_q \in \R^{1 \times m_{q - 1}}$. We will let $\mathcal{M}\triangleq (m_1, \ldots, m_{q-1})$ denote the sequence of hidden layer sizes. We will focus on $\ell_2$-regularized logistic loss (see~\eqref{eq:binary_logistic_loss}, using $\|\cdot\|_F$ and $r = 2$) and denote it by $L_{\lambda, \mathcal{M}}$. 

Following notation established in this section, we denote the optimizer of $L_{\lambda,\mathcal{M}}$ by $\genparam_{\lambda,\mathcal{M}}$, the normalized margin of $\genparam_{\lambda,\mathcal{M}}$ by $\gamma_{\lambda,\mathcal{M}}$, the max-margin solution by $\genparam^{\star,\mathcal{M}}$, and the max-margin by $\gamma^{\star,\mathcal{M}}$, assumed to be positive. Our notation emphasizes the architecture of the network. 

We can define the population 0-1 loss of the network parameterized by $\genparam$ by 
$
L(\genparam) \triangleq \Pr_{(x, y) \sim \pdata} [y f^{\textup{NN}}(x;\genparam) \le 0]
$. We let $\mathcal{X}$ denote the data domain and $C \triangleq \sup_{x \in \mathcal{X}} \|x\|_2$ denote the largest possible norm of a single datapoint.

By combining the neural net complexity bounds of~\citet{golowich2017size}~with our Theorem~\ref{thm:binary_margin}, we can conclude that optimizing weakly-regularized logistic loss gives generalization bounds that depend on the maximum possible network margin for the given architecture.

\begin{corollary}
	\label{cor:margingeneralization}
		Suppose $\phi$ is 1-Lipschitz and 1-positive-homogeneous. With probability at least $1 - \delta$ over the draw of $(x_1, y_1),\ldots, (x_n, y_n)$ i.i.d. from $\pdata$, we can bound the test error of the optimizer of the regularized loss by
	\begin{align}
	\label{eq:thetalambdastargen}
	\limsup_{\lambda \rightarrow 0} L(\Theta_{\lambda,\mathcal{M}}) \lesssim  \frac{C}{\gamma^{\star, \mathcal{M}}q^{\frac{q-1}{2} \sqrt{n}}} + \epsilon(\gamma^{\star,\mathcal{M}})
	\end{align}
	where $\epsilon(\gamma) \triangleq \sqrt{\frac{\log \log_2 \frac{4C}{\gamma}}{n}} + \sqrt{\frac{\log (1/\delta)}{n}}$. Note that $\epsilon(\gamma^{\star, \mathcal{M}})$ is primarily a smaller order term, so the bound mainly scales with $\frac{C}{\gamma^{\star, \mathcal{M}} q^{(q - 1)/2}\sqrt{n}}$. \footnote{Although the $\frac{1}{q^{(q - 1)/2}}$ factor of \eqref{eq:genbound} decreases with depth $q$, the margin $\gamma$ will also tend to decrease as the constraint $\|\bgenparam\|_F \le 1$ becomes more stringent.} \end{corollary}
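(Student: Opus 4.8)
The plan is to combine Theorem~\ref{thm:binary_margin} (which forces the normalized margin of the weakly-regularized optimum to approach $\gamma^{\star,\mathcal{M}}$) with a margin-based generalization bound built on the Rademacher complexity estimate of \citet{golowich2017size}; the only real work is tracking the dependence on the depth $q$ and making the margin bound uniform over the data-dependent margin at which we evaluate it. \emph{Step 1 (reduce to a unit-norm network).} Since $f^{\textup{NN}}(\cdot\,;\Theta)$ is $q$-positive-homogeneous in $\Theta$ and $\phi$ is $1$-positive-homogeneous, $\sign f^{\textup{NN}}(x;c\Theta)=\sign f^{\textup{NN}}(x;\Theta)$ for all $c>0$, so the $0$-$1$ loss satisfies $L(\Theta_{\lambda,\mathcal{M}})=L(\bar{\Theta}_{\lambda,\mathcal{M}})$, where $\bar{\Theta}_{\lambda,\mathcal{M}}$ is the rescaling with $\|\bar{\Theta}_{\lambda,\mathcal{M}}\|_F=1$. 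By definition of $\gamma_{\lambda,\mathcal{M}}$, the network $\bar{\Theta}_{\lambda,\mathcal{M}}$ separates every training point with margin at least $\gamma_{\lambda,\mathcal{M}}$. It therefore suffices to prove a margin-based test-error bound for the fixed, sample-independent class $\mathcal{F}_1 \triangleq \{\, f^{\textup{NN}}(\cdot\,;\Theta) : \|\Theta\|_F \le 1 \,\}$.

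\emph{Step 2 (complexity of $\mathcal{F}_1$).} For $\Theta$ with $\sum_{j=1}^q \|W_j\|_F^2 \le 1$, AM--GM gives $\prod_{j=1}^q \|W_j\|_F \le q^{-q/2}$; since $\phi$ is $1$-Lipschitz with $\phi(0)=0$ this also gives $|f^{\textup{NN}}(x;\Theta)| \le C\prod_j \|W_j\|_F \le C$ whenever $\|x\|_2 \le C$. Feeding the bound $\prod_j\|W_j\|_F \le q^{-q/2}$ into the Frobenius-norm Rademacher bound of \citet{golowich2017size}, whose depth dependence is only a $\sqrt q$ factor, yields
\[
\rad_n(\mathcal{F}_1) \;\lesssim\; \frac{C\, q^{-q/2}\sqrt q}{\sqrt n} \;=\; \frac{C}{q^{(q-1)/2}\sqrt n}.
\]

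\emph{Step 3 (uniform margin bound and conclusion).} Apply the standard margin generalization bound \citep{koltchinskii2002empirical,bartlett2017spectrally}: with probability at least $1-\delta$ over the sample, simultaneously for all $f\in\mathcal{F}_1$ and all $\gamma>0$,
\[
\Pr_{(x,y)\sim\pdata}[yf(x)\le 0] \;\le\; \frac1n\sum_{i=1}^n \ind{y_i f(x_i) < \gamma} \;+\; \frac{4\,\rad_n(\mathcal{F}_1)}{\gamma} \;+\; \sqrt{\tfrac{\log\log_2(4C/\gamma)}{n}} \;+\; \sqrt{\tfrac{\log(1/\delta)}{n}},
\]
where the doubly-logarithmic term comes from a union bound over a dyadic grid of margin scales in $(0,C]$ (legitimate because $|f|\le C$ by Step 2). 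Taking $f = f^{\textup{NN}}(\cdot\,;\bar{\Theta}_{\lambda,\mathcal{M}})$ and $\gamma=\gamma_{\lambda,\mathcal{M}}$ kills the empirical term, and combining with Steps 1--2 gives, on this event, $L(\Theta_{\lambda,\mathcal{M}}) \lesssim \frac{C}{\gamma_{\lambda,\mathcal{M}}\,q^{(q-1)/2}\sqrt n} + \epsilon(\gamma_{\lambda,\mathcal{M}})$. Finally, Theorem~\ref{thm:binary_margin} gives $\gamma_{\lambda,\mathcal{M}}\to\gamma^{\star,\mathcal{M}}>0$ as $\lambda\to0$, and $t\mapsto 1/t$ and $t\mapsto\epsilon(t)$ are continuous near $\gamma^{\star,\mathcal{M}}$, so taking $\limsup_{\lambda\to0}$ of the last display yields the claimed bound.

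\emph{Main obstacle.} The delicate point is that the margin $\gamma_{\lambda,\mathcal{M}}$ we ultimately plug in is data-dependent and is only controlled through the limiting statement of Theorem~\ref{thm:binary_margin}, so the bound of Step 3 genuinely has to hold uniformly over all margin scales $\gamma$ — this is what forces the $\sqrt{\log\log_2(4C/\gamma)}$ contribution into $\epsilon(\cdot)$. Secondary care is needed to convert the per-layer bound of \citet{golowich2017size} into one depending only on $\prod_j\|W_j\|_F$ (e.g.\ a logarithmically-fine union over per-layer norm budgets, contributing only lower-order terms), and to check that the $q^{-q/2}$ from AM--GM times the $\sqrt q$ depth factor is exactly the $q^{-(q-1)/2}$ appearing in the statement.
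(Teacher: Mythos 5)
Your proposal is correct and follows essentially the same route as the paper: reduce to a norm-one network via homogeneity, bound the Rademacher complexity through \citet{golowich2017size} with the AM--GM factor $q^{-q/2}$, apply the Koltchinskii--Panchenko-style margin bound uniformly over $\gamma$, and pass to the limit via Theorem~\ref{thm:binary_margin}. The one place you diverge is the step you flag as needing ``secondary care'': converting the per-layer bound of Golowich et al.\ into a bound for the class $\{\|\Theta\|_F\le 1\}$. Rather than a union bound over per-layer norm budgets, the paper exploits the $1$-homogeneity of $\phi$ to rebalance the weights --- replacing $W_j$ by $\tilde W_j = \frac{(\prod_k\|W_k\|_F)^{1/q}}{\|W_j\|_F\,\|\Theta\|_F}W_j$ --- which leaves the computed function (hence both the margin and the $0$-$1$ loss) unchanged while making every layer satisfy $\|\tilde W_j\|_F\le 1/\sqrt q$ by AM--GM; this places the network directly in a fixed per-layer-constrained class to which Golowich's theorem applies, with no union bound and no extra logarithmic terms. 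Your route would also work, but the rebalancing trick is cleaner and is worth adopting.
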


Finally, we observe that the maximum normalized margin is non-decreasing with the size of the architecture. Formally, for two depth-$q$ architectures $\mathcal{M} = (m_1, \ldots, m_{q - 1})$ and $\mathcal{M'} = (m'_1, \ldots, m'_{q - 1})$, we say $\mathcal{M} \le \mathcal{M'}$ if $m_i \le m'_i \ \forall i = 1, \ldots q-1$. Theorem~\ref{thm:marginnondecreasing} states if $\mathcal{M} \le \mathcal{M'}$, the max-margin over networks with architecture $\mathcal{M'}$ is at least the max-margin over networks with architecture $\mathcal{M}$. 

\begin{theorem}
	\label{thm:marginnondecreasing}
	Recall that $\gamma^{\star,\mathcal{M}}$ denotes the maximum normalized margin of a network with architecture $\mathcal{M}$.
		If $\mathcal{M} \le \mathcal{M'}$, we have
	$$
	\gamma^{\star,\mathcal{M}} \le \gamma^{\star,\mathcal{M'}}
	$$
	As a important consequence, the generalization error bound of Corollary~\ref{cor:margingeneralization} for $\mathcal{M'}$ is at least as good as that for $\mathcal{M}$.
\end{theorem}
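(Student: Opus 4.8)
The plan is to show that any network with the smaller architecture $\mathcal{M}$ can be realized as a network with the larger architecture $\mathcal{M}'$ without changing the function it computes and without increasing the Frobenius norm of its parameters; monotonicity of the max-margin then follows immediately, since the max-margin problem for $\mathcal{M}'$ optimizes over a larger feasible set (after this embedding).

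Concretely, let $\Theta^{\star,\mathcal{M}} = (W_1,\dots,W_q)$ be the max-margin parameters for architecture $\mathcal{M}$, normalized so that $\|\Theta^{\star,\mathcal{M}}\|_F \le 1$, with $W_i \in \R^{m_i \times m_{i-1}}$ (setting $m_0 = d$ and $m_q = 1$, which are forced to coincide for the two architectures). I would define padded matrices $W'_i \in \R^{m'_i \times m'_{i-1}}$ by placing $W_i$ in the top-left $m_i \times m_{i-1}$ block and filling the remaining entries with zeros, and set $\Theta' = (W'_1,\dots,W'_q)$. Since padding only adds zero entries, $\|\Theta'\|_F^2 = \sum_i \|W'_i\|_F^2 = \sum_i \|W_i\|_F^2 = \|\Theta^{\star,\mathcal{M}}\|_F^2 \le 1$, so $\Theta'$ is feasible for the max-margin problem over $\mathcal{M}'$.

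Next I would verify, by induction on the layers, that $f^{\textup{NN}}(x;\Theta') = f^{\textup{NN}}(x;\Theta^{\star,\mathcal{M}})$ for every $x$. The inductive claim is that after $i$ layers the first $m_i$ coordinates of the hidden representation under $\Theta'$ equal the full hidden representation under $\Theta^{\star,\mathcal{M}}$, while the remaining $m'_i - m_i$ coordinates vanish. This uses two facts: the extra columns of $W'_i$ are zero, so the padding neurons of layer $i-1$ do not influence the first $m_i$ neurons of layer $i$; and $\phi(0) = 0$, which follows from $1$-positive-homogeneity ($\phi(0) = \phi(2\cdot 0) = 2\phi(0)$). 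The base case holds since $m'_0 = m_0 = d$, and at the output layer $W'_q$ has its first $m_{q-1}$ entries equal to $W_q$ and the rest zero, so the scalar output is unchanged. Consequently $\gamma^{\star,\mathcal{M}'} \ge \min_i y_i f^{\textup{NN}}(x_i;\Theta') = \min_i y_i f^{\textup{NN}}(x_i;\Theta^{\star,\mathcal{M}}) = \gamma^{\star,\mathcal{M}}$.

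For the consequence regarding Corollary~\ref{cor:margingeneralization}: the two architectures share the same depth $q$, the leading term $\frac{C}{\gamma^{\star,\cdot}\, q^{(q-1)/2}\sqrt n}$ is decreasing in the margin, and the lower-order term $\epsilon(\gamma^{\star,\cdot})$ is also decreasing in the margin (larger $\gamma$ shrinks $\log\log_2(4C/\gamma)$), so the bound for $\mathcal{M}'$ is at least as good as that for $\mathcal{M}$. There is no genuine obstacle here; the only points requiring care are that $\phi(0) = 0$ (needed so the padding neurons vanish and the embedded network is exactly function-preserving) and that the input and output dimensions $m_0 = d$ and $m_q = 1$ agree between the architectures, so that the inequality $\mathcal{M} \le \mathcal{M}'$ only affects hidden-layer widths and the zero-padding is well-defined.
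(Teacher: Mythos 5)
Your proposal is correct and is exactly the argument the paper intends: the paper justifies Theorem~\ref{thm:marginnondecreasing} in one sentence by noting that any network of architecture $\mathcal{M}$ can be implemented by one of architecture $\mathcal{M}'$, and your zero-padding construction (with the observations that $\phi(0)=0$ by homogeneity and that padding preserves the Frobenius norm) is the standard way to make that precise. No substantive differences from the paper's approach.
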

This theorem is simple to prove and follows because we can directly implement any network of architecture $\mathcal{M}$ using one of architecture $\mathcal{M'}$, if $\mathcal{M}\le \mathcal{M'}$. This highlights one of the benefits of over-parametrization: the margin does not decrease with a larger network size, and therefore Corollary~\ref{cor:margingeneralization} gives a better generalization bound. In Section~\ref{sec:additionalexp}, we provide empirical evidence that the test error decreases with larger network size while the margin is non-decreasing.

The phenomenon in Theorem~\ref{thm:marginnondecreasing} contrasts with standard $\ell_2$-normalized linear prediction. In this setting, adding more features increases the norm of the data, and therefore the generalization error bounds could also increase. On the other hand, Theorem~\ref{thm:marginnondecreasing} shows that adding more neurons (which can be viewed as learned features) can only improve the generalization of the max-margin solution.

\section{Simulations}
\label{sec:experiments}
We empirically validate our theory with several simulations. First, we train a two-layer net on synthetic data with and without explicit regularization starting from the same initialization in order to demonstrate the effect of an explicit regularizer on generalization. We confirm that the regularized network does indeed generalize better and moves further from its initialization. For this experiment, we use a large initialization scale, so every weight $\sim \mathcal{N}(0,1)$. We average this experiment over 20 trials and plot the test accuracy, normalized margin, and percentage change in activation patterns in Figure~\ref{fig:explicitreg}. We compute the percentage of activation patterns changed over every possible pair of hidden unit and training example. Since a low percentage of activations change when $\lambda = 0$, the unregularized neural net learns in the kernel regime. Our simulations demonstrate that an explicit regularizer improves generalization error as well as the margin, as predicted by our theory. 

The data comes from a ground truth network with $10$ hidden networks, input dimension $20$, and a ground truth unnormalized margin of at least $0.01$. We use a training set of size $200$ and train for $20000$ steps with learning rate $0.1$, once using regularizer $\lambda = 5 \times 10^{-4}$ and once using regularization $\lambda = 0$. We note that the training error hits 0 extremely quickly (within 50 training iterations). The initial normalized margin is negative because the training error has not yet hit zero.

\begin{figure}
	\centering
	\includegraphics[width=0.3\textwidth]{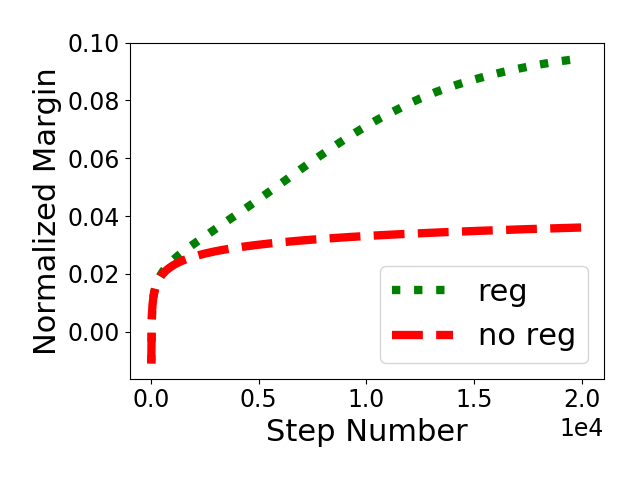}
	\includegraphics[width=0.3\textwidth]{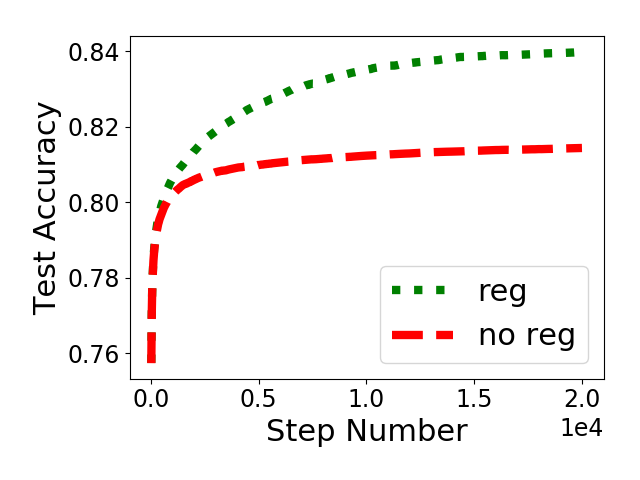}
	\includegraphics[width=0.3\textwidth]{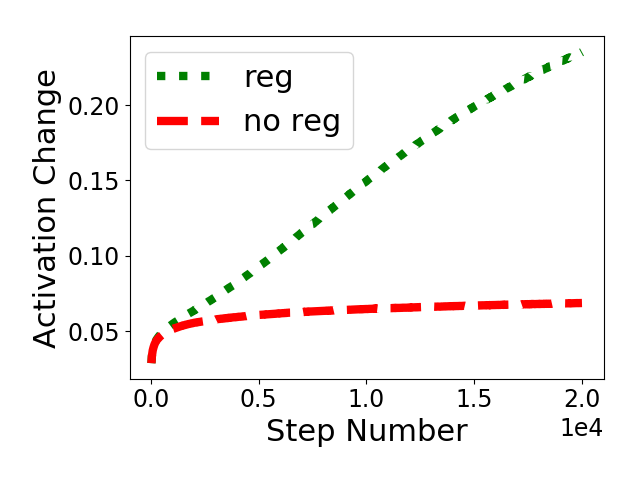}
	\caption{Comparing regularization and no regularization starting from the same initialization. \textbf{Left:} Normalized margin. \textbf{Center:} Test accuracy. \textbf{Right:} Percentage of activation patterns changed.}
	\label{fig:explicitreg}
\end{figure}

We also compare the generalization of a regularized neural net and kernel method as the sample size increases. Furthermore, we demonstrate that for two-layer nets, the test error decreases and margin increases as the width of the hidden layer grows, as predicted by our theory. We provide figures and full details in Section~\ref{sec:additionalexp}.

	\section{Conclusion}

We have shown theoretically and empirically that explicitly $\ell_2$ regularized neural nets can generalize better than the corresponding kernel method. We also argue that maximizing margin is one of the inductive biases of~\relu~networks obtained from optimizing weakly-regularized cross-entropy loss. To complement these generalization results, we study optimization and prove that it is possible to find a global minimizer of the regularized loss in polynomial time when the network width is infinite. A natural direction for future work is to apply our theory to optimize the margin of finite-sized neural networks. 

\section*{Acknowledgments}
CW acknowledges the support of a NSF Graduate Research Fellowship. JDL acknowledges support of the ARO under MURI Award W911NF-11-1-0303.  This is part of the collaboration between US DOD, UK MOD and UK Engineering and Physical Research Council (EPSRC) under the Multidisciplinary University Research Initiative. We also thank Nati Srebro and Suriya Gunasekar for helpful discussions in various stages of this work.

	\bibliography{ref}
	\bibliographystyle{plainnat}
	\newpage
	\appendix
		\section{Additional Notation}
In this section we collect additional notations that will be useful for our proofs. 

Let $\sph \triangleq \{\bar{u} \in \R^d : \|\bar{u}\|_2 = 1\}$ be the unit sphere in $d$ dimensions. Let $\mathcal{L}_k^2(\sph)$ be the space of functions on $\sph \rightarrow \R^k$ for which the squared $\ell_2$ norm of the function value is Lebesgue integrable. For $\varphi_1, \varphi_2 \in \spacetwo_k(\sph)$, we can define $\langle \varphi_1, \varphi_2 \rangle \triangleq \int_{\sph} \varphi_1(\bdir)^\top \varphi_2(\bdir) d\bdir < \infty$. 

For general $p$, will also define $\mathcal{L}_1^p(\sph)$ be the space of functions on $\sph$ for which the $p$-th power of the absolute value is Lebesgue integrable. For $\varphi \in \mathcal{L}_1^p(\sph)$, we overload notation and write $\|\varphi\|_p \triangleq \left(\int_{\sph} |\varphi(\bdir)|^p d\bdir\right)^{1/p}$. Additionally, for $\varphi_1 \in \spaceone_1(\sph)$ and $\varphi_2 \in \spaceinf_1(\sph)$, we can define $\langle \varphi_1, \varphi_2 \rangle \triangleq \int_{\sph} \varphi_1(\bdir) \varphi_2(\bdir) d\bdir < \infty$. 

\section{Missing Material from Section~\ref{sec:kernel_vs_nn}} \label{sec:proof_comparison}
\subsection{Lower Bound on NTK Kernel Generalization} 
\label{subsec:kernellb}
\newcommand{\acosk}{K_1}
\newcommand{\sink}{K_2}
\newcommand{\coeff}{\beta}
In this section we will lower bound the test error of the kernel prediction function for our distribution $\cD$ in the setting of Theorem~\ref{thm:comparison}. We will first introduce some additional notation to facilitate the proofs in this section. Let $\mathcal{D}_{x}$ be the marginal distribution of $\cD$ over datapoints $x$. We use $z_i$ to refer to the last $d - 2$ coordinates of $x_i$. For a given vector $x$, $x_{-2}$ will index the last $d - 2$ coordinates of a vector $x$ and for $z \in \R^{d -2}$, use $(a, b, z)$ to denote the vector in $\R^{d}$ with first two coordinates $a, b$, and last $d -2$ coordinates $z$. For a vector $x \in \R^{d}$, let $x^{\otimes 2} \in \R^{d^2}$ denote the vector with $(i-1)d + j$-th entry $e_i^\top x e_j^\top x$. 

Furthermore, we define the following lifting functions $\varphi_{\textup{grad}}, \varphi_{\textup{relu}}$ mapping data $x \in \R^d$ to an infinite feature vector: 
\begin{align*}
\varphi_{\textup{grad}}(x) \in \mathcal{L}^2_d(\sph) &\textup{ satisfies } \varphi_{\textup{grad}}(x)[\bdir] = \one(x^\top \bdir \ge 0)x\\
\varphi_{\textup{relu}}(x) \in \mathcal{L}^\infty_1(\sph) &\textup{ satisfies } \varphi_{\textup{relu}}(x)[\bdir] = [x^\top \bdir]_+
\end{align*}
Note that the kernel $K(x', x)$ can be written as a sum of positive scalings of $\langle \varphi_{\textup{grad}}(x), \varphi_{\textup{grad}}(x')\rangle$ and $\langle \varphi_{\textup{relu}}(x), \varphi_{\textup{relu}}(x') \rangle$. We now define the following functions $\acosk, \sink : \R^{d} \times\R^{d} \mapsto \R$: 
\begin{align*}
\acosk(x', x) &= x^\top x' \left(1 - \pi^{-1} \arccos \left(\frac{x^\top x'}{\|x\|_2 \|x'\|_2}\right)\right)\\
\sink(x', x) &= \frac{\|x\|_2 \|x'\|_2}{\pi} \sqrt{1 - \left(\frac{x^\top x'}{\|x\|_2\|x'\|_2}\right)^2}
\end{align*}
We have
\begin{align*}
\begin{split}
\langle \varphi_{\textup{grad}}(x), \varphi_{\textup{grad}}(x')\rangle &= K_1(x', x)\\
\langle \varphi_{\textup{relu}}(x), \varphi_{\textup{relu}}(x') \rangle &= c_{\textup{relu}} (K_1(x', x) + K_2(x', x))
\end{split}
\end{align*}
for some $c_{\textup{relu}} > 0$. The second equation follows from Lemma A.1 of~\citep{du2017gradient}. To see the first one, we note that the indicator $\one({x'}^\top \bdir \ge 0)\one(x^\top \bdir \ge 0)$ is only 1 in a arc of degree $\pi - \arccos(x^\top x'/\|x\|_2 \|x'\|_2)$ between $x$ and $x'$. As all directions are equally likely, the expectation $\E_{\bdir}[\one({x'}^\top \bdir \ge 0)\one(x^\top \bdir \ge 0)] = 1 - \pi^{-1}\arccos\left(\frac{x^\top x'}{\|x\|_2 \|x'\|_2}\right)$. 

Then as the kernel $K(x', x)$ is the sum of positive scalings of $\langle \varphi_{\textup{grad}}(x), \varphi_{\textup{grad}}(x')\rangle$ and $\langle \varphi_{\textup{relu}}(x), \varphi_{\textup{relu}}(x') \rangle$, we can express
\begin{align}
K(x', x) = \tau_1 K_1(x', x) + \tau_2(K_1(x', x) + K_2(x', x)) \label{eq:kernel_sum}
\end{align}
for $\tau_1, \tau_2 > 0$. This decomposition will be useful in our analysis of the lower bound. The following theorem restates our lower bound on the test error of any $\ell_2$-regularized kernel method. 

\begin{theorem}\label{thm:kernellb}
	For the distribution $\mathcal{D}$ defined in Section~\ref{sec:kernel_vs_nn}, if $n \lesssim d^2$, with probability $1 - \exp(-\Omega(\sqrt{n}))$ over $(x_1, y_1),\ldots, (x_n, y_n)$ drawn i.i.d. from $\cD$, for all choices of $\beta$, in test time the kernel prediction function $f^{\textup{kernel}}(\cdot;\beta)$ will predict the sign of $y$ wrong $\Omega(1)$ fraction of the time:
	\begin{align*}
	\Pr_{(x, y) \sim \mathcal{D}}[f^{\textup{kernel}}(x;\beta) y \le 0]= \Omega(1)
	\end{align*}
\end{theorem}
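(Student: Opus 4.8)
\textbf{Proof proposal for Theorem~\ref{thm:kernellb}.}

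The plan is to show that any kernel predictor $f^{\textup{kernel}}(\cdot;\beta) = \sum_i \beta_i K(x_i, \cdot)$ is, on the relevant part of the distribution, essentially determined by the noise coordinates $z$ rather than by the informative coordinates $x^\top e_1, x^\top e_2$. Concretely, I would split a fresh test point $(x,y)$ into its signal part and its noise part $z \in \{-1,+1\}^{d-2}$, and argue that conditioned on the training set, the map $z \mapsto f^{\textup{kernel}}((\text{signal},z);\beta)$ has variance (over the random $z$, which is independent of everything) that is large compared to the gap the predictor would need to maintain between the $y=+1$ and $y=-1$ clusters. The key structural input is the decomposition in~\eqref{eq:kernel_sum}: $K = \tau_1 K_1 + \tau_2(K_1 + K_2)$ with $\tau_1,\tau_2 > 0$, together with the explicit formulas for $K_1, K_2$ in terms of $\arccos$ of the normalized inner product. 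Since $\|x\|_2^2 = 1 + (d-2)$ is essentially constant (up to the $\pm 1$ in the first two coordinates) across all points from $\cD$, and $x_i^\top x_j = \langle z_i, z_j\rangle + O(1)$, I would Taylor-expand $K_1$ and $K_2$ around the ``typical'' inner product $x_i^\top x_j/( \|x_i\|\|x_j\|) \approx \langle z_i, z_j \rangle / (d-2)$, which is $O(1/\sqrt d)$ in magnitude with high probability for $i \ne j$.

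The next step is a concentration / anti-concentration argument. I would expand each $K(x_i, x)$ to leading order: a constant term (independent of $z$), a linear term in $z$ of the form $\langle c_i z_i, z\rangle$ for some scalar $c_i$ of controlled size, a term linear in the signal coordinates, and a quadratic-and-higher remainder that is provably lower order when $n \lesssim d^2$. Then $f^{\textup{kernel}}((\text{signal},z);\beta) = (\text{const}) + \langle v_\beta, z\rangle + (\text{signal term}) + (\text{small})$, where $v_\beta = \sum_i \beta_i c_i z_i \in \R^{d-2}$. Now I use that $z \perp (x_1,\dots,x_n)$ and that $z$ is uniform on the hypercube: $\langle v_\beta, z\rangle$ is a sum of independent mean-zero terms, so by Hoeffding it is $\Theta(\|v_\beta\|_2)$ in typical magnitude, and moreover by a Paley--Zygmund / Berry--Esseen type bound it is anti-concentrated — it is at least $c\|v_\beta\|_2$ in absolute value with constant probability and takes both signs. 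To fit the training data with a decent margin the predictor needs $f^{\textup{kernel}}(x_i;\beta) y_i \geq (\text{something positive})$ for all $i$; I would show this forces $\|v_\beta\|_2$ to be \emph{large} relative to the separation between the two label clusters' constant/signal parts — intuitively, fitting the $n$ noisy training labels through random features costs a large RKHS norm concentrated on the noise directions, which is exactly the ``kernel cannot focus on informative features'' intuition from the proof overview. Once $\|v_\beta\|_2$ is large, the test-time fluctuation $\langle v_\beta, z\rangle$ swamps the signal, so the sign of $f^{\textup{kernel}}(x;\beta)$ is wrong with probability $\Omega(1)$.

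The main obstacle I anticipate is making the dichotomy quantitative: either (a) $\|v_\beta\|_2$ is small, in which case I must show the predictor \emph{cannot} separate the training data (so no valid $\beta$ achieves the clustering it would need for good test accuracy), or (b) $\|v_\beta\|_2$ is large, in which case test-time noise dominates. Handling (a) requires understanding the Gram matrix of the $\{c_i z_i\}$ well enough to see that the only way to hit $n$ independent $\pm$-ish targets is with large coefficient norm in the noise subspace — this is where the condition $n \lesssim d^2$ enters, presumably because the relevant feature map (after peeling off the constant and signal parts) lives in a space of effective dimension $\Theta(d^2)$ (spanned by the $z^{\otimes 2}$-type features, cf.\ the $x^{\otimes 2}$ notation introduced earlier), so with $n \lesssim d^2$ samples the ``random'' features are still linearly independent enough to force the blow-up. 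I would also need to be careful that the higher-order remainder terms in the $\arccos$ expansion, summed against $\beta$, really are lower order; this likely needs an $\ell_1$–$\ell_\infty$ or operator-norm bound on the remainder kernel matrix, controlled via the high-probability bound $|x_i^\top x_j| \lesssim \sqrt{d\log d}$ for $i\ne j$. Assembling these pieces and tracking the $\Omega(1)$ constants through the anti-concentration step is the delicate part; everything else is bookkeeping with the explicit kernel formulas.
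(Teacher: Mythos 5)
Your high-level intuition (symmetry between the two clusters plus anti-concentration of the noise-dependent part) is in the right family, but the specific decomposition you build the argument on has two gaps that would sink it.

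First, the linear-in-$z$ term is the wrong object to hang the anti-concentration on. Your test-time fluctuation is $\langle v_\beta, z\rangle$ with $v_\beta = \sum_i \beta_i c_i z_i \in \R^{d-2}$. In the regime of interest $d \ll n \lesssim d^2$, the $n$ vectors $z_i$ are heavily linearly dependent, so there are choices of $\beta$ with $\sum_i |\beta_i|$ arbitrarily large but $v_\beta = 0$ exactly; for such $\beta$ your case (b) gives nothing, and your case (a) conclusion (``small $\|v_\beta\|_2$ implies the training data cannot be fit'') is false, because the degree-2 features $z_i^{\otimes 2}$ remain linearly independent up to $n \sim d^2$ and can do the fitting. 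This is precisely why the paper's Lemma~\ref{lem:polyvarlb} lower-bounds the variance of $\tilde{f}$ by the \emph{quadratic} term of the polynomial expansion (using positivity of all cross-terms, Lemma~\ref{lem:cube_exp}, and a smallest-singular-value bound on the off-diagonal $z_i^{\otimes 2}$ matrix), and why the anti-concentration step uses Bonami's hypercontractive inequality for degree-4 polynomials rather than Hoeffding/Berry--Esseen for a linear form. Second, your route through ``fitting the training data forces $\|v_\beta\|_2$ large'' does not match the quantifier in the theorem: the claim is for \emph{all} $\beta$, including those that do not separate the training set, and ``cannot fit the $n$ training points'' does not by itself imply $\Omega(1)$ population error. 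The paper avoids any assumption on $\beta$ by symmetrizing over the sign of the informative coordinate: $f((1,0,z)) + f((-1,0,z))$ and $f((0,1,z)) + f((0,-1,z))$ are both within $O\bigl(\tfrac{1}{d}\sum_i|\beta_i|\bigr)$ of the same quantity $2\tilde{f}(z;\beta)$ (Lemma~\ref{lem:f_tilde_approx}), so correct classification of all four points would force $|\tilde{f}(z;\beta)|$ below a threshold that scales with $\sum_i|\beta_i|$ --- and the anti-concentration bound, which scales the same way, rules this out with constant probability, with no reference to the training fit. You would need to add both the symmetrization step and the degree-2 variance/hypercontractivity machinery to make your outline go through.
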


As it will be clear from context, we drop the $^\textup{kernel}$ superscript. The first step of our proof will be demonstrating that the first two coordinates do not affect the value of the prediction function $f(x; \beta)$ by very much. This is where we formalize the importance of having the sign of the positive label be unaffected by the sign of the first coordinate, and likewise for the second coordinate and negative labels. We utilize the sign symmetry to induce further cancellations in the prediction function output. Formally, we will first define the functions $\tilde{\acosk}, \tilde{\sink}: \R^{d - 2} \times \R^{d - 2} \mapsto \R$ with \begin{align*}
\tilde{\acosk}(z', z) = \acosk((0, 1, z'), (1, 0, z))\\
\tilde{\sink}(z', z) = \sink((0, 1, z'), (1, 0, z))
\end{align*}
Next, we will define the function $\tilde{f} : \R^{d - 2} \mapsto \R$ with
\begin{align*}
\tilde{f}(z; \beta) = \tau_1 \sum_{i = 1}^n \beta_i \tilde{\acosk}(z_i, z) + \tau_2 \sum_{i = 1}^n \beta_i (\tilde{\acosk}(z_i, z) + \tilde{\sink}(z_i, z))
\end{align*}
The following lemma states that $2\tilde{f}(z; \beta)$ will approximate both $f((1, 0, z); \beta) + f((-1, 0, z); \beta)$ and $f((0, 1, z); \beta) + f((0, -1, z); \beta)$. This allows us to immediately lower bound the test error of $f$ by the probability that $\tilde{f}(z; \beta)$ is sufficiently large. 
\begin{lemma}
	\label{lem:f_tilde_approx}
	Define the functions 
	\begin{align*}
	f^+(z; \beta) &\triangleq f((1, 0, z); \beta) + f((-1, 0, z); \beta)\\
	f^-(z; \beta) &\triangleq f((0, 1, z); \beta) + f((0, -1, z); \beta)
	\end{align*} 
	Then with probability $1 - \exp(-\Omega(d))$, there is some universal constant $c$ such that
	\begin{align}
	\begin{split}
	|f^+(z; \beta) - 2\tilde{f}(z; \beta)| \le \frac{c (\tau_1 + \tau_2)}{d} \sum_{i = 1}^n |\beta_i|\\
	|f^-(z; \beta) - 2\tilde{f}(z; \beta)| \le \frac{c (\tau_1 + \tau_2)}{d} \sum_{i = 1}^n |\beta_i|	 \label{eq:f_tilde_approx}
	\end{split}
	\end{align}
	As a result, for all choices of $\beta_1, \ldots, \beta_n$, we can lower bound the test error of the kernel prediction function by
	\begin{align*}
	\Pr_{(x, y) \sim \mathcal{D}}[f(x;\beta) y \le 0] \ge \frac{1}{4} \Pr_{z \sim \{-1, +1\}^{d - 2}}\left(|\tilde{f}(z; \beta)| \ge \frac{3c(\tau_1 + \tau_2)}{2d} \sum_{i = 1}^n |\beta_i|\right) - \exp(-\Omega(d))
	\end{align*}
\end{lemma}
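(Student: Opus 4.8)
The plan is to prove Lemma~\ref{lem:f_tilde_approx} by showing that the kernel evaluations $K_1$ and $K_2$ are essentially insensitive to the values of the first two coordinates of the test point, \emph{after} symmetrizing over the sign of the relevant coordinate. First I would fix a training point $x_i = (a_i, b_i, z_i)$, which by the definition of $\cD$ has $(a_i, b_i) \in \{(\pm 1, 0), (0, \pm 1)\}$ and $\|z_i\|_2 = \sqrt{d-2}$, so that $\|x_i\|_2 = \sqrt{d-1}$. For a test point of the form $(1, 0, z)$ or $(-1, 0, z)$ with $z \in \{-1,+1\}^{d-2}$, the inner products $x_i^\top (1,0,z) = a_i + z_i^\top z$ and $x_i^\top(-1,0,z) = -a_i + z_i^\top z$ differ from $z_i^\top z$ by only $\pm a_i = O(1)$, whereas $\|x_i\|_2 \|(\pm 1, 0, z)\|_2 = \Theta(d)$. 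The key computation is a Taylor expansion: both $K_1(x', x)$ and $K_2(x', x)$ are smooth functions of $x^\top x'$ and of $\|x\|_2, \|x'\|_2$ on the relevant domain (away from the degenerate cases, which do not arise here since $z^\top z_i$ concentrates and the norms are bounded below by $\sqrt{d-2}$), so perturbing the first coordinate by $O(1)$ changes $K_1$ and $K_2$ by $O(1)$ additively, and changes the \emph{average} over the two sign choices by only $O(1/d)$ because the first-order term in $a_i$ cancels between $+a_i$ and $-a_i$.

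The next step is to make that cancellation precise. I would write $g(t) \triangleq \tau_1 K_1\big((t, 0, z), (a_i, b_i, z_i)\big) + \tau_2\big(K_1 + K_2\big)\big((t,0,z),(a_i,b_i,z_i)\big)$ as a function of the real variable $t$ near $t \in \{-1, 0, +1\}$, and similarly define the analogous function for the second-coordinate test points and the $\tilde{\acosk}, \tilde{\sink}$ reference (which uses the canonical directions $(1,0,z)$, $(0,1,z)$ with the first/second coordinate set to exactly the values appearing in the definitions). Then $\tfrac12(g(1) + g(-1)) - g(0)$ is, by Taylor's theorem, at most $\tfrac12 \sup_{|t|\le 1}|g''(t)|$; I would bound $|g''(t)|$ by $O((\tau_1+\tau_2)/d)$ using explicit derivative bounds for $\arccos$ and $\sqrt{1-u^2}$ evaluated at $u = x^\top x'/(\|x\|\|x'\|) = O(1/\sqrt{d})$ (here one uses Hoeffding to say $|z^\top z_i| \lesssim \sqrt{d\log d}$, or even just the deterministic bound $|z^\top z_i| \le d-2$ combined with the norm lower bounds, with probability $1 - \exp(-\Omega(d))$ on the good event that all pairwise noise-coordinate inner products are $o(d)$). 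Summing the per-$i$ bound against $|\beta_i|$ gives exactly \eqref{eq:f_tilde_approx}: $|f^+(z;\beta) - 2\tilde f(z;\beta)| \le \frac{c(\tau_1+\tau_2)}{d}\sum_i |\beta_i|$ and likewise for $f^-$. One subtlety is matching constants so that $2\tilde f$ is the correct reference for \emph{both} $f^+$ and $f^-$; this works because by the sign symmetry built into $\cD$, the canonical representatives $(1,0,z)$ and $(0,1,z)$ play symmetric roles, and the leading ($t$-independent) term of $g$ is the same in both cases up to the $O(1/d)$ error.

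Finally, I would deduce the test-error lower bound. On a test draw $(x,y)\sim\cD$, $x$ is one of $(1,0,z),(-1,0,z),(0,1,z),(0,-1,z)$ each with probability $1/4$ conditioned on $z$, and the label is $+1$ for the first two and $-1$ for the last two. If $\tilde f(z;\beta) \ge \frac{3c(\tau_1+\tau_2)}{2d}\sum_i|\beta_i|$, then by the first bound $f^+(z;\beta) \ge 2\tilde f(z;\beta) - \frac{c(\tau_1+\tau_2)}{d}\sum_i|\beta_i| > 0$, so at least one of $f((1,0,z);\beta), f((-1,0,z);\beta)$ is positive — but these have label $-1$, contributing $\ge \tfrac14$ conditional error on that $z$; symmetrically, if $\tilde f(z;\beta) \le -\frac{3c(\tau_1+\tau_2)}{2d}\sum_i|\beta_i|$ then $f^-(z;\beta) < 0$ forces a mistake on a $-1$... wait, on a $+1$-labeled point, again $\ge\tfrac14$ conditional error. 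Either way $|\tilde f(z;\beta)|$ large forces conditional error $\ge \tfrac14$, so integrating over $z$ and absorbing the $\exp(-\Omega(d))$ failure probability of the good event gives $\Pr[f(x;\beta)y\le 0] \ge \tfrac14 \Pr_z(|\tilde f(z;\beta)| \ge \frac{3c(\tau_1+\tau_2)}{2d}\sum_i|\beta_i|) - \exp(-\Omega(d))$, as claimed. The main obstacle I anticipate is the second-derivative bound on the kernel functions: $K_1$ involves $x^\top x' \arccos(\cdot)$ and $K_2$ involves $\sqrt{1-(\cdot)^2}$, both of which have derivatives that blow up as the normalized inner product approaches $\pm 1$, so I must carefully verify that on the high-probability event the argument stays bounded away from $\pm 1$ by $\Omega(1)$ (indeed it is $O(1/\sqrt{d})$ in magnitude), and track how the $\|x'\|_2 = \Theta(\sqrt d)$ factors interact with the $t$-derivatives to yield the claimed $1/d$ rate rather than merely $1/\sqrt d$.
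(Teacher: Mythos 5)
Your proposal follows essentially the same route as the paper: symmetrize over the sign of the relevant coordinate so the first-order Taylor term cancels, bound the remaining terms by $O((\tau_1+\tau_2)/d)$ per training point on a high-probability event where $|z^\top z_i|/(d-1)$ is bounded away from $1$, sum against $|\beta_i|$ with a union bound, and then convert a large $|\tilde f|$ into a forced misclassification of one of the four equally likely points. Two small cautions. First, your reference point must be $\tilde\acosk(z_i,z)=\acosk((0,1,z_i),(1,0,z))$ and $\tilde\sink(z_i,z)=\sink((0,1,z_i),(1,0,z))$, \emph{not} the literal $g(0)$ obtained by zeroing the test coordinate: for the $\sink$ term the prefactor $\|x\|_2\|x'\|_2/\pi$ changes by $\Theta(1)$ when the test norm drops from $\sqrt{d-1}$ to $\sqrt{d-2}$, so you must Taylor-expand in the normalized inner product with both norms held at $\sqrt{d-1}$ (as the paper does, expanding $\arccos$ and $\sqrt{1-v^2}$ around $\nu=x_{-2}^\top z/(d-1)$ with perturbation $\pm 1/(d-1)$). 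Second, your label bookkeeping in the last step is transposed — $(\pm 1,0,z)$ carries label $+1$ and $(0,\pm 1,z)$ carries label $-1$, so when $\tilde f$ is large and positive it is $f^->0$ that forces an error (a negatively labeled point predicted positive), and when $\tilde f$ is large and negative it is $f^+<0$; the conclusion is unaffected, but the roles of $f^+$ and $f^-$ should be swapped in your argument.
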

Now we argue that $|\tilde{f}(z; \beta)|$ will be large with constant probability over $z$, leading to constant test error of $f$. Formally we first show that with constant probability over the choice of $z \sim \{-1, +1\}^{d-2}$, we have $|\tilde{f}(z; \beta)| \ge \frac{3c(\tau_1 + \tau_2)}{2d} \sum_{i = 1}^n |\beta_i|$.
\begin{lemma}
	\label{lem:f_tilde_large}
	For sufficiently small $n \lesssim d^2$, with probability $1 - \exp(-\Omega(\sqrt{n}))$ over the random draws of $z_1, \ldots, z_n$, the following holds: for all $\beta_1, \ldots, \beta_n$, we will have
	\begin{align*}
 \Pr_{z \sim \{-1, +1\}^{d - 2}}\left(|\tilde{f}(z; \beta)| \ge \frac{3c(\tau_1 + \tau_2)}{2d} \sum_{i = 1}^n |\beta_i|\right) \ge \Omega(1)
	\end{align*}
	where $c$ is the constant defined in Lemma~\ref{lem:f_tilde_approx}.
\end{lemma}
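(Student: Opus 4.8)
The plan is to view $\tilde f(\cdot;\beta)$ as a single univariate function composed with inner products, expand it in the Walsh--Fourier basis on $\{-1,+1\}^{d-2}$, lower-bound the second moment of its degree-$2$ part, and then apply hypercontractivity together with the Paley--Zygmund anti-concentration inequality. Concretely, since $\|(1,0,z)\|_2^2 = \|(0,1,z_i)\|_2^2 = d-1$ for all $z,z_i\in\{-1,+1\}^{d-2}$, both $\tilde{\acosk}(z_i,z)$ and $\tilde{\sink}(z_i,z)$ depend on $(z_i,z)$ only through $z_i^{\top}z$, so $\tilde f(z;\beta)=\sum_{i=1}^n\beta_i\,g(z_i^{\top}z)$ with $g(t)\triangleq(\tau_1+\tau_2)\,t\big(1-\pi^{-1}\arccos(t/(d-1))\big)+\tfrac{\tau_2(d-1)}{\pi}\sqrt{1-(t/(d-1))^2}$. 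Writing $\chi_S(z)\triangleq\prod_{j\in S}z_j$ and using that $z_i^{\top}z$ is a symmetric function of the products $z_{i,j}z_j$, we get $g(z_i^{\top}z)=\sum_{S\subseteq[d-2]}\hat g_{|S|}\chi_S(z_i)\chi_S(z)$, hence $\tilde f(z;\beta)=\sum_S\hat g_{|S|}\big(\sum_i\beta_i\chi_S(z_i)\big)\chi_S(z)$. From the Taylor series of $\arccos$ and $\sqrt{1-\cdot}$ one extracts three facts: (i) $g$ is a power series in $t$ containing only $t$ and even powers of $t$, so $\hat g_k=0$ for odd $k\ge 3$; (ii) the coefficient of $t^2$ in $g$ equals $\tfrac{\tau_1+\tau_2/2}{\pi(d-1)}>0$ — the two kernels' $\tau_2$ contributions partially cancel but the $\tau_1$ contribution does not — which yields $\hat g_2\ge\tfrac{\tau_1+\tau_2}{2\pi d}$ for $d$ large; (iii) the coefficient of $t^{2j}$ has magnitude $O\big((\tau_1+\tau_2)d^{-(2j-1)}\big)$, so the high-degree Fourier mass $\rho^2\triangleq\sum_{k\ge 3}\binom{d-2}{k}\hat g_k^2$ is $O\big((\tau_1+\tau_2)^2d^{-2}\big)$.

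For the degree-$2$ part $\tilde f_{=2}$, orthonormality of the characters gives $\mathbb{E}_{z}[\tilde f_{=2}(z)^2]=\hat g_2^2\,\beta^{\top}M\beta$, where $M_{il}\triangleq\sum_{|S|=2}\chi_S(z_i)\chi_S(z_l)$ has diagonal entries $\binom{d-2}{2}$ and mean-zero off-diagonal entries (equal to $\tfrac12((z_i^{\top}z_l)^2-(d-2))$). Because the off-diagonal block of $M$ is the off-diagonal part of the Gram matrix of the independent lifted vectors $\big(z_{i,a}z_{i,b}\big)_{a\ne b}$, a random-matrix concentration bound gives $\|M-\mathbb{E}M\|_{\mathrm{op}}=\|M-\binom{d-2}{2}I\|_{\mathrm{op}}\lesssim d\sqrt n$ with probability $1-e^{-\Omega(\sqrt n)}$; this is where the hypothesis $n\lesssim d^2$ is used, through the requirement $d\sqrt n\ll d^2$. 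Hence, taking $n\le c_0d^2$ for a sufficiently small absolute constant $c_0$, $\beta^{\top}M\beta\ge\big(\binom{d-2}{2}-\|M-\mathbb{E}M\|_{\mathrm{op}}\big)\|\beta\|_2^2\ge\tfrac14d^2\|\beta\|_2^2$, so $\mathbb{E}_{z}[\tilde f_{=2}(z)^2]\ge\Gamma^2\triangleq c_1(\tau_1+\tau_2)^2\|\beta\|_2^2$ for an absolute $c_1>0$. In the other direction, since $z_i^{\top}z$ has the same law for every fixed $z_i$ under uniform $z$, the degree-$\ge3$ Fourier projection of $z\mapsto g(z_i^{\top}z)$ has $L^2(z)$-norm exactly $\rho$ for each $i$, so by the triangle inequality $\big(\mathbb{E}_{z}[\tilde f_{>2}(z)^2]\big)^{1/2}\le\rho\|\beta\|_1\le\rho\sqrt n\,\|\beta\|_2\le\epsilon_0\Gamma$, where $\epsilon_0=O(\sqrt{c_0})$ can be made as small as we like.

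Now $\tilde f_{\le 2}$ is a multilinear polynomial of degree $\le 2$ in the i.i.d.\ signs $(z_j)_{j\le d-2}$, so $(2,4)$-hypercontractivity on the cube gives $\mathbb{E}_{z}[\tilde f_{\le 2}(z)^4]\le 81\,\big(\mathbb{E}_{z}[\tilde f_{\le 2}(z)^2]\big)^2$; combined with $\mathbb{E}_{z}[\tilde f_{\le 2}(z)^2]\ge\mathbb{E}_{z}[\tilde f_{=2}(z)^2]\ge\Gamma^2$, the Paley--Zygmund inequality yields $\Pr_z[\,|\tilde f_{\le 2}(z)|\ge\Gamma/\sqrt 2\,]\ge\tfrac{1}{324}$, while Markov's inequality and $\mathbb{E}_{z}[\tilde f_{>2}(z)^2]\le\epsilon_0^2\Gamma^2$ give $\Pr_z[\,|\tilde f_{>2}(z)|\ge\Gamma/10\,]\le100\epsilon_0^2<\tfrac{1}{1000}$ for $c_0$ small. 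On the intersection of these events — an event of probability $\Omega(1)$ — we have $|\tilde f(z;\beta)|\ge|\tilde f_{\le 2}(z)|-|\tilde f_{>2}(z)|\ge\Gamma/2$. Finally $\Gamma/2\gtrsim(\tau_1+\tau_2)\|\beta\|_2\ge(\tau_1+\tau_2)\|\beta\|_1/\sqrt n\ge(\tau_1+\tau_2)\|\beta\|_1/(\sqrt{c_0}\,d)$, so choosing $c_0$ small enough relative to the universal constant $c$ of Lemma~\ref{lem:f_tilde_approx} makes $\Gamma/2\ge\tfrac{3c(\tau_1+\tau_2)}{2d}\sum_i|\beta_i|$, which is exactly the stated threshold. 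Since no step used anything about $\beta$ beyond $\beta\ne0$ (for which the statement is vacuous), the bound holds uniformly over $\beta$, proving the lemma.

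The main obstacle is the operator-norm estimate $\|M-\binom{d-2}{2}I\|_{\mathrm{op}}\lesssim d\sqrt n$ with failure probability only $e^{-\Omega(\sqrt n)}$: the relevant matrix has sub-exponential (not sub-Gaussian) entries that are dependent within each row, so one must avoid an extra logarithmic factor — for instance via a sharper bound for Gram matrices of independent heavy-tailed vectors, or by splitting off the rare pairs $i\ne l$ with $|z_i^{\top}z_l|$ much larger than $\sqrt d$ — and the constant must be strong enough to dominate the $\Theta(d^2)$ diagonal even when $n$ is as large as $\Theta(d^2)$. A secondary point that needs care is the lower bound $\hat g_2=\Omega\big((\tau_1+\tau_2)/d\big)$: one must check that the degree-$2$ Fourier coefficient is not only of this order but in particular nonzero, which relies on $\tau_1>0$ and the precise cancellation structure of the $\tau_2$ terms in $\tilde{\acosk}$ and $\tilde{\sink}$.
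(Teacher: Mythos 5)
Your proposal is correct in outline and reaches the same conclusion, but it takes a genuinely different route from the paper's. The paper first replaces $\tilde f$ by a degree-$4$ polynomial approximant $\hat f(z;\beta)=\sum_i\beta_i g(z^\top z_i/(d-1))$ and controls the approximation error pointwise with high probability over $z$ (Lemma~\ref{lem:polyexpansion}); it then lower-bounds $\E_z[\hat f^2]$ by expanding the square over the monomials of $g$ and showing that \emph{every} cross term $\E_z[(\sum_i\beta_i(z^\top z_i)^{j_1})(\sum_i\beta_i(z^\top z_i)^{j_2})]$ is nonnegative (Lemma~\ref{lem:cube_exp}, via positivity of the products of Fourier coefficients), so that only the $j_1=j_2=2$ term need be kept; finally it applies Bonami's inequality at degree $4$. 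You instead keep the exact function, split it into Walsh--Fourier levels $\le 2$ and $>2$, use orthogonality of the levels in place of the cross-moment positivity argument, bound the high-level part deterministically by $\rho\|\beta\|_1$ via the triangle inequality, and apply degree-$2$ hypercontractivity plus Paley--Zygmund to the low-level part and Markov to the remainder. Both arguments bottom out in the same random-matrix fact — a lower bound on the quadratic form $\beta^\top M\beta$ for the Gram matrix of the lifted vectors $(z_{i,a}z_{i,b})_{a<b}$ — and the concentration you flag as the main obstacle is exactly what the paper supplies via Claim~\ref{claim:subexp} (Hanson--Wright) together with the cited singular-value bound for matrices with independent sub-exponential columns, with failure probability $\exp(-\Omega(\sqrt n))$; your operator-norm formulation $\|M-\E M\|_{\mathrm{op}}\lesssim d\sqrt n$ is an equivalent packaging. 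Your route buys a cleaner separation of the ``signal'' (level $2$) from the ``noise'' (levels $\ge 3$) and avoids both the pointwise approximation step and the cross-moment positivity lemma; the paper's route buys a shorter spectral computation, since truncating $g$ to degree $4$ up front means only finitely many Fourier levels ever appear.

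The one step you should not wave through is item (iii), the bound $\rho^2=\sum_{k\ge3}\binom{d-2}{k}\hat g_k^2=O((\tau_1+\tau_2)^2d^{-2})$. The inference from ``the coefficient of $t^{2j}$ in $g$ is $O((\tau_1+\tau_2)d^{-(2j-1)})$'' to this Fourier-mass bound is not immediate: the Walsh coefficient $\hat g_k$ collects contributions from all powers $t^m$ with $m\ge k$, $m\equiv k\ (\mathrm{mod}\ 2)$, with combinatorial multiplicities growing like $k!$ (and like powers of $d$ for $m>k$), and one must also justify convergence of the power series of $\arccos$ and $\sqrt{1-\cdot}$ on the full range $|z_i^\top z|\le d-2$ rather than only on a high-probability sub-interval. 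Carrying out this computation (e.g.\ $\binom{d-2}{k}\hat g_k^2\approx k!\,d^{2-k}(\tau_1+\tau_2)^2$ up to the $O(k^{-3/2})$ Taylor coefficients, summable over even $k\ge4$ and dominated by $k=4$) does confirm your claim, so this is a gap in rigor rather than a flaw, but as written it is the weakest link in an otherwise sound argument.
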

This will allow us to complete the proof of Theorem~\ref{thm:kernellb}. 
\begin{proof}[Proof of Theorem~\ref{thm:kernellb}]
	By plugging Lemma~\ref{lem:f_tilde_large} into the statement of Lemma~\ref{lem:f_tilde_approx}, we can conclude that for sufficiently small $n \lesssim d^2$, with probability $1 - \exp(-\Omega(\sqrt{n}))$ over the random draws of $z_1, \ldots, z_n$, we have 
	\begin{align*}
		\Pr_{(x, y) \sim \cD} [f(x; \beta)y \le 0] \ge \Omega(1)
	\end{align*}
	for all choices of $\beta$. This gives precisely Theorem~\ref{thm:kernellb}.
\end{proof}

It now suffices to prove Lemmas~\ref{lem:f_tilde_approx} and~\ref{lem:f_tilde_large}. 

To prove Lemma~\ref{lem:f_tilde_approx}, we will rely on the following two lemmas relating $\acosk, \sink$ with $\tilde{\acosk}$, $\tilde{\sink}$, stated and proved below: 
\begin{lemma}
	\label{lem:acosapprox}
	Let $z \in \{-1, +1\}^{d-2}$ be a uniform random point from the $d-2$-dimensional hypercube and $x \in \supp(\mathcal{D}_x)$ be given. With probability $1 - \exp(-\Omega(d))$ over the choice of $z$, we have  
	\begin{align*}
	|\acosk(x, (1, 0, z)) + \acosk(x, (-1, 0, z)) - 2\tilde{\acosk}(x_{-2}, z)| &\lesssim \frac{1}{d}\\
	|\acosk(x, (0, 1, z)) + \acosk(x, (0, -1, z)) - 2\tilde{\acosk}(x_{-2}, z)| &\lesssim \frac{1}{d}
	\end{align*}
\end{lemma}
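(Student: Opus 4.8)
The plan is to reduce both inequalities to a single one-variable curvature estimate. Fix $x \in \supp(\mathcal{D}_{x})$ and write $x = (a, b, w)$ with $(a,b) \in \{(1,0), (-1,0), (0,1), (0,-1)\}$ and $w = x_{-2} \in \{-1,+1\}^{d-2}$; set $c \triangleq d-1$ and $s \triangleq w^\top z$. Every vector in $\supp(\mathcal{D}_x)$ and each of $(\pm 1, 0, z)$, $(0, \pm 1, z)$, $(0, 1, w)$ has squared Euclidean norm $1 + (d-2) = c$, so $x^\top(\pm 1, 0, z) = \pm a + s$, $x^\top(0, \pm 1, z) = \pm b + s$, and $(0,1,w)^\top(1,0,z) = s$. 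Defining $g(t) \triangleq t\bigl(1 - \tfrac{1}{\pi}\arccos(t/c)\bigr)$ on $(-c,c)$, we get $\acosk(x, (\pm 1, 0, z)) = g(\pm a + s)$, $\acosk(x, (0, \pm 1, z)) = g(\pm b + s)$, and $2\tilde{\acosk}(x_{-2}, z) = 2 g(s)$, so the two claims become $|g(a+s) + g(-a+s) - 2g(s)| \lesssim 1/d$ and $|g(b+s) + g(-b+s) - 2g(s)| \lesssim 1/d$, where $a, b \in \{-1,0,1\}$.

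First I would control $s$. Since $w$ is fixed and $z$ is uniform on $\{-1,+1\}^{d-2}$, the entries $w_j z_j$ are i.i.d.\ Rademacher, so Hoeffding's inequality gives $|s| = |w^\top z| \le (d-2)/2$ with probability $1 - \exp(-\Omega(d))$; I condition on this event. Then $|s| + 1 < c$ once $d$ exceeds an absolute constant, so $g$ is smooth at $s$, $s \pm a$, $s \pm b$, and all the $\arccos$ arguments lie strictly inside $[-1,1]$.

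The core observation is that $g(s+\alpha) + g(s-\alpha) - 2g(s)$ is a symmetric second difference, hence controlled by the second derivative: by Taylor's theorem with integral remainder, for $|\alpha| \le 1$ one has $\bigl|g(s+\alpha) + g(s-\alpha) - 2g(s)\bigr| \le \alpha^2 \sup_{|\xi - s| \le 1} |g''(\xi)| \le \sup_{|\xi - s| \le 1} |g''(\xi)|$. A direct differentiation gives $g''(t) = \tfrac{1}{\pi}\bigl((c^2 - t^2)^{-1/2} + c^2 (c^2 - t^2)^{-3/2}\bigr)$. On the conditioning event, $|\xi - s| \le 1$ forces $|\xi| \le d/2$, so $c^2 - \xi^2$ is of order $d^2$, and therefore $g''(\xi) = O(1/d)$ uniformly over that range. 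Taking $\alpha = a$ and then $\alpha = b$ (each in $\{-1,0,1\}$, so $\alpha^2 \le 1$) yields the two bounds, and a union bound over the single exceptional event completes the proof.

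The only step with any subtlety is the second-derivative estimate, and even there the work amounts to checking that $c^2 - \xi^2$ stays of order $d^2$ throughout the relevant window of $\xi$ — which is exactly what the concentration bound on $s$ buys — plus the routine computation of $g''$. Conceptually, the informative coordinates $a, b$ have magnitude $O(1)$, negligible next to $\|x\|_2 = \Theta(\sqrt{d})$; so shifting the inner product by $\pm a$ or $\pm b$ moves it only within an $O(1)$ window in the bulk of the $\arccos$ curve, where that curve has curvature $O(1/d)$, while the symmetric differencing annihilates the first-order contribution.
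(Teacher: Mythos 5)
Your proof is correct, and it rests on the same two ingredients as the paper's — Hoeffding concentration of $x_{-2}^\top z$ so that the $\arccos$ argument stays bounded away from $\pm 1$, followed by a Taylor expansion — but it organizes the expansion differently. The paper splits $\acosk(x,(1,0,z))+\acosk(x,(-1,0,z))$ into a pure difference of $\arccos$ values (bounded via a first-order expansion of $\arccos$, which supplies the dominant $O(1/d)$) plus a term of the form $s$ times a symmetric second difference of $\arccos$ (bounded by $|s|\cdot O(1/d^2)$), and it treats the case $x_1=0$ as a separate trivial case. You instead observe that the entire quantity is the symmetric second difference $g(s+\alpha)+g(s-\alpha)-2g(s)$ of the single function $g(t)=t\bigl(1-\pi^{-1}\arccos(t/c)\bigr)$ with $\alpha\in\{-1,0,1\}$, and bound it by $\alpha^2\sup|g''|$. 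Your formula $g''(t)=\pi^{-1}\bigl((c^2-t^2)^{-1/2}+c^2(c^2-t^2)^{-3/2}\bigr)$ is correct, and on the event $|s|\le (d-2)/2$ one indeed has $c^2-\xi^2\gtrsim d^2$ for all $|\xi-s|\le 1$, so $g''(\xi)=O(1/d)$ and the bound follows. This repackaging buys a genuinely cleaner argument: the degenerate case $\alpha=0$ is absorbed automatically, and there is no need to track the interaction between the linear prefactor and the expanded $\arccos$; the cost is only the routine computation of $g''$, which you carried out correctly.
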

\begin{lemma}
	\label{lem:sinkapprox}
	In the same setting as Lemma~\ref{lem:acosapprox}, with probability $1 - \exp(-\Omega(d))$ over the choice of $z$, we have
	\begin{align*}
	|\sink(x, (1, 0, z)) + \sink(x, (-1, 0, z)) - 2\tilde{\sink}(x_{-2}, z)| &\lesssim \frac{1}{d}\\
	|\sink(x, (0, 1, z)) + \sink(x, (0, -1, z)) - 2\tilde{\sink}(x_{-2}, z)| &\lesssim \frac{1}{d}
	\end{align*}
\end{lemma}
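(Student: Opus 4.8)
The plan is to reduce both inequalities to a single one-variable Taylor estimate for the scalar function $g(s) \triangleq \sqrt{1 - s^2}$, with the error controlled by a Hoeffding bound on $w^\top z$. First I would exploit that in our distribution every relevant vector has the same norm: if $x = (a, b, w) \in \supp(\mathcal{D}_x)$ then $(a, b) \in \{(\pm 1, 0), (0, \pm 1)\}$, $w \in \{-1, +1\}^{d-2}$, and $\|x\|_2^2 = \|(1, 0, z)\|_2^2 = \|(0, 1, z)\|_2^2 = d - 1$. Hence for every pair $(u, v)$ appearing in the statement, $\sink(u, v) = \frac{d-1}{\pi} g\bigl(\frac{u^\top v}{d-1}\bigr)$. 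Writing $t \triangleq w^\top z$, $u_0 \triangleq t/(d-1)$ and $h \triangleq a/(d-1)$, one computes $x^\top (1, 0, z) = a + t$, $x^\top (-1, 0, z) = -a + t$ and $\tilde{\sink}(w, z) = \frac{d-1}{\pi} g(u_0)$, so the first quantity to be bounded equals $\frac{d-1}{\pi}\bigl(g(u_0 + h) + g(u_0 - h) - 2g(u_0)\bigr)$, and the second is the same expression with $h$ replaced by $h' \triangleq b/(d-1)$. In particular, when $a = 0$ (resp.\ $b = 0$) this vanishes identically, so only the case $|a| = 1$ (resp.\ $|b| = 1$) requires work, and always $|h|, |h'| \le 1/(d-1)$.

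Next I would record the concentration event. Since $z$ is uniform on $\{-1, +1\}^{d-2}$ and $w$ is fixed, $t = \sum_{i=1}^{d-2} w_i z_i$ is a sum of $d - 2$ i.i.d.\ Rademacher variables, so Hoeffding's inequality gives $\Pr[|t| > (d-2)/2] \le 2\exp(-(d-2)/8) = \exp(-\Omega(d))$. On the complementary event, $|u_0| \le \tfrac{d-2}{2(d-1)} < \tfrac12$, which is exactly what is needed to keep the argument of $g$ bounded away from $\pm 1$.

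On that event I would conclude via Taylor's theorem with Lagrange remainder about $u_0$: the first-order terms cancel, giving
\[
g(u_0 + h) + g(u_0 - h) - 2g(u_0) = \tfrac12\bigl(g''(\xi_+) + g''(\xi_-)\bigr) h^2
\]
for some $\xi_\pm$ lying between $u_0$ and $u_0 \pm h$, so that $|\xi_\pm| \le |u_0| + |h| \le \tfrac12 + \tfrac1{d-1} \le \tfrac34$ once $d$ is large. Since $g''(s) = -(1 - s^2)^{-3/2}$ is bounded by a universal constant on $[-\tfrac34, \tfrac34]$ and $|h| \le 1/(d-1)$, the bracket is $O(1/d^2)$; multiplying by $\frac{d-1}{\pi}$ yields the claimed $O(1/d)$ bound, and the second inequality follows verbatim with $h'$ in place of $h$. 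The step I expect to be the crux is precisely the one isolated above: controlling $g''$, which forces the argument $u_0$ of $g$ to stay away from $\pm 1$ — this is the sole role of the Hoeffding bound, and because we demand a failure probability $\exp(-\Omega(d))$ we can only afford the crude deviation estimate $|t| \lesssim d$, which nonetheless turns out to be exactly enough. The same three-step scheme (equal norms $\Rightarrow$ reduction to a scalar difference; Hoeffding on $w^\top z$; Taylor expansion) also proves Lemma~\ref{lem:acosapprox}, at the cost of the mild extra bookkeeping of the $1 - \pi^{-1}\arccos(\cdot)$ factor.
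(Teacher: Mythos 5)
Your proposal is correct and follows essentially the same route as the paper's proof: reduce to the scalar function $g(s)=\sqrt{1-s^2}$ using the fact that all relevant vectors have norm $\sqrt{d-1}$, invoke Hoeffding to keep $x_{-2}^\top z/(d-1)$ bounded away from $\pm 1$ with probability $1-\exp(-\Omega(d))$, and exploit the cancellation of the first-order terms in the symmetric second difference $g(\nu+\epsilon)+g(\nu-\epsilon)-2g(\nu)=O(\epsilon^2)$ with $\epsilon=\pm 1/(d-1)$, which after multiplying by $(d-1)/\pi$ yields the $O(1/d)$ bound. Your version is marginally more explicit (Lagrange remainder, explicit treatment of the degenerate case $a=0$), but the argument is the same.
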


\begin{proof}[Proof of Lemma~\ref{lem:acosapprox}]
	As it will be clear in the context of this proof, we use $x_1$ to denote the first coordinate of $x$ and $x_2$ to denote the second coordinate of $x$.
	We prove the first inequality, as the proof for the second is identical. First, note that if $x_1 = 0$,$|x_2| = 1$, then we have $\acosk(x, (1, 0, z)) + \acosk(x, (-1, 0, z)) = 2\acosk((0, 1, x_{-2}), (1, 0, z))$ so the inequality holds trivially. Thus, we work in the case that $|x_1| = 1$, $x_2 = 0$. 
	
	Note that $\|(1,0,z)\|_2 = \|(-1,0,z)\|_2= \|x\|_2 = \sqrt{d - 1}$. We have: 
	\begin{align}
	&\acosk(x, (1,0,z)) + \acosk(x,(-1,0,z))\\&= \left(1 - \pi^{-1}\arccos\left(\frac{1 + x_{-2}^\top z}{d-1}\right)\right)(1 + x_{-2}^\top z)\nonumber \\
	&+ \left(1 - \pi^{-1}\arccos\left(\frac{-1 +x_{-2}^\top z}{d -1}\right)\right)(-1 + x_{-2}^\top z) \nonumber\\
	&=\pi^{-1}\left(\arccos\left(\frac{-1 + x_{-2}^\top z}{d - 1}\right) - \arccos\left(\frac{1 + x_{-2}^\top z}{d -1}\right)\right) \label{eq:kernelapprox:1}\\
	&+x_{-2}^\top z\left(2-\pi^{-1}\arccos\left(\frac{-1 +x_{-2}^\top z}{d - 1}\right) - \pi^{-1}\arccos\left(\frac{1 + x_{-2}^\top z}{d -1}\right)\right)\label{eq:kernelapprox:2}
	\end{align}	
	Now we perform a Taylor expansion of $\arccos$ around $\nu \triangleq x_{-2}^\top z /(d -1)$ to get 
	$$\arccos(\nu + \epsilon) = \arccos(\nu) + \arccos'(\nu) \epsilon + O(\epsilon^2)$$
	for any $|\nu|, |\nu + \epsilon| \le 3/4$. Note that this happens with probability $1- \exp(-\Omega(d))$ by Hoeffding's inequality. Furthermore, for $|\nu| \le 3/4$, $\arccos'(\nu) = O(1)$, so we get that~\eqref{eq:kernelapprox:1} can be bounded by $O(\frac{1}{d})$. Next, we claim the following: 
	\begin{align}
	\left|\arccos\left(\frac{-1 + x_{-2}^\top z}{d -1}\right) +\arccos\left(\frac{1 + x_{-2}^\top z}{d -1}\right) - 2\arccos\left(\frac{x_{-2}^\top z}{d -1}\right)\right| = O\left(\frac{1}{d^2} \right) \nonumber
	\end{align}
	This follows simply from Taylor expansion around $\nu$ setting $\epsilon$ to $\pm\frac{1}{d-1}$. Substituting this into~\eqref{eq:kernelapprox:2}~and using our bound on~\eqref{eq:kernelapprox:1}, we get
	\begin{align}
	\left|\acosk(x, (1,0,z)) + \acosk(x,(-1,0,z)) - 2x_{-2}^\top z\left(1 - \pi^{-1}\arccos\left(\frac{x_{-2}^\top z}{d-1}\right)\right)\right| \le O\left(\frac{1}{d}\right) \nonumber
	\end{align}
	Now we use the fact that $x_{-2}^\top z\left(1 - \pi^{-1}\arccos\left(\frac{x_{-2}^\top z}{d-1}\right)\right) = \acosk((0, 1, x_{-2}), (1, 0, z))$ to complete the proof. 
\end{proof}

\begin{proof}[Proof of Lemma~\ref{lem:sinkapprox}]
	As before, it suffices to prove the first inequality in the case that $|x_1| = 1$, $x_2 = 0$. We can compute
	\begin{align}
	\begin{split}
	(\sink(x, (1,0,z)) + \sink(x, (-1,0,z)) &=\\ \frac{1}{\pi}\left((d-1) \sqrt{1 - \left( \frac{1+x_{-2}^\top z}{d-1}\right)^2} +  (d-1) \sqrt{1 - \left( \frac{-1+x_{-2}^\top z}{d-1}\right)^2}\right)\label{eq:sinkapprox:1}
	\end{split}
	\end{align}
	Now we again perform a Taylor expansion, this time of $g(v) = \sqrt{1 - v^2}$ around $\nu \triangleq \frac{x_{-2}^\top z}{d - 1}$. We get 
	\begin{align*}
	g(\nu + \epsilon) = g(\nu) + g'(\nu) \epsilon + O(\epsilon^2)
	\end{align*}
	for any $|\nu|, |\nu+ \epsilon| \le 3/4$. Note that $|\nu|,|\nu + \epsilon| \le 3/4$ with probability $1 - \exp(-\Omega(d))$ via straightforward concentration. It follows that 
	\begin{align*}
	\left|\sqrt{1 - \left(\frac{1 + x_{-2}^\top z}{d - 1}\right)^2} + \sqrt{1 - \left(\frac{-1 + x_{-2}^\top z}{d - 1}\right)^2} - 2 \sqrt{1 - \left(\frac{x_{-2}^\top z}{d - 1}\right)^2} \right| \lesssim \frac{1}{d^2}
	\end{align*}
	Now plugging this into~\eqref{eq:sinkapprox:1} and using the fact that $\frac{1}{\pi}(d - 1)\sqrt{1 - \left(\frac{x_{-2}^\top z}{d - 1}\right)^2} = \sink((0, 1, x_{-2}), (1, 0, z))$ gives the desired result.
\end{proof}

Now we can complete the proof of Lemma~\ref{lem:f_tilde_approx}. 
\begin{proof}[Proof of Lemma~\ref{lem:f_tilde_approx}]
	We note that 
	\begin{align}
	\begin{split}
	|f^+(z; \beta) - 2\tilde{f}(z; \beta)| =& \bigg|(\tau_1 + \tau_2)\sum_{i = 1}^n \beta_i [\acosk((1, 0, z), x_i) + \acosk((-1, 0, z), x_i) - 2\tilde{\acosk}(z_i,z)] \\
	&+\tau_2 \sum_{i = 1}^n  \beta_i [\sink((1, 0, z), x_i) + \sink((-1, 0, z), x_i) - 2\tilde{\sink}(z_i,z)] \bigg| \label{eq:signequalbound:1}
	\end{split}
	\end{align}		Now with applying Lemmas~\ref{lem:acosapprox} and~\ref{lem:sinkapprox} with a union bound over all $i$, we get with probability $1 - \exp(-\Omega(d))$ over the choice of $z$ uniform from $\{-1, +1\}^{d - 2}$, for all $i$
	\begin{align*}
	|\acosk((1, 0, z), x_i) + \acosk((-1, 0, z), x_i) - 2\tilde{\acosk}(z_i,z)| &\lesssim \frac{1}{d}\\
	|\sink((1, 0, z), x_i) + \sink((-1, 0, z), x_i) - 2\tilde{\sink}(z_i,z)| & \lesssim \frac{1}{d}
	\end{align*}
	Now plugging into~\eqref{eq:signequalbound:1} and applying triangle inequality gives us 
	\begin{align}
	|f^+(z; \beta) - 2\tilde{f}(z; \beta)| \le \frac{c (\tau_1 + \tau_2)}{d} \sum_{i = 1}^n |\beta_i|
	\label{eq:signequalbound:2}
	\end{align}
	with probablity $1 - \exp(-\Omega(d))$ over $z$ for some universal constant $c$. An identical argument also gives us 
	\begin{align}
	|f^-(z; \beta) - 2\tilde{f}(z; \beta)| \le \frac{c (\tau_1 + \tau_2)}{d} \sum_{i = 1}^n |\beta_i|	 \label{eq:signequalbound:3}
	\end{align}
	Finally, to lower bound the quantity $\Pr_{(x, y) \sim \mathcal{D}}[f(x;\beta) y \le 0]$, we note that if 
	\begin{align*}
	|\tilde{f}(z; \beta)| \ge \frac{3c(\tau_1 + \tau_2)}{2d} \sum_{i = 1}^n |\beta_i|
	\end{align*}
	and~\eqref{eq:f_tilde_approx} hold, then $f^+(z; \beta)$ and $f^-(z; \beta)$ will have the same sign. However, this in turn means that one of the following must hold: \begin{align*}
	f((1,0,z);\beta) < 0\\
	f((-1,0,z);\beta)< 0\\
	f((0,1,z);\beta)> 0\\
	f((0,-1,z);\beta) > 0
	\end{align*}
	which implies an incorrect predicted sign. As $(1, 0, z)$, $(-1, 0, z)$, $(0, 1, z)$, $(0, -1, z)$ are all equally likely under distribution $\cD_x$, the probability of drawing one of these examples under $\cD_x$ is at least $$\frac{1}{4} \Pr_{z \sim \{-1, +1\}^{d - 2}}\left(|\tilde{f}(z; \beta)| \ge \frac{3c(\tau_1 + \tau_2)}{2d} \sum_{i = 1}^n |\beta_i|\right) - \exp(-\Omega(d))$$ This gives the desired lower bound on $\Pr_{(x, y) \sim \mathcal{D}}[f(x;\beta) y \le 0]$. 
\end{proof}

Now we will prove Lemma~\ref{lem:f_tilde_large}. We will first construct a polynomial approximation $\hat{f}(z;\beta)$ of $\tilde{f}(z; \beta)$, and then lower bound the expectation $\E_z[\hat{f}(z; \beta)^2]$. We use the following two lemmas:
\begin{lemma} \label{lem:polyexpansion}
	Define the polynomial $g : \R \mapsto \R$ as follows:
	\begin{align*}
	g(x) \triangleq \tau_1(d - 1)\left(\frac{1}{2}x + \frac{1}{\pi}x^2 + \frac{1}{6\pi}x^4\right) + \tau_2(d - 1)\left(\frac{1}{\pi} + \frac{1}{2}x + \frac{1}{2\pi}x^2 + \frac{1}{24\pi}x^4 \right)
	\end{align*} 
	Then for $z \in\{-1, +1\}^{d-2}$ distributed uniformly over the hypercube and some given $z' \in \{-1, +1\}^{d-2}$, 
	\begin{align*}
	\Pr_z\left[\left|g\left(\frac{z^\top z'}{d-1}\right) - (\tau_1 + \tau_2)\tilde{\acosk}(z, z')- \tau_2 \tilde{\sink}(z, z')\right| \le c_1(\tau_1 + \tau_2)\frac{\log^{2.5}}{d^{1.5}}\right] \ge 1 - d^{-10}
	\end{align*}
	for some universal constant $c_1$.
\end{lemma}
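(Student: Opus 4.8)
The plan is to observe that $g$ is, up to the common prefactor $d-1$, exactly the degree-$4$ Taylor polynomial at $0$ of the function $t\mapsto (\tau_1+\tau_2)\tilde{\acosk}+\tau_2\tilde{\sink}$, once everything is expressed through the single scalar $t\triangleq\frac{z^\top z'}{d-1}$, and then to control the Taylor remainder using the fact that $t$ concentrates near $0$. \emph{Step 1 (reduce to a one-variable identity).} Since $z,z'\in\{-1,+1\}^{d-2}$ we have $\|(0,1,z')\|_2=\|(1,0,z)\|_2=\sqrt{d-1}$ and $(0,1,z')^\top(1,0,z)=z^\top z'$, so by the definitions of $\acosk,\sink$,
\begin{align*}
\tilde{\acosk}(z,z') = (d-1)\,t\Big(1-\tfrac{1}{\pi}\arccos t\Big),\qquad \tilde{\sink}(z,z') = \tfrac{d-1}{\pi}\sqrt{1-t^2},\qquad t\in(-1,1).
\end{align*}
Writing $\arccos t=\tfrac{\pi}{2}-\arcsin t$ and using $\arcsin t = t+\tfrac{t^3}{6}+\tfrac{3t^5}{40}+\cdots$ gives $t\big(1-\tfrac1\pi\arccos t\big)=\tfrac{t}{2}+\tfrac{t^2}{\pi}+\tfrac{t^4}{6\pi}+O(|t|^6)$ (the $t^3$ and $t^5$ coefficients vanish), while $\tfrac1\pi\sqrt{1-t^2}=\tfrac1\pi-\tfrac{t^2}{2\pi}-\tfrac{t^4}{8\pi}+O(|t|^6)$; adding these, the degree-$4$ truncations are precisely the $\tau_1$- and $\tau_2$-brackets appearing in the definition of $g$ (one checks $\tfrac1{6\pi}-\tfrac1{8\pi}=\tfrac1{24\pi}$). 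Hence $g(t)=(\tau_1+\tau_2)P_4(t)+\tau_2 Q_4(t)$, where $P_4,Q_4$ denote the degree-$4$ Taylor polynomials at $0$ of $(d-1)\,t(1-\tfrac1\pi\arccos t)$ and $\tfrac{d-1}{\pi}\sqrt{1-t^2}$, respectively, and therefore
\begin{align*}
g(t)-(\tau_1+\tau_2)\tilde{\acosk}(z,z')-\tau_2\tilde{\sink}(z,z') = -(\tau_1+\tau_2)\big[\tilde{\acosk}(z,z')-P_4(t)\big]-\tau_2\big[\tilde{\sink}(z,z')-Q_4(t)\big].
\end{align*}

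\emph{Step 2 (bound the Taylor remainders).} The maps $s\mapsto s\big(1-\tfrac1\pi\arccos s\big)$ and $s\mapsto\tfrac1\pi\sqrt{1-s^2}$ are smooth on $[-3/4,3/4]$ with all derivatives up to order $5$ bounded there by an absolute constant (they only involve bounded powers of $(1-s^2)^{-1/2}$). So Taylor's theorem with Lagrange remainder gives, on the event $|t|\le 3/4$, that $|\tilde{\acosk}(z,z')-P_4(t)|\lesssim (d-1)|t|^5$ and $|\tilde{\sink}(z,z')-Q_4(t)|\lesssim (d-1)|t|^5$; combining with Step 1 and $\tau_2\le\tau_1+\tau_2$ yields $\big|g(t)-(\tau_1+\tau_2)\tilde{\acosk}(z,z')-\tau_2\tilde{\sink}(z,z')\big|\lesssim(\tau_1+\tau_2)(d-1)|t|^5$. \emph{Step 3 (concentrate $t$).} For fixed $z'$, $z^\top z'=\sum_{j=1}^{d-2}z_jz'_j$ is a sum of i.i.d.\ Rademacher variables, so Hoeffding's inequality gives $\Pr\big[|z^\top z'|\ge C\sqrt{d\log d}\big]\le d^{-10}$ for a suitable absolute constant $C$; on the complementary event, $|t|\le C\sqrt{\log d/d}\le 3/4$ for $d$ large, hence $(d-1)|t|^5\lesssim (d-1)(\log d/d)^{5/2}\lesssim \log^{2.5} d\cdot d^{-3/2}$. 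Plugging this into the Step 2 bound gives the claim with probability at least $1-d^{-10}$.

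\emph{Main obstacle.} Nothing here is conceptually deep; the only points demanding care are the coefficient bookkeeping in Step 1 — verifying that the specific rational coefficients in $g$ are exactly the degree-$\le 4$ Taylor coefficients of the two kernel pieces, including the cancellation that kills the cubic term in the $\acosk$ piece — and ensuring the Lagrange-remainder bound of Step 2 is uniform over the full concentration window $|t|\le 3/4$, so that the error is genuinely $O(|t|^5)$ and the final $(d-1)|t|^5$ estimate produces the stated $\log^{2.5}d\cdot d^{-3/2}$ rate.
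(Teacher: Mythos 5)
Your proposal is correct and follows essentially the same route as the paper's proof: reduce to the scalar $t=z^\top z'/(d-1)$ via $\tilde{K}_1=(d-1)h_1(t)$, $\tilde{K}_2=(d-1)h_2(t)$, verify that $g$ assembles the degree-4 Taylor polynomials of $h_1$ and $h_1+h_2$ (including the $\tfrac{1}{6\pi}-\tfrac{1}{8\pi}=\tfrac{1}{24\pi}$ check), bound the remainder by $O((d-1)|t|^5)$ on $|t|\le 3/4$, and apply Hoeffding to get $|t|\lesssim\sqrt{\log d/d}$ with probability $1-d^{-10}$. No gaps.
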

\begin{lemma}
	\label{lem:polyvarlb}
	Let $g : \R \mapsto \R$ be any degree-$k$ polynomial with nonnegative coefficients, i.e. $g(x) = \sum_{j = 1}^k a_j x^j$ with $a_j \ge 0$ for all $j$. For $n \lesssim d^2$, with probability $1 - \exp(-\Omega(\sqrt{n}))$ over the random draws of $z_1, \ldots, z_n$ i.i.d. uniform from $\{-1, +1\}^d$, the following holds: for all $\beta_1, \ldots, \beta_n$, we will have
	\begin{align*}
	\E_z \left[\left(\sum_{i = 1}^n \beta_i g(z^\top z_i)\right)^2\right] \gtrsim a_2^2 d^2 \sum_{i = 1}^n {\beta_i}^2
	\end{align*}
	where $z \in \{-1, +1\}^d$ is a uniform vector from the hypercube.
\end{lemma}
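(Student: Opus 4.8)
The plan is a two‑move argument. First I will use Fourier analysis on $\{-1,+1\}^d$ to isolate the degree‑$2$ part of $z\mapsto g(z^\top z_i)$, which reduces the claim to a smallest‑eigenvalue bound for an explicit $n\times n$ random matrix. Second I will prove that spectral bound by the trace (moment) method, tuning the moment order to produce the stated $\exp(-\Omega(\sqrt n))$ failure probability.

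\textbf{Reduction to a spectral bound.} Work in $L^2(\{-1,+1\}^d)$ with the Walsh basis $\{\chi_S\}$. Expanding $(z^\top w)^j=(\sum_m z_m w_m)^j$ and collecting terms by the set $S$ of coordinates occurring an odd number of times, the coefficient of $\chi_S(z)$ in $(z^\top w)^j$ equals $c_{j,|S|}\chi_S(w)$, where $c_{j,|S|}\ge 0$ is a combinatorial count depending only on $j$ and $|S|$. Hence the degree‑exactly‑$2$ component of $g(z^\top z_i)$ is $c_g\sum_{l<l'}(z_i)_l(z_i)_{l'}z_lz_{l'}=\tfrac{c_g}{2}\big((z^\top z_i)^2-d\big)$ with $c_g=\sum_j a_j c_{j,2}\ge a_2 c_{2,2}=2a_2\ge 0$ (all $a_j,c_{j,2}\ge 0$ and $c_{2,2}=2$). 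Since the orthogonal projection $P_2$ onto degree‑$2$ harmonics is linear and contractive in $L^2$,
\[
\E_z\Big[\Big(\sum_i\beta_i g(z^\top z_i)\Big)^{\!2}\Big]\ \ge\ \Big\|\sum_i\beta_i P_2 g(z^\top z_i)\Big\|_{L^2}^{2}\ =\ c_g^2\,\beta^\top A\beta\ \ge\ 4a_2^2\,\lambda_{\min}(A)\,\|\beta\|^2,
\]
where $A_{ii'}=\sum_{l<l'}(z_i)_l(z_i)_{l'}(z_{i'})_l(z_{i'})_{l'}$. A direct computation gives $A=\binom{d}{2}I+\tfrac12 M$, where $M$ is symmetric with $M_{ii}=0$ and $M_{ii'}=\langle z_i,z_{i'}\rangle^2-d$ for $i\ne i'$; therefore $\lambda_{\min}(A)\ge\binom{d}{2}-\tfrac12\|M\|_{\textup{op}}$. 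So it suffices to show that for $n\lesssim d^2$, with probability $1-\exp(-\Omega(\sqrt n))$, $\|M\|_{\textup{op}}\le\tfrac12 d^2$: this yields $\lambda_{\min}(A)\gtrsim d^2$ and hence the lemma.

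\textbf{Trace method for $\|M\|_{\textup{op}}$.} The engine is the elementary identity, for distinct $a,b,c$, $\E_{z_a}\!\big[\langle z_a,z_b\rangle^2\langle z_a,z_c\rangle^2\big]=d^2+2\langle z_b,z_c\rangle^2-2d$, which gives $\E_{z_a}[M_{ba}M_{ac}]=2M_{bc}$ and $\E_{z_a}[M_{ab}^2]=2d^2-2d$; thus in a product of $M$‑entries any index touching exactly two factors can be ``integrated out'', merging those two factors into twice a single one. For a positive integer $p$, $\E\|M\|_{\textup{op}}^{2p}\le\E\,\Tr(M^{2p})=\sum\E[M_{j_1j_2}\cdots M_{j_{2p}j_1}]$, a sum over length‑$2p$ closed walks on $[n]$, which I bound Wigner‑style: a walk visiting $p+1$ distinct vertices must be a ``double tree'' (each of its $p$ edges traversed exactly twice), and peeling leaves while applying $\E_{z_\bullet}[M_\bullet^2]=2d^2-2d$ gives contribution $\le(2d^2)^p$ for each of the $\le\mathrm{Cat}(p)\,n^{p+1}$ such walks; every other walk has at most $p$ distinct vertices, so is outnumbered by a factor $\gtrsim n$ which — for $p\lesssim\sqrt n$ and $n\lesssim d^2$ — absorbs the remaining combinatorial and higher‑moment costs. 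Summing, $\E\,\Tr(M^{2p})\le n\,(Cnd^2)^p$ for a universal $C$. Markov then gives $\Pr[\|M\|_{\textup{op}}\ge 2\sqrt{Cn}\,d]\le n\,4^{-p}$; taking $p\asymp\sqrt n$ makes this $\exp(-\Omega(\sqrt n))$, and $n\lesssim d^2$ (with small enough implicit constant) forces $2\sqrt{Cn}\,d\le\tfrac12 d^2$, closing the reduction.

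\textbf{Expected main obstacle.} The hard part is making the trace computation honest: one must check that walks which are not double trees — above all, walks that revisit some vertex many times — do not dominate, which needs estimates sharper than a naive shape‑count and a naive Hölder bound on the integrated‑out factors. It is precisely this step that pins down both the admissible range $n\lesssim d^2$ and the sub‑polynomial moment order $p\asymp\sqrt n$ that produces the $\exp(-\Omega(\sqrt n))$ probability. (The regime of very small $n$, where $n\lesssim d^2$ holds automatically, can be dispatched separately, as there the vectors $\big((z_i)_l(z_i)_{l'}\big)_{l<l'}$ are trivially near‑orthogonal.)
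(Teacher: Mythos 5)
Your reduction is correct and closely parallels the paper's: the paper also drops everything except the degree-two part (via its Lemma~\ref{lem:cube_exp}, which shows all Fourier cross-terms $\hat g(S)\hat h(S)$ are nonnegative, so the sum can be lower bounded by the $j_1=j_2=2$ term $a_2^2\,\E_z[(\sum_i\beta_i(z^\top z_i)^2)^2]$), and it likewise arrives at the Gram matrix of the vectors $\big((z_i)_l(z_i)_{l'}\big)_{l\neq l'}$, i.e.\ exactly your $A$ up to the rank-one piece $\ones_S\ones_S^\top$ that it discards as PSD. Your Fourier-projection/Parseval phrasing is an equivalent and arguably cleaner packaging of that step, and your identities $\E_{z_a}[M_{ba}M_{ac}]=2M_{bc}$ and $\E_{z_a}[M_{ab}^2]=2d^2-2d$ check out. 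Where you genuinely diverge is the spectral bound: the paper does not prove $\sigma_{\min}\gtrsim d$ from scratch but invokes Proposition~7.9 of \citet{soltanolkotabi2019theoretical} for matrices with independent $O(1)$-sub-exponential columns, supplying the sub-exponentiality of $z^{\otimes 2}-\ones_S$ via Hanson--Wright (Claim~\ref{claim:subexp}); you instead propose a self-contained trace-method proof of $\|M\|_{\textup{op}}\lesssim \sqrt{n}\,d$ with moment order $p\asymp\sqrt n$. That plan is sound and your diagnosis of the bottleneck is exactly right --- the sub-exponential tails of $M_{ij}$ mean $\E[M_{ij}^{2p}]\approx (Cpd)^{2p}$, so the single-edge (and other low-vertex-count) walks are only beaten by the $n^{p+1}$ double-tree count when $p^2\lesssim n$, which is precisely what caps the probability at $\exp(-\Omega(\sqrt n))$ --- but be aware that this combinatorial accounting is the entire technical content of the external result the paper cites, so your write-up would need to execute it in full rather than gesture at it. In exchange, your route buys a proof that does not lean on an off-the-shelf smallest-singular-value theorem.
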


Now we provide the proof of Lemma~\ref{lem:f_tilde_large}. 
\begin{proof}[Proof of Lemma~\ref{lem:f_tilde_large}]
	For the degree-4 polynomial $g$ defined in Lemma~\ref{lem:polyexpansion}, we define 
	\begin{align*}
	\hat{f}(z; \beta) = \sum_{i = 1}^n \beta_i g\left(\frac{z^\top z_i}{d - 1}\right)
	\end{align*}
	Note that with probability $1 - d^{-8}$ over the choice of $z$, $|\hat{f}(z; \beta) - \tilde{f}(z; \beta)| \lesssim \frac{\log^{2.5} d}{d^{1.5}}(\tau_1+\tau_2) \sum_{i = 1}^n |\beta_i|$.
	
	With the purpose of applying Lemma~\ref{lem:polyvarlb}, we can first compute the coefficent of $x^2$ in $g(x/(d- 1))$ to be $\frac{1}{\pi(d - 1)} \left(\tau_1 + \tau_2/2\right)$. As $g$ has positive coefficients, we can thus apply Lemma~\ref{lem:polyvarlb} to conclude that with high probability over $z_1, \ldots,z_n$, the following event $\mathcal{E}$ holds: for all choices of $\beta_1, \ldots, \beta_n$, $\E_z[\hat{f}(z; \beta)^2] \ge c_2(\tau_1 + \tau_2)^2 \sum_{i = 1}^n {\beta_i}^2$ for some universal constant $c_2$. We now condition on the event that $\mathcal{E}$ holds. 
	
	Note that by Cauchy-Schartz, $\sum_{i = 1}^n {\beta_i}^2 \ge \frac{1}{n} (\sum_{i = 1}^n |\beta_i|)^2$. It follows that if $n \le \frac{c_2}{4c^2}d^2$, we have
	\begin{align*}
	\E_z[\hat{f}(z; \beta)^2] \ge c_2(\tau_1 + \tau_2)^2 \sum_{i = 1}^n \beta_i^2 \ge \frac{c_2 (\tau_1 + \tau_2)^2}{n}(\sum_{i = 1}^n |\beta_i|)^2 \ge \frac{4c^2 (\tau_1 + \tau_2)^2}{d^2}(\sum_{i = 1}^n |\beta_i|)^2 
	\end{align*}
	Now we can apply Bonami's Lemma (see Chapter 9 of~\citet{o2014analysis}) along with the fact that $\hat{f}$ is a degree-4 polynomial in i.i.d. $\pm 1$ variables $z_1, \ldots, z_{d - 2}$ to obtain
	\begin{align*}
	\E_z[\hat{f}(z; \beta)^4] \le 9^4 (\E_z[\hat{f}(z; \beta)^2])^2
	\end{align*}
	Combining this with Proposition 9.4 of~\citet{o2014analysis} lets us conclude that if $\mathcal{E}$ holds, with probability $\Omega(1)$ over the random draw of $z$, 
	\begin{align*}
	|\hat{f}(z; \beta)| \ge \frac{3}{4} \sqrt{\E_{z} [\hat{f}(z; \beta)^2]} \ge \frac{3c(\tau_1+\tau_2)}{2d}\sum_{i= 1}^n |\beta_i|
	\end{align*}
	Since $|\hat{f}(z; \beta) - \tilde{f}(z; \beta)| \lesssim \frac{(\tau_1 + \tau_2)\log^{2.5}(d)}{d^{1.5}} \sum_{i = 1}^n |\beta_i|$ w.h.p over $z$, we can conclude that $$|\tilde{f}(z; \beta)| \ge\frac{3c(\tau_1+\tau_2)}{2d}\sum_{i= 1}^n |\beta_i|$$ holds with probability $\Omega(1)$ over $z$. This gives the desired result.
	\end{proof}

\begin{proof}[Proof of Lemma~\ref{lem:polyexpansion}]
	Define functions $h_1, h_2 :(-1,1) \mapsto \R$ with 
	\begin{align*}
	h_1(x) &= x(1 - \pi^{-1} \arccos x)\\
	h_2(x) &= \frac{1}{\pi}\sqrt{1 - x^2}
	\end{align*}
	Recalling our definitions of $\tilde{\acosk}$, $\tilde{\sink}$, it follows that $\tilde{\acosk}(z, z') = (d -1)h_1\left(\frac{z^\top z'}{d-1}\right)$ and $\tilde{\sink}(z,z') = (d-1)h_2\left(\frac{z^\top z'}{d-1}\right)$. Letting $g_1$, $g_2$ denote the 4-th order Taylor expansions around 0 of $h_1$, $h_2$, respectively, it follows from straightforward calculation that
	\begin{align*}
	g_1(x) &= \frac{1}{2}x + \frac{1}{\pi}x^2 + \frac{1}{6\pi}x^4 \\
	g_2(x) &= \frac{1}{\pi}- \frac{1}{2\pi}x^2- \frac{1}{8\pi}x^4 
	\end{align*}
	with $|h_1(x) - g_1(x)|\le O(|x|^5)$ and $|h_2(x) - g_2(x)| \le O(|x|^5)$ for $|x| \le 3/4$. )Now we can observe that $g(x) = (\tau_1 + \tau_2)(d-1) g_1(x) + \tau_2(d-1)g_2(x)$. Thus,
	\begin{align*}
	g\left(\frac{z^\top z'}{d-1}\right)- (\tau_1 + \tau_2)\tilde{\acosk}(z, z')- \tau_2 \tilde{\sink}(z, z')  \\ = (d-1)\left[(\tau_1+\tau_2)\left(g_1\left(\frac{z^\top z'}{d-1}\right) - h_1\left(\frac{z^\top z'}{d-1}\right)\right) + \tau_2\left(g_2\left(\frac{z^\top z'}{d-1}\right) - h_2\left(\frac{z^\top z'}{d-1}\right)\right) \right]\label{eq:polyexpansion:1}
	\end{align*}
	As $|z^\top z'|/(d-1)\le3/4$ with probability $1- \exp(-\Omega(d))$, the above is bounded in absolute value by $(d- 1)(\tau_1 +\tau_2) O\left(\left(\frac{|z^\top z'|}{d-1}\right)^5\right)$. Finally, by Hoeffding's inequality $|z^\top z'|\le c\sqrt{d\log d}$ with probability $1 - d^{-10}$ for some universal constant $c$. This gives the desired bound.
\end{proof}

\begin{proof}[Proof of Lemma~\ref{lem:polyvarlb}]
	We first compute 
	\begin{align}
	\E_z \left[\left(\sum_{i = 1}^n \beta_i g(z^\top z_i)\right)^2\right] &= 	\E_z \left[\left(\sum_{i = 1}^n \beta_i \sum_{j=1}^k a_j (z^\top z_i)^j \right)^2\right] \nonumber\\
	&= 	\E_z \left[\left( \sum_{j=1}^k a_j \sum_{i = 1}^n \beta_i (z^\top z_i)^j \right)^2\right] \nonumber\\
	&= \sum_{j_1, j_2}a_{j_1}a_{j_2}\E_z \left[\left(\sum_{i = 1}^n \beta_i (z^\top z_i)^{j_1} \right)\left(\sum_{i = 1}^n \beta_i (z^\top z_i)^{j_2} \right)\right] \tag{expanding the square and using linearity of expectation}
	\end{align}
	Now note that all terms in the above sum are nonnegative by Lemma~\ref{lem:cube_exp} and the fact that $a_{j_1}, a_{j_2} \ge 0$. Thus, we can lower bound the above by the term corresponding to $j_1 = j_2 = 2$: 
	\begin{align*}
	\E_z \left[\left(\sum_{i = 1}^n \beta_i g(z^\top z_i)\right)^2\right] \ge a_2^2 \E_z \left[\left(\sum_{i = 1}^n \beta_i (z^\top z_i)^{2} \right)\left(\sum_{i = 1}^n \beta_i (z^\top z_i)^{2} \right)\right]
	\end{align*} 
	Now we can express
	\begin{align}
	\E_z \left[\left(\sum_{i = 1}^n \beta_i (z^\top z_i)^{2} \right)\left(\sum_{i = 1}^n \beta_i (z^\top z_i)^{2} \right)\right] = \beta^\top {M^{\otimes 2}}^\top  \E_z[z^{\otimes 2}{z^{\otimes 2}}^\top] M^{\otimes 2} \beta \label{eq:polyvarlb:2}
	\end{align}
	where $M \in \R^{d \times n}$ is the matrix with $z_i$ as its columns, and $M^{\otimes 2}$ has $z_i^{\otimes 2}$ as its columns. 
	
	We first compute $\E_z [z^{\otimes 2} {z^{\otimes 2}}^\top]$. Note that the entry in the $d(i_1-1) + j_1$-th row and $d(i_2-1) +j_2$-th column of $z^{\otimes 2} {z^{\otimes 2}}^\top$ is given by $(e_{i_1}^\top z) (e_{j_1}^\top z)(e_{i_2}^\top z) (e_{j_2}^\top z)$. Note that unless $i_1 = i_2$, $j_1 = j_2$ or $i_1 = j_1$, $i_2 = j_2$, this value has expectation 0. Thus, $\E_z [z^{\otimes 2} {z^{\otimes 2}}^\top]$ is a matrix with 1 on its diagonals and entries in the $(i-1)d + i$-th row and $(j-1)d + j$-th column, and 0 everywhere else. Letting $S$ denote the set of indices $\{(i - 1)d + i : i \in [d]\}$ and $\ones_S$ denote the vector in $\R^{d^2}$ with ones on $S$ and 0 everywhere else, we thus have  
	\begin{align*}
	\E_z [z^{\otimes 2} {z^{\otimes 2}}^\top] = \ones_S \ones_S^\top + I_{[d^2] \setminus S \times [d^2] \setminus S}
	\end{align*}
	Now letting $M^{\otimes 2}_S$ denote $M^{\otimes 2}$ with rows whose indices are not in $S$ zero'ed out, it follows that 
	\begin{align}
	{M^{\otimes 2}}^\top  \E_z[z^{\otimes 2}{z^{\otimes 2}}^\top] M^{\otimes 2} &= {M_S^{\otimes 2}}^\top \ones_S \ones_S^\top M_S^{\otimes 2} + {M_{[d^2] \setminus S}^{\otimes 2}}^\top I_{[d^2] \setminus S \times [d^2] \setminus S}M_{[d^2] \setminus S}^{\otimes 2}
	\nonumber\\
	&\succeq {M_{[d^2] \setminus S}^{\otimes 2}}^\top M_{[d^2] \setminus S}^{\otimes 2} \label{eq:polyvarlb:1}
	\end{align}
	Therefore, it suffices to show $\sigma_{\min}({M_{[d^2] \setminus S}^{\otimes 2}}^\top M_{[d^2] \setminus S}^{\otimes 2}) \gtrsim d^2$ with high probability. To do this, we can simply invoke Proposition 7.9 of~\citet{soltanolkotabi2019theoretical} using $\eta_{\min} = \eta_{\max} = \sqrt{d^2 - d}$ and the fact that the columns of $M_{[d^2] \setminus S}^{\otimes 2}$ are $O(1)$-sub-exponential (Claim~\ref{claim:subexp} to get that if $n \le c d^2$ for some universal constant $c$, then $\sigma_{\min}^2(M_{[d^2] \setminus S}^{\otimes 2}) \gtrsim d^2$ with probability $1- \exp(O(\sqrt{n}))$.
	
	Finally, combining this with~\eqref{eq:polyvarlb:1} and~\eqref{eq:polyvarlb:2} gives the desired result. 
\end{proof}

\begin{claim}\label{claim:subexp}
	Say that a random vector $x \in \R^d$ is $B$-sub-exponential if the following holds: 
	\begin{align*}
	\sup_{y \in \sph} \inf\{C > 0 : \E\exp(|x^\top y|/C)\le 2\} \le B
	\end{align*}
	Suppose that $z \sim\{-1, +1\}^d$ is a uniform vector on the hypercube. Then there is a universal constant $c$ such that $z^{\otimes{2}} - \ones_S$ is  $c$-sub-exponential, where $S \triangleq \{(i - 1)d + i : i \in [d]\}$ is the set of indices corresponding to squared entries of $z^{\otimes{2}}$.   
\end{claim}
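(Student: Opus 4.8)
Fix an arbitrary unit vector $y \in \R^{d^2}$ and let $Y \in \R^{d \times d}$ be the matrix with $Y_{ij} = y_{(i-1)d+j}$, so that $\|Y\|_F = \|y\|_2 = 1$. Since the coordinates of $z^{\otimes 2} - \ones_S$ indexed by $S$ vanish identically, we have
\begin{align*}
(z^{\otimes 2} - \ones_S)^\top y = \sum_{i \ne j} Y_{ij} z_i z_j = z^\top \tilde Y z,
\end{align*}
where $\tilde Y$ is $Y$ with its diagonal zeroed out. The key observations are that $\|\tilde Y\|_F \le \|Y\|_F \le 1$ (hence also $\|\tilde Y\|_{\textup{op}} \le 1$) and $\E[z^\top \tilde Y z] = \sum_i \tilde Y_{ii} = 0$. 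So the plan is to show that the centered Rademacher quadratic form $z^\top \tilde Y z$ is sub-exponential with norm $O(1)$, uniformly over all such $\tilde Y$, and then read off the claimed bound.

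The cleanest route is to apply the Hanson--Wright inequality: because $z$ has independent, mean-zero, $O(1)$-sub-Gaussian coordinates, there is a universal $c_0 > 0$ with
\begin{align*}
\Pr\!\left[\,\left|z^\top \tilde Y z\right| > t\,\right] \le 2\exp\!\left(-c_0 \min\!\left(\frac{t^2}{\|\tilde Y\|_F^2},\ \frac{t}{\|\tilde Y\|_{\textup{op}}}\right)\right) \le 2\exp\!\left(-c_0 \min(t^2, t)\right),
\end{align*}
uniformly in $y$, using $\|\tilde Y\|_F, \|\tilde Y\|_{\textup{op}} \le 1$. I would then convert this tail bound into the Orlicz-type statement in the claim: writing $\E\exp(|z^\top \tilde Y z|/C) = 1 + C^{-1}\int_0^\infty e^{t/C}\Pr[|z^\top \tilde Y z| > t]\,dt$ and plugging in the tail, a routine estimate shows the right-hand side is at most $2$ as soon as $C$ is a sufficiently large universal constant (depending only on $c_0$). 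Taking the supremum over unit $y$ and the infimum over admissible $C$ gives exactly that $z^{\otimes 2} - \ones_S$ is $c$-sub-exponential for a universal $c$.

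I do not anticipate a genuine obstacle: the one point that needs care is recognizing that zeroing the diagonal of $Y$ simultaneously kills the mean of the quadratic form and can only decrease its Frobenius and operator norms, so Hanson--Wright applies with no correction term. If one prefers to avoid quoting Hanson--Wright, the same bound can be obtained from scratch by decoupling the chaos $\sum_{i\ne j}\tilde Y_{ij} z_i z_j$ into a bilinear form $\sum_{i,j}\tilde Y_{ij} z_i z_j'$ with $z'$ an independent Rademacher copy (up to an absolute constant), bounding the conditional moment generating function over $z'$ by $\exp(O(\lambda^2)\,\|\tilde Y^\top z\|_2^2)$, and finally controlling the moment generating function of the PSD quadratic form $\|\tilde Y^\top z\|_2^2 = z^\top \tilde Y\tilde Y^\top z$ via $\Tr(\tilde Y\tilde Y^\top) = \|\tilde Y\|_F^2 \le 1$ and $\|\tilde Y\tilde Y^\top\|_{\textup{op}} \le 1$; this last estimate on a sub-Gaussian quadratic form is the only mildly technical ingredient in that alternative.
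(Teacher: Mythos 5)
Your proposal is correct and follows essentially the same route as the paper: both rewrite $(z^{\otimes 2}-\ones_S)^\top y$ as a zero-diagonal Rademacher quadratic form $z^\top \tilde Y z$, invoke the Hanson--Wright inequality, and convert the resulting tail bound into the Orlicz-norm statement (the paper outsources this last step to a cited lemma, while you integrate the tail directly). If anything, your statement of the Hanson--Wright tail with the $\min(t^2,t)$ exponent is the more careful one.
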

\begin{proof}
	Let $\tilde{z}^{\otimes 2}$ denote the $d^2 - d$ dimensional vector which removes coordinates in $S$ from $z^{\otimes 2}$. As $z^{\otimes 2}$ has value $1$ with probability $1$ on coordinates in $S$, it suffices to show that $\tilde{z}^{\otimes 2}$ is $c$-sub-exponential. We first note that for any $y \in \R^{d^2 - d}$, $y^\top \tilde{z}^{\otimes 2}$ can be written as $z^\top Y z$, where $Y$ is a $d\times d$ matrix with $0$ on its diagonals and $ij$-th entry matching the corresponding entry of $y$. 
	
	Now we can apply Theorem 1.1 of~\citet{rudelson2013hanson}, using the fact that $e_i^\top z$ have sub-Gaussian norm $2$ to get 
	\begin{align*}
	\Pr[|z^\top Y z| > t] \le 2\exp(-c' t^2/16\|y\|_2^2)
	\end{align*}
	for some universal constant $c'$. Since this holds for all $y$, we can conclude the claim statement using Lemma 5.5 of~\citet{soltanolkotabi2019theoretical}.
\end{proof}

The following lemma is useful for proving the lower bound in Lemma~\ref{lem:polyvarlb}.
\begin{lemma}\label{lem:cube_exp}
	Let $z_i \in \{-1, +1\}^d$ for $i \in [n]$, and let $z \in \{-1, +1\}^d$ be a vector sampled uniformly from the hypercube. Then for any integers $p, q \ge 0$, 
	\begin{align*}
	\E_z \left[\left(\sum_i \beta_i (z^\top z_i)^q \right)\left(\sum_i \beta_i (z^\top z_i)^p\right)\right] \ge 0
	\end{align*}
	Furthermore, equality holds if exactly one of $p$ or $q$ is odd. 
\end{lemma}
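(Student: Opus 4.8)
The plan is to evaluate the expectation by Fourier analysis on the Boolean cube $\{-1,+1\}^d$. First I would expand by bilinearity,
\begin{align*}
\E_z\left[\left(\sum_i \beta_i (z^\top z_i)^q\right)\left(\sum_i \beta_i (z^\top z_i)^p\right)\right] = \sum_{i,j} \beta_i\beta_j\, \E_z\big[(z^\top z_i)^q(z^\top z_j)^p\big],
\end{align*}
so it suffices to analyze $\E_z[(z^\top a)^q(z^\top b)^p]$ for fixed $a,b\in\{-1,+1\}^d$. Writing $\chi_S(z)\triangleq\prod_{k\in S}e_k^\top z$ for the characters of the cube and $\widehat h(S)\triangleq \E_z[h(z)\chi_S(z)]$ for Fourier coefficients, Plancherel's identity gives $\E_z[(z^\top a)^q(z^\top b)^p]=\sum_S \widehat{(z^\top a)^q}(S)\,\widehat{(z^\top b)^p}(S)$.

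The key step is computing $\widehat{(z^\top a)^q}(S)$. Since $a\in\{-1,+1\}^d$, the coordinatewise sign flip $z\mapsto(a_1 e_1^\top z,\dots,a_d e_d^\top z)$ is a measure-preserving involution of the cube carrying $z^\top a$ to $\sum_k e_k^\top z$; hence $(z^\top a)^q$ is the fully symmetric polynomial $P(z)\triangleq(\sum_k e_k^\top z)^q$ composed with this map, which yields $\widehat{(z^\top a)^q}(S)=\chi_S(a)\,\widehat P(S)$, where by symmetry $\widehat P(S)$ depends only on $|S|$; write it $p_q(|S|)$. Expanding $P(z)=\sum_{\vec k\in[d]^q}\prod_{\ell}e_{k_\ell}^\top z$ and reducing each monomial using $(e_k^\top z)^2=1$, one sees $p_q(|S|)=\#\{\vec k\in[d]^q:\text{the set of coordinates of odd multiplicity in }\vec k\text{ equals }S\}$; in particular $p_q(\ell)\ge 0$ for every $\ell$, and $p_q(\ell)=0$ unless $\ell\equiv q\pmod 2$ (each reduction changes the size of the surviving monomial by an even number).

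Putting this together, $\E_z[(z^\top z_i)^q(z^\top z_j)^p]=\sum_S p_q(|S|)\,p_p(|S|)\,\chi_S(z_i)\chi_S(z_j)$, so
\begin{align*}
\E_z\left[\left(\sum_i \beta_i (z^\top z_i)^q\right)\left(\sum_i \beta_i (z^\top z_i)^p\right)\right]=\sum_S p_q(|S|)\,p_p(|S|)\left(\sum_i\beta_i\chi_S(z_i)\right)^2\ge 0,
\end{align*}
since each coefficient $p_q(|S|)p_p(|S|)$ is a product of nonnegative reals. For the equality claim, if exactly one of $p,q$ is odd then for every $S$ the parity constraint forces at least one of $p_q(|S|),p_p(|S|)$ to vanish, so every summand is $0$ and the expression equals $0$. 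I do not anticipate a real obstacle here: the only substantive ingredients are the sign-flip symmetry reduction and the combinatorial description of $p_q(\cdot)$ that makes its nonnegativity (and its parity support) transparent; the rest is bookkeeping.
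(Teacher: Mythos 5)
Your proof is correct and follows essentially the same route as the paper's: both expand $(z^\top z_i)^q$ in the Fourier basis of the cube, observe that $\widehat{(z^\top z_i)^q}(S)$ equals a nonnegative constant depending only on $q$ and $|S|$ times $\chi_S(z_i)$, and conclude via Plancherel that the expectation is a nonnegative combination of the squares $\bigl(\sum_i \beta_i \chi_S(z_i)\bigr)^2$. Your equality argument (parity of the Fourier support of $p_q$) is a minor variant of the paper's odd-degree-monomial observation; both are fine.
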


In order to prove Lemma~\ref{lem:cube_exp}, we will require some tools and notation from boolean function analysis (see~\citet{o2014analysis} for a more in-depth coverage). We first introduce the following notation: for $x \in \{-1, +1\}^d$ and $S \subseteq [d]$, we use $x^S$ to denote $\prod_{s \in S} x_s$. Then by Theorem 1.1 of~\cite{o2014analysis}, we can expand a function $f : \{-1, +1\}^d \mapsto \R$ with respect to the values $x^S$:
\begin{align*}
f(x) = \sum_{S \subseteq [d]} \hat{f}(S) x^S
\end{align*}
where $\hat{f}(S)$ is called the Fourier coefficient of $f$ on $S$ and $\hat{f}(S) = \E_{x}[f(x) x^S]$ for $x$ uniform on $\{-1, +1\}^d$. For functions $f_1, f_2 : \{-1, +1\}^d \mapsto \R$, the following identity holds:
\begin{align}\label{eq:fourierinnerprod}
\E_x[f_1(x) f_2(x)] = \sum_{S \subseteq [d]} \hat{f}_1(S) \hat{f}_2(S)
\end{align}

\begin{proof}[Proof of Lemma~\ref{lem:cube_exp}]
	For this proof we will use double indices on the $z_i$ vectors, so that $z_{i,j}$ will denote the $j$-th coordinate of $z_i$. We will only use the symbols $j$ to index the vectors $z, z_1,\ldots, z_n$. We define the functions $g(z) \triangleq \sum_i \beta_i (z^\top z_i)^q$ and $h(z) \triangleq \sum_i \beta_i (z^\top z_i)^p$, with Fourier coefficients $\hat{g}$, $\hat{h}$, respectively, and $g_i(z) = (z^\top z_i)^q$, $h_i(z) = (z^\top z_i)^p$ with Fourier coefficients $\hat{g}_i$, $\hat{h}_i$. We claim that for any $S \subseteq [d]$, $\hat{g}(S)\hat{h}(S) \ge 0$. 
	
	To see this, we will first compute $\hat{g}_i(S)$ as follows: $\hat{g}_i(S) = \E_z [(z^\top z_i)^q z^S]$. Now note that if we expand $(z^\top z_i)^q$ and compute this expectation, only terms of the form $z^S z_i^S z_{j_1}^{a_1} \cdots z_{j_k}^{a_k} z_{i, j_1}^{a_1} \cdots z_{i, j_k}^{a_k} $ with $a_1, \ldots, a_k$ even and $a_1 + \cdots + a_k = q - |S|$ are nonzero. Note that we have allowed $k$ to vary. Thus,
	\begin{align}
	\E_z [(z^\top z_i)^q z^S] &= \sum_{j_1, \ldots j_k, a_1, \ldots, a_k}(z^S)^2 z_i^S z_{j_1}^{a_1} \cdots z_{j_k}^{a_k} z_{i, j_1}^{a_1} \cdots z_{i, j_k}^{a_k} \nonumber\\
	&= c_{q,|S|} z_i^S \label{eq:cube_exp:1}
	\end{align}
	for some positive integer $c_{q, |S|}$ depending only on $q, |S|$. We obtained~\eqref{eq:cube_exp:1} via symmetry and the fact that $(z^S)^2=1$, $z_{j_1}^{a_1} \cdots z_{j_k}^{a_k} z_{i, j_1}^{a_1} \cdots z_{i, j_k}^{a_k} = 1$, as they are squares of values in $\{-1, +1\}$. Note that $c_{q,|S|} = 0$ for $|S| > q$. It follows that $\hat{g}(S)= c_{q,|S|} \sum_i \beta_i z_i^S$, and $\hat{h}(S) = c_{p, |S|} \sum_i \beta_i z^S_i$. Thus, $\hat{g}(S) \hat{h}(S) \ge 0 \forall S$, which means by~\eqref{eq:fourierinnerprod}, we get
	\begin{align*}
	\E_z [g(z) h(z)] = \sum_S \hat{g}(S) \hat{h}(S) \ge 0
	\end{align*}
	as desired. 
	
	Now to see that $\E_z[g(z) h(z)] = 0$ if exactly one of $p$ or $q$ is odd, note that every monomial in the expansion of $g(z) h(z)$ will have odd degree. However, the expectation of such monomials is always $0$ as $z \in \{-1, +1\}^d$. 
\end{proof}

\subsection{Proof of Theorem~\ref{thm:comparison}}

We now complete the proof of Theorem~\ref{thm:comparison}. Note that the kernel lower bound follows from~\ref{thm:kernellb}, so it suffices to upper bound the generalization error of the neural net solution. 
\begin{proof}[Proof of Theorem~\ref{thm:comparison}]
	We first invoke Theorem~\ref{thm:multi_opterrrequired}~to conclude that with $\lambda=\poly(n)^{-1}$, the network $f^{\textup{NN}}(\cdot; \Theta_{\lambda})$ will have margin that is a constant factor approximation to the max-margin. 
		
	For neural nets with at least 4 hidden units, we now construct a neural net with a good normalized margin: 
	 $$f^{\textup{NN}}(x) = [x^\top e_1]_+  + [-x^\top e_1]_+ -[x^\top e_2]_+ - [-x^\top e_2]_+ $$
	 As this network has constant norm and margin 1, it has normalized margin $\Theta(1)$, and therefore the max neural net margin is $\Omega(1)$. Now we apply the generalization bound of Proposition~\ref{prop:genbound}~to obtain
	 \begin{align*}
	 \Pr_{x,y \sim \cD}[f^{\textup{NN}}(x; \Theta_{\lambda})y \le 0]\lesssim \sqrt{\frac{d}{n}} + \sqrt{\frac{\log \log (16d)}{n}} + \sqrt{\frac{\log (1/\delta)}{n}}
	 \end{align*}
	 as desired. Choosing $\delta = n^{-5}$ gives the desired result. Combined with the Theorem~\ref{thm:kernellb}~lower bound on the kernel method, this completes the proof. 
\end{proof}

\subsection{Regression Setting}\label{subsec:regression_compare}
In this section we argue that a analogue to Theorem~\ref{thm:comparison}~holds in the regression setting where we test on a truncated squared loss $\ell(\hat{y};y) = \min((y - \hat{y})^2, 1)$. As the gap exists for the same distribution $\cD$, the theorem statement is essentially identical to the classification setting, and the kernel lower bound carries over. For the regularized neural net upper bound, we will only highlight the differences here. 

\begin{theorem}
	\label{thm:nnl1normregression}
	Let $f^{\textup{NN}}(\cdot;\genparam)$ be some two-layer neural network 	with $m$ hidden units parametrized by $\genparam$, as in Section~\ref{sec:kernel_vs_nn}. Define the $\lambda$-regularized squared error loss
	\begin{align*}
	L_{\lambda,m}(\genparam) \triangleq \frac{1}{n}\sum_{i=1}^n (f^{\textup{NN}}(x_i;\genparam)-y_i)^2 + \lambda \|\genparam\|_2^2
	\end{align*}
	with $\genparam_{\lambda, m}\in \argmin_{\genparam} L_{\lambda,m}(\genparam)$. Suppose there exists a width-$m$ network that fits the data $(x_i, y_i)$ perfectly. Then as $\lambda \rightarrow 0$, $L_{\lambda, m}(\genparam_{\lambda, m}) \rightarrow 0$ and $\|\genparam_{\lambda, m}\|_2 \rightarrow \|\Theta^{\star,m}\|_2^2$, where $\Theta^{\star,m}$ is an optimizer of the following problem:
	\begin{align}
	\begin{split}
		\min_{\Theta}& \|\Theta\|_2^2 \\
		\textup{such that }& f^{\textup{NN}}(x_i; \Theta) = y_i \ \forall i
		\end{split}\label{eq:minnormnetreg}
	\end{align} 
\end{theorem}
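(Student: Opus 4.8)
The plan is to mirror the argument behind Theorem~\ref{thm:binary_margin}, but the squared-loss setting is actually cleaner because the data-fitting term attains its infimum (zero) precisely on interpolating networks, so no normalization or homogeneity-scaling trick is needed. First I would record that $L_{\lambda,m}$ has a global minimizer: it is continuous in $\genparam$ and coercive, since the regularizer $\lambda\|\genparam\|_2^2$ forces $L_{\lambda,m}(\genparam)\to\infty$ as $\|\genparam\|_2\to\infty$; hence $\genparam_{\lambda,m}$ exists (analogous to Claim~\ref{claim:Lhasminimizer}). Let $\Theta^{\star,m}$ be a minimum-$\ell_2$-norm interpolant, which exists by the hypothesis that some width-$m$ network fits the data perfectly; then $L_{\lambda,m}(\Theta^{\star,m}) = \lambda\|\Theta^{\star,m}\|_2^2$.

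Next I would use optimality of $\genparam_{\lambda,m}$ to extract the two basic inequalities. Since $L_{\lambda,m}(\genparam_{\lambda,m})\le L_{\lambda,m}(\Theta^{\star,m}) = \lambda\|\Theta^{\star,m}\|_2^2$ and every summand of $L_{\lambda,m}$ is nonnegative, both the empirical squared error at $\genparam_{\lambda,m}$ and the quantity $\lambda\|\genparam_{\lambda,m}\|_2^2$ are at most $\lambda\|\Theta^{\star,m}\|_2^2$. Letting $\lambda\to 0$ gives $L_{\lambda,m}(\genparam_{\lambda,m})\to 0$, which is the first claim, and in particular the empirical squared error at $\genparam_{\lambda,m}$ tends to $0$; dividing the second bound by $\lambda$ gives $\|\genparam_{\lambda,m}\|_2^2\le\|\Theta^{\star,m}\|_2^2$ for every $\lambda>0$, hence $\limsup_{\lambda\to 0}\|\genparam_{\lambda,m}\|_2^2\le\|\Theta^{\star,m}\|_2^2$.

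For the matching lower bound I would argue by compactness. Take any sequence $\lambda_k\to 0$; since all $\genparam_{\lambda_k,m}$ lie in the Euclidean ball of radius $\|\Theta^{\star,m}\|_2$, pass to a subsequence along which $\genparam_{\lambda_k,m}\to\bar\Theta$. Because $f^{\textup{NN}}(x_i;\cdot)$ is continuous and the empirical squared error tends to $0$ along the sequence, we get $\frac1n\sum_i (f^{\textup{NN}}(x_i;\bar\Theta)-y_i)^2 = 0$, so each term vanishes and $\bar\Theta$ interpolates the data. By definition of $\Theta^{\star,m}$ this forces $\|\bar\Theta\|_2^2\ge\|\Theta^{\star,m}\|_2^2$, while $\|\bar\Theta\|_2^2 = \lim_k\|\genparam_{\lambda_k,m}\|_2^2$. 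Since this holds for every convergent subsequence of every sequence $\lambda_k\to 0$, we conclude $\liminf_{\lambda\to 0}\|\genparam_{\lambda,m}\|_2^2\ge\|\Theta^{\star,m}\|_2^2$, and combining with the $\limsup$ bound yields $\|\genparam_{\lambda,m}\|_2^2\to\|\Theta^{\star,m}\|_2^2$ (matching the theorem statement up to the evident typo in its norm exponents).

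There is no serious obstacle here; the only points needing a little care are the existence of the minimizer, which is immediate from coercivity, and the observation that controlling the scalars $\|\genparam_{\lambda,m}\|_2$ alone does not pin down the parameters, so one must extract a convergent subsequence of the parameter vectors and invoke continuity of $f^{\textup{NN}}(x_i;\cdot)$ to see that the limit point interpolates. Unlike the logistic-loss case of Theorem~\ref{thm:binary_margin}, the infimum of the loss is attained exactly rather than only in a limit, which removes the need for the more delicate asymptotic analysis used there.
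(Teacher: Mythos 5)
Your proposal is correct and follows essentially the same route as the paper's proof: the same optimality comparison $L_{\lambda,m}(\genparam_{\lambda,m}) \le L_{\lambda,m}(\Theta^{\star,m}) = \lambda\|\Theta^{\star,m}\|_2^2$ yields both the vanishing loss and the norm upper bound. For the matching lower bound the paper argues by contradiction---if $\|\genparam_{\lambda,m}\|_2 \le B < \|\Theta^{\star,m}\|_2$ for arbitrarily small $\lambda$, the fitting error is bounded below by the positive minimum $r^{\star}$ of the residual over the ball of radius $B$---which is exactly the compactness-plus-continuity fact you make explicit through subsequence extraction.
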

\begin{proof}
We note that $\lambda \|\genparam_{\lambda, m}\|_2^2 \le L_{\lambda, m}(\genparam_{\lambda, m})\le L_{\lambda, m}(\genparam^{\star,m}) =\lambda \|\genparam^{\star,m}\|_2^2$, so as $\lambda \rightarrow 0$, and also $\|\genparam_{\lambda, m}\|_2 \le \|\genparam^{\star, m}\|_2$. Now assume for the sake of contradiction that $\exists B$ with $\|\genparam_{\lambda, m}\|_2\le B < \|\genparam^{\star, m}\|_2$ for arbitrarily small $\lambda$. We define 
	\begin{align*}
	r^{\star} \triangleq &\min_{\genparam} \frac{1}{n} \sum_{i= 1}^n (f^{\textup{NN}}(
	x_i;\genparam)- y_i)^2 \\
	& \subj \|\genparam\|_2 \le B
	\end{align*}
	Note that $r^{\star} > 0$ since $\genparam^{\star, m}$ is optimal for \eqref{eq:minnormnetreg}. However, $L_{\lambda, m} \ge r^{\star}$ for arbitrarily small $\lambda$, a contradiction. Thus, $\lim_{\lambda \rightarrow 0} \|\genparam_{\lambda,m}\|_2^2 =  \|\genparam^{\star,m}\|_2^2$. 
\end{proof}

For the distribution $\cD$, the neural net from the proof of Theorem~\ref{thm:comparison}~also fits the data perfectly in the regression setting. As this network has norm $O(1)$, we can apply the norm-based Rademacher complexity bounds of~\citet{golowich2017size}~in the same manner as in Section~\ref{sec:rademacher}~(using standard tools for Lipschitz and bounded functions) to conclude a generalization error bound of $\tilde{O}\left(\sqrt{\frac{d \log n + \log(1/\delta)}{n}}\right)$, same as the classification upper bound.

\subsection{Connection to the $\ell_1$-SVM}\label{subsec:ell1svm}
In this section, we state a known connection between a $\ell_2$ regularized two-layer neural net and the $\ell_1$-SVM over relu features~\citep{neyshabur2014search}. Following our notation from Section~\ref{sec:margin}, we will use $\gamma^{\star,m}$ to denote the maximum possible normalized margin of a two-layer network with hidden layer size $m$ (note the emphasis on the size of the single hidden layer). 

The depth $q=2$ case of Corollary~\ref{cor:margingeneralization} implies that optimizing weakly-regularized $\ell_2$ loss over width-$m$ two-layer networks gives parameters whose generalization bounds depend on the hidden layer size only through $1/\gamma^{\star,m}$. Furthermore, from Theorem~\ref{thm:marginnondecreasing} it immediately follows that 
$
\gamma^{\star,1} \le \gamma^{\star,2} \le \cdots \le \gamma^{\star,\infty}
$.
The work of \citet{neyshabur2014search} links $\gamma^{\star,m}$ to the $\ell_1$ SVM over the lifted features $\varphi_{\textup{relu}}$. We look at the margin of linear functionals corresponding to $\mu\in \mathcal{L}^1_1(\sph)$. The 1-norm SVM \citep{zhu20041} over the lifted feature $\varphi_{\textup{relu}}(x)$ solves for the maximum margin:
\begin{align}
\begin{split}
\gamma_{\ell_1}\triangleq& \max_{\mu} \min_{i\in [n]} y_i\langle \mu, \varphi_{\textup{relu}}(x_i)\rangle \\
& \subj  \|\mu\|_1 \le 1
\label{eq:l1svm}
\end{split}
\end{align}
This formulation is equivalent to a hard-margin optimization on ``convex neural networks'' \citep{bengio2006convex}. \citet{bach2017breaking} also study optimization and generalization of convex neural networks. Using results from~\citep{rosset2007l1, neyshabur2014search, bengio2006convex}, our Theorem~\ref{thm:multilabelmargin} implies that optimizing weakly-regularized logistic loss over two-layer networks is equivalent to solving \eqref{eq:l1svm} when the size of the hidden layer is at least $n + 1$. Proposition~\ref{prop:gammastar=gammal1} states this deduction.\footnote{The factor of $\frac12$ is due the the relation that every unit-norm parameter $\Theta$ corresponds to an $\mu$ in the lifted space with $\|\mu\|=2$.}
\begin{proposition}
	\label{prop:gammastar=gammal1}
	Let $\gamma_{\ell_1}$ be defined in~\eqref{eq:l1svm}. If margin $\gamma_{\ell_1}$ is attainable by some solution $\mu \in \spaceone_1(\sph)$, then
	$
	\frac{\gamma_{\ell_1}}{2} = \gamma^{\star,n+1} = \cdots = \gamma^{\star,\infty}
	$.
\end{proposition}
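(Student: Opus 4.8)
The plan is to sandwich every $\gamma^{\star,m}$ with $m\ge n+1$ between $\gamma_{\ell_1}/2$ and $\gamma_{\ell_1}/2$. Concretely, I will prove (i) $\gamma^{\star,m}\le \gamma_{\ell_1}/2$ for every finite $m$, hence also $\gamma^{\star,\infty}\le\gamma_{\ell_1}/2$; and (ii) $\gamma^{\star,n+1}\ge \gamma_{\ell_1}/2$. Combining these with the monotonicity $\gamma^{\star,n+1}\le\gamma^{\star,n+2}\le\cdots\le\gamma^{\star,\infty}$ supplied by Theorem~\ref{thm:marginnondecreasing} forces $\gamma^{\star,n+1}=\cdots=\gamma^{\star,\infty}=\gamma_{\ell_1}/2$, which is the claim. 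The two directions are exactly the two halves of the standard equivalence between $\ell_2$-regularized two-layer nets and the $\ell_1$-SVM over relu features: one direction is an AM--GM rescaling, the other a Carathéodory (``convex neural network'') argument.

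For (i), let $\Theta=(w_j,u_j)_{j=1}^m$ be a width-$m$ net with $\|\Theta\|_F\le 1$ attaining $\gamma^{\star,m}$ (such a net exists by compactness and continuity; units with $u_j=0$ may be discarded). Using $1$-positive-homogeneity of the relu, rewrite each unit as $w_j[u_j^\top x]_+=(w_j\|u_j\|_2)[\bar u_j^\top x]_+$ and associate to $\Theta$ the signed measure $\mu=\sum_j (w_j\|u_j\|_2)\,\delta_{\bar u_j}$ on $\sph$, so that $\langle\mu,\varphi_{\textup{relu}}(x)\rangle=f^{\textup{NN}}(x;\Theta)$ pointwise. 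By AM--GM, $\|\mu\|_1\le\sum_j|w_j|\,\|u_j\|_2\le\tfrac12\sum_j(w_j^2+\|u_j\|_2^2)=\tfrac12\|\Theta\|_F^2\le\tfrac12$, so $2\mu$ is feasible for~\eqref{eq:l1svm} and has margin $2\gamma^{\star,m}$, whence $\gamma_{\ell_1}\ge 2\gamma^{\star,m}$. Since $\mu$ is a finite sum of point masses rather than an element of $\spaceone_1(\sph)$, I will replace each $\delta_{\bar u_j}$ by a narrow nonnegative bump density of unit mass: continuity of $\bar u\mapsto\varphi_{\textup{relu}}(x_i)[\bar u]=[x_i^\top\bar u]_+$ on the compact sphere makes the $n$ inner products converge to their point-mass values while the triangle inequality keeps the $\ell_1$ norm $\le\tfrac12$, so the conclusion $\gamma_{\ell_1}\ge 2\gamma^{\star,m}$ survives the limit. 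Letting $m\to\infty$ gives $\gamma^{\star,\infty}\le\gamma_{\ell_1}/2$.

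For (ii), let $\mu\in\spaceone_1(\sph)$ with $\|\mu\|_1\le 1$ attain $\gamma_{\ell_1}$ (this is the hypothesis of the proposition), and consider the continuous map $\Psi:\sph\to\R^n$, $\Psi(\bar u)\triangleq\big([x_1^\top\bar u]_+,\dots,[x_n^\top\bar u]_+\big)$. The vector $v\triangleq\big(\langle\mu,\varphi_{\textup{relu}}(x_i)\rangle\big)_{i=1}^n=\int_{\sph}\Psi(\bar u)\,\mu(\bar u)\,d\bar u$ lies in the convex hull of the compact set $K\triangleq\{s\,\Psi(\bar u):s\in[-1,1],\ \bar u\in\sph\}$, which contains $0$: splitting $\mu$ into positive and negative parts, normalizing each to a probability measure, and absorbing the leftover total-variation mass (at most $1-\|\mu\|_1$) into $0\in K$ realizes $v$ as a convex combination of a point of $\mathrm{conv}(\Psi(\sph))$, a point of $\mathrm{conv}(-\Psi(\sph))$, and $0$. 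By Carathéodory in $\R^n$, $v=\sum_{j=1}^{n+1}\lambda_j s_j\Psi(\bar u_j)$ with $\lambda_j\ge0$, $\sum_j\lambda_j=1$, $s_j\in[-1,1]$; write $c_j\triangleq\lambda_j s_j$, so $\sum_j|c_j|\le 1$ and $\langle\mu,\varphi_{\textup{relu}}(x_i)\rangle=\sum_{j=1}^{n+1}c_j[\bar u_j^\top x_i]_+$ for all $i$. Now set $w_j\triangleq\sign(c_j)\sqrt{|c_j|}$ and $u_j\triangleq\sqrt{|c_j|}\,\bar u_j$: this width-$(n+1)$ network $\Theta$ satisfies $f^{\textup{NN}}(x_i;\Theta)=\sum_j c_j[\bar u_j^\top x_i]_+=\langle\mu,\varphi_{\textup{relu}}(x_i)\rangle$ for every $i$ and $\|\Theta\|_F^2=\sum_j(|c_j|+|c_j|)=2\sum_j|c_j|\le 2$. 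Since $y_i f^{\textup{NN}}(x_i;\Theta)=y_i\langle\mu,\varphi_{\textup{relu}}(x_i)\rangle\ge\gamma_{\ell_1}$ and $f^{\textup{NN}}$ is $2$-homogeneous in $\Theta$, the rescaled net $\Theta/\|\Theta\|_F$ has normalized margin $y_i f^{\textup{NN}}(x_i;\Theta)/\|\Theta\|_F^2\ge\gamma_{\ell_1}/2$, giving $\gamma^{\star,n+1}\ge\gamma_{\ell_1}/2$.

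The obstacles are bookkeeping rather than conceptual: carefully passing between the density space $\spaceone_1(\sph)$ and finitely supported signed measures (bump approximation in direction (i), Carathéodory in direction (ii)); handling the total-variation slack and the sign ambiguity by enlarging the relevant set to include $0$ and $\pm\Psi(\bar u)$; and tracking the factor $2$ that appears once from AM--GM and once from $2$-homogeneity, which is precisely the source of the $\gamma_{\ell_1}/2$ in the statement. Degenerate cases ($u_j=0$ or $c_j=0$ units, and $\gamma_{\ell_1}=0$, which is excluded since the data is assumed separable and $\gamma_{\ell_1}$ is attained) are dispatched directly.
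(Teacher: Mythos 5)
Your proof is correct. The paper itself gives no proof of Proposition~\ref{prop:gammastar=gammal1} --- it defers entirely to the cited works of Neyshabur et al.\ and Bengio et al.\ --- and your argument (AM--GM rescaling of units to get $\gamma^{\star,m}\le\gamma_{\ell_1}/2$, plus Carath\'eodory in $\R^n$ over the set $\{s\Psi(\bar u)\}$ to realize the SVM solution as a width-$(n+1)$ net, combined with Theorem~\ref{thm:marginnondecreasing}) is precisely the standard argument those references supply, with the factor of $2$ tracked correctly in both directions. The only cosmetic caveat is that the conclusion $\gamma^{\star,\infty}\le\gamma_{\ell_1}/2$ should be read as taking the supremum of the uniform bound $\gamma^{\star,m}\le\gamma_{\ell_1}/2$ over $m$, which is how the paper treats $\gamma^{\star,\infty}$.
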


			\section{Missing Material for Section~\ref{sec:margin}} \label{sec:proof:margin}

\subsection{Multi-class Setting} \label{subsec:multiclass_margin}
We will first state our analogue of Theorem~\ref{thm:binary_margin}~in the multi-class setting, as the proofs for the binary case will follow by reduction to the multi-class case.

In the same setting as Section~\ref{sec:margin}, let $l$ be the number of multi-class labels, so the $i$-th example has label $y_i \in [l]$. Our family $\mathcal{F}$ of prediction functions $f$ now takes outputs in $\R^l$, and we now study the $\lambda$-regularized cross entropy loss, defined as
\begin{align}
L_{\lambda}(\genparam) \triangleq -\frac{1}{n}\sum_{i = 1}^n \log \frac{\exp(f_{y_i}(x_i; \genparam))}{\sum_{j= 1}^l \exp(f_j(x_i;\genparam))} + \lambda \|\genparam\|^r \label{eq:multilabelcrossentropy}
\end{align}
We redefine the normalized margin of $\genparam_{\lambda}$ as:
\begin{align}
\gamma_{\lambda} \triangleq \min_{i} (f_{y_i}(x_i;\bgenparam_{\lambda}) - \max_{j \ne y_i}f_j(x_i; \bgenparam_{\lambda}))
\label{eq:gammalambda}
\end{align}
Define the $\|\cdot \|$-max normalized margin as  $$\gamma^\star \triangleq \max_{\|\genparam\| \le 1}  [ \min_{i} (f_{y_i}(x_i; \genparam) - \max_{j \ne y_i}f_j(x_i; \genparam))]$$ and  let $\genparam^\star$ be a parameter achieving this maximum. With these new definitions, our theorem statement for the multi-class setting is identical as the binary setting:
\begin{theorem}
	\label{thm:multilabelmargin}
	Assume $\gamma^\star > 0$ in the multi-class setting with cross entropy loss. Then as $\lambda \rightarrow 0$, $\gamma_{\lambda} \rightarrow \gamma^\star$. 
\end{theorem}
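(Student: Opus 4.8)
The plan is to adapt the Rosset--Zhu linear-predictor argument, relying only on positive homogeneity. For a parameter $\Theta$ write $\rho = \|\Theta\|$ and $q_i(\Theta) \triangleq f_{y_i}(x_i;\Theta) - \max_{j \ne y_i} f_j(x_i;\Theta)$ for the unnormalized margin of example $i$; by $a$-positive-homogeneity $q_i(c\Theta) = c^a q_i(\Theta)$ for $c > 0$, so $\gamma^\star = \max_{\|\Theta\|\le 1}\min_i q_i(\Theta)$ is attained on the unit sphere (since $\gamma^\star > 0$) by some $\Theta^\star$ with $\|\Theta^\star\| = 1$ and $q_i(\Theta^\star) \ge \gamma^\star$ for all $i$. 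The ``$\le$'' direction is immediate: once $\Theta_\lambda \ne 0$ (existence of a minimizer is Claim~\ref{claim:Lhasminimizer}), the point $\bar\Theta_\lambda$ is feasible for the program defining $\gamma^\star$, hence $\gamma_\lambda = \min_i q_i(\bar\Theta_\lambda) \le \gamma^\star$. It remains to prove $\liminf_{\lambda \to 0}\gamma_\lambda \ge \gamma^\star$.

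The main tool is a sandwich for the per-example cross-entropy loss $\ell_i(\Theta)$ (the $i$-th summand in \eqref{eq:multilabelcrossentropy}). Using $f(x_i;\Theta) = \rho^a f(x_i;\bar\Theta)$ and noting that $\sum_{j \ne y_i}\exp(\rho^a(f_j(\bar\Theta) - f_{y_i}(\bar\Theta)))$ lies between its largest term $\exp(-\rho^a q_i(\bar\Theta))$ and $l-1$ times that term,
\begin{align*}
\log\!\big(1 + \exp(-\rho^a q_i(\bar\Theta))\big)\ \le\ \ell_i(\Theta)\ \le\ \log\!\big(1 + (l-1)\exp(-\rho^a q_i(\bar\Theta))\big).
\end{align*}
Write $L^{\mathrm{loss}}_\lambda(\Theta) \triangleq L_\lambda(\Theta) - \lambda\rho^r$ for the loss part of the objective. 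I would then show $\rho_\lambda \triangleq \|\Theta_\lambda\| \to \infty$ and $\gamma_\lambda > 0$ as $\lambda \to 0$: evaluating $L_\lambda$ at the competitor $\lambda^{-1/(2r)}\Theta^\star$ gives $L_\lambda(\Theta_\lambda) \le L_\lambda(\lambda^{-1/(2r)}\Theta^\star) \le (l-1)\exp(-\lambda^{-a/(2r)}\gamma^\star) + \lambda^{1/2} \to 0$; since cross-entropy is strictly positive and $L^{\mathrm{loss}}_\lambda$ is continuous, it is bounded below by a positive constant on any ball $\{\|\Theta\|\le B\}$, so $\rho_\lambda$ cannot remain bounded, forcing $\rho_\lambda \to \infty$ (in particular $\Theta_\lambda \ne 0$ for small $\lambda$). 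Moreover if $\gamma_\lambda \le 0$ the worst example contributes at least $\tfrac1n\log 2$ to $L_\lambda(\Theta_\lambda)$, contradicting $L_\lambda(\Theta_\lambda)\to0$, so $\gamma_\lambda > 0$ for all small $\lambda$.

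The key comparison then uses the \emph{same-norm} competitor $\rho_\lambda\Theta^\star$, for which the regularizer $\lambda\rho_\lambda^r$ matches that of $\Theta_\lambda$; optimality of $\Theta_\lambda$ gives $L^{\mathrm{loss}}_\lambda(\Theta_\lambda) \le L^{\mathrm{loss}}_\lambda(\rho_\lambda\Theta^\star)$. Bounding the left side below by the smallest-margin example (normalized margin $\gamma_\lambda$, using $\log(1+x)\ge x/2$ for $x\in[0,1]$, valid since $\gamma_\lambda>0$) and the right side above via $q_i(\Theta^\star)\ge\gamma^\star$,
\begin{align*}
\tfrac{1}{2n}\exp(-\rho_\lambda^a\gamma_\lambda)\ \le\ \tfrac1n\log\!\big(1 + \exp(-\rho_\lambda^a\gamma_\lambda)\big)\ \le\ \log\!\big(1 + (l-1)\exp(-\rho_\lambda^a\gamma^\star)\big)\ \le\ (l-1)\exp(-\rho_\lambda^a\gamma^\star).
\end{align*}
Rearranging gives $\gamma^\star - \gamma_\lambda \le \rho_\lambda^{-a}\log(2n(l-1)) \to 0$, which together with $\gamma_\lambda \le \gamma^\star$ yields $\gamma_\lambda \to \gamma^\star$; the binary Theorem~\ref{thm:binary_margin} follows by specializing to $l=2$ with logit vector $(f(x;\Theta),0)$, recovering both the binary logistic loss and the binary normalized margin. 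The step I expect to need the most care is establishing $\rho_\lambda\to\infty$ rigorously — this leans on existence of the minimizer, strict positivity and continuity of cross-entropy on compact sets, and the explicit vanishing competitor — whereas everything afterward is the clean rearrangement above.
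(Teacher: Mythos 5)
Your proposal is correct and follows essentially the same route as the paper: the same sandwich bounds on the cross-entropy in terms of the normalized margin, the same comparison of $\Theta_\lambda$ against the rescaled max-margin competitor $\rho_\lambda\Theta^\star$ (so the regularizers cancel), and the same norm-divergence argument ($\rho_\lambda\to\infty$ via a vanishing competitor versus a positive lower bound on compact balls, which is exactly Lemma~\ref{lem:thetalambdagrowsmulti}). Your final step is a slight refinement — using $\log(1+x)\ge x/2$ on $[0,1]$ and $\log(1+x)\le x$ to get the explicit rate $\gamma^\star-\gamma_\lambda\le\rho_\lambda^{-a}\log(2n(l-1))$ in place of the paper's Taylor-expansion/contradiction argument — but this does not change the substance of the proof.
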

Since $L_{\lambda}$ is typically hard to optimize exactly for neural nets, we study how accurately we need to optimize $L_{\lambda}$ to obtain a margin that approximates $\gamma^\star$ up to a constant. We show that for $\lambda$ polynomial in $n, \gamma^\star$, and $l$, it suffices to find $\genparam'$ achieving a constant factor $\alpha$ multiplicative approximation of $L_{\lambda}(\genparam_{\lambda})$ in order to have margin $\gamma'$ satisfying $\gamma' \ge \frac{\gamma^\star}{\alpha^{a/r}}$.
\begin{theorem}
	\label{thm:multi_opterrrequired}
	In the setting of Theorem~\ref{thm:multilabelmargin}, suppose that we choose 
	$
	\lambda = \exp(-(2^{r/a} - 1)^{-a/r})\frac{(\gamma^\star)^{r/a}}{n^c(l - 1)^c}
	$ for sufficiently large $c$ (that only depends on $r/a$). For $\alpha \le 2$, let $\genparam'$ denote a $\alpha$-approximate minimizer of $L_{\lambda}$, so $L_{\lambda}(\genparam') \le \alpha L_{\lambda}(\genparam_{\lambda})$. Denote the normalized margin of $\genparam'$ by $\gamma'$. Then
	$\gamma' \ge \frac{\gamma^\star}{10 \cdot \alpha^{a/r}}.$
	\end{theorem}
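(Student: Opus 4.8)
The plan is to turn the qualitative statement of Theorem~\ref{thm:multilabelmargin} into a quantitative one, by tracking — as a function of $\lambda$ and $\alpha$ — a lower bound on the normalized margin of an arbitrary $\alpha$-approximate minimizer $\genparam'$. Write $\rho \triangleq \|\genparam'\|$, so that $\genparam' = \rho \cdot (\genparam'/\rho)$ with $\gamma' = \min_i q_i(\genparam'/\rho)$, where $q_i(\genparam) \triangleq f_{y_i}(x_i;\genparam) - \max_{j\ne y_i} f_j(x_i;\genparam)$ is the per-example margin. One first records that $\genparam'$ is automatically nonzero and of finite norm once $\lambda$ is as small as prescribed, since $0 < L_{\lambda}(\genparam') \le \alpha L_{\lambda}(\genparam_{\lambda})$ and $\lambda \rho^r \le L_{\lambda}(\genparam')$ (existence of $\genparam_{\lambda}$ being Claim~\ref{claim:Lhasminimizer}); hence $\gamma'$ is well-defined. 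By $a$-positive-homogeneity, $q_i(\genparam') = \rho^a q_i(\genparam'/\rho)$, so $\min_i q_i(\genparam') = \rho^a \gamma'$, and similarly $q_i(\genparam^\star) \ge \gamma^\star$ with $\|\genparam^\star\| = 1$.

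\textbf{Upper bound on $L_{\lambda}(\genparam_{\lambda})$.} Each cross-entropy term obeys $-\log\frac{\exp(f_{y_i}(x_i;\genparam))}{\sum_j \exp(f_j(x_i;\genparam))} \le \log\big(1 + (l-1)\exp(-q_i(\genparam))\big)$, so plugging the rescaled max-margin network $c\,\genparam^\star$ into $L_{\lambda}$ gives $L_{\lambda}(\genparam_{\lambda}) \le L_{\lambda}(c\,\genparam^\star) \le (l-1)\exp(-c^a \gamma^\star) + \lambda c^r$ for every $c > 0$. Choosing $c$ of order $(\log(1/\lambda)/\gamma^\star)^{1/a}$, with the precise value tuned to the prescribed $\lambda$ (this is where the factor $(2^{r/a}-1)^{-a/r}$ enters), balances the two terms and yields $L_{\lambda}(\genparam_{\lambda}) \lesssim \lambda\big(1 + (\log((l-1)/\lambda)/\gamma^\star)^{r/a}\big)$.

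\textbf{Lower bound and the margin.} Keeping only the worst summand (all cross-entropy terms are nonnegative) and using $-\log\frac{\exp(f_{y_i})}{\sum_j\exp(f_j)} \ge \log(1+\exp(-q_i))$ together with homogeneity, $L_{\lambda}(\genparam') \ge \tfrac1n \log\big(1 + \exp(-\rho^a\gamma')\big) + \lambda\rho^r$. Since the upper bound above makes $\alpha L_{\lambda}(\genparam_{\lambda}) < (\log 2)/n$ for the prescribed $\lambda$ and $\alpha \le 2$, we get $\log(1+\exp(-\rho^a\gamma')) < \log 2$, hence $\rho^a\gamma' > 0$, i.e. $\gamma' > 0$ automatically; then $\log(1+u)\ge u/2$ on $[0,1]$ gives $L_{\lambda}(\genparam') \ge \tfrac{1}{2n}\exp(-\rho^a\gamma') + \lambda\rho^r$. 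Now use $L_{\lambda}(\genparam') \le \alpha L_{\lambda}(\genparam_{\lambda})$ twice: from the $\lambda\rho^r$ term, $\rho^a = (\rho^r)^{a/r} \le \big(\alpha L_{\lambda}(\genparam_{\lambda})/\lambda\big)^{a/r} \lesssim \alpha^{a/r}\,\log((l-1)/\lambda)/\gamma^\star$ (this produces the exponent $a/r$ on $\alpha$); from the exponential term, $\rho^a\gamma' \ge \log\frac{1}{2n\alpha L_{\lambda}(\genparam_{\lambda})}$, i.e. $\gamma' \ge \rho^{-a}\log\frac{1}{2n\alpha L_{\lambda}(\genparam_{\lambda})}$. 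Combining these, and using $L_{\lambda}(\genparam_{\lambda}) \lesssim \lambda\big(1 + (\log((l-1)/\lambda)/\gamma^\star)^{r/a}\big)$, gives $\gamma' \gtrsim \frac{\gamma^\star}{\alpha^{a/r}}\cdot\frac{\log(1/\lambda)}{\log((l-1)/\lambda)}$.

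\textbf{Finishing, and the main obstacle.} It remains to verify that the exponent $c$ in $\lambda = \exp(-(2^{r/a}-1)^{-a/r})(\gamma^\star)^{r/a}/(n^c(l-1)^c)$ can be taken large enough (depending only on $r/a$) that $\log n$, $\log(l-1)$, and the $\log\log$ term hidden inside $L_{\lambda}(\genparam_{\lambda})$ are all lower order relative to $\log(1/\lambda)$ — note that $\log(1/\lambda)$, $\log((l-1)/\lambda)$, and $\log\frac{1}{2n\alpha L_{\lambda}(\genparam_{\lambda})}$ all scale the same way in $-\tfrac{r}{a}\log\gamma^\star$ — so that the ratio of logarithms above exceeds $1/10$ and $\gamma' \ge \gamma^\star/(10\,\alpha^{a/r})$. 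The substance of the proof, and the reason for the unusual explicit form of $\lambda$, is precisely this constant bookkeeping: optimizing the scalar $c$ against the prescribed $\lambda$ in the upper bound on $L_\lambda(\genparam_\lambda)$, and then checking that no stray polynomial or $\log\log$ factor spoils the leading behavior $\gamma' \approx \gamma^\star/\alpha^{a/r}$. The binary-classification case is the specialization $l = 2$ (equivalently, the reduction of logistic loss to two-class cross-entropy), exactly as in the treatment of Theorem~\ref{thm:binary_margin}.
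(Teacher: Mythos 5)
Your proposal follows essentially the same route as the paper's proof: upper-bound $L_\lambda(\genparam_\lambda)$ by evaluating the loss at a rescaled max-margin point with scale of order $(\log(1/\lambda)/\gamma^\star)^{1/a}$, lower-bound $L_\lambda(\genparam')$ by the worst cross-entropy term plus the regularizer, extract $\|\genparam'\|^r \le \alpha L_\lambda(\genparam_\lambda)/\lambda$ from the regularization term and $\|\genparam'\|^a\gamma' \ge \log\frac{1}{2n\alpha L_\lambda(\genparam_\lambda)}$ from the exponential term, and combine. The only part you leave as a plan rather than execute is the final ratio-of-logarithms bookkeeping (the paper's explicit bounding of the two terms it calls $\clubsuit$ and $\heartsuit$, which is where the constants $10$ and the choice of $c$ are pinned down), but you correctly identify what must be checked and why it goes through.
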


Towards proving Theorem~\ref{thm:multilabelmargin}, we first prove that $L_{\lambda}$ does indeed have a global minimizer. 
\begin{claim}
	\label{claim:Lhasminimizer}
	In the setting of Theorems~\ref{thm:multilabelmargin} and~\ref{thm:binary_margin}, $\argmin_{\genparam} L_{\lambda}(\genparam)$ exists.
\end{claim}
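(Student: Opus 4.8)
The plan is a routine application of the direct method (Weierstrass extreme value theorem), exploiting that the $\ell_r$-type regularizer makes $L_{\lambda}$ coercive. First I would note that $L_{\lambda}$ is continuous as a function of $\genparam$: by hypothesis the prediction function $f(\cdot;\genparam)$ is continuous in $\genparam$, and $L_{\lambda}$ is built from $f$ by composing with the continuous map $z \mapsto \log(1+\exp(-z))$ (in the multi-class case, $\genparam \mapsto -\log\frac{\exp(f_{y_i}(x_i;\genparam))}{\sum_{j}\exp(f_j(x_i;\genparam))}$, which is continuous since the softmax is), averaging over $i$, and adding $\lambda\|\genparam\|^r$ — all continuous operations. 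Moreover each data-fitting summand is nonnegative (as $\log(1+\exp(-z)) > 0$, and $\frac{\exp(f_{y_i})}{\sum_j \exp(f_j)} \in (0,1]$ in the multi-class case), and $\lambda\|\genparam\|^r \ge 0$, so $L_{\lambda}(\genparam) \ge \lambda\|\genparam\|^r$ for every $\genparam$.

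Next I would fix the reference point $\genparam = 0$, let $B_0 \triangleq L_{\lambda}(0) < \infty$, and consider the sublevel set $S \triangleq \{\genparam : L_{\lambda}(\genparam) \le B_0\}$. This set is nonempty (it contains $0$) and closed (it is the preimage of $(-\infty, B_0]$ under the continuous map $L_{\lambda}$). It is also bounded: any $\genparam \in S$ obeys $\lambda\|\genparam\|^r \le L_{\lambda}(\genparam) \le B_0$, hence $\|\genparam\| \le (B_0/\lambda)^{1/r}$. Since the parameter space is a finite-dimensional Euclidean space, $S$ is compact by Heine–Borel, and a continuous function attains its minimum on a nonempty compact set; so there is $\genparam_{\lambda} \in S$ with $L_{\lambda}(\genparam_{\lambda}) = \min_{\genparam \in S} L_{\lambda}(\genparam)$. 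Finally, every $\genparam \notin S$ satisfies $L_{\lambda}(\genparam) > B_0 \ge L_{\lambda}(\genparam_{\lambda})$, so $\genparam_{\lambda}$ is in fact a global minimizer, i.e. $\genparam_{\lambda} \in \argmin_{\genparam} L_{\lambda}(\genparam)$, as claimed.

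There is no genuinely hard step here; the only points requiring a word of care are the finiteness of $L_{\lambda}(0)$ and the finite-dimensionality of the parameter space (so that closed and bounded implies compact), both of which hold throughout Section~\ref{sec:margin}. The binary and multi-class arguments are identical once one observes that both loss functions are continuous, nonnegative, and dominated below by the regularizer.
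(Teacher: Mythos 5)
Your proof is correct and follows essentially the same route as the paper's: both arguments use continuity of $L_{\lambda}$ together with the coercivity supplied by $\lambda\|\genparam\|^r$ to reduce the minimization to a compact ball, then invoke the extreme value theorem. The only (cosmetic) difference is that you restrict to the sublevel set $\{L_{\lambda} \le L_{\lambda}(0)\}$ while the paper restricts to the ball $\|\genparam\| \le (b/\lambda)^{1/r}$ with $b = \inf_{\genparam} L_{\lambda}(\genparam)$; your choice is, if anything, slightly cleaner since it avoids any fuss about whether the infimum could be approached only near the boundary of that ball.
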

\begin{proof}
	We will argue in the setting of Theorem~\ref{thm:multilabelmargin} where $L_{\lambda}$ is the multi-class cross entropy loss, because the logistic loss case is analogous. We first note that $L_{\lambda}$ is continuous in $\genparam$ because $f$ is continuous in $\genparam$ and the term inside the logarithm is always positive. Next, define $b \triangleq \inf_{\genparam} L_{\lambda}(\genparam) > 0$. Then we note that for $\|\genparam\| > (b/\lambda)^{1/r} \triangleq M$, we must have $L_{\lambda}(\genparam) > b$. It follows that $\inf_{\|\genparam\| \le M} L_{\lambda}(\genparam) =  \inf_{\genparam} L_{\lambda}(\genparam) $. However, there must be a value $\genparam_{\lambda}$ which attains $\inf_{\|\genparam\| \le M} L_{\lambda}(\genparam)$, because $\{\genparam: \|\genparam\| \le M\}$ is a compact set and $L_{\lambda}$ is continuous. Thus, $\inf_{\genparam} L_{\lambda}(\genparam)$ is attained by some $\genparam_{\lambda}$. 
\end{proof}

Next we present the following lemma, which says that as we decrease $\lambda$, the norm of the solution $\|\Theta_{\lambda}\|$ grows. 
\begin{lemma} \label{lem:thetalambdagrowsmulti}
	In the setting of Theorem~\ref{thm:multilabelmargin}, as $\lambda \rightarrow 0$, we have $\|\genparam_{\lambda}\| \rightarrow \infty$.
\end{lemma}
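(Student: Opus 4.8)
The plan is to argue by contradiction. The two ingredients are: (i) because the data is separable with positive max-margin $\gamma^\star$, we can use large multiples $c\Theta^\star$ of the max-margin parameter as competitors to drive $L_\lambda(\Theta_\lambda)$ to $0$ as $\lambda\to 0$; and (ii) the cross-entropy part of $L_\lambda$ (which does not depend on $\lambda$) is strictly positive and continuous, hence bounded below by a positive constant on any bounded set of parameters. If $\|\Theta_\lambda\|$ stayed bounded along a sequence $\lambda_k\to 0$, (ii) would force $L_{\lambda_k}(\Theta_{\lambda_k})$ to stay bounded away from $0$, contradicting (i).

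First I would establish the upper bound. Let $\Theta^\star$ be a parameter with $\|\Theta^\star\|\le 1$ achieving $\min_i (f_{y_i}(x_i;\Theta^\star) - \max_{j\ne y_i} f_j(x_i;\Theta^\star)) = \gamma^\star > 0$. For any scalar $c>0$, $a$-positive-homogeneity of $f$ gives $f_j(x_i; c\Theta^\star) - f_{y_i}(x_i; c\Theta^\star) = c^a\big(f_j(x_i;\Theta^\star) - f_{y_i}(x_i;\Theta^\star)\big) \le -c^a\gamma^\star$ for every $i$ and every $j\ne y_i$. Hence each cross-entropy summand is $\log\big(1 + \sum_{j\ne y_i} e^{f_j(x_i;c\Theta^\star) - f_{y_i}(x_i;c\Theta^\star)}\big) \le \log\big(1 + (l-1)e^{-c^a\gamma^\star}\big) \le (l-1)e^{-c^a\gamma^\star}$, while $\|c\Theta^\star\|^r \le c^r$. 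Therefore
\[
L_\lambda(\Theta_\lambda) \;\le\; L_\lambda(c\Theta^\star) \;\le\; (l-1)e^{-c^a\gamma^\star} + \lambda c^r .
\]
Taking $c = c(\lambda) = \lambda^{-1/(2r)}$ makes the right-hand side $(l-1)\exp(-\lambda^{-a/(2r)}\gamma^\star) + \lambda^{1/2}$, which tends to $0$ as $\lambda\to 0$ since $a,r>0$. Thus $L_\lambda(\Theta_\lambda)\to 0$.

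Next I would use strict positivity. Write $N(\Theta)$ for the cross-entropy term of $L_\lambda$; since $l\ge 2$, for each $i$ the summand $\log\big(1 + \sum_{j\ne y_i} e^{f_j(x_i;\Theta) - f_{y_i}(x_i;\Theta)}\big)$ is the logarithm of a quantity strictly greater than $1$, so $N(\Theta)>0$ for all $\Theta$, and $N$ is continuous because $f$ is continuous in $\Theta$. Suppose, for contradiction, that $\|\Theta_\lambda\|$ does not diverge: there are $B<\infty$ and $\lambda_k\downarrow 0$ with $\|\Theta_{\lambda_k}\|\le B$. On the compact set $\{\Theta:\|\Theta\|\le B\}$ the continuous positive function $N$ attains a positive minimum $r_B>0$, so $L_{\lambda_k}(\Theta_{\lambda_k}) \ge N(\Theta_{\lambda_k}) \ge r_B$ for all $k$, contradicting $L_{\lambda_k}(\Theta_{\lambda_k})\to 0$. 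Hence $\|\Theta_\lambda\|\to\infty$. I do not anticipate a real obstacle here; the only points needing care are verifying strict positivity of $N$ on bounded sets via $l\ge 2$, continuity of $f(\cdot\,;\Theta)$, and compactness, and choosing the scaling $c(\lambda)$ so that $\lambda c^r\to 0$ while $c^a\to\infty$, which any polynomial rate such as $c=\lambda^{-1/(2r)}$ achieves.
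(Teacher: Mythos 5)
Your proposal is correct and follows essentially the same route as the paper: both argue by contradiction, upper-bounding $L_\lambda(\Theta_\lambda)$ by a scaled max-margin competitor (the paper uses $M=\lambda^{-1/(r+1)}$ where you use $c=\lambda^{-1/(2r)}$, either of which works) and lower-bounding the cross-entropy term by a positive constant on any bounded parameter set. The only cosmetic difference is that the paper phrases the lower bound via bounded logits rather than your explicit compactness argument.
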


To prove Theorem \ref{thm:multilabelmargin}, we rely on the exponential scaling of the cross entropy: $L_{\lambda}$ can be lower bounded roughly by $\exp(-\|\genparam_{\lambda}\| \gamma_{\lambda})$, but also has an upper bound that scales with $\exp(-\|\genparam_{\lambda}\| \gamma^{\star})$. By Lemma \ref{lem:thetalambdagrowsmulti}, we can take large $\|\genparam_{\lambda}\|$ so the gap $\gamma^{\star} - \gamma_{\lambda}$ vanishes. This proof technique is inspired by that of \citet{rosset2004boosting}.

\begin{proof}[Proof of Theorem \ref{thm:multilabelmargin}]
	For any $M > 0$ and $\Theta$ with $\gamma_{\Theta} \triangleq \min_{i} \left(f(x_i;\bgenparam) - \max_{j \ne y_i}f(x_i;\bgenparam)\right)$,
	\begin{align}
	L_{\lambda}(M\Theta)& = \frac{1}{n}\sum_{i = 1}^n -\log \frac{\exp(M^a f_{y_i}(x_i;\genparam))}{\sum_{j = 1}^l \exp(M^a f_j(x_i;\genparam))} + \lambda M^r\|\Theta\|^r \tag{by the homogeneity of $f$}\\ 
	&= \frac{1}{n}\sum_{i = 1}^n -\log \frac{1}{1 + \sum_{j \ne y_i} \exp(M^a( f_j(x_i;\genparam) - f_{y_i}(x_i;\genparam)))} + \lambda M^r\|\Theta\|^r \label{eq:losshomogeneity}\\
	& \le \log(1 + (l - 1)\exp(-M^a \gamma_{\Theta})) + \lambda M^r\|\genparam\|^r \label{eq:loss_scale}
	\end{align}
	We can also apply $\sum_{j \ne y_i} \exp(M^a( f_j(x_i;\genparam) - f_{y_i}(x_i;\genparam))) \ge \max \exp(M^a( f_j(x_i;\genparam) - f_{y_i}(x_i;\genparam))) = 
	\exp \gamma_\Theta$ in order to lower bound \eqref{eq:losshomogeneity} and obtain
	\begin{align}
	L_{\lambda}(M\Theta)
	& \ge \frac 1 n \log(1 + \exp(-M^a \gamma_{\Theta})) + \lambda M^r\|\genparam\|^r \label{eq:loss_scale_LB}
	\end{align}
	
	Applying~\eqref{eq:loss_scale} with $M = \|\Theta_\lambda\|$ and $\Theta = \Theta^\star$, noting that $\|\Theta^\star\| \le 1$, we have: 
	\begin{align}
	L_{\lambda}(\genparam^\star\|\genparam_{\lambda}\|) 
	&\le \log(1 + (l - 1)\exp(-\|\genparam_{\lambda}\|^a \gamma^\star)) + \lambda \|\genparam_{\lambda}\|^r \label{eq:ubLmarginmulti}
	\end{align}
	Next we lower bound $L_{\lambda}(\genparam_{\lambda})$ by applying~\eqref{eq:loss_scale_LB}, 
	\begin{align}
	L_{\lambda}(\genparam_{\lambda}) 
	&\ge \frac{1}{n}\log(1 + \exp(-\|\genparam_{\lambda}\|^a \gamma_{\lambda})) + \lambda \|\genparam_{\lambda}\|^r \label{eq:lbLmarginmulti}
	\end{align}
	Combining \eqref{eq:ubLmarginmulti} and \eqref{eq:lbLmarginmulti} with the fact that $L_{\lambda}(\genparam_{\lambda}) \le L_{\lambda}(\genparam^\star \|\genparam_{\lambda}\|)$ (by the global optimality of $\Theta_\lambda$), we have 
	$$\forall \lambda > 0, n \log(1 + (l - 1)\exp(-\|\genparam_{\lambda}\|^a \gamma^\star)) \ge  \log(1 + \exp(-\|\genparam_{\lambda}\|^a \gamma_{\lambda}))$$
	
	Recall that by Lemma~\ref{lem:thetalambdagrowsmulti}, as $\lambda\rightarrow 0$, we have $\|\Theta_\lambda\|\rightarrow \infty$. Therefore, $\exp(-\|\genparam_{\lambda}\|^a \gamma^\star), \exp(-\|\genparam_{\lambda}\|^a \gamma_{\lambda})\rightarrow 0$. Thus, we can apply Taylor expansion to the equation above with respect to $\exp(-\|\genparam_{\lambda}\|^a \gamma^\star)$ and $\exp(-\|\genparam_{\lambda}\|^a \gamma_{\lambda})$. If $\max\{\exp(-\|\genparam_{\lambda}\|^a \gamma^\star), \exp(-\|\genparam_{\lambda}\|^a \gamma_{\lambda})\} < 1$, then we obtain
	\begin{align}
	n(l-1) \exp(-\|\genparam_{\lambda}\|^a \gamma^\star)  \ge \exp(-\|\genparam_{\lambda}\|^a \gamma_{\lambda}) - O(\max\{\exp(-\|\genparam_{\lambda}\|^a \gamma^\star)^2, \exp(-\|\genparam_{\lambda}\|^a \gamma_{\lambda})^2\})\nonumber\end{align}
	
	We claim this implies that $\gamma^\star \le \lim\inf _{\lambda \rightarrow 0} \gamma_\lambda$. If not, we have $ \lim\inf_{\lambda \rightarrow 0} \gamma_\lambda < \gamma^\star$ , which implies that the equation above is violated with sufficiently large $\|\Theta_\lambda\|$ ($\|\Theta_\lambda\| \gg \log(2(\ell-1)n)^{1/a}$ would suffice). By Lemma~\ref{lem:thetalambdagrowsmulti}, $\|\Theta_\lambda\|\rightarrow \infty$ as $\lambda \rightarrow 0$ and therefore we get a contradiction. 
	
	Finally, we have $\gamma_{\lambda} \le \gamma^\star$ by definition of $\gamma^\star$. Hence, $\lim_{\lambda \rightarrow 0} \gamma_{\lambda}$ exists and equals $\gamma^\star$. 
\end{proof}

Now we fill in the proof of Lemma~\ref{lem:thetalambdagrowsmulti}.
\begin{proof}[Proof of Lemma~\ref{lem:thetalambdagrowsmulti}]
	For the sake of contradiction, we assume that $\exists C > 0$ such that for any $\lambda_0 > 0$, there exists $0 < \lambda <\lambda_0$ with $\|\Theta_\lambda\| \le C$. 
	We will determine the choice of $\lambda_0$ later and pick $\lambda$ such that  $\|\Theta_\lambda\|\le C$. Then the logits (the prediction $f_j(x_i;\Theta)$ before softmax) are bounded in absolute value by some constant (that depends on $C$), and therefore the loss function $-\log \frac{\exp(f_{y_i}(x_i;\genparam))}{\sum_{j = 1}^l \exp(f_j(x_i;\genparam))}$for every example is bounded from below by some constant $D> 0$ (depending on $C$ but not $\lambda$.)

	Let $M = \lambda^{-1/(r+1)}$, we have that 
	\begin{align}
	0< D\le L_{\lambda}(\genparam_{\lambda}) &\le L_{\lambda}(M\genparam^\star) \tag{by the optimality of $\genparam_{\lambda}$} \\
	&\le -\log \frac{1}{1 + (l - 1)\exp(-M^a \gamma^\star)} + \lambda M^r \tag{by~\eqref{eq:loss_scale}}  \nonumber\\ 
	& = \log (1 + (l - 1) \exp(-\lambda^{-a/(r+1)} \gamma^\star)) + \lambda^{1/(r+1)} \nonumber\\
	& \le \log (1 + (l - 1) \exp(-\lambda_0^{-a/(r+1)} \gamma^\star)) + \lambda_0^{1/(r+1)} \nonumber
		\end{align}
	Taking a sufficiently small $\lambda_0$, we obtain a contradiction and complete the proof. 
\end{proof}

\subsection{Missing Proof for Optimization Accuracy}
\label{subsec:opterrproof}

\begin{proof}[Proof of Theorem~\ref{thm:multi_opterrrequired}]
	Choose $B \triangleq \left(\frac{1}{\gamma^\star} \log \frac{(l - 1)(\gamma^\star)^{r/a}}{\lambda}\right)^{1/a}$. We can upper bound $L_{\lambda}(\genparam')$ by computing 
	\begin{align*}
	L_{\lambda}(\genparam') \le \alpha L{\lambda}(\genparam_{\lambda}) &\le \alpha L_{\lambda}(B \genparam^\star)\nonumber \\
	&\le \alpha \log(1 + (l - 1)\exp(-B^a \gamma^\star)) + \alpha\lambda B^r \tag{by \eqref{eq:loss_scale}} \\
	&\le \alpha (l - 1)\exp(-B^a \gamma^\star) + \alpha\lambda B^r & \tag{using $\log(1 + x) \le x$} \nonumber \\
	&\le \alpha\frac{\lambda}{(\gamma^{\star})^{r/a}} + \alpha\lambda \left(\frac{1}{\gamma^\star} \log \frac{(l - 1)(\gamma^\star)^{r/a}}{\lambda}\right)^{r/a}\\
	&\le \alpha\frac{\lambda}{(\gamma^\star)^{r/a}} \left(1 + \left(\log \frac{(l - 1)(\gamma^\star)^{r/a}}{\lambda}\right)^{r/a}\right) \triangleq L^{(UB)} 	\end{align*}
	Furthermore, it holds that $\|\genparam'\|^r \le \frac{L^{(UB)}}{\lambda}$. Now we note that 
	\begin{align*}
	L_{\lambda}(\genparam') \le L^{(UB)} \le 2\alpha \frac{\lambda}{(\gamma^\star)^{r/a}}\left(\log \frac{(l - 1)(\gamma^\star)^{r/a}}{\lambda}\right)^{r/a} \le \frac{1}{2n}
	\end{align*} 
	for sufficiently large $c$ depending only on $a/r$. Now using the fact that $\log (x) \ge \frac{x}{1+ x} \ \forall x \ge -1$, we additionally have the lower bound $L_{\lambda}(\genparam') \ge \frac{1}{n}\log(1+ \exp(-\gamma' \|\genparam'\|^a)) \ge \frac{1}{n}\frac{\exp(-\gamma' \|\genparam'\|^a)}{1 + \exp(-\gamma' \|\genparam'\|^a)}$. Since $L^{(UB)} \le 1$, we can rearrange to get 
	\begin{align*}
	\gamma' \ge \frac{-\log \frac{nL_{\lambda}(\genparam')}{1 - nL_{\lambda}(\genparam')}}{\|\genparam'\|^a} \ge \frac{-\log \frac{nL^{(UB)}}{1 - nL^{(UB)}}}{\|\genparam'\|^a} \ge \frac{-\log (2nL^{(UB)})}{\|\genparam'\|^a}
	\end{align*}
	The middle inequality followed because $\frac{x}{1 - x}$ is increasing in $x$ for $0 \le x < 1$, and the last because $L^{(UB)} \le \frac{1}{2n}$. Since $-\log2nL^{(UB)}> 0$ we can also apply the bound $\|\genparam'\|^r \le \frac{L^{(UB)}}{\lambda}$ to get 
	\begin{align}
	\gamma' &\ge \frac{-\lambda^{a/r} \log 2 nL^{(UB)}}{(L^{(UB)})^{a/r}} \nonumber \\
	&= \frac{-\log\left(2n\alpha \frac{\lambda}{(\gamma^\star)^{r/a}} \left(1 + \left(\log \frac{(l - 1)(\gamma^\star)^{r/a}}{\lambda}\right)^{r/a}\right)\right)}{\frac{\alpha^{a/r}}{\gamma^\star}  \left(1 + \left(\log \frac{(l - 1)(\gamma^\star)^{r/a}}{\lambda}\right)^{r/a}\right)^{a/r}} \tag{by definition of $L^{(UB)}$} \\
	&\ge \frac{\gamma^\star}{\alpha^{a/r}}\left(\underbrace{\frac{\log(\frac{(\gamma^\star)^{r/a}}{2\alpha n\lambda})}{  \left(1 + \left(\log \frac{(l - 1)(\gamma^\star)^{r/a}}{\lambda}\right)^{r/a}\right)^{a/r}}}_{\clubsuit} - \underbrace{\frac{\log  \left(1 + \left(\log \frac{(l - 1)(\gamma^\star)^{r/a}}{\lambda}\right)^{r/a}\right)}{  \left(1 + \left(\log \frac{(l - 1)(\gamma^\star)^{r/a}}{\lambda}\right)^{r/a}\right)^{a/r}}}_{\heartsuit}\right) 
	\nonumber
	\end{align}
	We will first bound $\clubsuit$. First note that 
	\begin{align}
		\frac{\log(\frac{(\gamma^\star)^{r/a}}{2\alpha n\lambda})}{\log \frac{(l - 1)(\gamma^\star)^{r/a}}{\lambda}} = \frac{\log \frac{(\gamma^\star)^{r/a}}{\lambda} - \log 2\alpha n}{\log \frac{(\gamma^\star)^{r/a}}{\lambda} + \log(l - 1)} \ge \frac{\log \frac{(\gamma^\star)^{r/a}}{\lambda} - \log 2 \alpha n (l - 1)}{\log \frac{(\gamma^\star)^{r/a}}{\lambda}} \ge \frac{c - 3}{c} \label{eq:clubbound1}
	\end{align}
	where the last inequality follows from the fact that $\frac{(\gamma^\star)^{r/a}}{\lambda} \ge n^c (l - 1)^c$ and $\alpha \le 2$. Next, using the fact that $\log \frac{(\gamma^\star)^{r/a}}{\lambda} \ge \frac{1}{(2^{r/a} - 1)^{a/r}}$, we note that 
	\begin{align}
	\left(1 + \left(\log \frac{(l - 1)(\gamma^\star)^{r/a}}{\lambda}\right)^{-r/a}\right)^{a/r} \le \left(1 + \left(\frac{1}{(2^{r/a} - 1)^{a/r}}\right)^{-r/a}\right)^{a/r} \le 2
	\label{eq:clubbound2}
	\end{align}
	Combining \eqref{eq:clubbound1} and \eqref{eq:clubbound2}, we can conclude that 
	\begin{align*}
	\clubsuit = \frac{\log(\frac{(\gamma^\star)^{r/a}}{2\alpha n\lambda})}{\log \frac{(l - 1)(\gamma^\star)^{r/a}}{\lambda}}\left(1 + \left(\log \frac{(l - 1)(\gamma^\star)^{r/a}}{\lambda}\right)^{-r/a}\right)^{-a/r} \ge \frac{c - 3}{2c} 
	\end{align*}
	Finally, we note that if $1 + \left(\log \frac{(l - 1)(\gamma^\star)^{r/a}}{\lambda}\right)^{r/a}$ is a sufficiently large constant that depends only on $a/r$ (which can be achieved by choosing $c$ sufficiently large)
		it will follow that $\heartsuit \le \frac{1}{10}$. Thus, for sufficiently large $c \ge 5$, we can combine our bounds on $\clubsuit$ and $\heartsuit$ to get that 
	\begin{align*}
		\gamma' \ge \frac{\gamma^\star}{10\alpha^{a/r}}
	\end{align*}
\end{proof}

\subsection{Proofs of Theorem~\ref{thm:binary_margin}}
\label{subsec:binarymarginproof}
For completeness, we will now prove Theorem~\ref{thm:binary_margin}~via reduction to the multi-class cases. Recall that we now fit binary labels $y_i \in \{-1, +1\}$ (as opposed to indices in $[l]$) and redefine $f(\cdot;\genparam)$ to assign a single real-valued score (as opposed to a score for each label). We also work with the simpler logistic loss in~\eqref{eq:binary_logistic_loss}. 

\begin{proof}[Proof of Theorem~\ref{thm:binary_margin}]
	We prove this theorem via reduction to the multi-class case with $l = 2$. Construct $\tilde{f} : \R^d \rightarrow \R^2$ with $\tilde{f}_1(x_i;\genparam) = -\frac{1}{2} f(x_i;\genparam)$ and $\tilde{f}_2(x_i;\genparam) = \frac{1}{2} f(x_i;\genparam)$. Define new labels $\tilde{y_i} = 1$ if $y_i = -1$ and $\tilde{y_i} = 2$ if $y_i = 1$. Now note that $\tilde{f}_{\tilde{y_i}}(x_i;\genparam) - \tilde{f}_{j \ne \tilde{y_i}}(x_i;\genparam) = y_i f(x_i;\genparam)$, so the multi-class margin for $\genparam$ under $\tilde{f}$ is the same as binary margin for $\genparam$ under $f$. Furthermore, defining 
	\begin{align*}
	\tilde{L}_{\lambda}(\genparam) \triangleq \frac{1}{n}\sum_{i = 1}^n -\log \frac{\exp(\tilde{f}_{\tilde{y_i}}(x_i;\genparam))}{\sum_{j = 1}^2 \exp(\tilde{f}_j(x_i;\genparam))} + \lambda \|\genparam\|^r
	\end{align*}
	we get that $\tilde{L}_{\lambda}(\genparam) = L_{\lambda}(\genparam)$, and in particular, $\tilde{L}_{\lambda}$ and $L_{\lambda}$ have the same set of minimizers. Therefore we can apply Theorem \ref{thm:multilabelmargin} for the multi-class setting and conclude $\gamma_{\lambda} \rightarrow \gamma^\star$ in the binary classification setting.
\end{proof}

				\section{Generalization Bounds for Neural Nets}
\label{sec:rademacher}

In this section we present generalization bounds in terms of the normalized margin and complete the proof of Corollary~\ref{cor:margingeneralization}. We first state the following Proposition \ref{prop:genbound}, which shows that the generalization error only depends on the parameters through the inverse of the margin on the training data. We obtain Proposition~\ref{prop:genbound} by applying Theorem 1 of~\citet{golowich2017size} with the standard technique of using margin loss to bound classification error. There exist other generalization bounds which depend on the margin and some normalization \citep{neyshabur2015norm,neyshabur2017pac,bartlett2017spectrally,neyshabur2018towards}; we choose the bounds of \citet{golowich2017size} because they fit well with $\ell_2$ normalization. 
\begin{proposition}\label{prop:genbound}[{Straightforward consequence of \citet[Theorem 1]{golowich2017size}}]
	Suppose $\phi$ is 1-Lipschitz and $1$-positive-homogeneous. With probability at least $1 - \delta$ over the draw of $X, Y$, for all depth-$q$ networks $f^{\textup{NN}}(\cdot;\genparam)$ separating the data with normalized margin $\gamma \triangleq \min_i y_i f^{\textup{NN}}(x_i;\genparam/\|\genparam\|_F) > 0$,
	\begin{align}
	\label{eq:genbound}
	L(\genparam) \lesssim \frac{C}{\gamma q^{(q -1)/2} \sqrt{n}} + \epsilon(\gamma)
	\end{align}
	where $\epsilon(\gamma) \triangleq \sqrt{\frac{\log \log_2 \frac{4C}{\gamma}}{n}} + \sqrt{\frac{\log (1/\delta)}{n}}$ and $C = \max_{x \in\mathcal{X}} \|x\|_2$ is the max norm of the data. Note that $\epsilon(\gamma)$ is typically small, and thus the above bound mainly scales with $\frac{C}{\gamma q^{(q - 1)/2}\sqrt{n}}$. \footnote{Although the $\frac{1}{K^{(K - 1)/2}}$ factor of \eqref{eq:genbound} decreases with depth $K$, the margin $\gamma$ will also tend to decrease as the constraint $\|\bgenparam\|_F \le 1$ becomes more stringent.} 
\end{proposition}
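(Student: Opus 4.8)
\textbf{Proof proposal for Corollary~\ref{cor:margingeneralization}.}
The plan is to simply combine the two ingredients already in hand: Theorem~\ref{thm:binary_margin} (applied to the depth-$q$ $\ell_2$-regularized logistic loss $L_{\lambda,\mathcal{M}}$, whose prediction function is $q$-positive-homogeneous in $\genparam$ and continuous in $\genparam$), which gives $\gamma_{\lambda,\mathcal{M}} \rightarrow \gamma^{\star,\mathcal{M}}$ as $\lambda \rightarrow 0$; and Proposition~\ref{prop:genbound}, the Golowich-type margin bound, which controls $L(\genparam)$ for every separating depth-$q$ network in terms of its own normalized margin. The only work is to chain these together carefully with respect to the limit in $\lambda$ and the high-probability event.

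First I would fix the good event. By Proposition~\ref{prop:genbound}, with probability at least $1-\delta$ over the draw of $(x_1,y_1),\dots,(x_n,y_n)$, \emph{simultaneously for all} depth-$q$ networks $f^{\textup{NN}}(\cdot;\genparam)$ that separate the training data with normalized margin $\gamma = \min_i y_i f^{\textup{NN}}(x_i;\genparam/\|\genparam\|_F)>0$, one has $L(\genparam) \lesssim \frac{C}{\gamma\, q^{(q-1)/2}\sqrt{n}} + \epsilon(\gamma)$. Crucially this event does not depend on $\lambda$, and the bound holds uniformly over all separating $\genparam$, so once we condition on it we may instantiate it at $\genparam = \Theta_{\lambda,\mathcal{M}}$ for every $\lambda$ at once. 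It remains to check that $\Theta_{\lambda,\mathcal{M}}$ does separate the data for all sufficiently small $\lambda$: since $\gamma^{\star,\mathcal{M}}>0$ by assumption and $\gamma_{\lambda,\mathcal{M}} \rightarrow \gamma^{\star,\mathcal{M}}$ by Theorem~\ref{thm:binary_margin}, there is $\lambda_0>0$ such that $\gamma_{\lambda,\mathcal{M}} > \tfrac12\gamma^{\star,\mathcal{M}} > 0$ for $\lambda < \lambda_0$, which (using positive homogeneity, so that $f^{\textup{NN}}(x_i;\Theta_{\lambda,\mathcal{M}})$ has the same sign as $f^{\textup{NN}}(x_i;\bar\Theta_{\lambda,\mathcal{M}})$) means $\Theta_{\lambda,\mathcal{M}}$ separates the data and has normalized margin exactly $\gamma_{\lambda,\mathcal{M}}$.

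Thus on the good event, for all $\lambda < \lambda_0$,
\[
L(\Theta_{\lambda,\mathcal{M}}) \;\lesssim\; \frac{C}{\gamma_{\lambda,\mathcal{M}}\, q^{(q-1)/2}\sqrt{n}} + \epsilon(\gamma_{\lambda,\mathcal{M}}).
\]
Now take $\limsup_{\lambda\rightarrow 0}$ of both sides. On the right, $\gamma_{\lambda,\mathcal{M}} \rightarrow \gamma^{\star,\mathcal{M}} \in (0,\infty)$, and the map $\gamma \mapsto \frac{C}{\gamma\,q^{(q-1)/2}\sqrt n} + \epsilon(\gamma)$ is continuous in a neighborhood of $\gamma^{\star,\mathcal{M}}$ — note $\epsilon(\gamma) = \sqrt{\tfrac{\log\log_2(4C/\gamma)}{n}} + \sqrt{\tfrac{\log(1/\delta)}{n}}$ is a composition of continuous functions there, since $4C/\gamma^{\star,\mathcal{M}} > 1$ (the normalized margin is at most $C$) so the inner logarithms are bounded away from their singularities. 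Hence the right-hand side converges to $\frac{C}{\gamma^{\star,\mathcal{M}} q^{(q-1)/2}\sqrt n} + \epsilon(\gamma^{\star,\mathcal{M}})$, which gives exactly the claimed bound (and the exponent $\tfrac{q-1}{2}$ on $q$ matches). The argument that the bound improves under $\mathcal{M}\le\mathcal{M'}$ is then immediate from Theorem~\ref{thm:marginnondecreasing} ($\gamma^{\star,\mathcal{M}}\le\gamma^{\star,\mathcal{M'}}$ makes the main term only smaller, and $\epsilon$ is decreasing in $\gamma$).

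I do not expect a genuine obstacle here — the content is entirely in Theorem~\ref{thm:binary_margin} and Proposition~\ref{prop:genbound}. The only point requiring a little care is the interchange of $\limsup_{\lambda\to0}$ with the fixed-probability event: one must observe that Proposition~\ref{prop:genbound} is a single ($\lambda$-independent) high-probability statement quantified over \emph{all} separating networks, so the limit is taken pointwise on that event rather than requiring a union bound over $\lambda$; and that $\epsilon(\cdot)$ is continuous at $\gamma^{\star,\mathcal{M}}$ so the limit of the bound is the bound at the limit.
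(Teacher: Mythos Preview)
Your proposal proves the wrong statement. The statement you were asked to prove is Proposition~\ref{prop:genbound} --- the margin-based generalization bound itself, holding uniformly over all depth-$q$ networks that separate the data --- but what you wrote is a proof of Corollary~\ref{cor:margingeneralization}, which \emph{applies} Proposition~\ref{prop:genbound} to $\Theta_{\lambda,\mathcal{M}}$ and passes to the limit via Theorem~\ref{thm:binary_margin}. Your argument even treats Proposition~\ref{prop:genbound} as a black box (``First I would fix the good event. By Proposition~\ref{prop:genbound}\dots''), so it cannot serve as a proof of that proposition.

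The paper's actual proof of Proposition~\ref{prop:genbound} has real content you are missing: given arbitrary $\Theta=(W_1,\dots,W_q)$, one rescales the layers to $\tilde W_j = \frac{(\prod_k \|W_k\|_F)^{1/q}}{\|W_j\|_F\,\|\Theta\|_F} W_j$, so that by $1$-positive-homogeneity of $\phi$ the network $f(\cdot;\tilde\Theta)$ computes exactly $f(\cdot;\bar\Theta)$, while by AM--GM each $\|\tilde W_j\|_F^2 \le 1/q$. This places $\tilde\Theta$ in the hypothesis class $\mathcal{F}_q=\{\|W_j\|_F\le 1/\sqrt{q}\}$, to which one applies (i) the Rademacher complexity bound $\hat{\mathfrak R}(\mathcal{F}_q)\lesssim C/(q^{(q-1)/2}\sqrt n)$ of Golowich et al.\ and (ii) the standard margin--Rademacher generalization theorem (the paper's Theorem~\ref{thm:rad+margin}), instantiated at $\gamma$ equal to the unnormalized margin of $\tilde\Theta$, which coincides with the normalized margin of $\Theta$. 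The $\log\log$ term in $\epsilon(\gamma)$ comes from that margin theorem, and the output bound $|f|\le C$ on $\mathcal F_q$ (needed there) follows from iterated contraction $\|\phi(Wv)\|_2\le\|W\|_F\|v\|_2$.

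For what it's worth, your proof of Corollary~\ref{cor:margingeneralization} is correct and is essentially the paper's argument, with extra (and welcome) care about the $\lambda$-independence of the high-probability event and the continuity of $\epsilon(\cdot)$ at $\gamma^{\star,\mathcal{M}}$.
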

We note that Proposition~\ref{prop:genbound} is stated directly in terms of the normalized margin in order to maintain consistency in our notation, whereas prior works state their results using a ratio between unnormalized margin and norms of the weight matrices~\citep{bartlett2017spectrally}. We provide the proof in the following section.

\subsection{Proof of Proposition~\ref{prop:genbound}}
\label{subsec:proofnnbound}

We prove the generalization error bounds stated in Proposition \ref{prop:genbound}~via Rademacher complexity and margin theory. 

Assume that our data $X, Y$ are drawn i.i.d. from ground truth distribution $\pdata$ supported on $\mathcal{X} \times \mathcal{Y}$. For some hypothesis class $\hyp$ of real-valued functions, we define the empirical Rademacher complexity $\erad(\hyp)$ as follows: 
\begin{align*}
\erad(\hyp) \triangleq \frac{1}{n} \E_{\epsilon_i}\left[\sup_{f \in\hyp} \sum_{i = 1}^n \epsilon_i f(x_i)\right]
\end{align*}
where $\epsilon_i$ are independent Rademacher random variables. 
For a classifier $f$, following the notation of Section~\ref{subsec:neural_net_gen} we will use $L(f) \triangleq  \Pr_{(x, y) \sim \pdata} (y f(x) \le 0)$ to denote the population 0-1 loss of the classifier $f$. The following classical theorem \citep{koltchinskii2002empirical}, \citep{kakade2009complexity} bounds generalization error in terms of the Rademacher complexity and margin loss.
\begin{theorem} [Theorem 2 of \citet{kakade2009complexity}]
	\label{thm:rad+margin}
	Let $(x_i, y_i)_{i = 1}^n$ be drawn iid from $\pdata$. We work in the binary classification setting, so $\mathcal{Y} = \{-1, 1\}$. Assume that for all $f \in \hyp$, we have $\sup_{x \in \mathcal{X}} |f(x)| \le C$. Then with probability at least $1 - \delta$ over the random draws of the data, for every $\gamma > 0$ and $f \in\hyp$, 
	\begin{align*}
	L(f) \le \frac{1}{n} \sum_{i = 1}^n \one(y_i f(x_i) < \gamma)+ \frac{4\erad(\hyp)}{\gamma} + \sqrt{\frac{\log \log_2 \frac{4C}{\gamma}}{n}} + \sqrt{\frac{\log(1/\delta)}{2n}}
	\end{align*}
\end{theorem}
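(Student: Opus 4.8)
This is the classical margin-based generalization bound, and the plan is the standard three-move argument: (i) sandwich the $0$--$1$ loss between a Lipschitz ramp and a margin indicator, (ii) bound the ramp-composed class by empirical Rademacher complexity via symmetrization and contraction, and (iii) take a union bound over dyadic scales of $\gamma$ so that a single high-probability event is valid for all $\gamma$ at once.

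First I would fix $\gamma$ and introduce the ramp $\phi_\gamma:\R\to[0,1]$ equal to $1$ on $(-\infty,0]$, dropping linearly to $0$ on $[0,\gamma]$, and equal to $0$ on $[\gamma,\infty)$; it is $1/\gamma$-Lipschitz and obeys $\one(t\le 0)\le\phi_\gamma(t)\le\one(t<\gamma)$. Evaluating at $t=y_if(x_i)$ yields the pointwise sandwich $L(f)=\E[\one(yf(x)\le 0)]\le\E[\phi_\gamma(yf(x))]$ together with $\frac1n\sum_i\phi_\gamma(y_if(x_i))\le\frac1n\sum_i\one(y_if(x_i)<\gamma)$. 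Next consider $\hyp_{\pm}\triangleq\{(x,y)\mapsto yf(x):f\in\hyp\}$, which is uniformly bounded by $C$, so $\phi_\gamma\circ\hyp_{\pm}$ is $[0,1]$-valued; applying the textbook empirical-Rademacher uniform-convergence bound for bounded classes, together with the Ledoux--Talagrand contraction inequality (which pulls the $1/\gamma$-Lipschitz $\phi_\gamma$ out as a factor $1/\gamma$ in front of $\erad(\hyp_\pm)$) and the observation that $\epsilon_iy_i$ is again i.i.d.\ Rademacher when $y_i\in\{-1,1\}$ (so $\erad(\hyp_\pm)=\erad(\hyp)$), gives for each fixed $\gamma$: with probability $\ge 1-\delta'$, for all $f\in\hyp$,
\begin{align*}
L(f)\le\frac1n\sum_{i=1}^n\one(y_if(x_i)<\gamma)+\frac{4\,\erad(\hyp)}{\gamma}+c_0\sqrt{\frac{\log(1/\delta')}{n}},
\end{align*}
where $c_0$ is universal; the $\sqrt{\log(1/\delta')/n}$ deviation term comes from the bounded-differences (McDiarmid) inequality, used both to concentrate the empirical loss and to pass from population to empirical Rademacher complexity, and the constant $4$ arises as the slack in the version of \citet{kakade2009complexity}.

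Finally I would upgrade ``fixed $\gamma$'' to ``all $\gamma$''. For $\gamma>2C$ the empirical margin term is identically $1$ (since $|y_if(x_i)|\le C$), so the bound is vacuous. For $\gamma\in(0,2C]$, instantiate the fixed-$\gamma$ bound on the dyadic grid $\gamma_k\triangleq C\cdot 2^{1-k}$, $k=1,2,\dots$, with budget $\delta_k\triangleq 6\delta/(\pi^2k^2)$ so that $\sum_k\delta_k=\delta$; given a target $\gamma$, pick $k$ with $\gamma_k\in(\gamma/2,\gamma]$, which enlarges the margin term and loses only a factor $2$ in $1/\gamma$, while $k=O(\log_2(C/\gamma))$ forces $\log(1/\delta_k)=\log(1/\delta)+O(\log\log_2(4C/\gamma))$. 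Relabeling constants then produces exactly the two error terms $\sqrt{\log\log_2(4C/\gamma)/n}$ and $\sqrt{\log(1/\delta)/(2n)}$. The routine ingredients --- ramp inequalities, contraction, bounded-differences concentration --- are off the shelf; the one step needing care is this last stratification, where the grid and the budget $\delta_k$ must be chosen so the $\gamma$-dependence of the deviation enters only through $\log\log_2(4C/\gamma)$ rather than through the grid resolution, and the numerical constants must be tracked so as to land precisely on $\tfrac{4\erad(\hyp)}{\gamma}+\sqrt{\log\log_2(4C/\gamma)/n}+\sqrt{\log(1/\delta)/(2n)}$. If the exact constants are not required, one may instead simply cite \citet{kakade2009complexity} (and the earlier \citet{koltchinskii2002empirical}).
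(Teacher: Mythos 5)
The paper does not prove this statement at all---it is imported verbatim as Theorem 2 of \citet{kakade2009complexity} (following \citet{koltchinskii2002empirical}), so there is no in-paper proof to compare against. Your sketch is the standard and correct argument underlying that cited result (ramp sandwich, Ledoux--Talagrand contraction, dyadic union bound over $\gamma$ with a summable $\delta_k$ budget), and you correctly flag that the only delicate part is tracking constants to land exactly on the stated form, which is precisely why the paper defers to the reference.
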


We will prove Proposition~\ref{prop:genbound} by applying the Rademacher complexity bounds of \citet{golowich2017size} with Theorem~\ref{thm:rad+margin}. 

First, we show the following lemma bounding the generalization of neural networks whose weight matrices have bounded Frobenius norms. For this proof we drop the superscript $^\textup{NN}$ as it is clear from context.
\begin{lemma}
\label{lem:generrperf}
Define the hypothesis class $\hyp_q$ over depth-$q$ neural networks by 
\begin{align*}
\hyp_q = \left\{f(\cdot;\genparam) : \|W_j\|_F \le \frac{1}{\sqrt{q}} \ \forall j\right\}
\end{align*}
Let $C \triangleq \sup_{x \in\mathcal{X}} \|x\|_2$. Recall that $L(\genparam)$ denotes the 0-1 population loss $L(f(\cdot;\genparam))$. Then for any $f(\cdot;\genparam) \in \hyp_q$ classifying the training data correctly with unnormalized margin $\gamma_{\genparam} \triangleq \min_i y_i f(x_i;\genparam) > 0$, with probability at least $1 - \delta$, 
\begin{align}
L(\genparam) \lesssim \frac{C}{\gamma_{\genparam} q^{(q - 1)/2}\sqrt{n}} + \sqrt{\frac{\log \log_2 \frac{4C}{\gamma_{\genparam}}}{n}} + \sqrt{\frac{\log(1/\delta)}{n}}
\end{align}
Note the dependence on the unnormalized margin rather than the normalized margin.
\end{lemma}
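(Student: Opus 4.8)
The plan is to obtain the bound by applying the Rademacher-complexity margin bound of Theorem~\ref{thm:rad+margin} to the class $\hyp_q$, controlling $\erad(\hyp_q)$ via the size-independent neural net Rademacher bound of \citet{golowich2017size}. First I would check the boundedness hypothesis required by Theorem~\ref{thm:rad+margin}. Since $\phi$ is $1$-positive-homogeneous, plugging in $x=0$ gives $\phi(0)=c\,\phi(0)$ for all $c>0$, hence $\phi(0)=0$; combined with $1$-Lipschitzness this yields $\|\phi(v)\|_2\le\|v\|_2$ for the element-wise application. Therefore, for any $f(\cdot;\genparam)\in\hyp_q$ a layer-by-layer induction gives
$
|f(x;\genparam)| \le \Big(\prod_{j=1}^q \|W_j\|_{\textup{op}}\Big)\|x\|_2 \le \Big(\prod_{j=1}^q \|W_j\|_F\Big)\|x\|_2 \le q^{-q/2}\|x\|_2 \le q^{-q/2}\,C \le C,
$
using $\|W_j\|_F\le 1/\sqrt q$ and $q\ge 2$. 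So every $f\in\hyp_q$ is bounded by $C$ on $\mathcal{X}$, and Theorem~\ref{thm:rad+margin} applies with this constant $C$.

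Next I would invoke Theorem~1 of \citet{golowich2017size}, which bounds the empirical Rademacher complexity of depth-$q$ networks with $1$-Lipschitz positive-homogeneous activations and Frobenius-norm constraints $\|W_j\|_F\le M_j$ by $O\!\big(C\,(1+\sqrt q)\,\prod_{j=1}^q M_j/\sqrt n\big)$, where $C=\sup_{x\in\mathcal{X}}\|x\|_2$ bounds the data norm. Substituting $M_j = 1/\sqrt q$ for all $j$ gives $\prod_{j=1}^q M_j = q^{-q/2}$, so
$
\erad(\hyp_q)\lesssim \frac{C\,\sqrt q\, q^{-q/2}}{\sqrt n} = \frac{C}{q^{(q-1)/2}\sqrt n},
$
which is exactly the leading term of the target bound.

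Finally I would combine the pieces. Given $f(\cdot;\genparam)\in\hyp_q$ that separates the training data with unnormalized margin $\gamma_{\genparam}\triangleq\min_i y_i f(x_i;\genparam)>0$, apply Theorem~\ref{thm:rad+margin} with the choice $\gamma=\gamma_{\genparam}$: the empirical term $\frac1n\sum_i \one(y_i f(x_i;\genparam)<\gamma_{\genparam})$ vanishes, leaving
$
L(\genparam)\le \frac{4\,\erad(\hyp_q)}{\gamma_{\genparam}} + \sqrt{\tfrac{\log\log_2(4C/\gamma_{\genparam})}{n}} + \sqrt{\tfrac{\log(1/\delta)}{2n}},
$
and plugging in the Rademacher estimate yields the stated inequality. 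There is no substantial obstacle beyond faithfully quoting \citet{golowich2017size}; the only points requiring a little care are (i) confirming that their $O(\sqrt q)$ depth factor combines with the product $q^{-q/2}$ of per-layer norm bounds to give the clean $q^{-(q-1)/2}$ scaling, and (ii) noting that using the loose bound $C$ (rather than the sharper $q^{-q/2}C$) inside the $\log\log$ term only weakens the conclusion, so the statement as written still holds.
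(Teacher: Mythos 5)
Your proposal is correct and follows essentially the same route as the paper's own proof: establish $\sup_{x\in\mathcal{X}}|f(x;\genparam)|\le C$ via the layer-wise contraction argument, quote Theorem~1 of \citet{golowich2017size} to get $\erad(\hyp_q)\lesssim C/(q^{(q-1)/2}\sqrt n)$, and apply Theorem~\ref{thm:rad+margin} with $\gamma=\gamma_{\genparam}$ so the empirical margin-loss term vanishes. No gaps.
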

\begin{proof}
	We first claim that $\sup_{f(\cdot;\genparam) \in\mathcal{F}_q} \sup_{x \in\mathcal{X}} f(x;\genparam) \le C$. To see this, for any $f(\cdot;\genparam) \in \hyp_q$, 
	\begin{align}
		f(x;\genparam) &= W_q\phi(\cdots \phi(W_1 x) \cdots) \notag\\
		&\le \|W_q\|_F \|\phi(W_{q - 1} \phi(\cdots \phi(W_1 x) \cdots)\|_2 \notag \\
		&\le \|W_q\|_F \|W_{q - 1} \phi(\cdots \phi(W_1x)\cdots)\|_2 \tag{since $\phi$ is 1-Lipschitz and $\phi(0) = 0$, so $\phi$ performs a contraction}\\
		&< \|x\|_2 \le C \tag{repeatedly applying this argument and using $\|W_j\|_F < 1$}
	\end{align}
	Furthermore, by Theorem 1 of \citet{golowich2017size}, $\erad(\hyp_q)$ has upper bound 
	\begin{align*}
		\erad(\hyp_q) \lesssim \frac{C}{q^{(q - 1)/2}\sqrt{n}}
	\end{align*}
	Thus, we can apply Theorem~\ref{thm:rad+margin} to conclude that for all $f(\cdot;\genparam) \in \hyp_q$ and all $\gamma > 0$, with probability $1 - \delta$, 
	\begin{align*}
		L(\genparam) \lesssim \frac{1}{n} \sum_{i = 1}^n \one (y_i f(x_i;\genparam) < \gamma) + \frac{C}{\gamma q^{(q - 1)/2}\sqrt{n}} + \sqrt{\frac{\log \log_2 \frac{4C}{\gamma}}{n}} + \sqrt{\frac{\log(1/\delta)}{n}}
	\end{align*}
	In particular, by definition choosing $\gamma = \gamma_{\genparam}$ makes the first term on the LHS vanish and gives the statement of the lemma.
\end{proof}
\begin{proof}[Proof of Proposition~\ref{prop:genbound}]
	Given parameters $\genparam = (W_1, \ldots, W_q)$, we first construct parameters $\tilde{\genparam} = (\tilde{W}_1, \ldots, \tilde{W}_q)$ such that $f(\cdot;\bgenparam)$ and $f(\cdot;\tilde{\genparam})$ compute the same function, and $\|\tilde{W}_1\|_F^2 = \|\tilde{W}_2\|_F^2 = \cdots = \|\tilde{W}_q\|_F^2 \le \frac{1}{q}$. To do this, we set 
	\begin{align*}
		\tilde{W}_j = \frac{(\prod_{k = 1}^q \|W_k\|_F)^{1/k}}{\|W_j\|_F \|\genparam\|_F} W_j
	\end{align*} 
	By construction
	\begin{align}
		\|\tilde{W}_j\|_F^2 &= \frac{(\prod_{k = 1}^q \|W_k\|_F^2)^{1/k}}{\|\genparam\|_F^2} \notag\\
		&= \frac{(\prod_{k = 1}^q \|W_k\|_F^2)^{1/k}}{\sum_{k = 1}^q \|W_k\|_F^2}\notag\\
		&\le \frac{1}{k} \tag{by the AM-GM inequality}
	\end{align}
	Furthermore, we also have 
	\begin{align}
		f(x;\tilde{\genparam}) &= \tilde{W}_q\phi(\cdots \phi(\tilde{W}_1 x) \cdots ) \notag\\
		&= \prod_{j = 1}^q  \frac{(\prod_{k = 1}^q \|W_k\|_F)^{1/k}}{\|W_j\|_F \|\genparam\|_F} W_q \phi(\cdots \phi(W_1 x) \cdots ) \tag{by the homogeneity of $\phi$}\\
		&= \frac{1}{\|\genparam\|_F^q} f(x;\genparam) \notag\\
		&= f\left(x;\frac{\genparam}{\|\genparam\|_F}\right) \tag{since $f$ is $q$-homogeneous in $\genparam$}\\
		&= f(x;\bgenparam) \notag
	\end{align}
	Now we note that by construction, $L(\genparam) = L(\tilde{\genparam})$. Now $f(\cdot;\tilde{\genparam})$ must also classify the training data perfectly, has unnormalized margin $\gamma$, and furthermore $f(\cdot;\tilde{\genparam}) \in \hyp_q$. As a result, Lemma~\ref{lem:generrperf} allows us to conclude the desired statement.
\end{proof}
To conclude Corollary~\ref{cor:margingeneralization}, we apply the above on $\genparam_{\lambda, \mathcal{M}}$ and use Theorem~\ref{thm:binary_margin}.
\begin{proof}[Proof of Corollary~\ref{cor:margingeneralization}]
	Applying the statement of Proposition~\ref{prop:genbound}, with probability $1 - \delta$, for all $\lambda > 0$,
	\begin{align*}
		L(\genparam_{\lambda, \mathcal{M}}) \lesssim \frac{C}{\gamma_{\lambda, \mathcal{M}} q^{(q - 1)/2}\sqrt{n}} + \epsilon(\gamma_{\lambda, \mathcal{M}})	
	\end{align*}
	Now we take the $\limsup$ of both sides as $\lambda \rightarrow 0$: 
	\begin{align}
		\limsup_{\lambda \rightarrow 0} L(\genparam_{\lambda, \mathcal{M}}) &\lesssim \limsup_{\lambda \rightarrow 0} \frac{C}{\gamma_{\lambda, \mathcal{M}} q^{(q - 1)/2}\sqrt{n}} + \epsilon(\gamma_{\lambda, \mathcal{M}}) \nonumber	\\
		&\lesssim \frac{C}{\gamma^{\star, \mathcal{M}} q^{(q - 1)/2}\sqrt{n}} + \epsilon(\gamma^{\star, \mathcal{M}}) \tag{by Theorem~\ref{thm:binary_margin}}
	\end{align}
\end{proof}

		\section{Missing Proofs in Section~\ref{sec:optimization}}
\subsection{Detailed Setup}
\label{subsec:optsetup}
We first write our regularity assumptions on $\Phi$, $R$, and $V$ in more detail:
\begin{assumption}[Regularity conditions on $\Phi$, $R$, $V$]
	\label{assu:R}
	$R$ is convex, nonnegative, Lipschitz, and smooth: $\exists M_R, C_R$ such that $\|\nabla^2 R\|_{op} \le C_R$, and $\|\nabla R\|_2 \le M_R$. 
\end{assumption}
\begin{assumption}
	\label{assu:Phi}
	$\Phi$ is differentiable, bounded and Lipschitz on the sphere: $\exists B_{\Phi}, M_{\Phi}$ such that $\|\Phi(\bparam)\| \le B_{\Phi} \ \forall \bparam \in \sphd$, and $|\Phi_i(\bparam) - \Phi_i(\bparam')| \le M_{\Phi}\|\bparam - \bparam'\|_2 \ \forall \bparam, \bparam' \in \sphd$.
\end{assumption}

\begin{assumption}
	\label{assu:V}
	$V$ is Lipschitz and upper and lower bounded on the sphere: $\exists b_V, B_V, M_V$ such that $0 < b_V \le V(\bparam) \le B_V \ \forall \bparam \in \sphd$, and $\|\nabla V(\bparam)\|_2 \le M_V \ \forall \bparam \in \sphd$. 
\end{assumption}

We state the version of Theorem~\ref{thm:polytimeopt-noparam} that collects these parameters:

\begin{theorem} [Theorem~\ref{thm:polytimeopt-noparam} with problem parameters]
	\label{thm:polytimeopt}
	Suppose that $\Phi$ and $V$ are 2-homogeneous and Assumptions \ref{assu:R}, \ref{assu:Phi}, and \ref{assu:V} hold. Fix a desired error threshold $\epsilon > 0$. Suppose that from a starting distribution $\dist_0$, a solution to the dynamics in \eqref{eq:noisywassersteindynamics} exists. Choose
	\begin{align*}
	\sigma &\triangleq \exp(-d \log(1/\epsilon)\poly(k, M_V, M_R, M_{\Phi}, b_V, B_V, C_R, B_{\Phi}, L[\dist_0] - L^{\star})) \\
	t_{\epsilon} &\triangleq \frac{d^2}{\epsilon^4}\poly(\log(1/\epsilon), k, M_V, M_R, M_{\Phi}, b_V, B_V, C_R, B_{\Phi}, L[\dist_0] - L^{\star})
	\end{align*}
	Then it must hold that $\min_{0 \le t \le t_{\epsilon}} L[\dist_t] - \inf_{\dist} L[\dist]\le 2\epsilon$. 
\end{theorem}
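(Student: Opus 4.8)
The plan is to carry out the convex-analysis-over-measures strategy of \citet{chizat2018global} in a quantitative form, using the injected uniform noise $\sigma U^d$ together with the $2$-homogeneity of $\Phi$ and $V$ to convert a descent \emph{direction} into a concrete per-unit-time decrease of $L[\dist_t]$. First I would record the consequences of Assumptions~\ref{assu:R}--\ref{assu:V}. Because $\Phi$ and $V$ are $2$-homogeneous, $L[\dist]$ depends on $\dist$ only through the pushforward of $\|\param\|^2\,d\dist(\param)$ onto $\sphd$; calling this reduced positive measure $\nu$, the map $\nu\mapsto(\int\Phi\,d\nu,\int V\,d\nu)$ is linear and $R$ is convex, so $L$ is convex in $\nu$, and $L\ge 0$ since $R\ge 0$ and $V(\param)\ge b_V\|\param\|^2\ge 0$. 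From $\int V\,d\dist_t\le L[\dist_t]$ one gets $\int\|\param\|^2\,d\dist_t\le L[\dist_t]/b_V$ and hence $\|\int\Phi\,d\dist_t\|\le B_\Phi L[\dist_t]/b_V$; with the smoothness of $R$ this bounds $L'[\dist_t]$ and $\nabla_\param L'[\dist_t]$ uniformly on $\sphd$ and makes $\bar\theta\mapsto L'[\dist_t](\bar\theta)$ Lipschitz with constant controlled by the regularity parameters. Finally, along the perturbed dynamics the Wasserstein term never increases $L$ (since $\langle L'[\dist_t],-\nabla\cdot(v[\dist_t]\dist_t)\rangle=-\int\|\nabla L'[\dist_t]\|^2\,d\dist_t\le 0$) and the noise term changes $L$ at rate at most $\sigma\poly(\text{params})$, so $L[\dist_t]$ stays within $O(1)$ of $L[\dist_0]$ over the horizon $t\le t_\epsilon$, keeping all of these bounds valid.

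Next, I would prove the dichotomy. Fix $t$ with $\Delta_t:=L[\dist_t]-L^\star>\epsilon$. Testing convexity of $L$ in $\nu$ against a near-optimal $\nu^\star$ gives $\Delta_t\lesssim\int L'[\dist_t]\,d\nu_t-\int L'[\dist_t]\,d\nu^\star$; combining this with the stationarity conditions of the convex problem ($L'[\nu^\star]\ge 0$ on $\sphd$, equal to $0$ $\nu^\star$-a.e.) and the bounds just established yields: either there is a strong descent direction, $\min_{\bar\theta\in\sphd}L'[\dist_t](\bar\theta)\le -c\,\Delta_t$ with $c=\poly^{-1}(\text{params})$, or the gradient-flow term already forces $\tfrac{d}{dt}L[\dist_t]\le-c'\Delta_t^2$. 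Since $L\ge 0$ bounds the total possible decrease by $L[\dist_0]-L^\star$, the second case cannot persist for more than $\poly(\text{params})/\epsilon^2$ time, so it suffices to treat the first case.

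Now suppose such a $\bar\theta^\star_t$ exists on a time window. By $2$-homogeneity $L'[\dist_t]$ is $2$-homogeneous in $\param$, so along the ray through $\bar\theta^\star_t$ the radial component of $v[\dist_t]=-\nabla_\param L'[\dist_t]$ at radius $r$ is $-2r\,L'[\dist_t](\bar\theta^\star_t)\ge 2c\Delta_t\,r>0$: mass there is driven outward, $\dot r\ge 2c\Delta_t r$, so its second moment grows like $e^{4c\Delta_t(\cdot)}$ and thus its contribution to $\int\Phi\,d\dist$ and $\int V\,d\dist$ grows exponentially. The noise continuously supplies a seed: per unit time, $U^d$-mass of order the volume of a cap around $\bar\theta^\star_t$ of angular radius $\asymp\Delta_t/\mathrm{Lip}(L')$ (on which $L'[\dist_t]\le -c\Delta_t/2$), which is $2^{-\Theta(d)}$; multiplying by $\sigma$, amplifying, and inserting into the first-order expansion $L[\dist_{t+h}]-L[\dist_t]\approx h\langle L'[\dist_t],\tfrac{d}{dt}\dist_t\rangle$, the loss decreases, and within time $O\!\big(\tfrac{d}{\Delta_t}\log\tfrac1\epsilon\cdot\poly\big)$ the accumulated decrease is at least $\asymp\Delta_t^2$ — the ceiling being where $\int\Phi\,d\dist$ has moved by $\asymp\Delta_t$, past which $L'[\dist_t](\bar\theta^\star_t)$ need no longer be very negative. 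The choice $\sigma=\exp(-d\log(1/\epsilon)\poly)$ is precisely what makes the $\log\big(1/(\sigma\cdot2^{-\Theta(d)})\big)$ factor equal to $O(d\log(1/\epsilon)\poly)$. Packaging this as a one-shot lemma (from any $\dist_t$ with $\Delta_t>\epsilon$, within time $\le\poly(\text{params},\log(1/\epsilon))\,d/\Delta_t^2$ either $\Delta$ has fallen below $\epsilon$ or $\Delta$ has shrunk by a constant factor) and iterating over the $O(\log((L[\dist_0]-L^\star)/\epsilon))$ dyadic scales $\Delta\in\{\Delta_0,\Delta_0/2,\dots\}$ — the sum dominated by the last scale $\Delta\asymp\epsilon$ — gives $t_\epsilon=\tfrac{d^2}{\epsilon^4}\poly(\log(1/\epsilon),k,M_V,M_R,M_\Phi,b_V,B_V,C_R,B_\Phi,L[\dist_0]-L^\star)$ and hence $\min_{0\le t\le t_\epsilon}L[\dist_t]-L^\star\le 2\epsilon$.

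The main obstacle is making the pumping argument rigorous when the strong-descent direction $\bar\theta^\star_t$ \emph{drifts}: $\int\Phi\,d\dist_t$ moves as mass is pumped, the uniform noise is spread over the whole sphere, and the time $\tfrac{d}{\Delta_t}\log\tfrac1\epsilon\poly$ needed to amplify a seed is long when $\Delta_t$ is small, so one must show that over any window in which $\Delta$ has not yet halved the good direction stays essentially fixed (again by convexity: $\int\Phi\,d\dist_t$ cannot move far without $L$ decreasing, and $L$ is bounded below by $0$), so that the steady stream of seed mass remains aligned with the current $\bar\theta^\star_t$ long enough to be useful. Quantifying this feedback between ``direction stays put'' and ``loss goes down'', and balancing the $2^{-\Theta(d)}$ cap volume, the $\Delta_t$-proportional pumping rate, and the $\Delta_t^2$ per-phase ceiling, is what forces the $d^2/\epsilon^4$ and $\exp(-d\log(1/\epsilon))$ scalings.
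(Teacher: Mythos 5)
Your plan follows essentially the same route as the paper's proof: the convexity dichotomy between a large average gradient (giving $\tfrac{d}{dt}L\lesssim-\Delta_t^2$ directly) and a strong descent direction on the sphere, the $\sigma U^d$ noise seeding a spherical cap of volume $\exp(-\Theta(d\log(1/\tau)))$ around that direction, exponential amplification of its mass via $2$-homogeneity, and---crucially---controlling the drift of the descent direction by bounding the movement of $\int\Phi\,d\dist_t$ in terms of the realized decrease of $L$ (the paper's Lemma~\ref{lem:fprimechange}, combined with a contradiction against the second-moment bound of Lemma~\ref{lem:Wbound}), which is exactly the feedback loop you identify as the main obstacle. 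One quantitative correction to your accounting: the loss decrease extractable from a window of length $l\asymp d\log(1/\epsilon)/\epsilon$ in the descent-direction case is $\asymp\Delta_t^2/l$, not $\asymp\Delta_t^2$ (it comes from a Cauchy--Schwarz over the window, as in equation~\ref{eq:intQnormub}), and it is precisely this extra factor of $l$ that turns the iteration count into the stated $d^2/\epsilon^4$ rather than the $d/\epsilon^2$ your one-shot halving lemma would literally yield.
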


\subsection{Proof Outline of Theorem~\ref{thm:polytimeopt}}
\label{subsec:optoutline}
In this section, we will provide an outline of the proof of Theorem~\ref{thm:polytimeopt}. We will fill in the missing details in Section~\ref{subsec:optproof}. 

Throughout the proof, it will be useful to keep track of $W_t \triangleq \sqrt{\E_{\theta \sim \dist_t} [\|\theta\|_2^2]}$, which measures the second moment of $\dist_t$. For convenience, we will also define the constant $B_L \triangleq M_R B_{\Phi} + B_V$. The following lemma first states that this second moment will never become too large.
	
\begin{lemma}
	\label{lem:Wbound}
	Choose any $t \le \sigma B_L/b_V$. For all $0 \le t' \le t$, $W_{t'}^2 \le \frac{L[\dist_0] + \sigma tB_L}{b_V - t \sigma B_L}$. In particular, for all $t \le t_{\epsilon}$, we have $W_{t} \le W_{\epsilon}$, where $W_{\epsilon}$ is defined as follows: 
	\begin{align}
	W_{\epsilon} \triangleq \sqrt{\frac{L[\dist_0] + \sigma t_{\epsilon} B_L}{b_v - t_{\epsilon} \sigma B_L}}\label{eq:w_eps}
	\end{align}
\end{lemma}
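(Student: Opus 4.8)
The plan is to control $W_t^2$ indirectly, by tracking the functional $L[\dist_t]$ itself and then exploiting the coercivity of the regularizer $V$ (which, being $2$-homogeneous and bounded below on the sphere, satisfies $V(\param)\ge b_V\|\param\|_2^2$). First I would differentiate $L[\dist_t]=R\!\big(\int\Phi\,d\dist_t\big)+\int V\,d\dist_t$ in $t$; by the definition of $L'[\dist]$ this is the pairing of $L'[\dist_t]$ with $\partial_t\dist_t$, so substituting the dynamics~\eqref{eq:noisywassersteindynamics} and integrating by parts in the transport term gives
\begin{align*}
\frac{d}{dt}L[\dist_t] = -\sigma\!\int\! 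L'[\dist_t]\,d\dist_t \;+\; \sigma\!\int\! L'[\dist_t]\,dU^d \;-\; \int\! \|\nabla L'[\dist_t]\|_2^2\,d\dist_t ,
\end{align*}
and the last term is $\le 0$, so it can be discarded.

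Next I would bound the two remaining terms using only the regularity on $\sphd$ together with $2$-homogeneity. Write $a_t \triangleq \int\Phi\,d\dist_t$. For the injection term, $\int L'[\dist_t]\,dU^d = \langle R'(a_t),\int\Phi\,dU^d\rangle + \int V\,dU^d \le M_R B_{\Phi} + B_V = B_L$, using $\|\nabla R\|_2 \le M_R$ and the sphere bounds $\|\Phi\|\le B_{\Phi}$, $V \le B_V$. For the drift term, $2$-homogeneity gives $\|\Phi(\param)\|\le B_{\Phi}\|\param\|_2^2$ and $V(\param)\ge b_V\|\param\|_2^2 \ge 0$, hence $\|a_t\|\le B_{\Phi}W_t^2$ and $\int V\,d\dist_t \ge 0$, so $-\int L'[\dist_t]\,d\dist_t = -\langle R'(a_t),a_t\rangle - \int V\,d\dist_t \le M_R B_{\Phi} W_t^2 \le B_L W_t^2$. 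Combining, $\frac{d}{dt}L[\dist_t] \le \sigma B_L\,(1 + W_t^2)$.

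To finish I would combine this with coercivity and a supremum bootstrap. Since $R\ge 0$ and $V(\param)\ge b_V\|\param\|_2^2$, for every $s$ we have $b_V W_s^2 \le \int V\,d\dist_s \le L[\dist_s]$. Fix $t$ in the stated range and set $\bar W^2 \triangleq \sup_{0\le s\le t}W_s^2$, which is finite because a solution exists (so $s\mapsto W_s$ is continuous on $[0,t]$). Integrating the differential inequality from $0$ to any $t'\le t$ yields $L[\dist_{t'}] \le L[\dist_0] + \sigma B_L\!\int_0^{t'}(1+W_s^2)\,ds \le L[\dist_0] + \sigma t B_L\,(1+\bar W^2)$, whence $b_V W_{t'}^2 \le L[\dist_0] + \sigma t B_L + \sigma t B_L\,\bar W^2$. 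Taking the supremum over $t'\le t$ on the left and rearranging (the stated range of $t$ keeps $\sigma t B_L < b_V$, which still holds at $t=t_\epsilon$ because $\sigma$ is chosen exponentially small) gives $\bar W^2 \le \frac{L[\dist_0]+\sigma t B_L}{b_V - \sigma t B_L}$, the claimed bound; specializing to $t=t_\epsilon$ and reading off $W_\epsilon$ from~\eqref{eq:w_eps} yields the ``in particular'' assertion.

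The step I expect to require the most care is the first one: the test functions at play ($\|\cdot\|_2^2$, via the identification $W_t^2=\int\|\param\|_2^2\,d\dist_t$, and $L'[\dist_t]$) are $2$-homogeneous, hence unbounded on $\R^{d+1}$, so differentiating $\int g\,d\dist_t$ under the integral sign and discarding boundary contributions in the integration by parts is not automatic for a general measure-valued solution. This is exactly where the hypothesis that a solution to~\eqref{eq:noisywassersteindynamics} exists (with enough decay/regularity, or alternatively the discrete-time recursion mentioned after Theorem~\ref{thm:polytimeopt-noparam}) must be invoked; the remaining estimates are routine.
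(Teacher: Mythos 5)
Your proposal is correct and follows essentially the same route as the paper: establish the differential inequality $\frac{d}{dt}L[\dist_t]\le \sigma B_L(1+W_t^2)$ by pairing $L'[\dist_t]$ with the dynamics and discarding the nonpositive transport term (the paper's Lemma~\ref{lem:Ltimederiv} and Corollary~\ref{cor:Ltimederivub}), then integrate, apply the coercivity bound $b_VW_s^2\le\int V\,d\dist_s\le L[\dist_s]$, and close with a supremum (the paper uses an $\argmax$ over $[0,t]$) to rearrange into the stated bound. The only cosmetic difference is that you bound $-\int L'[\dist_t]\,d\dist_t$ via $-\langle R'(a_t),a_t\rangle$ directly rather than via the pointwise sphere bound $|L'[\dist_t](\bparam)|\le B_L$ plus $2$-homogeneity, which yields the same constant.
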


Next, we will prove the following statement, which intuitively says that for an arbitrary choice of $\bparam \in \sphd$, if $L'[\dist_t](\bparam)$ changes by a large amount between time steps $t$ and $t + l$, the objective function must also have decreased a lot. 
\begin{lemma}
	\label{lem:fprimechange}
	Define the quantity $Q(t) \triangleq \int \Phi d\dist_t$. For every $\bparam \in \sphd$ and $0 \le t \le t + l \le t_{\epsilon}$, $\exists c_1 \triangleq \poly(k, C_R, B_{\Phi}, M_{\Phi}, B_L)$ such that 
	\begin{align}
	&|L'[\dist_t](\bparam)- L'[\dist_{t + l}](\bparam)| \le C_R B_{\Phi} \int_{t}^{t + l} \|Q'(t)\|_1\label{eq:Lprimechangeub}\\
	&\le \sigma l c_1 (W_{\epsilon}^2 + 1) + c_1 W_{\epsilon} \sqrt{l} (L[\dist_t] - L[\dist_{t+ l}] + \sigma l c_1 (W_{\epsilon}^2 + 1))^{1/2}
	\label{eq:intQnormub}
	\end{align}
	where $W_{\epsilon}$ is defined as in~\eqref{eq:w_eps}. 
\end{lemma}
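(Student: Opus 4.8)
The plan is to establish the two displayed inequalities separately. For \eqref{eq:Lprimechangeub}: since $V(\bparam)$ does not depend on $t$ in $L'[\dist_t](\bparam) = \langle R'(Q(t)),\Phi(\bparam)\rangle + V(\bparam)$, the difference telescopes to $L'[\dist_t](\bparam) - L'[\dist_{t+l}](\bparam) = \langle R'(Q(t)) - R'(Q(t+l)),\Phi(\bparam)\rangle$. I would then apply Cauchy--Schwarz together with $\|\Phi(\bparam)\|_2 \le B_{\Phi}$ (valid since $\bparam\in\sphd$), then $\|R'(Q(t)) - R'(Q(t+l))\|_2 \le C_R\|Q(t)-Q(t+l)\|_2$ from $\|\nabla^2 R\|_{op}\le C_R$, and finally $Q(t) - Q(t+l) = -\int_t^{t+l} Q'(s)\,ds$ with the triangle inequality for integrals and $\|\cdot\|_2 \le \|\cdot\|_1$, which produces $C_R B_{\Phi}\int_t^{t+l}\|Q'(s)\|_1\,ds$.

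For \eqref{eq:intQnormub}, the task is to bound $\int_t^{t+l}\|Q'(s)\|_1\,ds$ in terms of the loss decrease. Write $\mu_s$ for the right-hand side of \eqref{eq:noisywassersteindynamics}, so $\frac{d}{ds}\dist_s = \mu_s$. Differentiating $Q(s) = \int \Phi\,d\dist_s$ under the integral and integrating by parts against the divergence term gives, coordinatewise, $Q_i'(s) = \int \Phi_i\,d\mu_s = -\sigma Q_i(s) + \sigma\int\Phi_i\,dU^d + \int \langle\nabla\Phi_i(\param), v[\dist_s](\param)\rangle\,d\dist_s(\param)$. Using $2$-homogeneity of $\Phi$ (so $|\Phi_i(\param)| \le B_{\Phi}\|\param\|_2^2$ and, by the regularity assumptions, $\|\nabla\Phi_i(\param)\|_2 \lesssim (M_{\Phi}+B_{\Phi})\|\param\|_2$), the bound $|\Phi_i(\bparam)| \le B_{\Phi}$ on $\sphd$, the second-moment bound $W_s \le W_{\epsilon}$ of Lemma~\ref{lem:Wbound}, and Cauchy--Schwarz on $\E_{\dist_s}[\|\param\|_2\,\|v[\dist_s](\param)\|_2]$, I get $\|Q'(s)\|_1 \lesssim k\sigma B_{\Phi}(W_{\epsilon}^2+1) + k(M_{\Phi}+B_{\Phi})W_{\epsilon}\big(\int\|v[\dist_s](\param)\|_2^2\,d\dist_s(\param)\big)^{1/2}$. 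The crucial ingredient is an energy identity: the chain rule for the functional $L$ along the flow gives $\frac{d}{ds}L[\dist_s] = \int L'[\dist_s]\,d\mu_s = -\sigma\int L'[\dist_s]\,d\dist_s + \sigma\int L'[\dist_s]\,dU^d - \int \|v[\dist_s](\param)\|_2^2\,d\dist_s(\param)$, the last term because $v[\dist_s] = -\nabla L'[\dist_s]$ and integration by parts. Since $|L'[\dist_s](\param)| \le B_L\|\param\|_2^2$ by $2$-homogeneity, integrating over $[t,t+l]$ and applying Lemma~\ref{lem:Wbound} yields $\int_t^{t+l}\!\!\int\|v[\dist_s]\|_2^2\,d\dist_s\,ds \le L[\dist_t] - L[\dist_{t+l}] + \sigma l B_L(W_{\epsilon}^2+1)$, whose right-hand side is automatically nonnegative. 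Substituting this back, applying Cauchy--Schwarz once more to $\int_t^{t+l}\big(\int\|v[\dist_s]\|_2^2\,d\dist_s\big)^{1/2}ds \le \sqrt{l}\,\big(\int_t^{t+l}\!\!\int\|v[\dist_s]\|_2^2\,d\dist_s\,ds\big)^{1/2}$, and multiplying through by $C_R B_{\Phi}$ gives \eqref{eq:intQnormub}, after absorbing $k, C_R, B_{\Phi}, M_{\Phi}, B_L$ into $c_1 = \poly(k, C_R, B_{\Phi}, M_{\Phi}, B_L)$ and freely enlarging the constant inside the square root from $B_L$ to $c_1$ using nonnegativity of the bracket.

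I expect the main obstacle to be the measure-theoretic bookkeeping rather than the estimates: justifying differentiation under the integral for $Q(s)$ and $L[\dist_s]$, the integration-by-parts identities against $\nabla\cdot(v[\dist_s]\dist_s)$, and the fact that $\dist_s$ need not be supported on $\sphd$, so one must lean on $2$-homogeneity to push the sphere bounds on $\Phi$, $V$, and $\nabla\Phi$ out to all of $\R^{d+1}$ and on Lemma~\ref{lem:Wbound} to keep the second moment $W_s$ polynomially bounded throughout $[0,t_{\epsilon}]$; all of this is under only the standing hypothesis that a solution to \eqref{eq:noisywassersteindynamics} exists.
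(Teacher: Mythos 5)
Your proposal is correct and follows essentially the same route as the paper: the first inequality via smoothness of $R$ and boundedness of $\Phi$ on the sphere (the paper differentiates $L'[\dist_t](\bparam)$ in $t$ and integrates, you bound the difference directly — the same estimate), and the second by inlining what the paper factors out as Lemma~\ref{lem:2hom} and Corollary~\ref{cor:Ltimederivub}: the coordinatewise computation of $Q_i'(s)$ from the dynamics, integration by parts against the divergence term, the $2$-homogeneity and second-moment bounds, the energy identity relating $\int\|v[\dist_s]\|_2^2\,d\dist_s$ to $-\frac{d}{ds}L[\dist_s]$, and a final Cauchy--Schwarz in time. No gaps; your remark that the Lipschitz constant of $\nabla\Phi_i$ on the sphere should be taken as order $M_{\Phi}+B_{\Phi}$ (tangential plus radial components) is if anything slightly more careful than the paper's bookkeeping, and is harmlessly absorbed into $c_1$.
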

The proof of Lemma~\ref{lem:fprimechange} intuitively holds because in order for $L'[\dist_t](\bparam)$ to change by a large amount, the gradient flow dynamics must have shifted $\dist_t$ by some amount, which would have resulted in some decrease of the objective $L[\dist_t]$. We will rely on the 2-homogeneity of $\Phi$ to formalize this argument. 

Next, we will rely on the convexity of $L$: letting $\dist^\star$ be an $\epsilon$-approximate global optimizer of $L$, since $L$ is convex in $\dist$, we have 
\begin{align*}
L[\dist^{\star}] \ge L[\dist_{t}] + \E_{\param \sim \dist^{\star}}[L'[\dist_{t}](\param)] - \E_{\param \sim \dist_t}[L'[\dist_t](\param)]
\end{align*}
Thus, if $\dist_t$ is far from optimality, it follows that either 1) the quantity $\E_{\param \sim \dist_t}[L'[\dist_t](\param)]$ has a large positive value or 2) there exists some descent direction $\bparam \in \sphd$ for which $L'[\dist_{t}](\param) \ll 0$. 

For the first case, we have the following guarantee that the objective decreases by a large amount: 
\begin{lemma}
	\label{lem:posdirdecrease}
	For any time $t$ with $0 \le t \le t_{\epsilon}$, we have 
	\begin{align}
	\label{eq:posdirdecrease}
	\frac{d}{dt} L[\dist_t] \le \sigma B_L(W_{\epsilon}^2 + 1) - \frac{\E_{\param \sim \dist_t} [L'[\dist_t](\param)]^2}{W_{\epsilon}^2}
	\end{align}
\end{lemma}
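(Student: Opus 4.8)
The plan is to differentiate $L[\dist_t]$ along the perturbed flow \eqref{eq:noisywassersteindynamics} and split the derivative into the three contributions coming from the three terms on the right-hand side of that equation. Since $L'[\dist_t](\param) = \langle R'(\int \Phi\, d\dist_t), \Phi(\param)\rangle + V(\param)$ is exactly the first variation of $L$ at $\dist_t$, the chain rule applied to $L[\dist] = R(\int \Phi\, d\dist) + \int V\, d\dist$ gives $\tfrac{d}{dt} L[\dist_t] = \int L'[\dist_t](\param)\, d\big(\tfrac{d}{dt}\dist_t\big)(\param)$. Substituting \eqref{eq:noisywassersteindynamics} splits this into $-\sigma\,\E_{\param\sim\dist_t}[L'[\dist_t](\param)]$ (from the $-\sigma\dist_t$ term), $\sigma\,\E_{\param\sim U^d}[L'[\dist_t](\param)]$ (from the injected uniform noise), and $-\int L'[\dist_t](\param)\,\nabla\!\cdot(v[\dist_t]\dist_t)(\param)\, d\param$ (from the transport term). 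Integrating the last term by parts and using $v[\dist_t] = -\nabla L'[\dist_t]$ converts it into $-\E_{\param\sim\dist_t}\big[\|\nabla L'[\dist_t](\param)\|_2^2\big]$.

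Next I would bound the first two contributions using the regularity assumptions and $2$-homogeneity. On the unit sphere $|L'[\dist_t](\bparam)| \le M_R B_{\Phi} + B_V = B_L$, so the noise term is at most $\sigma B_L$. Because $\Phi$ and $V$ are $2$-homogeneous, $L'[\dist_t]$ is a $2$-homogeneous function of $\param$, hence $|L'[\dist_t](\param)| \le B_L \|\param\|_2^2$; taking expectations and invoking $W_t \le W_{\epsilon}$ from Lemma~\ref{lem:Wbound} gives $|\E_{\param\sim\dist_t}[L'[\dist_t](\param)]| \le B_L W_t^2 \le B_L W_{\epsilon}^2$. The two terms together are therefore at most $\sigma B_L(W_{\epsilon}^2 + 1)$, matching the first term of \eqref{eq:posdirdecrease}.

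It remains to lower-bound $\E_{\param\sim\dist_t}[\|\nabla L'[\dist_t](\param)\|_2^2]$ by $\E_{\param\sim\dist_t}[L'[\dist_t](\param)]^2/W_{\epsilon}^2$. Writing $h \triangleq L'[\dist_t]$, Euler's identity for the $2$-homogeneous $h$ gives $\langle\nabla h(\param),\param\rangle = 2h(\param)$, so Cauchy--Schwarz yields $\|\nabla h(\param)\|_2^2 \ge 4 h(\param)^2/\|\param\|_2^2$. A second application of Cauchy--Schwarz, splitting $h(\param) = \big(h(\param)/\|\param\|_2\big)\cdot\|\param\|_2$, gives $\E_{\param\sim\dist_t}[h(\param)]^2 \le \E_{\param\sim\dist_t}[h(\param)^2/\|\param\|_2^2]\cdot W_t^2$. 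Combining the two estimates and again using $W_t \le W_{\epsilon}$ shows $\E_{\param\sim\dist_t}[\|\nabla h(\param)\|_2^2] \ge 4\,\E_{\param\sim\dist_t}[h(\param)]^2/W_t^2 \ge \E_{\param\sim\dist_t}[h(\param)]^2/W_{\epsilon}^2$. Adding this to the bound on the first two terms yields \eqref{eq:posdirdecrease}.

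I expect the main obstacle to be the rigorous justification of the first step rather than any of the subsequent estimates: $\tfrac{d}{dt}\dist_t$ is a signed measure, with mass continually injected on $\sphd$ and transported by the (possibly non-compactly supported) velocity field $v[\dist_t]$, so one must verify that differentiation under the integral sign is legitimate, that $L'[\dist_t]$ is regular enough for the computation (here the differentiability of $\Phi$ and $V$ and smoothness of $R$ enter), and that no boundary terms survive the integration by parts against $\nabla\!\cdot(v[\dist_t]\dist_t)$. Once the identity $\tfrac{d}{dt}L[\dist_t] = -\sigma\,\E_{\param\sim\dist_t}[L'[\dist_t](\param)] + \sigma\,\E_{\param\sim U^d}[L'[\dist_t](\param)] - \E_{\param\sim\dist_t}[\|\nabla L'[\dist_t](\param)\|_2^2]$ is in hand, the rest is the elementary manipulation above.
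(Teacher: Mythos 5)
Your proposal is correct and follows essentially the same route as the paper: the time-derivative identity with the three contributions is the paper's Lemma~\ref{lem:Ltimederiv} (via Lemma~\ref{lem:divinnerprod}), the $\sigma B_L(W_\epsilon^2+1)$ bound is Corollary~\ref{cor:Ltimederivub} combined with Lemmas~\ref{lem:lprimebound} and~\ref{lem:Wbound}, and the lower bound $\E_{\param\sim\dist_t}[\|v[\dist_t](\param)\|_2^2]\ge \E_{\param\sim\dist_t}[L'[\dist_t](\param)]^2/W_\epsilon^2$ is obtained exactly as in the paper from the Euler identity for $2$-homogeneous functions (Lemma~\ref{lem:2homgradient}) and Cauchy--Schwarz, merely split into a pointwise step and an expectation step rather than a single chain. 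The analytic caveats you raise about differentiating under the integral and discarding boundary terms are likewise handled only informally in the paper, which assumes the solution exists and that $\dist_t$ vanishes at infinity.
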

Lemma~\ref{lem:posdirdecrease} relies on the 2-homogeneity of $\Phi$ and $R$ and is proven via arguing that the gradient flow dynamics will result in a large shift in $\dist_t$ and therefore substantial decrease in loss. 

For the second case, we will show that the $\sigma U^d$ noise term will cause mass to grow exponentially fast in this descent direction until we make progress in decreasing the objective. 

\begin{lemma}
	\label{lem:negdirdecrease}
	Fix any $\tau > 0$. Choose time interval length $l$ by
	\begin{align*}
	l \ge \frac{\log(W_{\epsilon}^2/\sigma) + 2d \log \frac{2 c_2}{\tau}}{\tau - \sigma} + 1
	\end{align*}
	If $\exists \bparam \in \sphd$ with $L'[\dist_{t^*}](\bparam) \le -\tau$ for some $t^*$ satisfying $t^* + l \le t_{\epsilon}$, then after $l$ steps, we will have
	\begin{align}
	\label{eq:objdecrease}
	L[\dist_{t^* + l}]&\le  L[\dist_{t^*}] -\frac{(\tau/4 - \sigma l c_1 (W_{\epsilon}^2 + 1))^2}{lc_1^2W_{\epsilon}^2}+\sigma l c_1(W_{\epsilon}^2 + 1)
	\end{align}
	Here $c_1$ is the constant defined in Lemma~\ref{lem:fprimechange} and $c_2$ is defined by $c_2 \triangleq \sqrt{k} M_R M_{\Phi} + M_V$. 
\end{lemma}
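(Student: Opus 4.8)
The plan is to argue by contraposition: assume $L$ fails to decrease by the claimed amount over $[t^*,t^*+l]$, use this to keep $L'[\dist_t]$ very negative on a fixed spherical cap for the whole interval, and show that the uniform noise $\sigma U^d$ then pumps exponentially growing mass into the cone over that cap — growth which must eventually violate the second-moment bound of Lemma~\ref{lem:Wbound}.

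First I would propagate the descent direction forward in time. If $L[\dist_{t^*}]-L[\dist_t]$ stays small for every $t\in[t^*,t^*+l]$, then the change estimate \eqref{eq:intQnormub} of Lemma~\ref{lem:fprimechange}, applied between $t^*$ and $t$, bounds $|L'[\dist_{t^*}](\bparam)-L'[\dist_t](\bparam)|\le \tau/4$, so $L'[\dist_t](\bparam)\le -3\tau/4$ throughout. Since $L'[\dist_t]$ is $c_2$-Lipschitz on $\sphd$ — this is exactly where $c_2=\sqrt{k}M_RM_\Phi+M_V$ enters, as a bound on $\|R'(\cdot)\|_2\bigl(\textstyle\sum_i\|\nabla\Phi_i\|^2\bigr)^{1/2}+\|\nabla V\|$ on the sphere — on the cap $C$ of angular radius $\tau/(2c_2)$ about $\bparam$ we get $L'[\dist_t](\bar\param')\le -\tau/4$ for all $\bar\param'\in C$ and all $t\in[t^*,t^*+l]$. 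A standard cap-volume estimate gives $p_C\triangleq U^d(C)\ge\exp(-2d\log(2c_2/\tau))$, which is the source of the $2d\log(2c_2/\tau)$ term in the lower bound on $l$.

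Next I would track $\mathfrak{m}_t\triangleq\int_{\{\param:\,\param/\|\param\|_2\in C\}}\|\param\|_2^2\,d\dist_t$, the second moment carried by the cone over $C$. Using $2$-homogeneity of $\Phi$ and $V$ (so $L'[\dist_t]$ is $2$-homogeneous and $v[\dist_t]=-\nabla L'[\dist_t]$ is $1$-homogeneous) together with Euler's identity $\param\cdot\nabla L'[\dist_t](\param)=2L'[\dist_t](\param)$, the transport term contributes $\int_C -4L'[\dist_t]\,d\dist_t\ge\tau\mathfrak{m}_t$ to $\tfrac{d}{dt}\mathfrak{m}_t$, while the noise injects $\sigma p_C$ per unit time (new mass at radius $1$) and $-\sigma\dist_t$ removes $\sigma\mathfrak{m}_t$. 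The main obstacle is controlling the flux of mass \emph{out} of the cone through its lateral boundary; here I would use that the tangential part of the Wasserstein velocity on each sphere of fixed radius is $-\nabla_{\mathrm{sph}}L'[\dist_t]$, i.e.\ it flows toward smaller values of $L'[\dist_t]$, so a sublevel cone like $C$ is, up to the slow drift of $L'[\dist_t]$ (again controlled by Lemma~\ref{lem:fprimechange}), forward-invariant and its boundary flux nonpositive. This gives $\tfrac{d}{dt}\mathfrak{m}_t\ge(\tau-\sigma)\mathfrak{m}_t+\sigma p_C$, hence $\mathfrak{m}_{t^*+l}\gtrsim\sigma p_C\,e^{(\tau-\sigma)l}$; the choice $l\ge\frac{\log(W_\epsilon^2/\sigma)+2d\log(2c_2/\tau)}{\tau-\sigma}+1$ makes the right-hand side exceed $W_\epsilon^2$. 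But $\mathfrak{m}_t\le W_t^2\le W_\epsilon^2$ by Lemma~\ref{lem:Wbound}, a contradiction.

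Therefore the assumption fails: some $t'\in[t^*,t^*+l]$ has $L[\dist_{t^*}]-L[\dist_{t'}]$ large. Quantitatively, failure means the right side of \eqref{eq:intQnormub} exceeds $\tau/4$ at $t'$; solving the resulting quadratic inequality for $L[\dist_{t^*}]-L[\dist_{t'}]$ yields $L[\dist_{t^*}]-L[\dist_{t'}]>\frac{(\tau/4-\sigma l c_1(W_\epsilon^2+1))^2}{l c_1^2 W_\epsilon^2}-\sigma l c_1(W_\epsilon^2+1)$. Finally, since $\tfrac{d}{dt}L[\dist_t]=-\sigma\!\int L'[\dist_t]\,d\dist_t+\sigma\!\int L'[\dist_t]\,dU^d-\int\|\nabla L'[\dist_t]\|_2^2\,d\dist_t\le\sigma B_L(W_\epsilon^2+1)$ (the integration-by-parts identity underlying Lemma~\ref{lem:posdirdecrease}, using that $|L'[\dist_t](\bar\param)|\le B_L$ on the sphere), the objective can rise by at most $\sigma l c_1(W_\epsilon^2+1)$ between $t'$ and $t^*+l$; combining this with the previous display and absorbing the resulting constant factors into $c_1$ gives exactly \eqref{eq:objdecrease}.
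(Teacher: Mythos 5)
Your overall architecture matches the paper's: a dichotomy on whether $L'[\dist_t]$ drifts by $\tau/4$ over $[t^*,t^*+l]$ (via Lemma~\ref{lem:fprimechange}), exponential growth of the second moment in a descent region driven by $2$-homogeneity, Euler's identity, the Lipschitz/cap-volume bound (Lemma~\ref{lem:lprimelipschitz} and Lemma~\ref{lem:vollb}) and the $\sigma U^d$ injection, a contradiction with Lemma~\ref{lem:Wbound}, and finally the quadratic rearrangement of \eqref{eq:intQnormub} to produce \eqref{eq:objdecrease}. The final detour through an intermediate time $t'$ and Lemma~\ref{lem:slowincreasebound} is unnecessary (once the growth argument is contradicted you get $C_RB_\Phi\int_{t^*}^{t^*+l}\|Q'\|_1\ge\tau/4$ over the \emph{full} interval, and \eqref{eq:intQnormub} applied at $t^*+l$ gives \eqref{eq:objdecrease} directly, with no extra constants to absorb), but that is cosmetic.

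The genuine gap is the step you yourself flag as "the main obstacle": controlling the lateral flux out of the region you track. You track the cone over a \emph{fixed geodesic cap} $C$ of radius $\tau/(2c_2)$ about $\bparam$ and assert its boundary flux is nonpositive because the tangential velocity $-\nabla_{\mathrm{sph}}L'[\dist_t]$ "flows toward smaller values of $L'$." That property makes \emph{sublevel sets} of $L'[\dist_t]$ forward-invariant (on the boundary of $\{L'[\dist_t]\le c\}$ the gradient points outward, so $v=-\nabla L'$ points inward), but it does not make a metric ball forward-invariant: a particle can exit the cap while strictly decreasing $L'[\dist_t]$, e.g.\ if the minimizer of $L'[\dist_t]$ on $\sphd$ lies outside $C$. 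Calling $C$ "a sublevel cone" papers over exactly this distinction. The fix — and what the paper does — is to track the time-varying sublevel set $T_s=K_{t^*+s}^{-\tau/2+z(s)}$, where $z(s)=C_RB_\Phi\int_{t^*}^{t^*+s}\|Q'\|_1$: forward-invariance is automatic because $T_s$ is a sublevel set of the \emph{current} $L'$ and its threshold $-\tau/2+z(s)$ is nondecreasing (so $T_{s'}\supseteq T_s$ for $s'\ge s$), while your cap enters only to lower bound $m(T_s)\ge\exp(-2d\log(2c_2/\tau))$ for the noise-injection term. With that substitution your differential inequality $N'(s)\ge(\tau-\sigma)N(s)+\sigma\beta$ and the rest of the argument go through as written.
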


Lemma~\ref{lem:negdirdecrease} is proven via the following argument: first, if $L'[\rho_t](\bparam)$ is close to $-\tau$ for all $t \in [t^*, t^* + l]$, then from the 2-homogeneity of $\Phi$ and $R$, the mass of $\dist_t$ in the neighborhood around $\bparam$ will grow exponentially fast, leading to a violation of Lemma~\ref{lem:Wbound}. (Because of the uniform noise injected into the gradient flow dynamics, $\dist_t$ will always have some mass in the neighborhood of $\bparam$ to start with.) Thus, it follows that $L'[\rho_t](\bparam)$ must change by at least $\tau/4$, allowing us to invoke Lemma~\ref{lem:fprimechange} to argue that the objective must drop. 

Lemmas~\ref{lem:posdirdecrease} and~\ref{lem:negdirdecrease} are enough to ensure that the objective will always decrease a sufficient amount after some polynomial-size time interval. This allows us to complete the proof of Theorem~\ref{thm:polytimeopt} below:

\begin{proof} [Proof of Theorem \ref{thm:polytimeopt}]
	Let $L^{\star}$ denote the infimum $\inf_{\dist}L[\dist]$, and let $\dist^{\star}$ be an $\epsilon$-approximate global minimizer of $L$: $L[\dist^{\star}] \le L^{\star} + \epsilon$. (We define $\dist^{\star}$ because a true minimizer of $L$ might not exist.) Let $W^{\star} \triangleq \E_{\param \sim \dist^{\star}}[\|\param\|_2^2]$. We first note that since $b_V {W^{\star}}^2 \le L[\dist^{\star}] \le L[\dist_0]$, ${W^{\star}}^2 \le L[\dist_0]/b_V \le W_{\epsilon}^2$. 
	
	Now we bound the suboptimality of $\dist_{t}$: since $L$ is convex in $\dist$, 
	\begin{align*}
	L[\dist^{\star}] \ge L[\dist_{t}] + \E_{\param \sim \dist^{\star}}[L'[\dist_{t}](\param)] - \E_{\param \sim \dist_t}[L'[\dist_t](\param)]
	\end{align*}
	Rearranging gives 
	\begin{align}
	L[\dist_t] - L[\dist^{\star}] &\le \E_{\param \sim \dist_t}[L'[\dist_t](\param)] - \E_{\param \sim \dist^{\star}}[L'[\dist_t](\param)] \nonumber \\
	&\le \E_{\param \sim \dist_t}[L'[\dist_t](\param)] - {W^{\star}}^2 \min\left\{\min_{\bparam \in \sph} L'[\dist_t](\bparam), 0 \right\}
	\label{eq:convexitybound}
	\end{align}
	
	Now let $l \triangleq \frac{W_\epsilon^2}{\epsilon -2 W^2_\epsilon \sigma}\big( 2\log \frac{W_{\epsilon}^2}{\sigma} + 2d  \log \frac{4W_{\epsilon}^2 c_2}{\epsilon}\big)$, which satisfies Lemma \ref{lem:negdirdecrease} with the value of $\tau$ later specified. Suppose that there is a  $t$ with $0\le t \le t_\epsilon -2l$ and $\forall t' \in [t, t + 2l]$, $L[\dist_{t'}] - L^{\star} \ge 2\epsilon$. Then $L[\dist_{t'}] - L[\dist^{\star}] \ge \epsilon$. We will argue that the objective decreases when we are $\epsilon$ suboptimal:
	\begin{align}
	&	L[\dist_{t}] - L[\dist_{t + 2l}] \ge \\& \min\left \{\frac{(\epsilon/8W_{\epsilon}^2 - l\sigma c_1(W_{\epsilon}^2 + 1))^2}{c_1^2W_{\epsilon}^2l} - 3\sigma l c_1(W_{\epsilon}^2 + 1), l \frac{\epsilon^2}{4W_{\epsilon}^2} - 2\sigma lB_L(W_{\epsilon}^2 + 1)\right \}
	\label{eq:objdecrease2ktime}
	\end{align}
	
	Using \eqref{eq:convexitybound} and $W_{\epsilon} \ge W^{\star}$, we first note that
	\begin{align*}
	\epsilon \le \E_{\param \sim \dist_{t'}}[L'[\dist_{t'}](\param)] - {W_{\epsilon}}^2 \min\left\{\min_{\bparam \in \sph} L'[\dist_{t'}](\bparam), 0 \right\} \ \forall t' \in [t, t + l]
	\end{align*}
	Thus, either $\min_{\bparam \in \sphd} L'[\dist_{t'}](\bparam) \le -\frac{\epsilon}{2{W^{\star}}^2} \le -\frac{\epsilon}{2W_{\epsilon}^2}$, or $\E_{\param \sim \dist_{t'}}[L'[\dist_{t'}](\param)] \ge \frac{\epsilon}{2}$. If $\exists t' \in [t, t + l]$ such that the former holds, then we can apply Lemma~\ref{lem:negdirdecrease} with $\tau \triangleq \frac{\epsilon}{2W_{\epsilon}^2}$ to obtain
	\begin{align*}
	L[\dist_{t'}] - L[\dist_{t' + l}] \ge \frac{(\epsilon/8W_{\epsilon}^2 - l\sigma c_1(W_{\epsilon}^2 + 1))^2}{c_1^2W_{\epsilon}^2l} - \sigma l c_1(W_{\epsilon}^2 + 1)
	\end{align*}
	Furthermore, from Lemma \ref{lem:slowincreasebound}, $L[\dist_{t + 2l}] - L[\dist_{t' + l}] \le \sigma l c_1(W_{\epsilon}^2 + 1)$ and $L[\dist_{t'}] - L[\dist_t] \le \sigma l B_L(W_{\epsilon}^2 + 1)$, and so combining gives 
	\begin{align}
	L[\dist_t] - L[\dist_{t + 2l}] \ge \frac{(\epsilon/8W_{\epsilon}^2 - l\sigma c_1(W_{\epsilon}^2 + 1))^2}{c_1^2W_{\epsilon}^2l} - 3\sigma l c_1(W_{\epsilon}^2 + 1)
	\label{eq:case1-progress}
	\end{align}
	In the second case $\E_{\param \sim \dist_{t'}}[L'[\dist_{t'}](\param)] \ge \frac{\epsilon}{2},\ \forall t' \in [t, t + l]$. Therefore, we can integrate \eqref{eq:posdirdecrease} from $t$ to $t + l$ in order to get 
	\begin{align*}
	L[\dist_t] - L[\dist_{t + l}] \ge l \frac{\epsilon^2}{4W_{\epsilon}^2} - \sigma l B_L(W_{\epsilon}^2 + 1)
	\end{align*}
	Therefore, applying Lemma \ref{lem:slowincreasebound} again gives 
	\begin{align}
	L[\dist_t] - L[\dist_{t + 2l}] \ge l \frac{\epsilon^2}{4W_{\epsilon}^2} - 2\sigma lB_L(W_{\epsilon}^2 + 1)
	\label{eq:case2-progress}
	\end{align}
	Thus \eqref{eq:objdecrease2ktime} follows. 
	
	Now recall that we choose
	\begin{align*}
	\sigma \triangleq \exp(-d \log(1/\epsilon) \poly(k, M_V, M_R, M_{\Phi}, b_V, B_V, C_R, B_{\Phi}, L[\dist_0] - L[\dist^{\star}]))
	\end{align*}
	
	For the simplicity, in the remaining computation, we will use $O(\cdot)$ notation to hide polynomials in the problem parameters besides $d, \epsilon$. We simply write $\sigma = \exp(-c_3 d \log(1/\epsilon))$. Recall our choice $t_{\epsilon} \triangleq O(\frac{d^2}{\epsilon^4} \log^2(1/\epsilon))$. It suffices to show that our objective would have sufficiently decreased in $t_{\epsilon}$ steps. We first note that with $c_3$ sufficiently large, $W_{\epsilon}^2 = O(L[\dist_0]/b_v) = O(1)$. Simplifying our expression for $l$, we get that $l = O(\frac{d}{\epsilon} \log \frac{1}{\epsilon})$, so long as $\sigma W_{\epsilon}^2 = o(\epsilon)$, which holds for sufficiently large $c_3$. Now let 
	\begin{align*}
	\delta_1 &\triangleq \frac{(\epsilon/8W_{\epsilon}^2 - l\sigma c_1(W_{\epsilon}^2 + 1))^2}{c_1^2W_{\epsilon}^2l} - 3\sigma l c_1(W_{\epsilon}^2 + 1) \\
	\delta_2 &\triangleq l \frac{\epsilon^2}{4W_{\epsilon}^2} - 2\sigma lB_L(W_{\epsilon}^2 + 1)
	\end{align*}
	Again, for sufficiently large $c_3$, the terms with $\sigma$ become negligible, and $\delta_1 = O(\frac{\epsilon^2}{l}) = O(\frac{\epsilon^3}{d \log(1/\epsilon)})$. Likewise, $\delta_2 = O(d\epsilon \log(1/\epsilon))$.
	
	Thus, if by time $t$ we have not encountered $2\epsilon$-optimal $\dist_t$, then we will decrease the objective by $O(\frac{\epsilon^3}{d \log(1/\epsilon)})$ in $O(\frac{d}{\epsilon} \log \frac{1}{\epsilon})$ time. Therefore, a total of $O(\frac{d^2}{\epsilon^4} \log^2(1/\epsilon))$ time is sufficient to obtain $\epsilon$ accuracy.
\end{proof}

In the following section, we will complete the proofs of Lemmas~\ref{lem:Wbound},~\ref{lem:fprimechange},~\ref{lem:posdirdecrease}, and~\ref{lem:negdirdecrease}. 

\subsection{Missing Proofs for Theorem~\ref{thm:polytimeopt}}
\label{subsec:optproof}
In this section, we complete the proofs of Lemmas~\ref{lem:Wbound},~\ref{lem:fprimechange},~\ref{lem:posdirdecrease}, and~\ref{lem:negdirdecrease}. We first collect some general lemmas which will be useful in these proofs. The following general lemma computes integrals over vector field divergences. 
\begin{lemma}
	\label{lem:divinnerprod}
	For any $h_1 : \R^{d + 1} \rightarrow \R$, $h_2 : \R^{d + 1} \rightarrow \R^{d + 1}$ and distribution $\dist$ with $\dist(\param) \rightarrow 0$ as $\|\param\| \rightarrow \infty$, 
	\begin{align*}
	\int h_1(\param) \nabla \cdot(h_2(\param) \dist(\param))d\param = -E_{\param \sim \dist} [\langle \nabla h_1(\param),  h_2(\param) \rangle]
	\end{align*}
\end{lemma}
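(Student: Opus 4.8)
The statement is exactly integration by parts (the divergence theorem) on $\R^{d+1}$, so the plan is to expand a total divergence using the product rule and then discard the boundary term via the decay hypothesis on $\dist$. Concretely, first I would apply the product rule for the divergence of the vector field $h_1(\param) h_2(\param) \dist(\param)$, namely
\begin{align*}
\nabla \cdot \big(h_1(\param)\, h_2(\param)\, \dist(\param)\big) = \langle \nabla h_1(\param),\, h_2(\param)\,\dist(\param)\rangle + h_1(\param)\, \nabla \cdot \big(h_2(\param)\, \dist(\param)\big).
\end{align*}
Integrating both sides over $\R^{d+1}$ and rearranging, the claimed identity is equivalent to showing that $\int_{\R^{d+1}} \nabla \cdot \big(h_1(\param) h_2(\param) \dist(\param)\big)\, d\param = 0$.

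Next I would justify this vanishing. By the divergence theorem, for any radius $\radub$,
\begin{align*}
\int_{\|\param\| \le \radub} \nabla \cdot \big(h_1 h_2 \dist\big)\, d\param = \int_{\|\param\| = \radub} h_1(\param)\, h_2(\param) \cdot \hat{n}(\param)\, \dist(\param)\, dS(\param),
\end{align*}
where $\hat{n}$ is the outward unit normal and $dS$ the surface measure on the sphere of radius $\radub$. Using the hypothesis that $\dist(\param) \to 0$ as $\|\param\| \to \infty$ (in the paper's setting $\dist_t$ has uniformly bounded second moment and, more than that, decays at infinity), together with the fact that in all applications $h_1, h_2$ are at most polynomially growing (each $\Phi_i$ and $V$ is $2$-homogeneous with bounded values and gradients on the sphere, and $R'$ is bounded), the surface integral tends to $0$ along a suitable sequence $\radub \to \infty$. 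Hence the full integral of the total divergence is zero, which gives the identity after rearranging the product-rule expansion and recognizing $\int h_2 \dist$-weighted terms as expectations under $\dist$, i.e. $\int \langle \nabla h_1, h_2\rangle \dist\, d\param = \E_{\param \sim \dist}[\langle \nabla h_1(\param), h_2(\param)\rangle]$.

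I expect the only real point requiring care to be the vanishing of the boundary term: the bare hypothesis ``$\dist(\param) \to 0$'' is not quantitative, so one must either invoke the specific growth bounds on $h_1, h_2$ available in this paper (polynomial growth from $2$-homogeneity and boundedness on $\sphd$) and the decay/moment control on $\dist_t$ established elsewhere, or simply read the lemma as implicitly assuming enough integrability that $h_1 h_2 \dist \to 0$ fast enough for the flux through large spheres to vanish. Everything else is a one-line application of the product rule and Stokes/divergence theorem.
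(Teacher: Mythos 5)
Your proposal is correct and is exactly the paper's argument: the paper's proof of this lemma is the single sentence ``The proof follows from integration by parts,'' and you have simply filled in the product rule, the divergence theorem on large balls, and the vanishing of the boundary flux. Your observation that the bare hypothesis $\dist(\param)\to 0$ is not quantitatively sufficient on its own, and that one must lean on the polynomial growth of $h_1,h_2$ and the moment control on $\dist_t$ established elsewhere in the paper, is a fair and accurate caveat that the paper itself glosses over.
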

\begin{proof}
	The proof follows from integration by parts. 
\end{proof}

We note that $\dist_t$ will satisfy the boundedness condition of Lemma \ref{lem:divinnerprod} during the course of our algorithm - $\dist_0$ starts with this property, and Lemma \ref{lem:Wbound} proves that $\dist_t$ will continue to have this property. We therefore freely apply Lemma \ref{lem:divinnerprod} in the remaining proofs. Now we bound the absolute value of $L'[\dist_t]$ over the sphere by $B_L$. 
\begin{lemma}
	\label{lem:lprimebound}
	For any $\bparam \in \sph, t \ge 0$, $|L'[\dist_t](\bparam)| \le B_L$. 
\end{lemma}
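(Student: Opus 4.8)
The plan is to simply unwind the definition of $L'[\dist_t]$ and apply the stated boundedness hypotheses. Recall that $L'[\dist](\param) \triangleq \langle R'(\int \Phi\, d\dist), \Phi(\param)\rangle + V(\param)$, so for any $\bparam \in \sphd$ the triangle inequality gives
\[
|L'[\dist_t](\bparam)| \le \left|\left\langle R'\!\left(\textstyle\int \Phi\, d\dist_t\right), \Phi(\bparam)\right\rangle\right| + |V(\bparam)|.
\]

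For the first term I would apply Cauchy--Schwarz in $\R^k$ and then invoke the two relevant regularity conditions: $\|R'(\cdot)\|_2 \le M_R$ by Assumption~\ref{assu:R}, and $\|\Phi(\bparam)\| \le B_{\Phi}$ by Assumption~\ref{assu:Phi} (which applies since $\bparam \in \sphd$), yielding a bound of $M_R B_{\Phi}$. It is worth noting that $\|R'\|_2 \le M_R$ holds for \emph{every} argument, so no control on the location of $\int \Phi\, d\dist_t$ is needed. For the second term, Assumption~\ref{assu:V} gives $V(\bparam) \le B_V$ (and $V(\bparam) > b_V > 0$) for $\bparam \in \sphd$, hence $|V(\bparam)| \le B_V$. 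Adding the two bounds gives $|L'[\dist_t](\bparam)| \le M_R B_{\Phi} + B_V = B_L$.

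Since this is a one-line application of Cauchy--Schwarz together with the boundedness assumptions, there is no substantive obstacle. The only point worth emphasizing is that the bound is uniform in $t$ (and in the distribution $\dist_t$): this holds precisely because $\|R'\|_2 \le M_R$ is a global bound and $\Phi, V$ are bounded on the sphere, so the lemma can be applied freely throughout the subsequent arguments without any a priori control on $\dist_t$ itself.
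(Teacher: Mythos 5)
Your proof is correct and follows essentially the same route as the paper's: expand the definition of $L'[\dist_t]$, apply the triangle and Cauchy--Schwarz inequalities, and invoke the global bound $\|\nabla R\|_2 \le M_R$ together with the boundedness of $\Phi$ and $V$ on the sphere to obtain $M_R B_{\Phi} + B_V = B_L$. Your added remark that the bound is uniform in $t$ because no control on $\int \Phi\, d\dist_t$ is needed is a correct and worthwhile observation, though the paper leaves it implicit.
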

\begin{proof}
	We compute 
	\begin{align*}
	|L'[\dist_t](\bparam)| &= \left|\left\langle \nabla R\left(\int \Phi d\dist\right), \Phi(\bparam)\right\rangle + V(\bparam)\right| \\&\le \left\|\nabla R\left(\int \Phi d\dist\right)\right\|_2 \|\Phi(\bparam)\|_2 + V(\bparam) \le M_R B_{\Phi} + B_V
	\end{align*}
\end{proof}

The next lemma analyzes the decrease in $L[\dist_t]$ due to the gradient flow dynamics. 

\begin{lemma}
	\label{lem:Ltimederiv}
	Under the perturbed Wasserstein gradient flow
	\begin{align*}
	\frac{d}{dt} L[\dist_t] = -\sigma \E_{\param \sim \dist_t}[L'[\dist_t](\param)] + \sigma \E_{\bparam \sim U^d} [L'[\dist_t](\bparam)] - \E_{\param \sim \dist_t}[\|v[\dist_t](\param)\|_2^2]
	\end{align*}
\end{lemma}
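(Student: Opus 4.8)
The plan is to apply the chain rule to the functional $L[\dist_t] = R\!\left(\int \Phi\,d\dist_t\right) + \int V\,d\dist_t$, using the fact that $L'[\dist_t]$ is precisely its first variation with respect to $\dist_t$. Concretely, I would first differentiate in time: $\frac{d}{dt}L[\dist_t] = \left\langle \nabla R\!\left(\int \Phi\,d\dist_t\right),\, \frac{d}{dt}\int\Phi\,d\dist_t\right\rangle + \frac{d}{dt}\int V\,d\dist_t$. Pushing the time derivative inside the spatial integrals and collecting terms gives $\frac{d}{dt}L[\dist_t] = \int \left[\left\langle \nabla R\!\left(\int\Phi\,d\dist_t\right), \Phi(\param)\right\rangle + V(\param)\right]\frac{d}{dt}\dist_t(\param)\,d\param = \int L'[\dist_t](\param)\,\frac{d}{dt}\dist_t(\param)\,d\param$.

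Next I would substitute the perturbed dynamics $\frac{d}{dt}\dist_t = -\sigma\dist_t + \sigma U^d - \nabla\cdot(v[\dist_t]\dist_t)$ and split the integral into three pieces. The first two are immediate: $-\sigma\int L'[\dist_t](\param)\dist_t(\param)\,d\param = -\sigma\,\E_{\param\sim\dist_t}[L'[\dist_t](\param)]$ and $\sigma\int L'[\dist_t](\param)\,U^d(\param)\,d\param = \sigma\,\E_{\bparam\sim U^d}[L'[\dist_t](\bparam)]$. For the third piece, $-\int L'[\dist_t](\param)\,\nabla\cdot(v[\dist_t](\param)\dist_t(\param))\,d\param$, I would invoke Lemma~\ref{lem:divinnerprod} with $h_1 = L'[\dist_t]$ and $h_2 = v[\dist_t]$, which turns it into $\E_{\param\sim\dist_t}[\langle \nabla L'[\dist_t](\param), v[\dist_t](\param)\rangle]$. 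Since by definition $v[\dist_t](\param) = -\nabla L'[\dist_t](\param)$, the inner product equals $-\|v[\dist_t](\param)\|_2^2$, so the third piece is $-\E_{\param\sim\dist_t}[\|v[\dist_t](\param)\|_2^2]$. Summing the three pieces yields the stated identity.

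The only technical care needed is in the first step: justifying the interchange of $\frac{d}{dt}$ with the spatial integrals — equivalently, that $L'[\dist_t]$ is the correct first variation of $L$ — which follows from the uniform boundedness and Lipschitzness of $\Phi$, $V$ and $\nabla R$ in Assumptions~\ref{assu:R}, \ref{assu:Phi}, and~\ref{assu:V}, together with Lemma~\ref{lem:Wbound} controlling the second moment (hence the tails) of $\dist_t$ along the trajectory. The same tail control supplies the decay hypothesis $\dist_t(\param)\to 0$ as $\|\param\|\to\infty$ required to apply Lemma~\ref{lem:divinnerprod}. Beyond this bookkeeping there is no substantive obstacle; the identity is essentially the standard computation of the time derivative along a continuity equation with an extra birth--death ($-\sigma\dist_t + \sigma U^d$) term.
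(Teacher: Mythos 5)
Your proposal is correct and follows essentially the same route as the paper: chain rule on $L[\dist_t]$, rewriting the time derivative as $\int L'[\dist_t]\,\tfrac{d}{dt}\dist_t$, substituting the perturbed dynamics, and handling the divergence term via Lemma~\ref{lem:divinnerprod} with $h_1 = L'[\dist_t]$, $h_2 = v[\dist_t]$ together with $v = -\nabla L'$. Your explicit remarks on interchanging the derivative with the integral and on the decay hypothesis for Lemma~\ref{lem:divinnerprod} are sound points of rigor that the paper only addresses in passing.
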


\begin{proof}
	Applying the chain rule, we can compute
	\begin{align*}
	\frac{d}{dt} L[\dist_t] &= \left \langle \nabla R\left(\int \Phi d\dist_t\right), \frac{d}{dt} \int \Phi d\dist_t \right \rangle + \frac{d}{dt} \int V d\dist_t \\ 
	&= \frac{d}{dt} \E_{\param \sim \dist_t}[L'[\dist_t](\param)] \\
	&= \int L'[\dist_t](\param) \dist'_t(\param) d\param \\
	&= -\sigma \int L'[\dist_t] d \dist_t + \sigma \int L'[\dist_t] dU^d - \int L'[\dist_t](\param)\nabla \cdot(v[\dist_t](\param)\dist_t(\param)) d\param \\
	&= -\sigma \E_{\param \sim \dist_t}[L'[\dist_t](\param)] + \sigma \E_{\bparam \sim U^d} [L'[\dist_t](\bparam)] - \E_{\param \sim \dist_t}[\|v[\dist_t](\param)\|_2^2]  ,
	\end{align*}
	where we use Lemma \ref{lem:divinnerprod} with $h_1 = L'[\dist_t]$ and $h_2 = v[\dist_t]$.
\end{proof}

By combining the above Lemma with Lemma~\ref{lem:lprimebound}, it follows that at the decrease in objective value is approximately the average velocity of all parameters under $\dist_t$ plus some additional noise on the scale of $\sigma$. At the end, we choose $\sigma$ small enough so that the noise terms essentially do not matter. 
\begin{corollary}
	\label{cor:Ltimederivub}
	We can bound $\frac{d}{dt} L[\dist_t]$ by 
	\begin{align}
	\label{eq:Ltimederivub}
	\frac{d}{dt} L[\dist_t] \le \sigma B_L(W_t^2 + 1) - \E_{\param \sim \dist_t}[\|v[\dist_t](\param)\|_2^2]
	\end{align}
\end{corollary}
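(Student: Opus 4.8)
The plan is to derive Corollary~\ref{cor:Ltimederivub} directly from the exact identity in Lemma~\ref{lem:Ltimederiv} by bounding the two $\sigma$-order ``noise'' terms on its right-hand side. The velocity term $-\E_{\param\sim\dist_t}[\|v[\dist_t](\param)\|_2^2]$ already appears in the desired inequality and is carried over unchanged, so it suffices to show $-\sigma\E_{\param\sim\dist_t}[L'[\dist_t](\param)] + \sigma\E_{\bparam\sim U^d}[L'[\dist_t](\bparam)] \le \sigma B_L(W_t^2+1)$.

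For the $U^d$ term I would simply note that $U^d$ is supported on the unit sphere $\sphd \subset \R^{d+1}$, so Lemma~\ref{lem:lprimebound} gives $\E_{\bparam\sim U^d}[L'[\dist_t](\bparam)] \le B_L$, contributing the additive constant $\sigma B_L$. The $\dist_t$ term is the only place where homogeneity enters: $\dist_t$ is a distribution on all of $\R^{d+1}$, not on the sphere, so Lemma~\ref{lem:lprimebound} cannot be applied verbatim. However, since $\Phi$ and $V$ are $2$-homogeneous and $L'[\dist_t](\param) = \langle \nabla R(\int \Phi\, d\dist_t), \Phi(\param)\rangle + V(\param)$, the functional $L'[\dist_t]$ is itself $2$-homogeneous: for $\param \neq \bm{0}$, writing $\bparam \triangleq \param/\|\param\|_2$,
$$L'[\dist_t](\param) = \|\param\|_2^2\left(\left\langle \nabla R\Big(\int \Phi\, d\dist_t\Big), \Phi(\bparam)\right\rangle + V(\bparam)\right) = \|\param\|_2^2\, L'[\dist_t](\bparam),$$
and the case $\param = \bm{0}$ is trivial since both sides vanish. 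Lemma~\ref{lem:lprimebound} then gives $L'[\dist_t](\param) \ge -\|\param\|_2^2 B_L$, hence
$$-\E_{\param\sim\dist_t}[L'[\dist_t](\param)] \le B_L\, \E_{\param\sim\dist_t}[\|\param\|_2^2] = B_L W_t^2.$$
Substituting both bounds into the identity of Lemma~\ref{lem:Ltimederiv} yields $\frac{d}{dt}L[\dist_t] \le \sigma B_L W_t^2 + \sigma B_L - \E_{\param\sim\dist_t}[\|v[\dist_t](\param)\|_2^2]$, which is exactly~\eqref{eq:Ltimederivub}.

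I do not expect a genuine obstacle here; the only substantive ingredient is the scaling identity $L'[\dist_t](\param) = \|\param\|_2^2\, L'[\dist_t](\param/\|\param\|_2)$, which follows immediately from $2$-homogeneity of $\Phi$ and $V$. The one piece of bookkeeping worth mentioning is finiteness of the expectations: $\E_{\param\sim\dist_t}[\|\param\|_2^2] = W_t^2 < \infty$ by Lemma~\ref{lem:Wbound} (for $t \le t_\epsilon$), and $\|\Phi\|$, $V$ are bounded on the sphere by Assumptions~\ref{assu:Phi} and~\ref{assu:V}, so all the integrals above are well-defined and the manipulations are justified.
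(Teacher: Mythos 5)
Your proposal is correct and matches the paper's own proof: both start from the identity in Lemma~\ref{lem:Ltimederiv}, use the $2$-homogeneity of $L'[\dist_t]$ to write $L'[\dist_t](\param) = \|\param\|_2^2 L'[\dist_t](\bparam)$, and invoke Lemma~\ref{lem:lprimebound} to bound the two $\sigma$-terms by $\sigma B_L W_t^2$ and $\sigma B_L$ respectively. If anything, you are slightly more careful than the paper's terse writeup in stating the bound in the direction actually needed (a lower bound $L'[\dist_t](\param) \ge -\|\param\|_2^2 B_L$ rather than an upper bound), though both follow from the absolute-value bound in Lemma~\ref{lem:lprimebound}.
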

\begin{proof}
	By homogeneity, and Lemma \ref{lem:lprimebound}, $\E_{\param \sim \dist_t}[L'[\dist_t](\param)] = \E_{\param \sim \dist_t}[L'[\dist_t](\bparam) \|\param\|_2^2] \le B_L W_t^2$. We also get $\E_{\bparam \sim U^d} [L'[\dist_t](\bparam)] \le B_L$ since $U^d$ is only supported on $\sphd$. Combining these with Lemma \ref{lem:Ltimederiv} gives the desired statement. 
\end{proof} 

Corollary~\ref{cor:Ltimederivub} implies that if we run the dynamics for a short time, the second moment of $\dist_t$ will grow slowly, again at a rate that is roughly the scale of the noise $\sigma$. This allows us to complete the proof of Lemma~\ref{lem:Wbound}. 
\begin{proof}[Proof of Lemma~\ref{lem:Wbound}]
	Let $t^* \triangleq \argmax_{t' \in [0, t]} W_{t'}^2$. Integrating both sides of \eqref{eq:Ltimederivub}, and rearranging, we get 
	\begin{align*}
	0 \le \int_{0}^{t^*}\E_{\param \sim \dist_s}[\|v[\dist_s](\param)\|_2^2] ds &\le  L[\dist_0] - L[\dist_t] + \sigma B_L \int_{0}^{t^*} (W_s^2 + 1)ds \\
	&\le L[\dist_0] - L[\dist_{t^*}] + t^* \sigma B_L(W_{t^*}^2 + 1)
	\end{align*}
	Now since $R$ is nonnegative, we apply $L[\dist_{t^*}] \ge E_{\param \sim \dist_{t^*}} [V(\param)] \ge E_{\param \sim \dist_{t^*}} [V(\bparam)\|\param\|_2^2] \ge b_V W_{t^*}^2$. We now plug this in and rearrange to get $W_{t'}^2 \le W_{t^*}^2 \le \frac{L[\dist_0] + t^* \sigma B_L}{b_V - t^* \sigma B_L} \le \frac{L[\dist_0] + t \sigma B_L}{b_V - t \sigma B_L} \ \forall 0 \le t' \le t$. 
	
	From the proof above, it immediately follows that $\forall 0 \le t \le t_{\epsilon}$, $W_t^2 \le W_{\epsilon}^2$. 
\end{proof}

The next statement allows us to argue that our dynamics will never increase the objective by too much. 
\begin{lemma}
	\label{lem:slowincreasebound}
	For any $t_1, t_2$ with $0 \le t_1 \le t_2 \le t_{\epsilon}$, $L[\dist_{t_2}] - L[\dist_{t_1}] \le \sigma(t_2 - t_1) B_L(W_{\epsilon}^2 + 1)$. 
\end{lemma}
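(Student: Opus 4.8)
The plan is to integrate the differential inequality from Corollary~\ref{cor:Ltimederivub} over the interval $[t_1,t_2]$ and replace the time-dependent second moment by its uniform bound from Lemma~\ref{lem:Wbound}. First, recall that Corollary~\ref{cor:Ltimederivub} states $\frac{d}{dt} L[\dist_t] \le \sigma B_L(W_t^2 + 1) - \E_{\param \sim \dist_t}[\|v[\dist_t](\param)\|_2^2]$. Since the velocity term $\E_{\param \sim \dist_t}[\|v[\dist_t](\param)\|_2^2]$ is a nonnegative quantity, we may simply discard it to get the cruder upper bound $\frac{d}{dt} L[\dist_t] \le \sigma B_L(W_t^2 + 1)$, valid for all $t \le t_\epsilon$.

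Next, I would invoke Lemma~\ref{lem:Wbound}: because $t_2 \le t_\epsilon$, we have $W_t^2 \le W_\epsilon^2$ for every $t \in [t_1, t_2]$. Substituting this into the bound above and integrating from $t_1$ to $t_2$ gives $L[\dist_{t_2}] - L[\dist_{t_1}] = \int_{t_1}^{t_2} \tfrac{d}{dt} L[\dist_t]\, dt \le \int_{t_1}^{t_2} \sigma B_L (W_\epsilon^2 + 1)\, dt = \sigma (t_2 - t_1) B_L (W_\epsilon^2 + 1)$, which is exactly the claimed inequality.

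There is essentially no obstacle here; the statement is a direct consequence of the already-established estimates. The only things to check are that the hypotheses are in force: the bound $t_2 \le t_\epsilon$ is given in the statement, and it is precisely what is needed to apply Lemma~\ref{lem:Wbound}; moreover, the applicability of Corollary~\ref{cor:Ltimederivub} (which itself relies on the divergence identity of Lemma~\ref{lem:divinnerprod}) requires $\dist_t$ to decay at infinity, and this is guaranteed on $[0, t_\epsilon]$ inductively via Lemma~\ref{lem:Wbound}, as noted in the discussion following Lemma~\ref{lem:divinnerprod}. Hence no additional argument is needed.
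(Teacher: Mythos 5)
Your proof is correct and is essentially identical to the paper's: both drop the nonnegative velocity term in Corollary~\ref{cor:Ltimederivub}, bound $W_t^2$ by $W_\epsilon^2$ via Lemma~\ref{lem:Wbound}, and integrate over $[t_1,t_2]$. Your extra remarks on verifying the hypotheses are sound but not needed beyond what the paper already assumes.
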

\begin{proof}
	From Corollary \ref{cor:Ltimederivub}, $\forall t \in [t_1, t_2]$ we have
	\begin{align*}
	\frac{d}{dt}L[\dist_{t}] \le \sigma B_L(W_{\epsilon}^2 + 1)
	\end{align*}
	Integrating from $t_1$ to $t_2$ gives the desired result. 
\end{proof}

The following lemma bounds the change in expectation of a 2-homogeneous function over $\dist_t$. At a high level, we lower bound the decrease in our loss as a function of the change in this expectation. By applying this lemma, we will be able to prove Lemma~\ref{lem:fprimechange}. 
\begin{lemma}
	\label{lem:2hom}
	Let $h : \R^{d + 1} \rightarrow \mathbb{R}$ that is 2-homogeneous, with $\|\nabla h(\bparam)\| \le M \ \forall \bparam \in \sphd$ and $|h(\bparam)| \le B \ \forall \bparam \in \sphd$. Then $\forall 0 \le t \le t_{\epsilon}$, we have
	\begin{equation}
	\left|\frac{d}{dt} \int h d\dist_t\right| \le \sigma B(W_{\epsilon}^2 + 1) + MW_{\epsilon} \left(-\frac{d}{dt} L[\dist_t] + \sigma B_L(W_{\epsilon}^2 + 1)\right)^{1/2} \label{eq:timederiv}
	\end{equation}
\end{lemma}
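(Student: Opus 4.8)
The plan is to differentiate $\int h\, d\dist_t$ directly using the perturbed dynamics \eqref{eq:noisywassersteindynamics}, split the result into the $\sigma$-noise contribution and the transport (divergence) contribution, and bound each piece separately using the $2$-homogeneity of $h$ together with the already-established uniform second-moment bound $W_t \le W_\epsilon$ (Lemma~\ref{lem:Wbound}) and the bound on the average squared velocity coming from Corollary~\ref{cor:Ltimederivub}.

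Concretely, I would first write
\[
\frac{d}{dt}\int h\, d\dist_t = -\sigma\int h\, d\dist_t + \sigma\int h\, dU^d - \int h(\param)\,\nabla\cdot\big(v[\dist_t](\param)\dist_t(\param)\big)\, d\param,
\]
and apply Lemma~\ref{lem:divinnerprod} with $h_1 = h$ and $h_2 = v[\dist_t]$ to rewrite the last term as $\E_{\param\sim\dist_t}[\langle\nabla h(\param), v[\dist_t](\param)\rangle]$. For the two noise terms: since $h$ is $2$-homogeneous we have $|h(\param)| = \|\param\|_2^2\,|h(\bparam)| \le B\|\param\|_2^2$, so $\big|\int h\, d\dist_t\big| \le B\,\E_{\param\sim\dist_t}[\|\param\|_2^2] = B W_t^2 \le B W_\epsilon^2$ by Lemma~\ref{lem:Wbound}, while $\big|\int h\, dU^d\big| \le B$ because $U^d$ is supported on $\sphd$. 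Together these contribute at most $\sigma B(W_\epsilon^2 + 1)$.

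For the transport term, the key observation is that $\nabla h$ is $1$-homogeneous (the gradient of a $2$-homogeneous function), so $\|\nabla h(\param)\|_2 = \|\param\|_2\,\|\nabla h(\bparam)\|_2 \le M\|\param\|_2$. Applying Cauchy--Schwarz pointwise and then over $\dist_t$,
\[
\big|\E_{\param\sim\dist_t}[\langle\nabla h(\param), v[\dist_t](\param)\rangle]\big| \le \sqrt{\E_{\param\sim\dist_t}[\|\nabla h(\param)\|_2^2]}\cdot\sqrt{\E_{\param\sim\dist_t}[\|v[\dist_t](\param)\|_2^2]} \le M W_\epsilon\sqrt{\E_{\param\sim\dist_t}[\|v[\dist_t](\param)\|_2^2]},
\]
using $\E_{\param\sim\dist_t}[\|\nabla h(\param)\|_2^2] \le M^2 W_t^2 \le M^2 W_\epsilon^2$. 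Then rearranging Corollary~\ref{cor:Ltimederivub} gives $\E_{\param\sim\dist_t}[\|v[\dist_t](\param)\|_2^2] \le -\frac{d}{dt}L[\dist_t] + \sigma B_L(W_t^2 + 1) \le -\frac{d}{dt}L[\dist_t] + \sigma B_L(W_\epsilon^2 + 1)$; substituting this in and adding the noise bound yields \eqref{eq:timederiv}. (This simultaneously certifies that the quantity under the square root is nonnegative, since it dominates $\E_{\param\sim\dist_t}[\|v[\dist_t](\param)\|_2^2] \ge 0$.)

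The argument is essentially routine bookkeeping; there is no real obstacle beyond two points of care. The first is keeping homogeneity degrees straight: $h$ being $2$-homogeneous forces $\nabla h$ to be $1$-homogeneous, which is exactly what makes the Cauchy--Schwarz step produce a clean $W_\epsilon$ factor. The second is ensuring the $W_\epsilon$-uniform bounds are legitimate for every $t \le t_\epsilon$, which is precisely the content of Lemma~\ref{lem:Wbound}; the same lemma (applied inductively) also gives the decay $\dist_t(\param)\to 0$ as $\|\param\|\to\infty$ needed to invoke Lemma~\ref{lem:divinnerprod}, and differentiation under the integral is justified by the standing assumption that a solution to the dynamics exists.
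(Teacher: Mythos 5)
Your proposal is correct and follows essentially the same route as the paper's proof: the same decomposition of $\frac{d}{dt}\int h\,d\dist_t$ into the two $\sigma$-noise terms and the transport term, the same use of Lemma~\ref{lem:divinnerprod}, the $2$-homogeneity of $h$ (hence $1$-homogeneity of $\nabla h$), Cauchy--Schwarz, and Corollary~\ref{cor:Ltimederivub}. No substantive differences.
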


\begin{proof}
	Let $Q(t) \triangleq \int h d\dist_{t}$. We can compute: 
	\begin{align}
	Q'(t) &= \int h(\param) \frac{d\dist_t }{dt} (\param) d\param \nonumber \\
	&= \int h(\param) (-\sigma \dist_t(\param) - \nabla \cdot(v[\dist_t](\param) \dist_t(\param))) d\param + \sigma \int h dU^{d} \nonumber \\
	&= -\sigma \int h(\bparam) \|\param\|_2^2 \dist_t(\param) d\param + \sigma \int h dU^{d} - \int h(\param) \nabla \cdot(v[\dist_t](\param) \dist_t(\param)) d\param \label{eq:sumofints}
	\end{align}
		Note that the first two terms are bounded by $\sigma B(W_{\epsilon}^2 + 1)$ by the assumptions for the lemma. For the third term, we have from Lemma \ref{lem:divinnerprod}: 
	\begin{align*}
	\Big|\int h(\param) \nabla \cdot(v[\dist_t](\param)& \dist_t(\param)) d\param\Big|= |E_{\param \sim \dist_t} [\langle \nabla h(\param), v[\dist_t](\param)\rangle]| \\ 
	&\le \sqrt{E_{\param \sim \dist_t} [\|\nabla h(\param)\|_2^2] E_{\param \sim \dist_t}[\|v[\dist_t](\param)\|_2^2]} \tag{by Cauchy-Schwarz}\\
	&\le \sqrt{E_{\param \sim \dist_t} [\|\nabla h(\bparam)\|_2^2 \|\param\|_2^2]E_{\param \sim \dist_t}[\|v[\dist_t](\param)\|_2^2]}   \tag{by homogeneity of $\nabla h$}  \\
	&\le M W_{\epsilon}\sqrt{E_{\param \sim \dist_t}[\|v[\dist_t](\param)\|_2^2]}  \tag{since $h$ is Lipschitz on the sphere} \\
	&\le M W_{\epsilon} \left(-\frac{d}{dt} L[\dist_t] + \sigma B_L(W_{\epsilon}^2 + 1)\right)^{1/2}   \tag{by Corollary \ref{cor:Ltimederivub}}
	\end{align*}
	Plugging this into \eqref{eq:sumofints}, we get that
	\begin{align*}
	|Q'(t)| \le \sigma B(W_{\epsilon}^2 + 1) + MW_{\epsilon} \left(-\frac{d}{dt} L[\dist_t] + \sigma B_L(W_{\epsilon}^2 + 1)\right)^{1/2} 
	\end{align*}
\end{proof}


Now we complete the proof of Lemma~\ref{lem:fprimechange}. 
\begin{proof}[Proof of Lemma~\ref{lem:fprimechange}]
	Recall that $L'[\dist_t](\bparam) = \langle \nabla R(\int \Phi d\dist_t), \Phi(\bparam) \rangle + V(\bparam)$. Differentiating with respect to $t$, 
	\begin{align}
	\frac{d}{dt} L'[\dist_t](\bparam) &= \left\langle \frac{d}{dt} \nabla R\left(\int \Phi d\dist_t\right), \Phi(\bparam) \right \rangle \nonumber \\
	&= \Phi(\bparam)^{\top} \nabla^2 R(Q(t)) Q'(t) \nonumber \\
	&\le C_R B_{\Phi} \|Q'(t)\|_2 \nonumber \\ 
	&\le C_R B_{\Phi} \|Q'(t)\|_1 \label{eq:l1normbound}
	\end{align}
	Integrating and applying the same reasoning to $-L'[\dist_t]$ gives us \eqref{eq:Lprimechangeub}. Now we apply Lemma \ref{lem:2hom} to get 
	\begin{align*}
	\|Q'(t)\|_1 &= \sum_{i = 1}^k \left|\frac{d}{dt} \int \Phi_i d\dist_t\right|\\
	&\le \sum_{i = 1}^k \left[\sigma B_{\Phi}(W_{\epsilon}^2 + 1) + M_{\Phi}W_{\epsilon}\left(-\frac{d}{dt} L[\dist_t] + \sigma B_L(W_{\epsilon}^2 + 1)\right)^{1/2}\right]\\
	&\le k\sigma B_{\Phi}(W_{\epsilon}^2 + 1) + kM_{\Phi
	}W_{\epsilon}\left(-\frac{d}{dt} L[\dist_t]+ \sigma B_L(W_{\epsilon}^2 + 1)\right)^{1/2}
	\end{align*}
	We plug this into \eqref{eq:l1normbound} and then integrate both sides to obtain 
	\begin{align*}
	&C_R B_{\Phi} \int_{t}^{t + l} \|Q'(t)\|_1 \\&\le k\sigma l C_R B_{\Phi}^2(W_{\epsilon}^2 + 1) + k C_R B_{\Phi}M_{\Phi} W_{\epsilon} \int_t^{t + l} \left(-\frac{d}{dt} L[\dist_t]+ \sigma B_L(W_{\epsilon}^2 + 1)\right)^{1/2}\\
	&\le k\sigma l C_R B_{\Phi}^2(W_{\epsilon}^2 + 1) + k C_{R} B_{\Phi} M_{\Phi} W_{\epsilon} \sqrt{l} (L[\dist_t] - L[\dist_{t + l}] + \sigma l B_L(W_{\epsilon}^2 + 1))^{1/2}
	\end{align*}
	Using $c_1 \triangleq \max \{kC_R B_{\Phi}^2, k C_RB_{\Phi}M_{\Phi}, B_L\}$ gives the statement in the lemma.
\end{proof}

Now we will fill in the proof of Lemma~\ref{lem:negdirdecrease}. 
We first show that $L'$ is Lipschitz on the unit ball. Recall that in the statement of Lemma~\ref{lem:negdirdecrease}, we define a constant $c_2$ by $c_2 \triangleq \sqrt{k} M_R M_{\Phi} + M_V$.
\begin{lemma}
	\label{lem:lprimelipschitz}
	For all $\bparam, \bparam' \in \sphd$,  
	\begin{align}
	|L'[\dist](\bparam) - L'[\dist](\bparam')| \le c_2 \|\bparam - \bparam'\|_2 \label{eq:lprimelipschitz}
	\end{align}
\end{lemma}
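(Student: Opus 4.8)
\textbf{Proof proposal for Lemma~\ref{lem:lprimelipschitz}.} The plan is a direct triangle-inequality decomposition using the definition $L'[\dist](\bparam) = \langle \nabla R(\int \Phi d\dist), \Phi(\bparam)\rangle + V(\bparam)$. First I would write
\[
L'[\dist](\bparam) - L'[\dist](\bparam') = \left\langle \nabla R\left(\textstyle\int \Phi d\dist\right), \Phi(\bparam) - \Phi(\bparam')\right\rangle + \big(V(\bparam) - V(\bparam')\big),
\]
and bound the two pieces separately.

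For the first term, apply Cauchy--Schwarz and then the bound $\|\nabla R\|_2 \le M_R$ from Assumption~\ref{assu:R} to get $\left|\left\langle \nabla R(\int \Phi d\dist), \Phi(\bparam) - \Phi(\bparam')\right\rangle\right| \le M_R \|\Phi(\bparam) - \Phi(\bparam')\|_2$. Since $\Phi$ maps into $\R^k$ and each coordinate $\Phi_i$ is $M_{\Phi}$-Lipschitz on $\sphd$ (Assumption~\ref{assu:Phi}), we have $\|\Phi(\bparam) - \Phi(\bparam')\|_2 = \big(\sum_{i=1}^k (\Phi_i(\bparam) - \Phi_i(\bparam'))^2\big)^{1/2} \le \sqrt{k}\, M_{\Phi}\,\|\bparam - \bparam'\|_2$, so the first term is at most $\sqrt{k}\, M_R M_{\Phi}\,\|\bparam - \bparam'\|_2$. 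For the second term, Assumption~\ref{assu:V} gives $|V(\bparam) - V(\bparam')| \le M_V \|\bparam - \bparam'\|_2$ directly. Adding the two bounds yields $|L'[\dist](\bparam) - L'[\dist](\bparam')| \le (\sqrt{k}\, M_R M_{\Phi} + M_V)\|\bparam - \bparam'\|_2 = c_2\|\bparam - \bparam'\|_2$, as claimed.

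There is no real obstacle here — the statement is a routine consequence of the regularity assumptions, and the only mild subtlety is tracking the $\sqrt{k}$ factor that arises from converting coordinatewise Lipschitz bounds on $\Phi_i$ into an $\ell_2$ bound on the vector-valued $\Phi$. I would just make sure to note that all evaluations are at points $\bparam, \bparam' \in \sphd$, so that the ``on the sphere'' qualifiers in Assumptions~\ref{assu:Phi} and~\ref{assu:V} apply.
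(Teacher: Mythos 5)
Your proof is correct and follows essentially the same route as the paper's: triangle inequality plus Cauchy--Schwarz on the $\nabla R$ term, the coordinatewise Lipschitz bound on $\Phi$ yielding the $\sqrt{k}M_RM_{\Phi}$ factor, and the Lipschitz bound on $V$. The paper's proof is just a more compressed version of the same argument.
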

\begin{proof}
	Using the definition of $L'$ and triangle inequality,
	\begin{align*}
	|L'[\dist](\bparam) - L'[\dist](\bparam')| &\le \left\|\nabla R\left(\int \Phi d\dist\right)\right\|_2 \| \Phi(\bparam) - \Phi(\bparam')\|_2 + |V(\bparam) - V(\bparam')| \\
	&\le (\sqrt{k} M_R M_{\Phi} + M_V)\| \bparam - \bparam'\|_2 \tag{by definition of $M_{\Phi}, M_R, M_V$}
	\end{align*}

\end{proof}

Next, we introduce notation to refer to the $-\tau$-sublevel set of $L'[\dist_t]$. This will be useful for our proof of Lemma~\ref{lem:negdirdecrease}. Define $K_t^{-\tau} \triangleq \{\bparam \in \sphd : L'[\rho_t](\bparam) \le -\tau\}$, the $-\tau$-sublevel set of $L'[\dist_t]$, and let $m(S) \triangleq \E_{\param \sim U^{d}}[\1(\param \in S)]$ be the normalized spherical area of the set $S$. The following statement uses the Lipschitz-ness of $L'[\dist_t]$ to lower bound the volume of $K_t^{-\tau + \delta}$ for some $\delta > 0$ if the $-\tau$-sublevel set $K_t^{-\tau}$ is nonempty.

\begin{lemma}
	\label{lem:vollb}
	If $K_t^{-\tau}$ is nonempty, for $0 \le \delta \le \tau$, $\log m(K_t^{-\tau + \delta}) \ge -2d\log \frac{c_2}{\delta}$. 
\end{lemma}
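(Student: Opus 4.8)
The plan is to produce, around a point of $K_t^{-\tau}$, a spherical cap that is entirely contained in $K_t^{-\tau+\delta}$, and then to lower bound the normalized area of that cap by a routine integral estimate. First, since $K_t^{-\tau}$ is nonempty, fix $\bparam_0 \in K_t^{-\tau}$, so $L'[\dist_t](\bparam_0) \le -\tau$. By Lemma~\ref{lem:lprimelipschitz}, $L'[\dist_t]$ is $c_2$-Lipschitz on $\sphd$, so for every $\bparam \in \sphd$ with $\|\bparam - \bparam_0\|_2 \le \delta/c_2$ we get $L'[\dist_t](\bparam) \le L'[\dist_t](\bparam_0) + c_2 \cdot (\delta/c_2) \le -\tau + \delta$. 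Hence the Euclidean cap $\mathrm{Cap}(\bparam_0, \delta/c_2) \triangleq \{\bparam \in \sphd : \|\bparam - \bparam_0\|_2 \le \delta/c_2\}$ is contained in $K_t^{-\tau+\delta}$, and therefore $m(K_t^{-\tau+\delta}) \ge m(\mathrm{Cap}(\bparam_0, \delta/c_2))$.

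It then remains to lower bound $m(\mathrm{Cap}(\bparam_0, r))$ for $r \triangleq \delta/c_2$. By rotational invariance of $U^d$ this equals $\left(\int_0^{\phi_0} \sin^{d-1}\phi \, d\phi\right) \big/ \left(\int_0^{\pi} \sin^{d-1}\phi \, d\phi\right)$, where $\phi_0 = 2\arcsin(r/2)$ is the angular radius of the cap (using $\|\bparam - \bparam_0\|_2 = 2\sin(\tfrac12 \, \text{angle}(\bparam,\bparam_0))$), and note $\phi_0 \ge r$. I would bound the numerator below by $(2/\pi)^{d-1}\int_0^{\phi_0}\phi^{d-1}\,d\phi = (2/\pi)^{d-1}\phi_0^d/d \ge (2/\pi)^{d-1} r^d / d$ via $\sin\phi \ge 2\phi/\pi$ on $[0,\pi/2]$, and the denominator above by $\pi$. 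This gives $m(\mathrm{Cap}(\bparam_0,r)) \ge (2/\pi)^{d-1} r^d / (\pi d)$. Because the target bound carries a factor $2$ in the exponent of $d$, this crude estimate suffices: for $r$ below a fixed absolute constant one has $(2/\pi)^{d-1}/(\pi d) \ge r^d$, hence $m(K_t^{-\tau+\delta}) \ge r^{2d} = (\delta/c_2)^{2d}$, and taking logarithms yields $\log m(K_t^{-\tau+\delta}) \ge -2d\log(c_2/\delta)$.

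The only point requiring a little care is the regime of validity of the cap estimate: the $r^d$-type lower bound is only useful (and the factor-$2$ slack only closes) when $r = \delta/c_2$ is below an absolute constant, which is precisely the setting in which Lemma~\ref{lem:vollb} is invoked. Indeed, the hypothesis that $K_t^{-\tau}$ is nonempty already forces $\tau \le B_L$ by Lemma~\ref{lem:lprimebound}, so $\delta \le \tau \le B_L$, and in the application inside the proof of Lemma~\ref{lem:negdirdecrease} the value of $\delta$ is taken proportional to the small quantity $\tau$, so $\delta/c_2$ is comfortably small. I expect this bookkeeping — fixing the absolute constant in the spherical-cap volume bound and checking $\delta/c_2$ falls in the admissible range — to be the only delicate step; the geometric heart of the argument (Lipschitz sublevel set contains a cap, and a cap has nonnegligible measure) is immediate.
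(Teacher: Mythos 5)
Your proof is correct and follows essentially the same route as the paper's: Lipschitz continuity of $L'[\dist_t]$ (Lemma~\ref{lem:lprimelipschitz}) places a spherical cap of radius $\delta/c_2$ around a point of $K_t^{-\tau}$ inside $K_t^{-\tau+\delta}$, and then one lower bounds the normalized measure of that cap. The only difference is that the paper invokes Lemma 2.3 of \citet{ball1997elementary} for the cap-volume bound, whereas you derive it by an explicit integral; your observation that the resulting $r^{2d}$ bound needs $\delta/c_2$ below an absolute constant (which holds in the lemma's actual use) applies equally to the paper's version.
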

\begin{proof}
	Let $\bparam \in K_t^{-\tau}$. From Lemma \ref{lem:lprimelipschitz}, $L'[\dist](\bparam') \le -\tau + \delta$ for all $\bparam'$ with $\|\bparam' - \bparam\|_2 \le \frac{\delta}{c_2}$. Thus, we have 
	\begin{align*}
	m(K_t^{-\tau + \delta}) \ge \E_{\bparam' \sim U^d}\left[\ind{\|\bparam' - \bparam\|_2 \le \frac{\delta}{c_2}}\right]
	\end{align*}
	Now the statement follows by Lemma 2.3 of \citep{ball1997elementary}.
\end{proof}       

Finally, the proof of Lemma~\ref{lem:negdirdecrease} will require a general lemma about the magnitude of the gradient of a 2-homogeneous function in the radial direction. 

\begin{lemma}
	Let $h : \R^{d + 1} \rightarrow \mathbb{R}$ be a 2-homogeneous function. Then for any $\param \in \R^{d + 1}$, $\bparam^{\top} \nabla h(\param) = 2 \|\param\|_2 h(\bparam)$.
	\label{lem:2homgradient}
\end{lemma}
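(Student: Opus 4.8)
The plan is to invoke Euler's identity for homogeneous functions. Since $h$ is $2$-homogeneous, we have $h(c\param) = c^2 h(\param)$ for all scalars $c$ and all $\param$. Differentiating both sides of this identity with respect to $c$ and evaluating at $c = 1$ yields $\param^\top \nabla h(\param) = 2 h(\param)$, the standard Euler relation. (If one wants to avoid differentiating under the homogeneity relation directly, the same conclusion follows from the chain rule applied to $g(c) \triangleq h(c\param)$, noting $g'(c) = \param^\top \nabla h(c\param)$ and $g(c) = c^2 h(\param)$, so $g'(1) = \param^\top \nabla h(\param) = 2 h(\param)$.)

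Next I would rescale to introduce $\bparam = \param/\|\param\|_2$. Dividing the Euler relation by $\|\param\|_2$ gives
\begin{align*}
\bparam^\top \nabla h(\param) = \frac{1}{\|\param\|_2}\,\param^\top \nabla h(\param) = \frac{2 h(\param)}{\|\param\|_2}.
\end{align*}
Finally, I would use $2$-homogeneity once more in the form $h(\param) = h(\|\param\|_2 \bparam) = \|\param\|_2^2\, h(\bparam)$, substitute this into the previous display, and cancel one factor of $\|\param\|_2$ to obtain $\bparam^\top \nabla h(\param) = 2\|\param\|_2 h(\bparam)$, as claimed. The case $\param = 0$ is trivial (or excluded since $\bparam$ is undefined there), so we may assume $\param \neq 0$ throughout.

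There is no real obstacle here: the only mild point worth stating explicitly is that $2$-homogeneity implies differentiability of $c \mapsto h(c\param)$ wherever $h$ itself is differentiable, which is assumed in the ambient setting (all the functions $\Phi_i$, $V$ to which this lemma is applied are assumed differentiable in Assumptions~\ref{assu:Phi} and~\ref{assu:V}), so Euler's identity applies without fuss.
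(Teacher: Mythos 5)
Your proof is correct and is essentially the paper's argument: both differentiate the homogeneity relation along the radial direction through $\param$, the paper parametrizing as $h(\param+\alpha\bparam)=(\|\param\|_2+\alpha)^2 h(\bparam)$ and differentiating at $\alpha=0$, while you first derive Euler's identity $\param^\top\nabla h(\param)=2h(\param)$ and then rescale. The two computations coincide up to reparametrization of the ray, so there is nothing substantive to distinguish them.
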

\begin{proof}
	We have $h(\param + \alpha \bparam) = (\|\param\|_2 + \alpha)^2h(\bparam)$. Differentiating both sides with respect to $\alpha$ and evaluating the derivative at 0, we get $\bparam^{\top} \nabla h(\param) = 2\|\param\|_2 h(\bparam)$, as desired. 
\end{proof}


Now we are ready to complete the proof of Lemma~\ref{lem:negdirdecrease}. Recall that in the setting of Lemma~\ref{lem:negdirdecrease}, $l$ is the length of the time interval over which the descent direction causes a decrease in the objective. We will first show that a descent direction in $L'[\dist_t]$ will remain for the next $l$ time steps. In the notation of Lemma \ref{lem:fprimechange}, define $z(s) \triangleq C_R B_{\Phi}\int_{t^*}^{t^* + s} \|Q'(t)\|_1dt$. Note that from Lemma \ref{lem:fprimechange}, for all $\bparam \in \sphd$  we have $|L'[\dist_{t^* + s}](\bparam) - L'[\dist_{t^*}](\bparam)| \le z(s)$. Thus, the following holds: 

\begin{claim}
	\label{claim:Ktauzsnonempty}
	For all $s \le l$, $K_{t^* + s}^{-\tau + z(s)}$ is nonempty. 
\end{claim}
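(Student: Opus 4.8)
The plan is to produce an explicit witness for the nonemptiness of $K_{t^* + s}^{-\tau + z(s)}$, namely the very direction $\bparam \in \sphd$ supplied by the hypothesis of Lemma~\ref{lem:negdirdecrease}, which satisfies $L'[\dist_{t^*}](\bparam) \le -\tau$. Since the claim must hold for every $s \le l$ and $t^* + l \le t_{\epsilon}$ by assumption, the interval $[t^*, t^* + s]$ lies inside $[0, t_{\epsilon}]$, so Lemma~\ref{lem:fprimechange} is applicable to this fixed $\bparam$.

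Concretely, I would invoke the first bound~\eqref{eq:Lprimechangeub} of Lemma~\ref{lem:fprimechange} with the pair of times $t^*$ and $t^* + s$, which gives
\[
\left|L'[\dist_{t^*}](\bparam) - L'[\dist_{t^* + s}](\bparam)\right| \;\le\; C_R B_{\Phi} \int_{t^*}^{t^* + s} \|Q'(t)\|_1 \, dt \;=\; z(s),
\]
using the definition $z(s) \triangleq C_R B_{\Phi}\int_{t^*}^{t^* + s}\|Q'(t)\|_1\, dt$. Rearranging and combining with $L'[\dist_{t^*}](\bparam) \le -\tau$ yields $L'[\dist_{t^* + s}](\bparam) \le L'[\dist_{t^*}](\bparam) + z(s) \le -\tau + z(s)$, i.e. $\bparam \in K_{t^* + s}^{-\tau + z(s)}$, so the set is nonempty.

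There is essentially no hard step: all of the content is bundled into Lemma~\ref{lem:fprimechange}, and the only points to check are that $z(s)$ is a well-defined finite quantity (which follows since a solution to the dynamics exists, $W_t \le W_{\epsilon}$ by Lemma~\ref{lem:Wbound}, and Lemma~\ref{lem:2hom} bounds $\|Q'(t)\|_1$) and that $[t^*, t^* + s]$ stays within $[0, t_{\epsilon}]$, over which the preceding lemmas are valid. Both are immediate from the standing hypotheses of Lemma~\ref{lem:negdirdecrease}, so the proof is a one-line deduction from Lemma~\ref{lem:fprimechange} once the witness $\bparam$ is identified.
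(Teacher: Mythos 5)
Your proof is correct and is essentially identical to the paper's: both take the assumed direction $\bparam$ with $L'[\dist_{t^*}](\bparam)\le-\tau$ as the witness and apply the bound $|L'[\dist_{t^*+s}](\bparam)-L'[\dist_{t^*}](\bparam)|\le z(s)$ from Lemma~\ref{lem:fprimechange} (equation~\eqref{eq:Lprimechangeub}) to conclude $\bparam\in K_{t^*+s}^{-\tau+z(s)}$. The additional remarks about finiteness of $z(s)$ and the interval staying within $[0,t_\epsilon]$ are fine but not needed beyond what the paper already assumes.
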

\begin{proof}
	By assumption, $\exists \bparam$ with $\bparam \in K_{t^*}^{-\tau}$. Then $L'[\dist_{t^* + s}](\bparam) \le L'[\dist_{t^*}](\bparam) + z(s) \le -\tau + z(s)$, so $K_{t^* + s}^{-\tau + z(s)}$ is nonempty. 
\end{proof}

Let $T_{s} \triangleq K_{t^* + s}^{-\tau/2 + z(s)}$ for $0 \le s \le l$. We now argue that this set $T_{s}$ does not shrink as $t$ increases. 

\begin{claim}
	For all $s' > s$, $T_{s'} \supseteq T_{s}$. 
\end{claim}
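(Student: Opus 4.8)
The plan is to show that the super-level sets $T_s = K_{t^*+s}^{-\tau/2 + z(s)}$ are nested increasing in $s$, which I expect follows almost immediately from the definition of $z(s)$ and the monotonicity of $z$. Recall $z(s) = C_R B_{\Phi}\int_{t^*}^{t^*+s}\|Q'(t)\|_1\,dt$, which is nondecreasing in $s$ since the integrand is nonnegative. Also recall from Lemma~\ref{lem:fprimechange} (via the definition of $z$) that for all $\bparam \in \sphd$ and all $0 \le s \le l$,
\begin{align}
|L'[\dist_{t^*+s}](\bparam) - L'[\dist_{t^*}](\bparam)| \le z(s). \label{eq:Lprimeshift}
\end{align}

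First I would fix $s' > s$ and take any $\bparam \in T_s$, so that $L'[\dist_{t^*+s}](\bparam) \le -\tau/2 + z(s)$. Applying \eqref{eq:Lprimeshift} at time $s$ and at time $s'$, I get
\begin{align*}
L'[\dist_{t^*}](\bparam) &\le L'[\dist_{t^*+s}](\bparam) + z(s) \le -\tau/2 + 2z(s),\\
L'[\dist_{t^*+s'}](\bparam) &\le L'[\dist_{t^*}](\bparam) + z(s') \le -\tau/2 + 2z(s) + z(s').
\end{align*}
This is not quite what I want; the cleaner route is to bound the change between times $t^*+s$ and $t^*+s'$ directly. Since $z$ is obtained by integrating $\|Q'(t)\|_1$, the same argument that gives \eqref{eq:Lprimeshift} gives $|L'[\dist_{t^*+s'}](\bparam) - L'[\dist_{t^*+s}](\bparam)| \le C_R B_{\Phi}\int_{t^*+s}^{t^*+s'}\|Q'(t)\|_1\,dt = z(s') - z(s)$. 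Hence
\begin{align*}
L'[\dist_{t^*+s'}](\bparam) \le L'[\dist_{t^*+s}](\bparam) + (z(s') - z(s)) \le -\tau/2 + z(s) + z(s') - z(s) = -\tau/2 + z(s'),
\end{align*}
so $\bparam \in T_{s'}$, proving $T_{s'} \supseteq T_s$.

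**The main (minor) obstacle** is just making sure the incremental Lipschitz-in-time bound $|L'[\dist_{t^*+s'}](\bparam) - L'[\dist_{t^*+s}](\bparam)| \le z(s') - z(s)$ is legitimately available: it is, because the proof of Lemma~\ref{lem:fprimechange} proceeds by integrating the pointwise bound $|\frac{d}{dt}L'[\dist_t](\bparam)| \le C_R B_{\Phi}\|Q'(t)\|_1$ (equation~\eqref{eq:l1normbound}) over the relevant interval, and nothing about that integration requires the left endpoint to be $t^*$. So I would simply invoke that pointwise derivative bound and integrate from $t^*+s$ to $t^*+s'$. Everything else is a one-line chain of inequalities, so the claim is essentially immediate once the right bound is in hand.
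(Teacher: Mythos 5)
Your proof is correct and follows the same route as the paper: both derive the incremental bound $|L'[\dist_{t^*+s'}](\bparam) - L'[\dist_{t^*+s}](\bparam)| \le z(s') - z(s)$ by integrating the pointwise derivative bound in \eqref{eq:l1normbound} over $[t^*+s,\,t^*+s']$, and then chain the inequalities to conclude $\bparam \in T_{s'}$. The detour through time $t^*$ that you wrote down and then discarded is indeed unnecessary, and the paper goes straight to the incremental bound exactly as you do.
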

\begin{proof}
	From \eqref{eq:l1normbound} and the definition of $z(s)$, $|L'[\dist_{t + s'}](\bparam) - L'[\dist_{t + s}](\bparam)| \le z(s') - z(s)$. It follows that for $\bparam \in T_s$
	\begin{align*}
	L'[\dist_{t + s'}](\bparam) &\le L'[\dist_{t + s}](\bparam) + z(s') - z(s)\\
	&\le -\tau/2 + z(s)- z(s) + z(s') & \text{(by definition of }T_s) \\
	&\le -\tau/2 + z(s')
	\end{align*}
	which means that $\bparam \in T_{s'}$. 
\end{proof}

Now we show that the weight of the particles in $T_{s}$ grows very fast if $z(k)$ is small.
\begin{claim}
	\label{claim:fastgrowth}
	Suppose that $z(l) \le \tau/4$. Let $\tilde{T}_{s} = \{\param \in \R^{d + 1} : \bparam \in T_{s}\}$. Define $N(s) \triangleq \int_{\tilde{T}_{s}} \|\param\|^2 d \dist_{t^* + s}$ and $\beta \triangleq \exp(-2d\log \frac{2c_2}{\tau})$. Then $N'(s) \ge (\tau - \sigma)N(s) + \sigma \beta$. 
\end{claim}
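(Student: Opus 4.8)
The plan is to differentiate $N(s)$ along the perturbed dynamics~\eqref{eq:noisywassersteindynamics}, dispose of the contribution coming from the growth of the region $\tilde T_s$ by nonnegativity, and then bound the remaining transport term from below using the $2$-homogeneity of $L'[\dist_{t^*+s}]$ (via Lemma~\ref{lem:2homgradient}). Write $N(s) = \int \1(\bparam \in T_s)\,\|\param\|_2^2\,d\dist_{t^*+s}(\param)$. By the preceding claim $T_s \subseteq T_{s'}$ for $s \le s'$, so $\1(\bparam \in T_s)$ is nondecreasing in $s$ for each fixed $\param$; since $\|\param\|_2^2\dist_{t^*+s}(\param) \ge 0$, the term in $N'(s)$ produced by the expansion of $\tilde T_s$ is nonnegative, and hence
\[
N'(s) \;\ge\; \int_{\tilde T_s} \|\param\|_2^2\,\partial_s \dist_{t^*+s}(\param)\,d\param .
\]

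Next I would substitute $\partial_s\dist_{t^*+s} = -\sigma\dist_{t^*+s} + \sigma U^d - \nabla\cdot(v[\dist_{t^*+s}]\dist_{t^*+s})$. The first piece gives $-\sigma N(s)$. The noise piece gives $\sigma\int_{\tilde T_s}\|\param\|_2^2\,dU^d = \sigma\, m(T_s)$, because $U^d$ is supported on $\sphd$ where $\|\param\|_2 = 1$ and $\tilde T_s \cap \sphd = T_s$. To bound $m(T_s)$ I would use that $z$ is nondecreasing with $z(l) \le \tau/4 < \tau/2$: by Claim~\ref{claim:Ktauzsnonempty}, $K_{t^*+s}^{-\tau+z(s)}$ is nonempty, so Lemma~\ref{lem:vollb} applied with sublevel parameter $\tau - z(s)$ and offset $\delta = \tau/2$ (legal since $\tau/2 \le \tau - z(s)$) yields $\log m(T_s) = \log m\!\left(K_{t^*+s}^{-\tau/2 + z(s)}\right) \ge -2d\log\frac{2c_2}{\tau} = \log\beta$, i.e. $m(T_s) \ge \beta$.

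It then remains to show the transport term $-\int_{\tilde T_s}\|\param\|_2^2\,\nabla\cdot(v[\dist_{t^*+s}]\dist_{t^*+s})\,d\param$ is at least $\tau N(s)$. Applying the divergence theorem on the cone $\tilde T_s$ (the integrand decays at infinity by Lemma~\ref{lem:Wbound}, as behind Lemma~\ref{lem:divinnerprod}, so only the conical boundary $\partial\tilde T_s$ contributes) this equals $\int_{\tilde T_s} 2\param^\top v[\dist_{t^*+s}](\param)\,\dist_{t^*+s}(\param)\,d\param - \int_{\partial\tilde T_s}\|\param\|_2^2\,(v[\dist_{t^*+s}]\dist_{t^*+s})\cdot\mathbf n$, with $\mathbf n$ the outward unit normal. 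For the boundary term: $\tilde T_s$ is the $0$-homogeneous sublevel set $\{\param : L'[\dist_{t^*+s}](\bparam)\le -\tau/2+z(s)\}$, so $\mathbf n$ is orthogonal to $\param$ and is a positive multiple of the component of $\nabla_\param L'[\dist_{t^*+s}](\param)$ orthogonal to $\param$; since $v[\dist_{t^*+s}] = -\nabla_\param L'[\dist_{t^*+s}]$ and $\mathbf n\perp\param$, this gives $v[\dist_{t^*+s}](\param)\cdot\mathbf n = -\bigl\|[\nabla_\param L'[\dist_{t^*+s}](\param)]_{\perp\param}\bigr\| \le 0$, so the boundary term is $\ge 0$. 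For the interior term, $\Phi$ and $V$ being $2$-homogeneous makes $L'[\dist_{t^*+s}](\cdot)$ $2$-homogeneous in $\param$, so Lemma~\ref{lem:2homgradient} gives $\param^\top\nabla_\param L'[\dist_{t^*+s}](\param) = 2\|\param\|_2^2\,L'[\dist_{t^*+s}](\bparam)$, hence $\param^\top v[\dist_{t^*+s}](\param) = -2\|\param\|_2^2\,L'[\dist_{t^*+s}](\bparam)$; for $\param\in\tilde T_s$ we have $L'[\dist_{t^*+s}](\bparam) \le -\tau/2+z(s)$, so $2\param^\top v[\dist_{t^*+s}](\param) \ge 4\|\param\|_2^2(\tau/2 - z(s)) \ge \tau\|\param\|_2^2$ using $z(s)\le z(l)\le\tau/4$. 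Integrating against $\dist_{t^*+s}$ gives interior term $\ge \tau N(s)$. Combining, $N'(s) \ge -\sigma N(s) + \sigma\beta + \tau N(s) = (\tau-\sigma)N(s) + \sigma\beta$.

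The main obstacle I expect is making the divergence-theorem step on the unbounded conical region $\tilde T_s$ fully rigorous: controlling the behaviour as $\|\param\|_2\to\infty$ (invoking Lemma~\ref{lem:Wbound} and the decay assumption underlying Lemma~\ref{lem:divinnerprod}) and handling the regularity of $\partial\tilde T_s$ (the level set of $L'[\dist_{t^*+s}]$ on $\sphd$ need not be smooth, so one argues on its regular part or by smoothing), while confirming that $\mathbf n\perp\param$ and $\mathbf n$ is parallel to $[\nabla_\param L'[\dist_{t^*+s}]]_{\perp\param}$ so that the boundary flux has the correct sign.
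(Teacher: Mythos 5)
Your proof is correct and uses the same essential ingredients as the paper: monotonicity of $T_s$, nonemptiness of $K_{t^*+s}^{-\tau+z(s)}$ plus Lemma~\ref{lem:vollb} to get $m(T_s)\ge\beta$ for the noise term, the inward-pointing velocity on the conical boundary of $\tilde T_s$, and Lemma~\ref{lem:2homgradient} to extract the radial growth rate $\tau$ from $L'[\dist_{t^*+s}](\bparam)\le -\tau/2+z(s)\le-\tau/4$. The only difference is presentational: the paper handles the transport term with a Lagrangian infinitesimal particle-displacement argument ($\param\mapsto\param+v\,ds$ stays in $\tilde T_s$ and its squared norm grows by a factor $1+\tau\,ds$), whereas you integrate by parts on the cone and show the boundary flux has the favorable sign — the same geometric fact, formalized in Eulerian rather than Lagrangian terms.
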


\begin{proof}
	From the assumption $z(l) \le \frac{\tau}{4}$, it holds that $T_{s} \subseteq K_{t^* + s}^{-\tau/4} \ \forall s \le k$. Since $T_{s}$ is defined as a sublevel set, $v[\dist_{t^* + s}](\bparam)$ points inwards on the boundary of $T_s$ for all $\bparam \in T_s$, and by 1-homogeneity of the gradient, the same must hold for all $u \in \tilde{T}_{s}$. 
	
	Now consider any particle $\param \in \tilde{T}_{s}$. We have that $\param$ flows to $\param + v[\dist_{t^* + s}](\param)ds$ at time $t^* + s + ds$. Furthermore, since the gradient points inwards from the boundary, it also follows that $u + v[\dist_{t^* + s}](\param) ds \in \tilde{T}_{s}$. Now we compute  
	\begin{align}
	\int_{\tilde{T}_{s}} \|\param\|_2^2 d\dist_{t^* + s + ds} &= (1 - \sigma ds)\int_{\tilde{T}_{s}} \|\param + v[\dist_{t^* + s}](\param)ds\|_2^2 d\dist_{t^* + s} + \sigma ds \int_{\tilde{T}_s} 1dU^{d}\nonumber \\
	&\ge (1 - \sigma ds)\int_{\tilde{T}_{s}} (\|\param\|_2^2 + 2\param^{\top} v[\dist_{t^* + s}](\param) ds) d\dist_{t^* + s} + \sigma m(K^{-\tau/2 + z(s)}_{t^* + s})ds\label{eq:particleflow}
	\end{align}
	Now we apply Lemma \ref{lem:2homgradient}, using the 2-homogeneity of $F'$ and the fact that $L'[\dist_{t^* + s}](\bparam) \le -\tau/4 \ \forall \param \in \tilde{T}_s$
	\begin{align}
	\|\param\|_2^2 +2\param^{\top} v[\dist_{t^* + s}](\param) ds &= \|\param\|_2^2 - 4\|\param\|_2^2 L'[\dist_{t^* + s}](\bparam)ds \nonumber \\
	&\ge \|\param\|_2^2(1 + \tau ds)   \label{eq:normgrowth}
	\end{align}
	Furthermore, since $K_{t^* + s}^{-\tau + z(s)}$ is nonempty by Claim \ref{claim:Ktauzsnonempty}, we can apply Lemma \ref{lem:vollb} and obtain 
	\begin{align}
	\label{eq:mKtaulowerbound}
	m(K^{-\tau/2 + z(s)}_{t^* + s}) \ge \beta 
	\end{align}
	Plugging \eqref{eq:normgrowth} and \eqref{eq:mKtaulowerbound} back into \eqref{eq:particleflow}, we get 
	\begin{align*}
	\int_{\tilde{T}_{s}} \|u\|_2^2 d\dist_{t^* + s + ds} &\ge (1 - \sigma ds)(1 + 2\tau ds) N(s) + \sigma \beta ds
	\end{align*}
	Since we also have that $\tilde{T}_{s + ds} \supseteq \tilde{T}_{s}$, it follows that 
	\begin{align*}
	N(s + ds) = \int_{\tilde{T}_{s + ds}} \|u\|_2^2 d\dist_{t^* + s + ds}
	\ge (1 - \sigma ds)(1 + \tau ds) N(s) + \sigma \beta ds
	\end{align*}
	and so $N'(s) \ge (\tau - \sigma) N(s) + \sigma \beta$.
\end{proof}

Using Claim~\ref{claim:fastgrowth} allows us to complete the proof of Lemma \ref{lem:negdirdecrease}.
\begin{proof}[Proof of Lemma \ref{lem:negdirdecrease}]
	If $z(l) = C_R B_\Phi \int_{t} ^{t+l} \| Q'(t) \|_1  \ge \frac{\tau}{4}$, then by rearranging the conclusion of  Lemma \ref{lem:fprimechange} we immediately get \eqref{eq:objdecrease}.	
	
	Suppose for the sake of contradiction that $z(l) \le \tau/4$. From Claim \ref{claim:fastgrowth}, it follows that $N(1) \ge \sigma \beta$, and $N(l) \ge \exp((\tau - \sigma)(l - 1)) N(1)$. Thus, in $\frac{\log(W_{\epsilon}^2/\sigma) + 2d \log \frac{2 c_2}{\tau}}{\tau - \sigma} + 1$ time, $W_{t^* + l} \ge N(l) \ge W_{\epsilon}^2$, a contradiction. Therefore, it must be true that $z(l) \ge \tau/4$.

\end{proof}

Finally, we fill in the proof of Lemma~\ref{lem:posdirdecrease}. 
\begin{proof}
	We can first compute
	\begin{align*}
	\E_{\param \sim \dist_t} [L'[\dist_t](\param)] &= \E_{\param \sim \dist_t} [L'[\dist_t](\bparam)\|\param\|_2^2] \\ 
	&= \frac{1}{2}\E_{\param \sim \dist_t} [\|\param\|_2 \bparam^{\top} v[\dist_t](\param)] &\tag{via Lemma \ref{lem:2homgradient}} \\
	&\le \frac{1}{2} \sqrt{\E_{\param \sim \dist_t} [\|\param\|_2^2]\E_{\param \sim \dist_t} [\|v[\dist_t](\param)\|_2^2]} &\tag{by Cauchy-Schwarz} \\
	&\le \frac{1}{2}W_{\epsilon} \sqrt{\E_{\param \sim \dist_t} [\|v[\dist_t](\param)\|_2^2]}
	\end{align*}
	Rearranging gives $\E_{\param \sim \dist_t} [\|v[\dist_t](\param)\|_2^2] \ge \frac{\E_{\param \sim \dist_t} [L'[\dist_t](\param)]^2}{W_{\epsilon}^2}$, and plugging this into \eqref{eq:Ltimederivub} gives the desired result. 
\end{proof}

\subsection{Discrete-Time Optimization}
\label{subsec:discrete-time}

To circumvent the technical issue of existence of a solution to the continuous-time dynamics, we also note that polynomial time convergence holds for discrete-time updates. 
\begin{theorem}
Along with Assumptions~\ref{assu:R},~\ref{assu:Phi},~\ref{assu:V} additionally assume that $\nabla \Phi_i$ and $\nabla V$ are $C_{\Phi}$ and $C_{V}$-Lipschitz, respectively. Let $\dist_t$ evolve according to the following discrete-time update:
\begin{align*}
\dist_{t + 1} \triangleq \dist_t + \eta(-\sigma \dist_t + \sigma U^d - \nabla \cdot(v[\dist_t] \dist_t) )
\end{align*}
There exists a choice of 
\begin{align*}
	\sigma &\triangleq \exp(-d \log(1/\epsilon)\poly(k, M_V, M_R, b_V, B_V, C_R, B_{\Phi}, C_{\Phi}, C_V, L[\dist_0] - L[\dist^{\star}])) \\
	\eta &\triangleq \poly(k, M_V, M_R, b_V, B_V, C_R, B_{\Phi}, C_{\Phi}, C_V, L[\dist_0] - L[\dist^{\star}]) \\
	t_{\epsilon} &\triangleq \frac{d^2}{\epsilon^4} \poly(k, M_V, M_R, b_V, B_V, C_R, B_{\Phi}, C_{\Phi}, C_V, L[\dist_0] - L[\dist^{\star}])
\end{align*}
such that $\min_{0 \le t \le t_{\epsilon}} L[\dist_t] - L^{\star} \le \epsilon$. 
\end{theorem}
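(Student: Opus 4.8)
The plan is to transcribe the proof of Theorem~\ref{thm:polytimeopt} into discrete time, replacing every time integral by a sum over steps and treating one Euler step as an approximate continuous flow whose error is second order in $\eta$. The only genuinely new work is bounding this discretization error, and the two extra hypotheses---that $\nabla \Phi_i$ and $\nabla V$ are Lipschitz---are exactly what make this possible: together with the bounded Hessian of $R$ they give a quantitative smoothness estimate for the functional $L$ along the update direction. As in the continuous case, all estimates need only hold for $0 \le t \le t_\epsilon$, and $\eta$ can be chosen after $\sigma$, as small as any fixed polynomial in the problem parameters requires.

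The central lemma I would prove is a one-step descent inequality
\begin{align*}
L[\dist_{t+1}] \le L[\dist_t] - \tfrac{\eta}{2}\,\E_{\param\sim\dist_t}\!\big[\|v[\dist_t](\param)\|_2^2\big] + \eta\sigma B_L(W_t^2+1) + \eta^2 c\,(W_t^2+1)^2,
\end{align*}
where $c$ is a polynomial in $k, M_R, C_R, M_\Phi, B_\Phi, C_\Phi, M_V, B_V, C_V$. The first two terms come from the same chain-rule computation as Lemma~\ref{lem:Ltimederiv}, applying Lemma~\ref{lem:divinnerprod} with $h_1 = L'[\dist_t]$ and $h_2 = v[\dist_t]$; the $\eta^2$ term is a Taylor remainder obtained by expanding $R$ to second order (using $\|\nabla^2 R\|_{op}\le C_R$), expanding each $\Phi_i$ and $V$ to first order along the particle displacement $\eta\, v[\dist_t](\param)$ (using $C_\Phi, C_V$), and bounding $\E_{\param\sim\dist_t}[\|v[\dist_t](\param)\|^4]$ and related moments in terms of $W_t$ via the homogeneity of $v$ and the sphere bounds. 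Choosing $\eta$ small enough that $\eta c (W_\epsilon^2+1)^2 \le \tfrac14 \E[\|v\|^2]$ on the relevant range recovers essentially \eqref{eq:Ltimederivub}, and from here the second-moment bound (Lemma~\ref{lem:Wbound}), the slow-increase bound (Lemma~\ref{lem:slowincreasebound}), and the positive-direction decrease (Lemma~\ref{lem:posdirdecrease}) port over with an extra $\eta$-scale error absorbed into the constants.

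Next I would reprove Lemma~\ref{lem:fprimechange} in discrete form --- $|L'[\dist_{t}](\bparam)-L'[\dist_{t+l}](\bparam)| \le C_R B_\Phi \sum_{s=t}^{t+l-1}\|Q_{s+1}-Q_s\|_1$ with $Q_s \triangleq \int \Phi\, d\dist_s$, each increment bounded by the discrete analogue of Lemma~\ref{lem:2hom} --- and sum the one-step descent inequality across the block, applying Cauchy--Schwarz over the $l$ steps, to recover the discrete version of \eqref{eq:intQnormub}. The exponential-growth argument of Lemma~\ref{lem:negdirdecrease} then becomes a discrete recursion: Lemma~\ref{lem:vollb} is unchanged since $L'$ is still $c_2$-Lipschitz on the sphere, so if a direction $\bparam$ with $L'[\dist_{t^*}](\bparam)\le -\tau$ persists while $z(l)\le\tau/4$, the mass $N(s)$ carried on the forward-invariant nested sublevel sets $\tilde{T}_s$ obeys $N(s+1)\ge(1+\eta(\tau-\sigma))N(s)+\eta\sigma\beta$ with $\beta=\exp(-2d\log\tfrac{2c_2}{\tau})$ (via Lemma~\ref{lem:2homgradient} for the radial growth); unrolling over $l=\Theta\!\big(\tfrac{1}{\eta(\tau-\sigma)}\big(\log(W_\epsilon^2/\sigma)+d\log\tfrac{c_2}{\tau}\big)\big)$ steps forces $W_{t^*+l}^2>W_\epsilon^2$, contradicting Lemma~\ref{lem:Wbound}, hence $z(l)\ge\tau/4$ and the objective dropped. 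Convexity of $L$ in $\dist$ then gives, exactly as in the proof of Theorem~\ref{thm:polytimeopt}, that an $\epsilon$-suboptimal $\dist_t$ satisfies either $\E_{\param\sim\dist_t}[L'[\dist_t](\param)]\ge\epsilon/2$ (use Lemma~\ref{lem:posdirdecrease}) or $\min_{\bparam\in\sphd}L'[\dist_t](\bparam)\le-\epsilon/(2W_\epsilon^2)$ (use the recursion); either way the objective falls by $\Omega(\epsilon^3/(d\log(1/\epsilon)))$ over $O(d/(\eta\epsilon)\cdot\log(1/\epsilon))$ steps, so $t_\epsilon=\tfrac{d^2}{\epsilon^4}\poly(\log(1/\epsilon),\text{params})$ steps suffice, and the stated scalings of $\sigma,\eta,t_\epsilon$ follow by taking all constants large enough that the $\sigma$- and $\eta$-error terms are negligible.

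The main obstacle is the one-step descent/smoothness estimate of the second paragraph: the discretization error must simultaneously be small enough per step to be dominated by the genuine decrease $\tfrac{\eta}{2}\E[\|v\|^2]$, yet not so restrictive on $\eta$ that the step count leaves the polynomial regime. This forces careful bookkeeping --- every error term must scale at worst polynomially in the problem parameters and at most as $\eta^2$ (not $\eta^{3/2}$), and one must verify that the fourth-moment bounds on $v[\dist_t]$ used in the Taylor remainder are themselves controlled by $W_\epsilon$, which is precisely where the Lipschitz-gradient assumptions on $\Phi$ and $V$, absent in the continuous-time theorem, are indispensable. A secondary technical point is checking that the discrete update keeps $\dist_t$ a nonnegative measure with $\dist_t(\param)\to 0$ at infinity so that Lemma~\ref{lem:divinnerprod} still applies; this again holds for $\eta$ small relative to $\sigma$ and the Lipschitz constants.
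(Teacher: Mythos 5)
Your proposal takes exactly the route the paper intends: the paper omits the proof entirely, stating only that it "follows from a standard conversion of the continuous-time proof of Theorem~\ref{thm:polytimeopt} to discrete time," and your sketch is precisely that conversion, correctly identifying the one-step descent inequality with an $\eta^2$ Taylor remainder (controlled by the Lipschitz gradients of $\Phi$ and $V$ and the bounded Hessian of $R$) as the only genuinely new ingredient. The remaining steps --- porting Lemmas~\ref{lem:Wbound}, \ref{lem:slowincreasebound}, \ref{lem:posdirdecrease}, \ref{lem:fprimechange}, and the exponential-growth recursion of Lemma~\ref{lem:negdirdecrease} to sums over steps --- match the continuous-time argument line by line, so your proposal is consistent with, and more detailed than, what the paper provides.
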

The proof follows from a standard conversion of the continuous-time proof of Theorem~\ref{thm:polytimeopt} to discrete time, and we omit it here for simplicity.

			\section{Additional Simulations}
\label{sec:additionalexp}

In this section we provide more details on the simulations described in Section~\ref{sec:experiments}. The experiments were small enough to run on a standard computer, though we used a single NVIDIA TitanXp GPU. We decided the value of regularization $\lambda$ based on the training length - longer training time meant we could use smaller $\lambda$.

\subsection{Test Error and Margin vs. Hidden Layer Size}
\label{subsec:varyhiddendetail}
To justify Theorem~\ref{thm:marginnondecreasing}, we also plot the dependence of the test error and margin on the hidden layer size in Figure~\ref{fig:varyhidden} for synthetic data generated from a ground truth network with $10$ hidden units and also MNIST. The plots indicate that test error is decreasing in hidden layer size while margin is increasing, as Theorem~\ref{thm:marginnondecreasing} predicts. We train the networks for a long time in this experiment: we train for 80000 passes on the synthetic data and 600 epochs for MNIST. 
\begin{figure}
	\centering
			\includegraphics[width=0.24\textwidth]{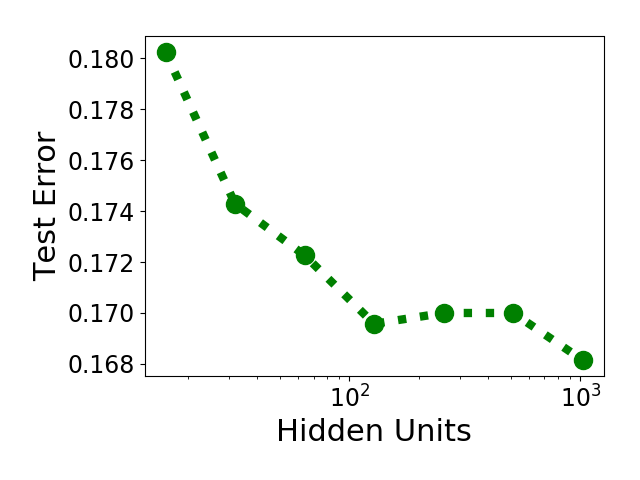}
	\includegraphics[width=0.24\textwidth]{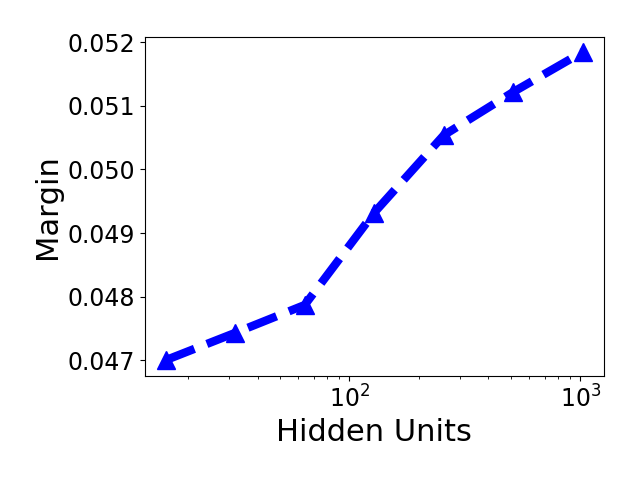}
	\includegraphics[width=0.24\textwidth]{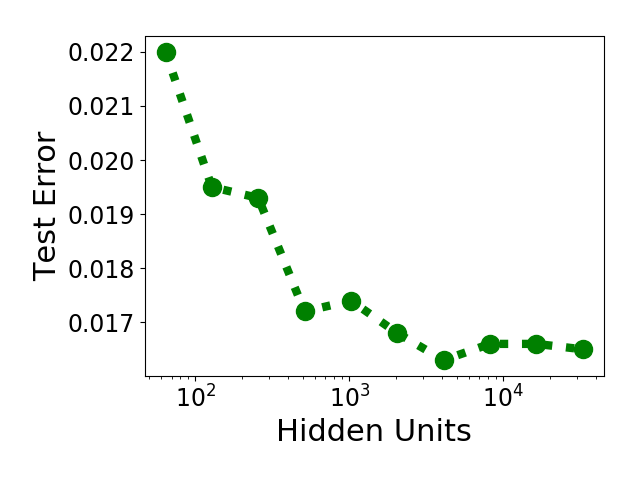}
	\includegraphics[width=0.24\textwidth]{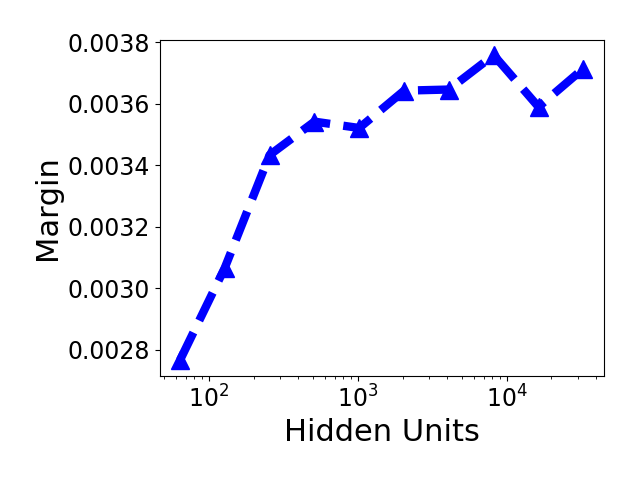}
		\caption{Dependence of margin and test error on hidden layer size. \textbf{Left:} Synthetic. \textbf{Right:} MNIST.}
	\label{fig:varyhidden}
\end{figure}

The left side of Figure~\ref{fig:varyhidden} shows the experimental results for synthetic data generated from a ground truth network with $10$ hidden units, input dimension $d = 20$, and a ground truth unnormalized margin of at least $0.01$. We train for 80000 steps with learning rate $0.1$ and $\lambda = 10^{-5}$, using two-layer networks with $2^i$ hidden units for $i$ ranging from 4 to 10. We perform 20 trials per hidden layer size and plot the average over trials where the training error hit 0. (At a hidden layer size of $2^7$ or greater, all trials fit the training data perfectly.) The right side of Figure~\ref{fig:varyhidden} demonstrates the same experiment, but performed on MNIST with hidden layer sizes of $2^i$ for $i$ ranging from $6$ to $15$. We train for 600 epochs using a learning rate of 0.01 and $\lambda = 10^{-6}$ and use a single trial per plot point. For MNIST, all trials fit the training data perfectly. The MNIST experiments are more noisy because we run one trial per plot point for MNIST, but the same trend of decreasing test error and increasing margin still holds. 

\subsection{Neural Net and Kernel Generalization vs. Training Set Size}
\label{subsec:kernel_vs_nn_exp}

\begin{figure}
		\centering
	\includegraphics[width=0.40\textwidth]{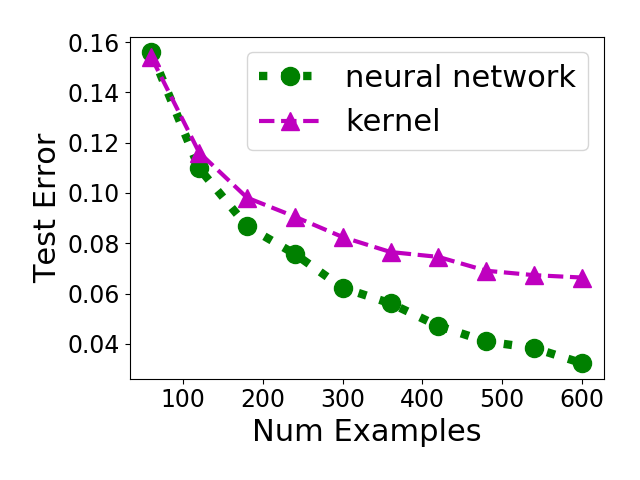}
		\includegraphics[width=0.40\textwidth]{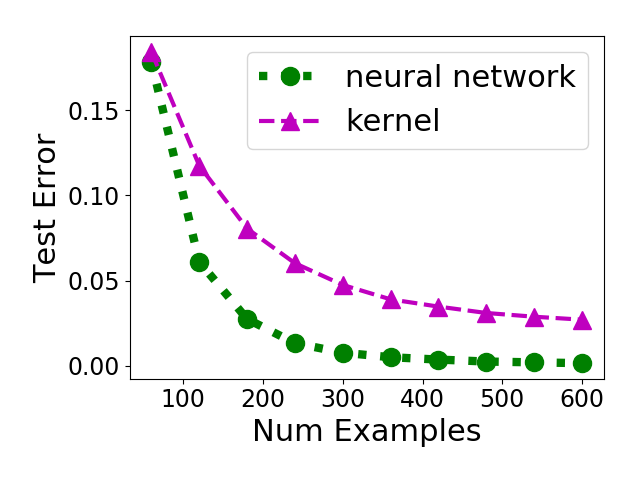}

					\caption{Neural nets vs. kernel method with $r_w=0,r_u=1$ (Theorem~\ref{thm:comparison} setting). \textbf{Left:} Classification. \textbf{Right:} Regression.}
	\label{fig:syntheticgen}
\end{figure}
We compare the generalization of neural nets and kernel methods for classification and regression. In Figure~\ref{fig:syntheticgen} we plot the generalization error of a trained neural net against a $\ell_2$ kernel method with relu features (corresponding to $r_1 = 0, r_2 = 1$ in the setting of Theorem~\ref{thm:comparison}) as we vary $n$. Our ground truth comes from a random neural network with 6 hidden units, and during training we use a network with as many hidden units as examples. For classification, we used rejection sampling to obtain datapoints with unnormalized margin of at least 0.1 on the ground truth network. We use a fixed dimension of $d=20$. For all experiments, we train the network for 20000 steps with $\lambda = 10^{-8}$ and average over 100 trials for each plot point. 

For classification we plot 0-1 error, whereas for regression we plot squared error. The plots show that two-layer nets clearly outperform the kernel method in test error as $n$ grows.

\subsection{Verifying Convergence to the Max-Margin}
\label{subsec:maxmarginverify}
We verify the normalized margin convergence on a two-layer networks with one-dimensional input. A single hidden unit computes the following: $x \mapsto a_j \relu(w_j x + b_j)$. We add $\|\cdot \|_2^2$-regularization to $a, w$, and $b$ and compare the resulting normalized margin to that of an approximate solution of the $\ell_1$ SVM problem with features $\relu(wx_i + b)$ for $w^2 + b^2 = 1$. Writing this feature vector is intractable, so we solve an approximate version by choosing 1000 evenly spaced values of $(w, b)$. Our theory predicts that with decreasing regularization, the margin of the neural network converges to the $\ell_1$ SVM objective. In Figure \ref{fig:inputdim1}, we plot this margin convergence and visualize the final networks and ground truth labels. The network margin approaches the ideal one as $\lambda \rightarrow 0$, and the visualization shows that the network and $\ell_1$ SVM functions are extremely similar.  

\begin{figure}
	\centering
	\includegraphics[width=0.4\textwidth]{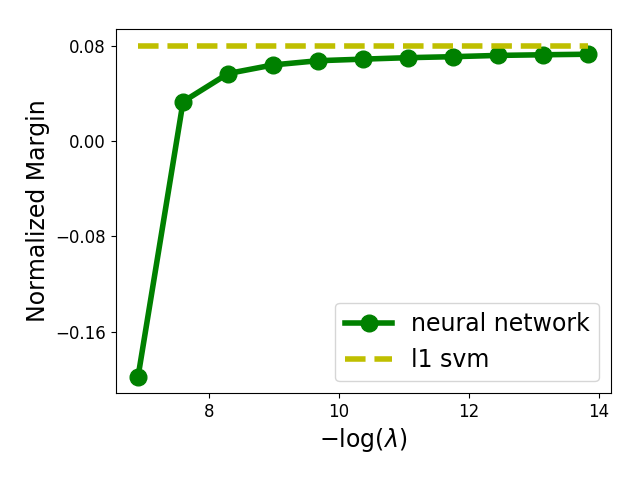}
	\includegraphics[width=0.4\textwidth]{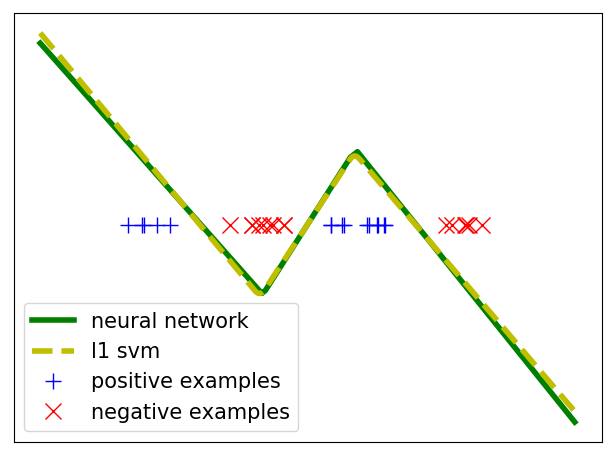}
	\caption{Neural network with input dimension 1. \textbf{Left:} Normalized margin as we decrease $\lambda$. \textbf{Right:} Visualization of the normalized functions computed by the neural network and $\ell_1$ SVM solution for $\lambda \approx 2^{-14}$.}
	\label{fig:inputdim1}
\end{figure}

\end{document}